\documentclass[a4paper, USenglish, cleveref, autoref, thm-restate]{lipics-v2021}

\title{Optimization Trade-offs in Asynchronous Federated Learning: A~Stochastic Networks Approach}
\titlerunning{Optimization Trade-offs in Asynchronous Federated Learning}

\author{Abdelkrim Alahyane\footnote{Corresponding author}}{EMINES, Mohammed VI Polytechnic University, Ben Guerir, Morocco \and LAAS-CNRS, Université de Toulouse, CNRS, Toulouse, France}{abdelkrim.alahyane@um6p.ma}{https://orcid.org/0009-0006-5142-8949}{}

\author{Céline Comte}{LAAS-CNRS, Université de Toulouse, CNRS, Toulouse, France}{celine.comte@cnrs.fr}{https://orcid.org/0009-0005-9413-7124}{}

\author{Matthieu Jonckheere}{LAAS-CNRS, Université de Toulouse, CNRS, Toulouse, France}{matthieu.jonckheere@laas.fr}{https://orcid.org/0000-0003-3576-5866}{}

\authorrunning{A. Alahyane, C. Comte, and M. Jonckheere}

\Copyright{Abdelkrim Alahyane, Céline Comte, and Matthieu Jonckheere}

\ccsdesc[500]{Computing methodologies~Machine learning}
\ccsdesc[500]{Mathematics of computing~Stochastic processes}
\keywords{Asynchronous Federated Learning, Queueing Theory, Jackson Network, Product Form, Relative Delay}
\nolinenumbers 
\hideLIPIcs

\usepackage{amsmath}
\usepackage{amsfonts}
\usepackage{amssymb}
\usepackage{hyperref}
\usepackage{algorithmic}
\usepackage{algorithm}
\usepackage{array}
\usepackage{textcomp}
\usepackage{stfloats}
\usepackage{url}
\usepackage{verbatim}
\usepackage{graphicx}
\usepackage{balance}
\usepackage{booktabs}
\usepackage{multirow}
\usepackage{mathtools}
\usepackage{titletoc}

\usepackage{dsfont}  % \mathds{1}

\newcommand\bE{\mathbb{E}}
\newcommand\bEp{\mathbb{E}^0}
\newcommand\bP{\mathbb{P}}
\newcommand\bPp{\mathbb{P}^0}
\newcommand\bR{\mathbb{R}}
\newcommand\bN{\mathbb{N}}
\newcommand\bZ{\mathbb{Z}}
\newcommand\un{{\mathfrak{n}}}
\newcommand\um{{\mathfrak{m}}}

\newcommand\prb[1]{\mathbb{P}\left(#1\right)}
\newcommand\esp[1]{\mathbb{E}\left[#1\right]}

\newcommand\cP{\mathcal{P}}
\newcommand\cX{\mathcal{X}}

\newcommand\muc{\mu^{\mathrm{c}}}
\newcommand\muu{\mu^{\mathrm{u}}}
\newcommand\mud{\mu^{\mathrm{d}}}
\newcommand\muCS{\mu^\mathrm{cs}}

\newcommand\xic{\xi^{\mathrm{c}}}
\newcommand\xiu{\xi^{\mathrm{u}}}
\newcommand\xid{\xi^{\mathrm{d}}}

\newcommand\xc{x^{\mathrm{c}}}
\newcommand\xu{x^{\mathrm{u}}}
\newcommand\xd{x^{\mathrm{d}}}
\newcommand\xCS{x^\mathrm{cs}}

\newcommand\Xc{X^{\mathrm{c}}}
\newcommand\Xu{X^{\mathrm{u}}}
\newcommand\Xd{X^{\mathrm{d}}}

\newcommand\Yc{Y^{\mathrm{c}}}
\newcommand\Yu{Y^{\mathrm{u}}}
\newcommand\Yd{Y^{\mathrm{d}}}
\newcommand\YCS{Y^\mathrm{cs}}

\newcommand\yc{y^{\mathrm{c}}}
\newcommand\yu{y^{\mathrm{u}}}
\newcommand\yd{y^{\mathrm{d}}}
\newcommand\yCS{y^\mathrm{cs}}

\newcommand\tYc{\tilde{Y}^{\mathrm{c}}}
\newcommand\tYu{\tilde{Y}^{\mathrm{u}}}
\newcommand\tYd{\tilde{Y}^{\mathrm{d}}}
\newcommand\tYCS{\tilde{Y}^\mathrm{cs}}

\newcommand\Pc{\mathcal{P}^{\mathrm{c}}}
\newcommand\Pu{\mathcal{P}^{\mathrm{u}}}
\newcommand\Pd{\mathcal{P}^{\mathrm{d}}}
\newcommand\PCS{\mathcal{P}^\mathrm{cs}}

\newcommand\ec{\mathbf{e}^{\mathrm{c}}}
\newcommand\eu{\mathbf{e}^{\mathrm{u}}}
\newcommand\ed{\mathbf{e}^{\mathrm{d}}}
\newcommand\eCS{\mathbf{e}^\mathrm{cs}}

\newcommand\Zc{Z^{\mathrm{c}}}
\newcommand\Zu{Z^{\mathrm{u}}}
\newcommand\Zd{Z^{\mathrm{d}}}
\newcommand\Zcs{Z^\mathrm{cs}}

\newcommand\bZc{\bar{Z}^{\mathrm{c}}}
\newcommand\bZu{\bar{Z}^{\mathrm{u}}}
\newcommand\bZd{\bar{Z}^{\mathrm{d}}}
\newcommand\bZcs{\bar{Z}^\mathrm{cs}}

\newcommand\zc{z^{\mathrm{c}}}
\newcommand\zu{z^{\mathrm{u}}}
\newcommand\zd{z^{\mathrm{d}}}
\newcommand\zCS{z^\mathrm{cs}}

\newcommand\vCS{v^\mathrm{cs}}
\newcommand\MCS{M_\mathrm{cs}}

\newcommand\one[1]{\mathds{1}\mathopen{}\left[#1\right]\mathclose{}}

\newcommand\param{w}

\DeclareMathOperator{\cov}{Cov}

\allowdisplaybreaks  % allow page breaks in equations

\usepackage[acronym]{glossaries}
\glsdisablehyper
\newacronym{CS}{CS}{central server}
\newacronym{FIFO}{FIFO}{first-in-first-out}
\newacronym{FL}{FL}{Federated Learning}
\newacronym{iid}{i.i.d.}{independent and identically distributed}
\newacronym{GD}{GD}{gradient descent}
\newacronym{RCLL}{RCLL}{right-continuous with left limits}
\newacronym{SGD}{SGD}{stochastic gradient descent}
\newacronym{RB}{RB}{resource blocs}
\newacronym{DVFS}{DVFS}{dynamic voltage and frequency scaling}
\newacronym{OFDMA}{OFDMA}{Orthogonal Frequency-Division Multiple Access}

\usepackage{todonotes}

\begin{document}

\maketitle

\begin{abstract}
	Synchronous federated learning scales poorly due to the straggler effect. 
	Asynchronous algorithms increase the update throughput by processing updates upon arrival, but they introduce two fundamental challenges: gradient staleness, which degrades convergence, and bias toward faster clients under heterogeneous data distributions. 
	Although algorithms such as \texttt{AsyncSGD} and \texttt{Generalized AsyncSGD} mitigate this bias via client-side task queues, most existing analyses neglect the underlying queueing dynamics and lack closed-form characterizations of the update throughput and gradient staleness.
	
	To close this gap, we develop a stochastic
	queueing-network framework for \texttt{Generalized AsyncSGD} that jointly models random computation times at the clients and the central server, as well as random uplink and downlink communication delays.
	Leveraging product-form network theory, we derive a closed-form expression for the update throughput, alongside closed-form upper bounds for both the communication round complexity and the expected wall-clock time required to reach an $\epsilon$-stationary point. 
	These results formally characterize the trade-off between gradient staleness and wall-clock convergence speed.
	We further extend the framework to quantify energy consumption under stochastic timing, revealing an additional trade-off between convergence speed and energy efficiency. 
	
	Building on these analytical results, we propose gradient-based optimization strategies to jointly optimize routing and concurrency. 
	Experiments on EMNIST demonstrate reductions of 29\%--46\% in convergence time and 36\%--49\% in energy consumption compared to \texttt{AsyncSGD}.
\end{abstract}

\section{Introduction}\label{sec:intro}

Modern machine learning training relies on stochastic gradient-based methods~\cite{robbins1951stochastic} and their variants, such as Adam~\cite{kingma2014adam}.
To scale these methods to large models and datasets, training is often distributed across multiple clients that compute gradients in parallel while a \gls{CS} aggregates updates, a paradigm called
\gls{FL}.
Despite communication-efficient techniques such as gradient compression~\cite{alistarh2017qsgd,tyurin2022dasha}, decentralized communication~\cite{lian2017can}, and local updates~\cite{mcmahan2017communication} that reduce communication overhead, \gls{FL} systems
typically operate under synchronous protocols
whereby the \gls{CS} waits to have received all requested gradients before updating the model. Synchronous \gls{FL} systems are unfortunately slowed down by
the straggler effect in heterogeneous environments.

Asynchronous training~\cite{xie2019asynchronous,chen2020asynchronous,xu2023asynchronous} alleviates this limitation by allowing the \gls{CS} to process updates as they arrive, thereby improving resource utilization.  
However, naively introducing asynchrony in heterogeneous data settings leads to bias toward faster clients,
i.e., clients that
contribute updates more frequently.  
To mitigate this bias, recent works employ \textit{client-side queues} and tailored sampling schemes. Specifically, \cite{koloskova2022sharper} proposed \texttt{AsyncSGD} with uniform sampling, while \cite{leconte2024queueing} extended this to \texttt{Generalized AsyncSGD} using non-uniform routing probabilities. However, these studies largely oversee the underlying queueing dynamics, either omitting explicit analysis~\cite{koloskova2022sharper} or relying on asymptotic approximations~\cite{leconte2024queueing}. More critically, their convergence analysis is formulated in terms of communication rounds rather than wall-clock time, despite the latter being the primary motivation for asynchronous methods.

Several recent studies~\cite{tyurin2023optimal,tyurin_shadowheart_2024,maranjyan_mindflayer_2024,maranjyan_ringmaster_2025} attempt to address this gap by deriving time-based convergence guarantees.
However, these approaches have several limitations:
they enforce partial synchronization via time thresholds, bias learning against slow clients, discard near-complete computations, and typically assume deterministic computation times, limiting their applicability to realistic edge systems.
See \Cref{sec:related_works} for more details.

This manuscript substantially extends our preliminary conference paper~\cite{alahyane2025optimizing}.  
In that earlier work, we employed a stochastic queueing model to analyze asynchronous \gls{FL} in terms of wall-clock time, revealing a fundamental trade-off between update staleness and update frequency.  
However, the initial model neglected communication delays and did not treat concurrency as a controllable system parameter, despite both being critical determinants of practical performance.

In this extended version, we explicitly model communication phases and introduce concurrency as an optimization variable.  
We further enrich the theoretical framework by incorporating \gls{CS} processing speeds and analyzing the system’s energy footprint, thereby uncovering an additional trade-off between convergence speed and energy efficiency.
This extension enables a more realistic and operationally relevant characterization of asynchronous~\gls{FL}. 

\subsection{Contributions}

Building upon~\cite{koloskova2022sharper,leconte2024queueing,alahyane2025optimizing}, we develop a comprehensive stochastic framework for asynchronous \gls{FL} that explicitly models computation, communication, and \gls{CS}-side dynamics.  
Our main contributions are:
\begin{itemize} %[leftmargin=*, topsep=0pt, itemsep=0pt, parsep=0pt]
	
	\item \textbf{Generalized stochastic network model.}
	We introduce a unified stochastic queueing-network formulation for \texttt{Generalized AsyncSGD} (which extends \texttt{AsyncSGD}) that jointly captures random computation times, uplink and downlink communication delays, \gls{CS} processing speed, routing, and concurrency.  
	This extends prior stochastic models~\cite{leconte2024queueing,alahyane2025optimizing} by explicitly incorporating communication and \gls{CS}-side effects.
	
	\item \textbf{Closed-form delay and throughput characterization.}
	Leveraging product-form stochastic networks, we derive closed-form expressions for the average relative delay, update frequency, and their gradients with respect to routing probabilities, enabling exact performance analysis and gradient-based optimization.
	
	\item \textbf{Convergence analysis.}
	We establish convergence guarantees for \texttt{Generalized AsyncSGD} in terms of both communication rounds and wall-clock time, explicitly quantifying the trade-off between update frequency (speed) and gradient staleness (error).
	
	\item \textbf{Energy-aware modeling and analysis.}
	We provide the first comprehensive energy analysis of \texttt{Generalized AsyncSGD} under heterogeneous data, hardware, and network conditions, incorporating stochastic timing and \emph{phase-dependent} energy costs (computation, uplink transmission, and downlink reception).
	
	\item \textbf{Joint optimization of routing and concurrency.}
	We formulate and solve multi-objective optimization problems that jointly optimize routing probabilities and concurrency, enabling principled navigation of the trade-offs among accuracy, wall-clock time, and energy consumption.
\end{itemize}

\subsection{Related Work}\label{sec:related_works}

\subsubsection[Asynchronous FL]{Asynchronous \gls{FL}} \label{sec:related_works_afl}

Early \gls{FL} research focused on synchronous methods~\cite{wang2020tackling,qu2021feddq,makarenko2022adaptive,mao2022communication,tyurin2022dasha}, which suffer from stragglers and poor scalability in heterogeneous settings~\cite{xie2019asynchronous}.  
These limitations motivated the introduction of asynchronous \gls{FL} algorithms~\cite{xie2019asynchronous,chen2020asynchronous,xu2023asynchronous}, whose wall-clock advantages were formally established under simplified assumptions in~\cite{cohen2021asynchronous,koloskova2022sharper}.

Classical analyses of asynchronous SGD~\cite{agarwal2011distributed,chaturapruek2015asynchronous,feyzmahdavian2016asynchronous,arjevani2020tight,sra2016adadelay,lian2015asynchronous,stich2020error,nguyen2022federated} are typically predicated on worst-case (maximum) delay bounds, rendering them overly pessimistic and highly sensitive to outliers. Moreover, these works often adopt simplified computation models that obscure the variability and stochasticity inherent in edge computing environments. For instance, while \texttt{FedBuff}~\cite{nguyen2022federated} introduces \gls{CS}-side buffering to stabilize convergence, its theoretical guarantees remain tethered to the maximum delay. Furthermore, its analysis assumes that task completion order is uniformly random, an assumption that breaks down in realistic environments where clients exhibit heterogeneous service speeds.

To mitigate this pessimism, subsequent studies shifted toward average-delay-based analysis~\cite{cohen2021asynchronous,aviv2021learning}. \cite{cohen2021asynchronous} achieves tighter guarantees by selectively discarding stale gradients, but this comes at the cost of doubled communication overhead and is restricted to homogeneous data settings. Similarly, \cite{aviv2021learning} derives bounds dependent on the average delay, yet the analysis relies on variance terms that can scale with the maximum delay in worst-case scenarios, and likewise assumes data homogeneity. In a different approach, delay-adaptive learning-rate methods~\cite{mishchenko2022asynchronous} successfully remove the dependence on maximum delay but fail to converge to the exact optimum in heterogeneous data settings. By scaling step sizes based on staleness, these methods disproportionately favor faster clients, effectively distorting the global objective function and converging to a biased solution.

To address bias in heterogeneous data settings, \cite{koloskova2022sharper} proposed \texttt{AsyncSGD}, which ensures unbiased updates through client-side queues and uniform sampling, with convergence guarantees depending on the average delay.  
\cite{islamov2024asgrad} generalized this approach and proposed improvements via random shuffling.  
However, both analyses assume bounded delays,
and are thus less relevant to systems with (more realistic) unbounded queueing dynamics.

\texttt{Generalized AsyncSGD}~\cite{leconte2024queueing} further generalized this framework by allowing non-uniform routing while preserving unbiasedness through appropriately scaled step sizes. \cite{leconte2024queueing} derived convergence guarantees depending on the average delay under unbounded processing times; however, the analysis relies on asymptotic approximations and provides neither closed-form delay expressions nor guidance on selecting routing probabilities and concurrency levels.

Building on this line of work, our preliminary study~\cite{alahyane2025optimizing} leveraged queueing theory to derive explicit closed-form expressions for the average delay and proposed a gradient-based optimization of the routing probabilities.

\subsubsection{Wall-Clock Time Analysis}

Some analyses of asynchronous \gls{FL} derive convergence guarantees explicitly in terms of time~\cite{tyurin2023optimal,tyurin_shadowheart_2024,maranjyan_mindflayer_2024,maranjyan_ringmaster_2025}, but they typically impose partial synchronization mechanisms (e.g., deadline-based updates), which bring them back into the framework of synchronous \gls{FL}. Additionally, these approaches tend to bias learning toward fast clients, discard near-complete computations, and rely on deterministic or simplified computation-time models.

In contrast, most analyses of asynchronous \gls{FL} cited in \Cref{sec:related_works_afl} rely on round-based convergence, thereby obscuring true wall-clock performance. Studies
such as~\cite{dutta2018slow} and our preliminary work~\cite{alahyane2025optimizing} emphasize a fundamental trade-off: optimizing solely for per-round error can significantly degrade wall-clock convergence time, which was the primary motivation for introducing asynchronous \gls{FL}. For instance, \cite{dutta2018slow} analyzes the \texttt{FedBuff} algorithm and shows that an optimal buffer size exists to balance this trade-off; however, the analysis assumes homogeneous data distributions and processing speeds.
In~\cite{alahyane2025optimizing} we built upon the stochastic framework of \cite{leconte2024queueing}
and demonstrated that, under \texttt{Generalized AsyncSGD}, minimizing per-round error reduces staleness by effectively pacing the system to the slowest clients, albeit at the expense of throughput due to underutilized parallelism.

\subsubsection{Energy Analysis} 
Energy-efficient \gls{FL} has been predominantly studied in synchronous settings~\cite{luo2021cost,zhou2022joint,yang2020energy}. In the asynchronous domain, energy optimization is typically addressed through client selection strategies~\cite{sezgin2025energy,gouissem2024comprehensive} constrained by heuristic fairness measures. For instance, \cite{yang2023asynchronous} mandates a minimum selection probability for each client, while \cite{chu2025resource} enforces a minimum number of data contributions. However, such constraints are often ad hoc and difficult to tune. Furthermore, these approaches generally rely on deterministic timing models, which fail to capture the stochastic dynamics of realistic edge systems.

\subsubsection{Synthesis and Limitations of Prior Work}

While our foundational formulation in~\cite{alahyane2025optimizing} derived closed-form delay expressions and identified the staleness–throughput trade-off, it remained limited in scope.  
Specifically, it omitted communication phases, assumed instantaneous \gls{CS} processing, and did not treat concurrency as a controllable optimization parameter.  
More broadly, existing asynchronous \gls{FL} literature has yet to provide a unified stochastic framework that simultaneously accounts for timing dynamics and energy consumption under realistic system heterogeneity.

To overcome these limitations, this extended manuscript introduces a comprehensive framework that jointly models stochastic computation, communication delays, \gls{CS} processing times, and energy costs.  
By formulating routing and concurrency as explicit optimization variables and solving the resulting problems via gradient-based methods, we establish a principled approach to navigating multi-objective trade-offs among wall-clock convergence speed, statistical accuracy, and energy efficiency in heterogeneous edge environments.

\subsection{Notations}\label{sec:notations}

$\bZ, \bN, \bN_{> 0}, \bR, \bR_{\ge 0}, \bR_{> 0}$ denote the sets of integers, non-negative integers, positive integers, real numbers, non-negative real numbers, and positive real numbers, respectively. Let $| \cdot |$ denote the $\ell_1$-norm, and let $\one{\,\cdot\,}$ be the indicator function. For each $n, m \in \bN_{> 0}$, define $\cX_{n, m} = \{x \in \bN^{n}: |x| = m\}$ as the set of $n$-dimensional vectors with non-negative integer entries whose $\ell_1$-norm equals $m$.
For every $n \in \bN_{> 0}$, let $\cP_n = \{p \in \bR^n: 0 < p_i < 1 \text{ for } i \in \{1, 2, \ldots, n\} \text{ and } |p| = 1\}$.

\section{System model and learning mechanism}
Let us first describe our system model for asynchronous \gls{FL}.

\subsection{Problem Setup}
Consider an asynchronous \gls{FL} system composed of a \gls{CS} and 
$n$ clients indexed by $\{1,2,\ldots,n\}$.  
A global model is trained 
collaboratively by solving the optimization problem
\[
\min_{w \in \mathbb{R}^d} f(w)
= \frac{1}{n} \sum_{i=1}^n f_i(w),
\]
where
$
f_i: w \mapsto
\mathbb{E}_{(x,y)\sim \mathcal{D}_i}
\bigl[\ell_i(\mathrm{NN}(x,w),\,y)\bigr]
$
is the objective function of client~$i$.
Here, $w \in \mathbb{R}^d$ is the parameter vector
of the global model 
(e.g., a deep neural network), $d$ the number of trainable parameters, 
$\mathrm{NN}(x,w)$ the model’s prediction function, $\ell_i$ the local loss function of client~$i$, and $\mathcal{D}_i$ its local data distribution.  
Each client~$i$ approximates its gradient using the deterministic mapping $g_i(w, \zeta_i)$, where the stochasticity arises solely from the sampling of $\zeta_i \sim \mathcal{D}_i$. This computation is termed a \emph{task}.

\subsection{Algorithm}
This paper focuses on \texttt{Generalized AsyncSGD}~\cite{leconte2024queueing}, an extension of \texttt{AsyncSGD}~\cite[Algorithm~2]{koloskova2022sharper} that incorporates a routing mechanism to address heterogeneous resources and data distributions.
The \texttt{Generalized AsyncSGD} procedure is detailed in Algorithms~\ref{alg:CS} (\gls{CS}) and~\ref{alg:client} (Client~$i$).

We begin by examining the \gls{CS} perspective (Algorithm~\ref{alg:CS}). The \gls{CS} initializes the global model parameters $\param_0$ and dispatches a batch of $m$ initial tasks (i.e., requests for gradient evaluation). This initialization is performed by selecting, for each of the $m$ tasks, a recipient client uniformly at random from the population. As we demonstrate later, the system's stationary dynamics are robust to this initial configuration.
Note that because $m$ may exceed $n$, or simply due to the independent random selection, a single client may receive multiple concurrent tasks. Upon receipt, clients immediately commence gradient computation.

The optimization process proceeds in discrete \emph{rounds}, indexed by $k \in \{0,1,\ldots,K\}$. We define round~$k$ as the time interval between the $k$-th and $(k+1)$-th update events of the global parameters~$\param$. The procedure unfolds as follows (Line~4):
When a client $C_k$ completes a task and returns a gradient estimate $g_{C_k}(w_{I_k}, \zeta_{C_k})$ (Line~5; where $I_k$ denotes the round index of the model used for computation), the \gls{CS} immediately updates the global parameters (Line~6).
Following the update, the \gls{CS} dispatches the new model $\param_{k+1}$ to a client $A_{k+1}$ selected independently according to the probabilities $\mathbb{P}(A_{k+1} = i) = p_i$ (Lines~7-8).

Turning to the client perspective (Algorithm~\ref{alg:client}), incoming tasks are processed in a \gls{FIFO} manner. If the client is idle upon receiving a model from the \gls{CS}, computation begins immediately; otherwise, the task is queued locally until the client becomes available. Upon completion, the result is transmitted back to the \gls{CS}.

Finally, observe that the stepsize in the update rule (Line~6 of Algorithm~\ref{alg:CS}) is scaled by the inverse routing probability $p_{C_k}^{-1}$ to correct for the bias introduced by non-uniform routing.

As we demonstrate in the following sections, the algorithm's performance is governed by two key system hyperparameters:
(i) the \textbf{routing probability} vector~$p$, which controls the distribution of computational load; and
(ii) the \textbf{concurrency} level~$m$ \cite{koloskova2022sharper}, defined as the constant number of tasks circulating in the system (either queued or under computation).

% \texttt{Generalized AsyncSGD} recovers
We recover the standard \texttt{AsyncSGD} baseline~\cite[Algorithm~2]{koloskova2022sharper} when $p$ is uniform (i.e., $p_i = 1/n$) and the concurrency matches the network size ($m = n$).

\begin{algorithm}[tbp]
	\caption{\texttt{Generalized AsyncSGD (CS)}}
	\label{alg:CS}
	\begin{algorithmic}[1]
		\STATE {\bf Input:} Numbers $K$, $n$, and $m$ of rounds, clients, and tasks; routing $p$; learning rate $\eta$
		\STATE Initialize parameters $\param_0$ randomly
		\STATE Dispatch $m$ instances of $\param_0$ to selected clients
		\FOR{$k = 0, \dots, K$ \label{line:for}}
		\STATE \gls{CS} receives stochastic gradient $g_{C_k}(\param_{I_k},\zeta_{C_k})$ from a client $C_k$ \label{line:CS1}
		\STATE Update $\param_{k+1} \leftarrow \param_k - \frac{\eta}{n p_{C_k}} g_{C_k}(\param_{I_k},\zeta_{C_k})$ \label{line:CS2}
		\STATE Sample a new client $A_{k+1}$ according to $p$ \label{line:CS3}
		\STATE Send model parameter $\param_{k+1}$ to client $A_{k+1}$
		\ENDFOR
	\end{algorithmic}
\end{algorithm}

\begin{algorithm}[tbp]
	\caption{\texttt{Generalized~AsyncSGD~(Client~$i$)}}
	\label{alg:client}
	\begin{algorithmic}[1]
		\STATE {\bf Input:} Queue of received parameters, local dataset
		\IF{Queue is not empty}
		\STATE Take the received parameter $\param$ from the queue using a FIFO policy
		\STATE Sample a mini-batch $\zeta_i$ from the local dataset $\mathcal{D}_i$
		\STATE Compute the gradient estimate $g_{i}(\param,\zeta_{i})$
		\STATE Send the gradient to the \gls{CS}
		\STATE Repeat
		\ENDIF
	\end{algorithmic}
\end{algorithm}

\subsection{Ensuring Fairness via Queueing}

The queueing mechanism in \texttt{Generalized AsyncSGD} is critical for ensuring consistent participation from all clients, regardless of their processing speeds. This inclusivity is vital in heterogeneous (non-IID) settings to prevent the model from becoming biased toward the data distributions of faster clients.

Consider, for example, a system containing an extremely slow client.  
In standard asynchronous approaches that restrict task assignment to idle clients in order to avoid queueing, such as~\cite{mishchenko2022asynchronous}, where $m=n$ and each completed update is immediately sent back to the originating client, the straggler may process only a single task while faster clients complete thousands of updates.
This behavior skews the model toward the data distributions of faster clients.
In contrast, \texttt{AsyncSGD} and \texttt{Generalized AsyncSGD} assign tasks probabilistically, regardless of client state.  
As a result, if a client is extremely slow, tasks may accumulate in its local queue, potentially to the point where it holds all $m$ circulating tasks. This effectively throttles the system, forcing the algorithm to wait until the straggler contributes an update.  
Although this reduces throughput, it ensures that the slow client’s data is incorporated, thereby preserving the statistical unbiasedness of the learned model.

\subsection{Relative Delay}

A key drawback of asynchronous \gls{FL} is the staleness of model parameters arising from updates computed on outdated global models.  
We quantify this effect using the \emph{relative delay}, a central quantity in the convergence analysis of asynchronous SGD.  
In this work, we derive the main performance metrics, namely, the number of communication rounds, wall-clock time, and energy required to reach an $\epsilon$-approximate stationary point (i.e., an average squared gradient norm at most~$\epsilon$), and we show that they depend explicitly on the average relative delay.  
Consequently, characterizing this delay is essential for evaluating and optimizing system performance.

For each round~$k \in \{1,\ldots,K\}$ and client~$i \in \{1,\ldots,n\}$, the relative delay $D_{i,k}$ is defined as the number of model updates performed by the \gls{CS} between (i) the time a task is assigned to client~$i$ in round~$k$, and (ii) the time the resulting gradient is applied by the \gls{CS}.
For example, if client~$i$ receives model parameters $w_k$ and exactly one gradient from another client is applied while it computes its update (so that the model advances to $w_{k+1}$) then $D_{i,k}=1$.  
If no task is assigned to client~$i$ in round~$k$ (i.e., $A_k \neq i$), we set $D_{i,k}=0$.

\subsection{Data Model} \label{data_model}
Consistently with the decentralized learning literature for non-convex settings \cite{nguyen2022federated, mishchenko2022asynchronous, koloskova2022sharper,islamov2024asgrad}, we assume:
\begin{description}
	\item[\textbf{A1}] \textbf{Lower Boundedness:} 
	There exists a scalar $f^* \in \mathbb{R}$ such that for all $w \in \mathbb{R}^d$, $f(w) \ge f^*$.
	
	\item[\textbf{A2}] \textbf{$L$-Smoothness:} 
	Each $f_i$ is continuously differentiable and $L$-smooth. Specifically, there exists a constant $L > 0$ such that for all $i \in \{1, \dots, n\}$ and all $u, v \in \mathbb{R}^d$:
	\[
	\|\nabla f_i(u) - \nabla f_i(v)\| \le L \|u - v\|.
	\]
	
	\item[\textbf{A3}] \textbf{Unbiased Stochastic Gradients with Bounded Variance:} 
	There exists a constant $\sigma \ge 0$ such that for all $i \in \{1, \dots, n\}$ and $w \in \mathbb{R}^d$:
	\begin{align*}
		&\mathbb{E}_{\zeta_i \sim \mathcal{D}_i}\bigl[g_i(w,\zeta_i)\bigr] = \nabla f_i(w), \\
		&\mathbb{E}_{\zeta_i \sim \mathcal{D}_i}\bigl[\|g_i(w,\zeta_i) - \nabla f_i(w)\|^2\bigr] \le \sigma^2.
	\end{align*}
	
	\item[\textbf{A4}] \textbf{Bounded Gradient Dissimilarity:} 
	There exists a constant $M \ge 0$ such that for all clients $i \in \{1, \dots, n\}$ and parameters $w \in \mathbb{R}^d$, $\|\nabla f_i(w) - \nabla f(w)\|^2 \le M^2$.
	
	\item[\textbf{A5}] \textbf{Bounded Gradients:} 
	There exists a constant $G \ge 0$ such that for all $w \in \mathbb{R}^d$ and $i \in \{1, \dots, n\}$, $\|\nabla f_i(w)\| \le G$.
\end{description}
Our analysis requires to have a bounded gradient \textbf{A5} but, in contrast to \cite{mishchenko2022asynchronous}, the stochastic gradients can be unbounded.  
In practice, gradient clipping, commonly used for Byzantine robustness, ensures bounded update norms and constrains the constant~$G$~\cite{islamov2024asgrad}.

\subsection{Computation and Communication Model} \label{sec:model_description}
We generalize the frameworks of \cite{alahyane2025optimizing,leconte2024queueing,koloskova2022sharper} to explicitly account for communication delays, resulting in the comprehensive queueing network depicted in \Cref{fig:model_communication}. 
In realistic edge environments, deterministic models fail to capture the stochastic nature of system resources: computation times fluctuate due to thermal throttling, \gls{DVFS}, while communication links suffer from fading, congestion, and interference. 
To perform a rigorous analysis of routing and concurrency that reflects this inherent variability while remaining mathematically tractable, we model these components as stochastic processes. Within this framework, we derive closed-form performance metrics that rigorously capture these system dynamics.

Following prior work \cite{mitliagkas2016asynchrony,lee2017speeding,dutta2016short,dutta2018slow,hannah2017more,leconte2024queueing,alahyane2025optimizing}, we assume that the computation times for successive tasks (i.e., stochastic gradient evaluations) at client~$i$ are \gls{iid} exponential random variables with rate $\muc_i > 0$. For communication, we assume the uplink transmission times (client~$i$ sending gradients to the \gls{CS}) and downlink transmission times (client~$i$ receiving parameters from the \gls{CS}) are \gls{iid} exponentially distributed with rates $\muu_i > 0$ and $\mud_i > 0$, respectively.

Although our theoretical analysis relies on the assumption of exponentially distributed computation and communication times, we verify numerically in \Cref{num:optimize-time} and \Cref{num:optimize-round} of the supplementary materials that other distributions actually yield similar performance.

From a network perspective, this formulation effectively maps the system to a stochastic queueing network: computation at each client is represented as a single-server \gls{FIFO} queue, while communication delays in both the uplink and downlink directions are modeled as infinite-server (IS) queues.
Initially, we neglect the processing time required for the \gls{CS} to integrate updates, as its computational resources typically far exceed those of the clients; however, \Cref{sec:cs_buffer_model} discusses how our analysis extends when this assumption is relaxed.

\begin{figure}[htbp]
	\centering
	\includegraphics[width=0.6\textwidth]{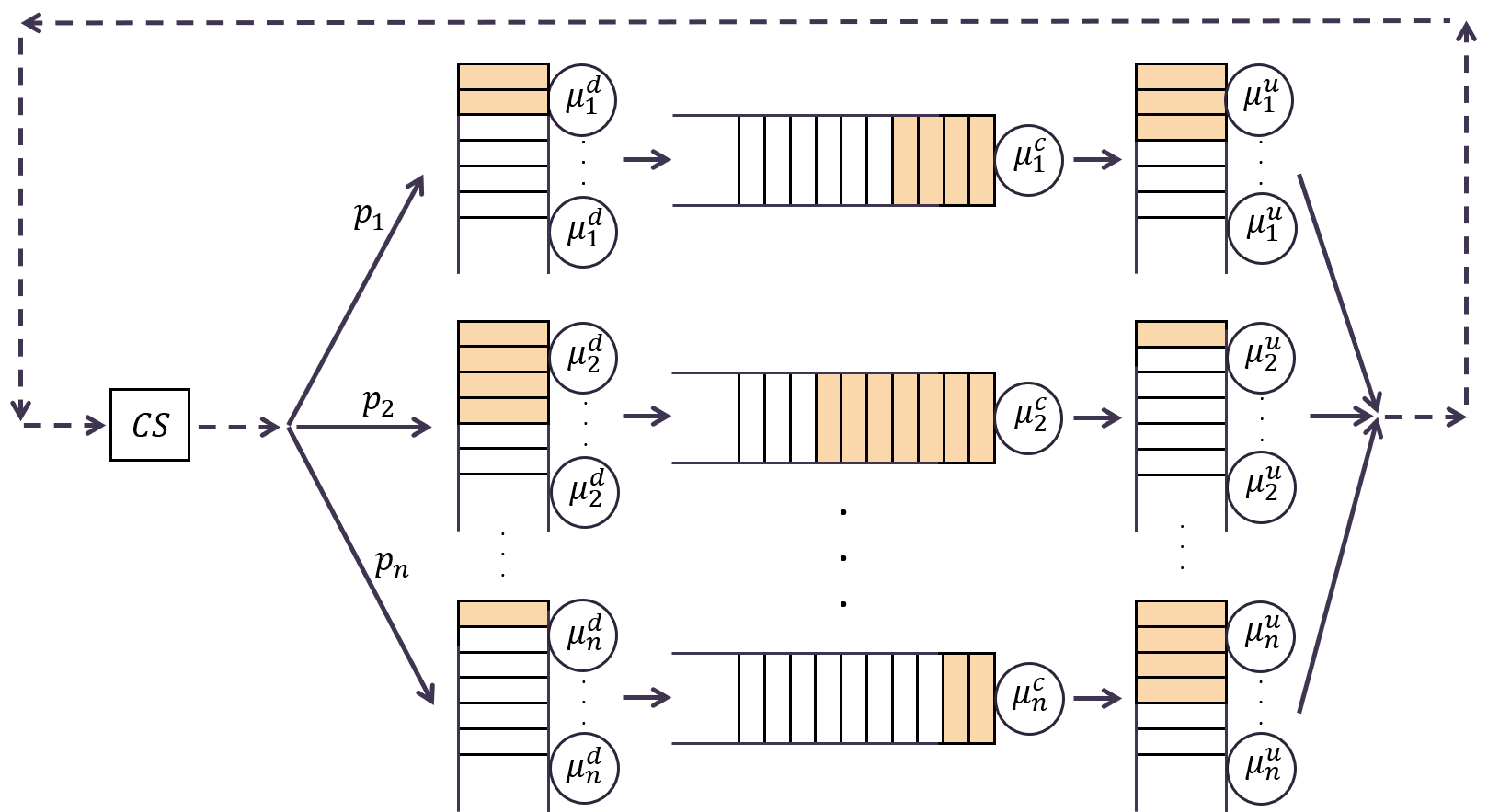}
	\caption{Queueing model with \textbf{computation} times and \textbf{communication} delays.}
	\label{fig:model_communication}
\end{figure}

\section{Stationary Analysis}
As detailed in Sections~\ref{sec:round_complexity}, \ref{sec:time_complexity}, and \ref{sec:energy_complexity}, the average relative delay is the key quantity governing all subsequent performance metrics, including round complexity, wall-clock convergence time, and energy consumption. Accordingly, this section focuses on analyzing the system’s steady-state behavior to explicitly characterize the relative delay.

We begin in Section~\ref{sec:stochastic_network} by modeling the system dynamics as a stochastic queueing network. Section~\ref{sec:stationary_dist} then derives the stationary distribution of this model. Building on these results, Section~\ref{sec:delay} provides closed-form expressions for the average relative delay as well as its gradient, which is required for the gradient-based optimization strategies proposed later.

\subsection{Stochastic Queueing Network} \label{sec:stochastic_network}
For each $i \in \{1,\ldots,n\}$, the downlink communication from the \gls{CS} to client~$i$ is modeled by an infinite-server queue~$d_i$, the computation of gradient at client~$i$ by a single-server queue~$c_i$, and the uplink communication from client~$i$ back to the \gls{CS} by an infinite-server queue~$u_i$.

At any time $t \ge 0$, the network state is represented by the right-continuous (càdlàg) random vector:
\[
\xi(t)
= \bigl(
\xid_i(t),\, 
\xic_i(t),\,
\xiu_i(t),\;
i \in \{1,\ldots,n\}
\bigr)
\in \mathcal{X}_{3n,m},
\]
where each component records the number of tasks present at the corresponding server at time~$t$.
Recall that $\mathcal{X}_{3n, m}$ is the set of $3n$-dimensional vectors with non-negative integer entries summing to~$m$.

Because relative delay is measured at parameter-update instants, we define, for each $k \in \{0,\ldots,K\}$, the embedded state vector $X_k$ as the system state \textit{observed immediately after a parameter update and prior to the dispatch of the next task}.  
Consequently, $X_k$ belongs to the reduced state space $\mathcal{X}_{3n,m-1}$ and is given as follows: for each $i \in \{1,\ldots,n\}$,
\begin{align} \label{def:X}
	\begin{cases}
		\Xd_{i,k} = \xid_i(T_k) - \mathbf{1}\{A_k = i\}, \\[0.5ex]
		\Xc_{i,k} = \xic_i(T_k), \\[0.5ex]
		\Xu_{i,k} = \xiu_i(T_k).
	\end{cases}
\end{align}
Here, $\{T_k\}_{k = 0}^K$ denotes
the sequence of service completion times
at the uplink servers $\{\mathrm{u}_i\}_{i = 1}^n$,
with $T_0 = 0$.
Each time $T_k$ marks the start of round~$k$, which has a duration of $T_{k+1} - T_k$. Furthermore, $A_k$ denotes the index of the client selected to receive the new task at the beginning of round~$k$, with $\mathbb{P}(A_k = i) = p_i$, and $C_k$ denotes the client whose uplink transmission completes at time $T_{k+1}$, marking the end of round~$k$.

Throughout this analysis, we let $\mathbb{P}$ and $\mathbb{E}$ denote the stationary probability measure and expectation of the system, respectively.
To analyze the inter-parameter update times, we also introduce the Palm probability measure $\mathbb{P}^0$ and its expectation $\mathbb{E}^0$, associated with the point process $\{T_k\}_{k \in \bN}$. Under $\mathbb{P}^0$, the time origin $t=0$ is conditioned to be an epoch of service completion at the set of uplink servers $\{\mathrm{u}_i\}_{i = 1}^n$.

\paragraph*{Relative Delay}
For each $k \in \{0, \ldots, K\}$, $D_{i,k}$ is defined as the number of service completions occurring at all uplink nodes $\{u_j\}_{j=1}^n$ during the sojourn of a task assigned to client~$i$ in round~$k$.
This sojourn encompasses the entire cycle: dispatch to server~$d_i$, local processing at client~$i$, and transmission via server~$u_i$.
If no task is assigned to client~$i$ in round~$k$ ($A_k \neq i$), we define $D_{i,k} = 0$.

In the remainder of the paper, we assume the system operates in steady state. 
This assumption is justified for sufficiently large $K$, as the distribution of $\xi(t)$ converges exponentially fast to its stationary distribution \cite{lorek2011speed}. 
Consequently, we can drop the time index, e.g., we can write $\bEp[D_i]$ for $\bEp[D_{i,k}]$.

\subsection{Stationary Distributions} \label{sec:stationary_dist}

Under the assumptions of \Cref{sec:model_description}, our system evolves as a stochastic \textbf{closed Jackson network}~\cite{jackson1957networks,gordon1967closed}. The following proposition characterizes its stationary behavior, establishing the foundation for our subsequent delay analysis.

\begin{proposition} \label{prop:jackson}
	In the setting of \Cref{sec:model_description}, the processes $(\xi(t))_{t \ge 0}$ and $(X_k)_{k \in \bN}$ are irreducible, positive recurrent Markov chains with unique stationary distributions $\pi_{n, m}$ and $\pi_{n, m-1}$, respectively.
	For any $\um \in \bN_{>0}$, the distribution $\pi_{n, \um}$ is given by the following product form:
	for any state $x \in \mathcal{X}_{3n,\um}$
	\begin{align} \label{eq:pi}
		\pi_{n, \um}(x)
		= \frac{1}{Z_{n, \um}} \prod_{i = 1}^n \left( \frac{p_i}{\muc_i} \right)^{\xc_i}
		\frac{1}{\xd_i!}\left( \frac{p_i}{\mud_i} \right)^{\xd_i}
		\frac{1}{\xu_i!}\left( \frac{p_i}{\muu_i} \right)^{\xu_i},
	\end{align}
	where $Z_{n,\um}$ is the normalizing constant.
\end{proposition}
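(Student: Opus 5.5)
I would proceed in three steps. First I show that $(\xi(t))_{t\ge 0}$ is a continuous-time Markov chain on the finite set $\mathcal{X}_{3n,m}$ and identify its transition rates. Then I verify the product form \eqref{eq:pi} through partial balance. Finally I transfer the result to the embedded chain $(X_k)$.

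\textbf{Markov property and rates.} All computation and communication times are independent exponentials, and routing decisions are i.i.d. draws from $p$, independent of everything else. Hence $\xi$ is a continuous-time Markov chain on $\mathcal{X}_{3n,m}$. Its transitions, writing $\mathbf{e}$ for unit vectors, are:
\begin{itemize}
\item $x \to x - \ed_i + \ec_i$ at rate $\mud_i \xd_i$ (infinite-server downlink);
\item $x \to x - \ec_i + \eu_i$ at rate $\muc_i \one{\xc_i>0}$ (single-server FIFO computation);
\item $x \to x - \eu_i + \ed_j$ at rate $\muu_i \xu_i\, p_j$ (an uplink completion triggers an update, and the new task is routed to $d_j$).
\end{itemize}
The \gls{CS} is instantaneous, so the last transition is a single jump. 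This is a closed Jackson network with $3n$ stations. The routing matrix sends $d_i \to c_i \to u_i$ deterministically and $u_i \to d_j$ with probability $p_j$. The traffic equations are then solved by visit ratios $v_{d_i}=v_{c_i}=v_{u_i}=p_i$, unique up to scaling.

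\textbf{Irreducibility and stationary law.} Since $p_j>0$ for all $j$ (recall $p\in\cP_n$), the routing graph is strongly connected. Any task can therefore be moved to any station through positive-rate transitions, and any configuration of $m$ tasks reaches any other. On a finite state space this gives irreducibility, positive recurrence and a unique stationary distribution. To identify it, I would invoke the Gordon–Newell theorem, or equivalently check partial balance directly. For each station, the probability flux out of a state $x$ due to departures from that station must equal the flux into $x$ due to arrivals to that station. Take arrivals into $d_j$ in state $x$ with $\xd_j>0$; they come from states $x-\ed_j+\eu_i$, for every $i$. The required identity reduces to
\[
\mud_j \xd_j = \sum_i \frac{\pi(x-\ed_j+\eu_i)}{\pi(x)}\,\muu_i(\xu_i+1)p_j
= \sum_i \frac{\mud_j \xd_j}{p_j}\,\frac{p_i}{\muu_i}\,\frac{\muu_i}{1}\cdot\frac{p_j}{1}\cdot\frac{1}{1}\,\frac{1}{\cdot}
\]
which collapses to $\mud_j\xd_j\sum_i p_i/|p| = \mud_j\xd_j$ once the factorials and ratios in \eqref{eq:pi} are written out. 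The analogous checks for $c_i$ (factor $(p_i/\muc_i)^{\xc_i}$, no factorial because there is a single server) and for $u_i$ are immediate.

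\textbf{Embedded chain.} $X_k$ is a deterministic function of the post-update state of $\xi$ together with the independent routing draw $A_k$. By the strong Markov property at the jump times $T_k$, $(X_k)$ is a Markov chain on $\mathcal{X}_{3n,m-1}$. It is irreducible by the same connectivity argument, and positive recurrent since the state space is finite.

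The main obstacle is identifying its stationary law as $\pi_{n,m-1}$. I would use the arrival theorem for closed Jackson networks (Sevcik–Mitrani / Palm calculus): a task moving along the transition $u_i\to$ \gls{CS} sees the remaining $m-1$ tasks distributed as the stationary law of the network with one fewer task. Concretely, under $\mathbb{P}^0$ the law of the other tasks is proportional to $\sum_i \pi_{n,m}(y+\eu_i)\muu_i(y^{\mathrm u}_i+1)$. Using the product form, this equals $\pi_{n,m-1}(y)\cdot\sum_i p_i \cdot \mathrm{const}$, hence is $\pi_{n,m-1}(y)$ after normalization. Since $X_k$ is taken before the new task is added (the $-\mathbf{1}\{A_k=i\}$ correction in \eqref{def:X}), it is exactly this law.
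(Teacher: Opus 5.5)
Your proposal is correct and follows essentially the paper's route: the product form comes from verifying balance for the closed Jackson network, and the law of $X_k$ is identified by weighting states by the rate of uplink-completion transitions, $\sum_j \pi_{n,m}(x+\eu_j)\,\muu_j(\xu_j+1) \propto \pi_{n,m-1}(x)$. You check station-wise partial balance, which implies the global balance the paper calls straightforward (your displayed intermediate line is garbled, but the identity it aims at, $\sum_i \pi(x-\ed_j+\eu_i)\,\muu_i(\xu_i+1)\,p_j = \pi(x)\,\mud_j\,\xd_j$, does hold), and the paper derives the embedded-chain law via the jump-chain pair lemma conditioned on the set $H$ of uplink-completion transitions rather than citing the arrival theorem, but the computation is the same as yours.
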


\begin{proof}
	See \Cref{proof_prop:jackson} of the supplementary materials.
\end{proof}

The relative delay expressions derived in the next section depend on the stationary distribution~$\pi_{n,\um}$ primarily through the associated normalization constants~$Z_{n,\um}$.  
Direct computation of these constants is infeasible for large $n$ and $m$ due to combinatorial growth.  
To address this, in Proposition~\ref{prop:buzen} of the supplementary materials, we adapt \textit{Buzen}'s recursive algorithm~\cite{buzen1973computational}, which enables their computation in $\mathcal{O}(nm^2)$ time and $\mathcal{O}(m)$ memory, where $n$ is the number of clients and $m$ the concurrency level (i.e., the number of tasks).  
This computational efficiency is crucial, as it allows the exact evaluation of both the expected relative delay and its gradient presented in the next section.

\subsection{Delay and Gradient Computation}\label{sec:delay}

In this section, we derive closed-form expressions for the mean relative delay and its gradient with respect to the routing vector~$p$, explicitly characterizing their dependence on~$p$, the system concurrency~$m$, and the heterogeneous service rates~$\mu = \left( \mud_i, \muc_i, \muu_i \right)_{i=1}^n$. These results enable performance sensitivity analysis and facilitate gradient-based optimization. Crucially, they allow us to efficiently determine both the optimal routing strategy and the optimal concurrency level~$m$ to minimize any performance criterion defined as a functional of the mean relative delay.

\begin{theorem} \label{theo:little_law}
	In the model of \Cref{sec:model_description}, the following identities hold for each $i,j \in \{1,\ldots,n\}$:
	\begin{align}
		\label{eq:D}
		\bEp[D_i]
		&= \, \sum_{s \in \{\mathrm{c},\mathrm{d},\mathrm{u}\}} \mathbb{E}\!\left[ X_i^s \right], 
		\\[1ex]
		\label{eq:gradD}
		\frac{\partial}{\partial p_j}\,\bEp[D_i]
		&= \sum_{s \in \{\mathrm{c},\mathrm{d},\mathrm{u}\}} \sum_{r \in \{\mathrm{c},\mathrm{d},\mathrm{u}\}}
		\frac{1}{p_j}\,
		\operatorname{Cov}\!\left(
		X_i^s,\;
		X_j^r
		\right).
	\end{align}
	
	Moreover, the following closed-form expressions hold:
	\begin{align}
		\sum_{s \in \{\mathrm{c},\mathrm{d},\mathrm{u}\}} \mathbb{E}[X_i^s]
		&= \beta_{i,1} + \gamma_i \frac{Z_{n,m-2}}{Z_{n,m-1}}, \label{eq:EX}
		\\[1ex]
		\sum_{s,r \in \{\mathrm{c},\mathrm{d},\mathrm{u}\}} \mathbb{E}[X_i^s X_j^r]
		&= \alpha_{i,j} + \beta_{i,2}\gamma_j + \beta_{j,2}\gamma_i + \psi_{i,j},
		\label{eq:EXX}
	\end{align}
	where the coefficients are defined as:
	\begin{align*}
		\gamma_i &= p_i \left( \frac{1}{\mud_i} + \frac{1}{\muu_i} \right), \qquad
		\beta_{i,\ell} = \sum_{k=1}^{m-\ell} \left( \frac{p_i}{\muc_i} \right)^k \frac{Z_{n,m-\ell-k}}{Z_{n,m-1}},
		\\[1ex]
		\alpha_{i,j} &=
		\begin{cases}
			\displaystyle \sum_{k=1}^{m-1} (2k-1) \left(\frac{p_i}{\muc_i}\right)^{k} \frac{Z_{n,m-1-k}}{Z_{n,m-1}} & \text{if } i=j, \\[2ex]
			\displaystyle \sum_{\substack{k, \ell = 1 \\ k + \ell \le m-1}}^{m-2}
			\left( \frac{p_i}{\muc_i} \right)^{k}
			\left( \frac{p_j}{\muc_j} \right)^{\ell}
			\frac{Z_{n, m-1-k-\ell}}{Z_{n, m-1}} & \text{if } i\neq j,
		\end{cases}
		\\[1ex]
		\psi_{i,j} &= \frac{\gamma_i}{Z_{n,m-1}} \left( \gamma_j Z_{n,m-3} + \mathbf{1}_{\{i=j\}} Z_{n,m-2} \right).
	\end{align*}
	and the constants $Z_{n,\um}$ for $\um \in \{0,1,\ldots,m-1\}$ are computed using the recursion of Proposition~\ref{prop:buzen} in \Cref{sec:buzen} of the supplementary material.
\end{theorem}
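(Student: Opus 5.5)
Our plan has three parts: a Little's-law argument on the embedded chain for \eqref{eq:D}, a log-derivative identity on the product form for \eqref{eq:gradD}, and generating-function manipulations of the normalizing constants for \eqref{eq:EX} and \eqref{eq:EXX}. Throughout we take Proposition~\ref{prop:jackson} as given.

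\textbf{Identity \eqref{eq:D}.} We apply a discrete-time Little's law to the embedded chain $(X_k)$. Count every task that is tagged for client~$i$ (sitting in $d_i$, $c_i$ or $u_i$). Over the rounds, one tagged task enters per round with probability~$p_i$. Each such task, once it enters, is counted in the number $X^\mathrm{d}_i+X^\mathrm{c}_i+X^\mathrm{u}_i$ observed at each of the $D_{i,k}$ subsequent update epochs it witnesses.
\begin{itemize}
\item Summing over rounds gives $\sum_k \sum_s X^s_{i,k}=\sum_k D_{i,k}+o(K)$.
\item We define $D_{i,k}=0$ when $A_k\neq i$, so $\bEp[D_i]$ already includes the factor~$p_i$.
\item Dividing by $K$ and using ergodicity of $(X_k)$ under its stationary law $\pi_{n,m-1}$ then yields \eqref{eq:D}.
\end{itemize}
The one delicate point is the convention for $X_k$: it is taken after the update and before dispatch, which is what makes each epoch be counted exactly once. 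We will need to verify the boundary conventions against the definition of $D_{i,k}$.

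\textbf{Identity \eqref{eq:gradD}.} Write $\pi_{n,m-1}(x)\propto\prod_i p_i^{x^\mathrm{d}_i+x^\mathrm{c}_i+x^\mathrm{u}_i}\,h(x)$, where $h$ does not depend on~$p$. Treating the $p_j$ as free variables (the Jackson form is homogeneous, so the constraint $|p|=1$ is irrelevant), we get
\[
\partial_{p_j}\log\pi(x)=\tfrac{1}{p_j}\Bigl(\textstyle\sum_r x^r_j-\mathbb{E}\bigl[\sum_r X^r_j\bigr]\Bigr).
\]
Differentiating $\mathbb{E}[\sum_s X^s_i]$ under the finite sum then gives the covariance formula directly.

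\textbf{Closed forms \eqref{eq:EX} and \eqref{eq:EXX}.} These follow from the structure of the normalizing constants. The generating function is $\sum_{\um} Z_{n,\um}z^{\um}=\prod_i \bigl(1-\rho_i z\bigr)^{-1}e^{\gamma_i z}$, where $\rho_i=p_i/\mu^\mathrm{c}_i$ and $\gamma_i$ is the total infinite-server load of client~$i$. The two infinite-server stations of client~$i$ merge into a single Poisson factor with parameter $\gamma_i$.
\begin{itemize}
\item \emph{Single-server part.} The standard tail identity $\bP(X^\mathrm{c}_i\ge k)=\rho_i^k Z_{n,m-1-k}/Z_{n,m-1}$ gives $\mathbb{E}[X^\mathrm{c}_i]=\beta_{i,1}$.
\item \emph{Infinite-server part.} For a Poisson-type factor, $\mathbb{E}[X^\mathrm{d}_i+X^\mathrm{u}_i]=\gamma_i Z_{n,m-2}/Z_{n,m-1}$. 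To see this, shift one unit: $x\,\gamma^x/x!=\gamma\cdot\gamma^{x-1}/(x-1)!$.
\end{itemize}
For second moments we expand $\sum_{s,r}X^s_iX^r_j$ into four types of products:
\begin{itemize}
\item \emph{Single-server $\times$ single-server.} Using $\mathbb{E}[X^2]=\sum_k(2k-1)\bP(X\ge k)$ gives $\alpha_{i,i}$. For $i\neq j$, the joint tail $\rho_i^k\rho_j^\ell Z_{n,m-1-k-\ell}/Z_{n,m-1}$ gives $\alpha_{i,j}$.
\item \emph{Mixed products.} Shifting one unit of infinite-server load reduces the population by one. This produces the terms $\beta_{i,2}\gamma_j$ and $\beta_{j,2}\gamma_i$.
\item \emph{Infinite-server $\times$ infinite-server.} Using $\mathbb{E}[Y(Y-1)]$ and the factorial-moment shift by two gives $\psi_{i,j}$. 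The indicator term accounts for $Y^2=Y(Y-1)+Y$ when $i=j$.
\end{itemize}

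The main obstacle is the bookkeeping in this second-moment step. Each shift must land on exactly the right $Z_{n,\cdot}$ index and the right upper summation limit, for instance $m-2$ in $\beta_{i,2}$ and $k+\ell\le m-1$ in $\alpha_{i,j}$. Getting the Little's-law boundary conventions right in \eqref{eq:D} is the conceptual difficulty. The generating-function step is routine but error-prone.
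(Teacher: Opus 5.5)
Your proposal is correct. For \eqref{eq:gradD}, \eqref{eq:EX} and \eqref{eq:EXX} you follow essentially the paper's route: the log-derivative of the product form, and the tail identity $\bP(X^{\mathrm{c}}_i\ge k)=(p_i/\muc_i)^k Z_{n,m-1-k}/Z_{n,m-1}$ combined with unit shifts at the infinite-server stations. You add two small improvements. First, merging $\mathrm{d}_i$ and $\mathrm{u}_i$ into a single Poisson weight $\gamma_i^{y}/y!$ (valid since $\sum_{x+x'=y}\frac{a^x}{x!}\frac{b^{x'}}{x'!}=\frac{(a+b)^y}{y!}$ with $a=p_i/\mud_i$, $b=p_i/\muu_i$) collapses the paper's four separate delay--delay computations into one factorial-moment calculation. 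Second, your homogeneity remark justifies treating the $p_j$ as free variables, which the paper does only implicitly. The generating function is correct but never actually needed.

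The genuine difference is in \eqref{eq:D}. The paper writes $\bEp[D_i]=p_i\bEp[R_i]$ and subordinates the embedded chain to a unit-rate Poisson process, building a fictitious continuous-time system in which client-$i$ arrivals are Poisson with rate $p_i$ by thinning. It then applies Serfozo's continuous-time Little's law, uses Wald's identity to get $\bEp[\bar R_i]=\bEp[R_i]+1$, and cancels the extra $p_i$ through the $\mathbf{1}\{A_k=i\}$ correction in the definition of $X_k$. Your discrete-time counting is more direct. Because $X_k$ is read before dispatch, a task dispatched in round $j$ appears in exactly $X_{j+1},\dots,X_{j+D_{i,j}}$, so neither the $+1$ correction nor the auxiliary time scale arises.

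The paper's route buys an off-the-shelf theorem with checkable hypotheses. Yours, as written, needs the ergodic limit of $\frac1K\sum_k D_{i,k}$ and the $o(K)$ boundary term done by hand. You can avoid both by working with the two-sided stationary sequence and the exact identity $\sum_s X^s_{i,k}=\sum_{l\ge1}\mathbf{1}\{A_{k-l}=i,\ D_{i,k-l}\ge l\}$. Taking expectations and using shift-invariance then gives $\sum_{l\ge1}\bPp(D_{i,0}\ge l)=\bEp[D_i]$ in one line.
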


\begin{proof}
	The proof is detailed in Section~\ref{proof_theo:little_law} of the supplementary material. Intuitively, Equation~\eqref{eq:D} represents a Little's Law~\cite{john2011little} analogue: it equates the expected total delay cost defined at assignment ($\bEp[D_i]$) to the cost accumulated incrementally during service ($\sum_{s \in \{\mathrm{c},\mathrm{d},\mathrm{u}\}} \mathbb{E}\!\left[ X_i^s \right]$).
\end{proof}

A naive evaluation of $\bEp[D_i]$ is intractable because a task's relative delay may depend on an arbitrarily large number of future rounds.
By expressing the delay and its gradient via the recursive constants $Z_{n,k}$ (computable in $\mathcal{O}(nm^2)$ time and $\mathcal{O}(m)$ memory), \Cref{theo:little_law} ensures that evaluating delays remains efficient and scalable even for large networks.

\paragraph*{Analysis}
Equation~\eqref{eq:D} implies the simple identity
\begin{equation} \label{eq:uni_sum}
	\sum_{i=1}^n \bEp[D_i]
	= \sum_{i=1}^n \sum_{s \in \{\mathrm{c},\mathrm{d},\mathrm{u}\}} \mathbb{E}\!\left[X_i^s\right]
	= m-1.
\end{equation}
This yields two immediate but counterintuitive consequences:  
(i) the total mean relative delay depends only on $n$ and $m$, while $p$ and $\mu$ determine only how this delay is distributed across clients;  
(ii) thus, for fixed $n$ and $m$, decreasing the delay of a given client~$i$ (e.g., by reducing $p_i$ or increasing $\muc_i$, $\mud_i$, or $\muu_i$) necessarily comes at the expense of at least one other client experiencing an increased delay.

Moreover, for fixed routing and server speeds, combining \eqref{eq:D} with the fact that $\mathbb{E}[X_i^s]$ is non-decreasing in $m$ for each $s \in \{\mathrm{c},\mathrm{u},\mathrm{d}\}$ \cite[Lemma~2]{suri1985monotonicity} implies that $\bEp[D_i]$ is itself a non-decreasing function of the number of tasks~$m$.

In addition to permitting exact derivation, \eqref{eq:D}--\eqref{eq:gradD} allow for the estimation of the expected relative delay $\bEp[D_i]$ and its gradient $\nabla_p \bEp[D_i]$ via Monte Carlo simulations.

\paragraph*{Gradient Descent}
As detailed in Sections~\ref{sec:round_complexity}, \ref{sec:time_complexity}, and~\ref{sec:energy_complexity}, all considered system performance metrics depend explicitly on the relative delay. Consequently, Equation~\eqref{eq:gradD} enables the application of gradient descent to optimize the routing vector~$p$ for these objectives.

\section{Round Complexity} \label{sec:round_complexity}

This section analyzes the convergence behavior of \texttt{Generalized AsyncSGD} in heterogeneous environments.
First, Section~\ref{sec:complexity} leverages the performance bounds established in~\cite{leconte2024queueing} to characterize the round complexity as a function of the average relative delay.
Next, Section~\ref{sec:discussion} examines the practical implications and limitations of relying on round-based convergence rates as a primary performance metric.

\subsection[Number of rounds to achieve ε-accuracy]{Number of rounds to achieve $\epsilon$-accuracy}\label{sec:complexity}
The following theorem bounds the number of updates~$K$ required for \texttt{Generalized AsyncSGD} to reach an $\epsilon$-approximate stationary point.
Crucially, this result explicitly captures the dependence on routing probabilities $p$, concurrency $m$, and client speed heterogeneity. As we explain in \Cref{proof_th:opt_steps} of the supplementary material, this result builds on a previous result from Leconte et al.~\cite{leconte2024queueing}.

\begin{theorem} \label{th:opt_steps}
	Under Assumptions~\textbf{A1}--\textbf{A5}, there exists $\epsilon_0 > 0$ such that for any target accuracy $\epsilon \in (0, \epsilon_0]$ and any learning rate satisfying $\eta \le \eta_{\max}(p,m)$, where
	\begin{equation} \label{eq:eta_max}
			\eta_{\max}(p,m) = \min \Biggl\{ \frac{n^2}{8L \sum_{i=1}^n p_i^{-1}}, \, \frac{n^2 \epsilon}{2 L B \sum_{i=1}^n p_i^{-1}},
			\frac{n \sqrt{\epsilon}}{2 L} \left(C (m-1) \sum_{i=1}^n \frac{\bEp[D_i]}{p_i^2}\right)^{-1/2} \Biggr\},
	\end{equation}
	the expected gradient norm satisfies
	\[
	\frac{1}{K} \sum_{t=0}^{K-1} \bEp\!\left[\|\nabla f(w_t)\|^2\right] \le \epsilon
	\]
	whenever $K \ge K_\epsilon(p,m)$, where:
	\begin{equation} \label{eq:K_epsilon_bounded}
		K_{\epsilon}(p,m) = \frac{24L\Delta}{n\epsilon} \Biggl[ \left( 4 + \frac{B}{\epsilon} \right) \sum_{i=1}^n \frac{1}{n p_i}
		+ \left(\frac{C (m-1)}{\epsilon} \sum_{i=1}^n \frac{\bEp[D_i]}{p_i^2}\right)^{1/2} \Biggr],
	\end{equation}
	with constants $\Delta = f(w_0) - f^*$, $B = 6(\sigma^2 + 2M^2)$, and $C = 6(\sigma^2 + G^2)$. The expected steady-state relative delays $\bEp[D_i]$  can be computed explicitly using Theorem~\ref{theo:little_law}.
\end{theorem}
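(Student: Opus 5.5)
The plan is to start from the descent inequality of Leconte et al.~\cite{leconte2024queueing} for \texttt{Generalized AsyncSGD} and then tune the step size. Under \textbf{A1}--\textbf{A5}, with $\eta$ below a smoothness threshold of the form $n^2/(8L\sum_i p_i^{-1})$, that result gives a bound of the shape
\[
\frac{1}{K}\sum_{t=0}^{K-1}\bEp\|\nabla f(w_t)\|^2 \le \frac{c_1 n\Delta}{\eta K} + \frac{c_2 L\eta B}{n^2}\sum_{i=1}^n \frac{1}{p_i} + \frac{c_3 L^2\eta^2 C (m-1)}{n^2}\sum_{i=1}^n \frac{\bEp[D_i]}{p_i^2}.
\]
Its three terms have clear origins. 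The first comes from telescoping $f(w_0)-f^*$ with the $1/(np_{C_k})$-scaled step. The second is the variance/heterogeneity term: the importance weights $1/(np_i)$ inflate the second moment by $\sum_i p_i^{-1}/n$. The third is the staleness term, where $\|w_t-w_{I_t}\|^2$ is controlled by summing squared update norms over the delay window.

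The first step is to rederive, or simply import, this inequality so that the delay enters as the stationary Palm expectation $\bEp[D_i]$. For this we use the steady-state assumption together with \Cref{prop:jackson}, which guarantees that the embedded chain $(X_k)$ is stationary. We then use the identity \eqref{eq:D} of \Cref{theo:little_law} so that the $\bEp[D_i]$ are the explicit quantities.

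The factor $(m-1)$ is the step I expect to be the main obstacle. It arises from Cauchy--Schwarz on the stale window: the drift $w_t-w_{I_t}$ is a sum of at most $D$ update terms, and the square of a sum of $D$ terms costs a factor $D$. We then need to bound the cross term $\bEp[D\cdot\sum(\cdot)]$ by $(m-1)\bEp[D_i]$ weighted by $p_i^{-2}$. The natural route is to count, for each update at round $s$, how many in-flight tasks it is stale for. Since exactly $m-1$ other tasks circulate, each update contributes to at most $m-1$ windows, and this is a deterministic bound. The $p_{C}^{-2}$ weight comes from squaring the scaled step, and \textbf{A5} together with \textbf{A3} bounds the gradient second moment by $\sigma^2+G^2$, giving $C$. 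Unbounded delays are harmless here because only expectations appear.

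Given the inequality, the last step is to make each term at most $\epsilon/3$. The second term is at most $\epsilon/3$ once $\eta \le n^2\epsilon/(2LB\sum_i p_i^{-1})$. The third term requires $\eta \le \frac{n\sqrt\epsilon}{2L}\bigl(C(m-1)\sum_i \bEp[D_i]/p_i^2\bigr)^{-1/2}$, and together with the smoothness cap this gives exactly $\eta_{\max}(p,m)$ in \eqref{eq:eta_max}. We then choose $\eta=\eta_{\max}$ and require $c_1n\Delta/(\eta K)\le\epsilon/3$, i.e.\ $K \ge 3c_1 n\Delta/(\epsilon\,\eta_{\max})$. Bounding $1/\min\{a,b,c\}\le 1/a+1/b+1/c$ and grouping the first two reciprocals as $(4+B/\epsilon)\sum_i 1/(np_i)$ (up to constants) produces \eqref{eq:K_epsilon_bounded} with the prefactor $24L\Delta/(n\epsilon)$, after tracking constants.

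The threshold $\epsilon_0$ has a specific role. For small $\epsilon$ the smoothness cap is inactive relative to the $\epsilon$-dependent caps, which lets the constants $4$ and $24$ absorb the slack. Since the main obstacle above is checked separately, the constant bookkeeping here is routine.
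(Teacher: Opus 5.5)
\paragraph{Verdict: genuine gap at the staleness step.}
Your endgame matches the paper's. You force three terms below $\epsilon/3$, then use $K\ge 12\Delta/(\eta\epsilon)$ together with $1/\min\{a,b,c\}\le 1/a+1/b+1/c$. Fixing $\eta=\eta_{\max}$, as you do, is indeed what makes $K\ge K_\epsilon$ sufficient.

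The gap is the step you flag as the main obstacle: your mechanism does not yield $(m-1)\sum_i\bEp[D_i]/p_i^2$.
\begin{itemize}
\item Cauchy--Schwarz on the actual drift $w_t-w_{I_t}$ has a random, unbounded number $D_t$ of summands. It gives $\sum_t D_t\sum_{s\in W_t}\frac{\eta^2}{n^2p_{C_s}^2}\|g_{C_s}\|^2$.
\item After exchanging sums, each update $s$ is multiplied by the \emph{sum of the delays} of the at most $m-1$ tasks in flight at $s$, not by $m-1$. The weight $p_{C_s}^{-2}$ belongs to the applied update, while the delay factors belong to other tasks.
\item Dropping the weights, $\sum_t D_t|W_t|=\sum_t D_t^2$. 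This is a second moment of the delay, which Theorem~\ref{theo:little_law} does not provide.
\item A descent argument on the actual iterates with $1/(np_{C_k})$-scaled steps is also not conditionally unbiased. The completing client $C_k$ depends on the current network state; it is not a fresh draw from $p$.
\end{itemize}

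\paragraph{How the paper gets the factor.}
The paper assigns the two factors the other way around.
\begin{itemize}
\item It uses the virtual iterates $v_{k+1}=v_k-\frac{\eta}{np_{A_k}}g_{A_k}(w_k)$. Here $A_k\sim p$ is a fresh draw, so the step is unbiased.
\item For these, Lemma~\ref{lem:bound1} leaves the error term $\frac{2L^2}{K}\sum_k\bEp\|v_k-w_k\|^2$.
\item The gap $v_k-w_k=-\eta\sum_{(i,j)\in\mathcal{Q}_k}\frac{1}{np_i}g_i(w_j)$ has exactly $|\mathcal{Q}_k|=m-1$ summands. Cauchy--Schwarz over these gives the deterministic factor $m-1$.
\item Exchanging the sums over $k$ and $(i,j)$, each in-flight task is counted once per round it remains in flight, i.e.\ at most $D_{i,j}$ times. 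It carries its own weight $p_i^{-2}$.
\item \textbf{A3} and \textbf{A5} bound $\bEp[\|g_i(w_j)\|^2\mid w_j]$ by $2(\sigma^2+G^2)$ uniformly. So this factor can be pulled out of $\bEp[\|g_i(w_j)\|^2D_{i,j}]$ even though $D_{i,j}$ depends on $w_j$. This is exactly where importing Leconte et al.'s inequality as is would be unjustified.
\end{itemize}

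\paragraph{Role of $\epsilon_0$ and constants.}
Your reading of $\epsilon_0$ is also off. The smoothness cap stays in $\eta_{\max}$ and produces the $4$ in $(4+B/\epsilon)$; it need not be inactive. What $\epsilon_0$ handles is the non-stationary initial batch $S_0$.
\begin{itemize}
\item The delay average is $\frac{\bEp[D_{i,0}]}{K}+\bEp[D_i]$.
\item The required $K$ grows like $\epsilon^{-3/2}$, so for $\epsilon\le\epsilon_0$ this average is at most $2\bEp[D_i]$.
\item This factor $2$, together with the $\epsilon/3$ split, is what gives $C=6(\sigma^2+G^2)$ in the third cap of $\eta_{\max}$.
\end{itemize}
Finally, your first term $c_1 n\Delta/(\eta K)$ carries a spurious factor $n$; Lemma~\ref{lem:bound1} gives $4\Delta/(\eta K)$. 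As written, this would inflate $K_\epsilon$ by $n$ relative to the prefactor $24L\Delta/(n\epsilon)$.
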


\begin{proof}
	See \Cref{proof_th:opt_steps} of the supplementary materials.
\end{proof}

If Assumption~\textbf{A5} is relaxed, an alternative expression for $K_{\epsilon}(p,m)$, derived under the model of \Cref{sec:model_description}, is provided in Section~\ref{sec:K_epsilon_unbounded} of the supplementary materials.

The convergence rate in \eqref{eq:K_epsilon_bounded} comprises two distinct terms. 
The first term captures the impact of the routing strategy and data heterogeneity. Under uniform routing ($p^\text{uni} = (\frac1n, \ldots, \frac1n)$) and homogeneous data ($M=0$), this term reduces to the classical SGD convergence rate $\mathcal{O}\!\left(\frac{L\Delta}{\epsilon}\left(1+\frac{\sigma^{2}}{\epsilon}\right)\right)$~\cite{ghadimi2013stochastic,arjevani2023lower}. 
The second term accounts for the delays induced by asynchrony, thus quantifying the additional iterations
required to attain accuracy comparable to synchronous SGD, despite gradient staleness.
For uniform routing, this penalty reduces to $\mathcal{O}\!\left(\frac{L\Delta \sqrt{\sigma^2+G^2}}{\epsilon \sqrt{\epsilon}} (m-1) \right)$, a value driven purely by the system concurrency~$m$ rather than the magnitude of computation or communication speeds.
In general, the explicit dependence on $G$ and $\sigma$ confirms that high variance or large gradient magnitudes worsen the impact of asynchrony, as ``old'' gradients deviate more significantly from the current true direction.

Numerical experiments on image classification tasks under full concurrency ($m=n$), presented in Supplementary Section~\ref{num:optimize-round}, demonstrate that the round-optimized routing $p^{\ast K}$ (i.e., the routing minimizing $K_\epsilon$) achieves a substantial reduction in communication rounds relative to baseline methods.

\subsection{Do Fewer Rounds Mean Faster Training?}\label{sec:discussion}

With the closed-form expression for relative delay established (see \eqref{eq:D}), we now examine the dependence of the round complexity $K_{\epsilon}$ on the routing vector $p$ and concurrency level $m$. The first term in \eqref{eq:K_epsilon_bounded} is minimized under uniform routing probabilities, reflecting the benefit of unbiased client participation. The second term quantifies the staleness overhead via $\bEp[D_i]$ and is non-decreasing with respect to~$m$. Consequently, for any fixed routing strategy, increasing concurrency exacerbates the performance penalty incurred by stale gradients.

From a strict round-complexity perspective, this implies that the optimal configuration is uniform routing with $m=1$. Indeed, setting $m=1$ eliminates the delay term entirely, effectively recovering serial SGD. However, this trivial solution severely bottlenecks throughput and negates the principal advantages of asynchrony: straggler mitigation and parallel efficiency.

Even when concurrency is fixed at $m>1$ (e.g., full concurrency $m=n$, as in~\cite{koloskova2022sharper}), strictly minimizing the round complexity $K_\epsilon$ effectively synchronizes the system to the pace of the slowest clients.  
As illustrated in Supplementary Section~\ref{num:optimize-round} and in \Cref{num:learning_results}, reducing staleness requires the optimizer to heavily reallocate routing probability toward slower clients, causing the update frequency to drop dramatically (from 41 updates per time unit under uniform routing to only 2.4 in our experiment of Supplementary Section~\ref{num:optimize-round}).  
Although this conservative strategy reduces the total number of communication rounds $K_\epsilon$, the resulting loss in update frequency outweighs this benefit when performance is measured in wall-clock time.  
Consequently, the round-optimized routing can be outperformed by a simple uniform baseline in terms of real-time convergence speed.

Since the delay penalty scales with the gradient bound $G$, aggressive gradient clipping could theoretically mitigate this staleness. However, such a strategy would simultaneously impede learning progress and ultimately decelerate convergence.

This analysis illustrates a fundamental limitation of the standard rounds-based convergence metric widely used in the \gls{FL} literature: \textbf{it ignores performance in \textit{wall-clock~time}}, which is the primary motivation for adopting asynchronous methods. For this reason, the next section introduces a time-aware metric that more accurately reflects the practical efficiency of asynchronous \gls{FL} systems.

\section{Clock-Time Complexity} \label{sec:time_complexity}

To account for random system delays, we evaluate performance using the \textbf{expected wall-clock time to reach $\epsilon$-accuracy}. Unlike round-based metrics, this measure captures the actual physical duration of the training process, directly addressing the core motivation for asynchronous learning.

\subsection[Time to achieve ε-accuracy]{Time to achieve $\epsilon$-accuracy}  
Let $\tau_{\epsilon}$ denote the random total wall-clock time required to execute the $K_\epsilon(p,m)$ rounds necessary to guarantee $\epsilon$-accuracy. The following proposition establishes a closed-form expression for its expectation, $\bEp[\tau_{\epsilon}]$, within the stochastic network model of \Cref{sec:model_description}.

\begin{proposition} \label{prop:time-eps}
	Under Assumptions~\textbf{A1}--\textbf{A5}, there exists $\epsilon_0 > 0$ such that for any target accuracy $\epsilon \in (0, \epsilon_0]$ and any learning rate satisfying $\eta \le \eta_{\max}(p,m)$, the expected wall-clock time required to reach $\epsilon$-accuracy is given by:
	\begin{align} \label{eq:tau}
		\bEp[\tau_{\epsilon}] = \frac{K_{\epsilon}(p,m)}{\lambda(p,m)},
		\qquad \text{where:}
	\end{align}
	$K_{\epsilon}(p,m)$ denotes the number of rounds required to achieve $\epsilon$-accuracy, given in \Cref{eq:K_epsilon_bounded} of Theorem~\ref{th:opt_steps}. \\
	$\lambda(p,m)$ is the expected number of rounds completed per unit of wall-clock time, given by
	\begin{align}
		\label{eq:throughput}
		\lambda(p,m)
		&= \sum_{i=1}^n \muu_i\, \mathbb{E}[\xiu_i]
		= \frac{Z_{n,m-1}}{Z_{n,m}}, \\[0.5ex]
		\label{eq:grad_throughput}
		\frac{\partial}{\partial p_j}\,\lambda(p,m)
		&= \frac{1}{p_j}\,\lambda(p,m)
		\sum_{s \in \{\mathrm{c},\mathrm{d},\mathrm{u}\}} \mathbb{E}[\,X_j^s - \xi_j^s\,],
	\end{align}
	where $Z_{n,m}$ and $Z_{n,m-1}$ are the normalization constants defined in Proposition~\ref{prop:jackson}, 
	$\xi \sim \pi_{n,m}$, and $X \sim \pi_{n,m-1}$.
\end{proposition}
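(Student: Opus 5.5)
The proof splits into three parts: the identity \eqref{eq:tau}, the throughput formula \eqref{eq:throughput}, and its gradient \eqref{eq:grad_throughput}.

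\textbf{Identity \eqref{eq:tau}.} Under the Palm measure $\bPp$ the sequence of round durations $T_{k+1}-T_k$ is stationary. By the Palm inversion (Ryll-Nardzewski / Neveu exchange) formula, its common mean is $\bEp[T_1]=1/\lambda$, where $\lambda$ is the intensity of the point process $\{T_k\}$. We take $K_\epsilon(p,m)$ from Theorem~\ref{th:opt_steps} as a deterministic number of rounds; it depends only on $p$, $m$ and stationary quantities. Then $\tau_\epsilon=\sum_{k=0}^{K_\epsilon-1}(T_{k+1}-T_k)$, and linearity of expectation under $\bPp$ gives $\bEp[\tau_\epsilon]=K_\epsilon/\lambda$. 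The guarantee on $\epsilon$-accuracy is inherited directly from Theorem~\ref{th:opt_steps}, so the conditions on $\epsilon_0$ and $\eta_{\max}$ carry over unchanged.

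\textbf{Throughput formula \eqref{eq:throughput}.} The point process $\{T_k\}$ is the superposition of the departure processes from the infinite-server queues $u_i$. In stationarity, queue $u_i$ completes services at rate $\muu_i\xiu_i(t)$. The compensator (Lévy/Papangelou) argument then gives the intensity $\lambda=\sum_i\muu_i\bE[\xiu_i]$ with $\xi\sim\pi_{n,m}$.

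To get a closed form, we compute $\bE[\xiu_i]$ from the product form \eqref{eq:pi}. Write $\rho=p_i/\muu_i$ and $a_x=\rho^x/x!$, the infinite-server factor. The key relation is $x\,a_x=\rho\, a_{x-1}$. Substituting $x'=x-\mathbf{e}^{\mathrm{u}}_i$ in the sum gives
\[
\bE[\xiu_i]=\frac{p_i}{\muu_i}\,\frac{Z_{n,m-1}}{Z_{n,m}}.
\]
Multiplying by $\muu_i$ and summing over $i$, using $\sum_i p_i=1$, yields $\lambda=Z_{n,m-1}/Z_{n,m}$.

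\textbf{Gradient \eqref{eq:grad_throughput}.} We differentiate $\log Z_{n,\um}$ with respect to $p_j$. Every factor in \eqref{eq:pi} involving client $j$ is a power of $p_j$, with total exponent $\xd_j+\xc_j+\xu_j$. Hence
\[
p_j\,\partial_{p_j}Z_{n,\um}=Z_{n,\um}\sum_{s}\bE_{\pi_{n,\um}}[x_j^s],
\]
which is the same covariance/log-derivative trick used for \eqref{eq:gradD}. Therefore
\[
\partial_{p_j}\log\lambda=\partial_{p_j}\log Z_{n,m-1}-\partial_{p_j}\log Z_{n,m}=\frac1{p_j}\sum_s\bigl(\bE[X_j^s]-\bE[\xi_j^s]\bigr).
\]
Here we use that, by Proposition~\ref{prop:jackson}, $X\sim\pi_{n,m-1}$ and $\xi\sim\pi_{n,m}$. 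Multiplying by $\lambda$ gives \eqref{eq:grad_throughput}.

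One subtlety is that the $p_i$ are treated as free variables when differentiating, without enforcing the simplex constraint. This is consistent with how \eqref{eq:gradD} is stated.

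\textbf{Main obstacle.} The delicate step is the first one. We must justify using the Palm expectation for a sum of consecutive inter-update times, and confirm that $K_\epsilon$ is non-random, so that no stopping-time/Wald argument is needed. If $K_\epsilon$ were instead a random hitting index, we would fall back on Wald's identity, which requires independence structure that the Palm-stationary sequence does not directly provide. We therefore read the statement with $K_\epsilon$ deterministic, as given by \eqref{eq:K_epsilon_bounded}.
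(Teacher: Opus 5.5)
Your proposal is correct and follows essentially the same route as the paper: the Palm inversion formula $\lambda\,\bEp[T_1]=1$ applied to a deterministic $K_\epsilon(p,m)$ for \eqref{eq:tau}; the product-form shift $x\mapsto x-\eu_i$, giving $\bE[\xiu_i]=\frac{p_i}{\muu_i}\frac{Z_{n,m-1}}{Z_{n,m}}$ and hence $\lambda=Z_{n,m-1}/Z_{n,m}$; and the identity $p_j\,\partial_{p_j}\log Z_{n,\um}=\sum_s\bE_{\pi_{n,\um}}[x_j^s]$ for $\um\in\{m-1,m\}$, combined with the quotient rule, for \eqref{eq:grad_throughput}. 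Your remark about differentiating in the $p_i$ as free variables off the simplex matches the paper's implicit convention, and it is harmless because the softmax reparameterization used in the optimization only sees derivatives tangent to the simplex.
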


\begin{proof}
	See \Cref{proof_prop:time-eps} of the supplementary materials.
\end{proof}

\subsection{Discussion}

To illustrate the behavior of the proposed time-aware metric, we consider a simple two-client system under two scenarios:
(i) \textbf{homogeneous resources}, where both clients have identical computation and communication rates
($\muc_{i}=\muu_{i}=\mud_{i}=1$ for $i \in \{1,2\}$); and
(ii) \textbf{heterogeneous resources}, where Client~2 is three times faster in both computation and communication
($\muc_{2}=\muu_{2}=\mud_{2}=3$), while all other parameters are held constant.
Figure~\ref{fig:optimal_m} shows the expected wall-clock time $\bEp[\tau_{\epsilon}]$ as a function of the concurrency level~($m$) and the routing probability of Client~1 ($p_1 = 1 - p_2$).

\begin{figure}[htbp]
	\centering
	\includegraphics[width=0.8\textwidth]{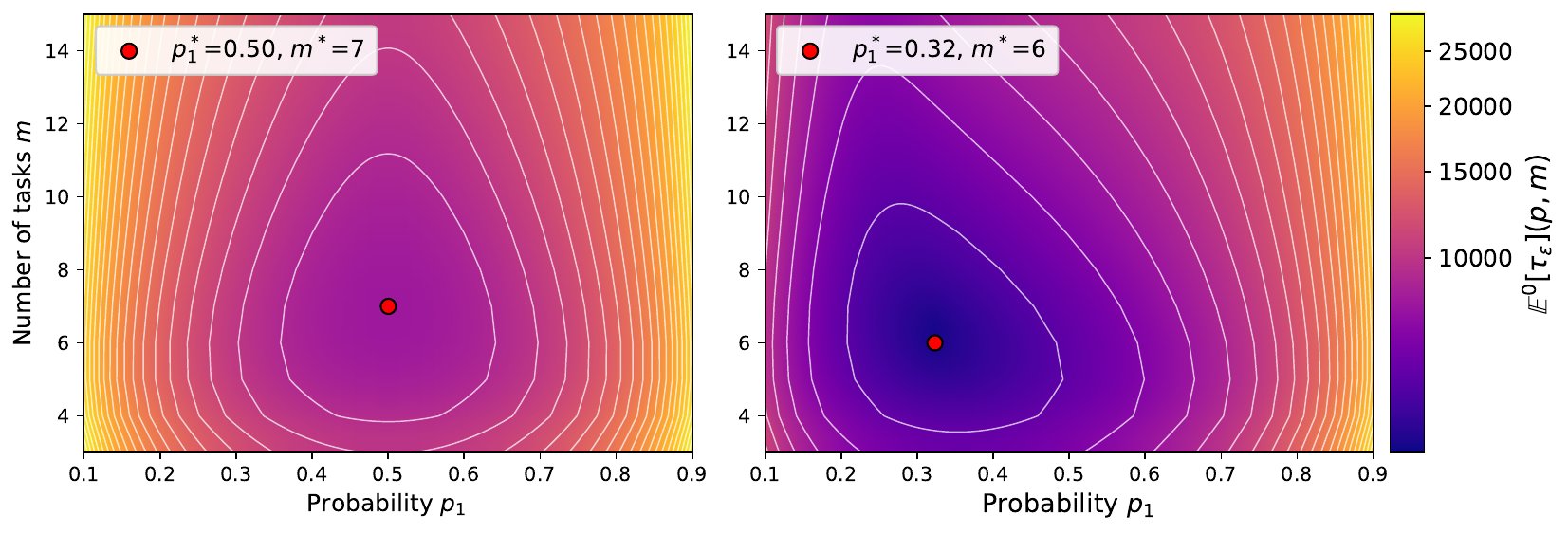}
	\caption{Impact of concurrency $m$ and routing probability $p_1$ on $\bEp[\tau_{\epsilon}]$. Left: Homogeneous resources. Right: Heterogeneous resources, where Client~2 is three times faster than Client~1 ($\mu^s_2 = 3\mu^s_1$ for $s \in \{c,u,d\}$). Constants: $\Delta=L=\sigma=1$, $M=5$, and $G=14$.}
	\label{fig:optimal_m}
\end{figure}

Unlike the round complexity $K_\epsilon$, the wall-clock time $\bEp[\tau_{\epsilon}]$ is generally \emph{not} monotonic in $m$. 
Both scenarios demonstrate the existence of an optimal concurrency level $m^* \ge 1$ that minimizes $\bEp[\tau_{\epsilon}]$. 
Intuitively, when $m < m^*$, the system is underutilized, limiting throughput; conversely, when $m > m^*$, the throughput keeps increasing, but the resulting gradient staleness degrades the convergence rate, outweighing the benefits of parallelization.

Moreover, in the heterogeneous case, the optimized routing favors the faster client, but to a lesser extent than a strategy driven purely by throughput maximization.

Consequently, $\bEp[\tau_{\epsilon}]$ captures the fundamental \textbf{trade-off} between two competing objectives: (i) reducing $K_{\epsilon}$ to improve update quality, and (ii) increasing the system throughput $\lambda$ to increase update frequency. 
These objectives are inherently conflicting, as improving one typically degrades the other.

By jointly accounting for both effects, $\bEp[\tau_{\epsilon}]$ provides a principled performance criterion that balances staleness control with convergence speed, making it a robust metric for optimizing asynchronous \gls{FL} systems.

\subsection{Numerical Results} \label{num:optimize-time}

In this section, we validate our theoretical findings by demonstrating that jointly optimizing the routing vector $p$ and the concurrency level $m$ significantly improves wall-clock convergence speed. We compare our proposed method against three baselines, resulting in the following four strategies:

\begin{enumerate}
	\item \textbf{Time-Optimized \texttt{Generalized AsyncSGD} (Proposed):} This strategy employs the optimal parameters $(p^{\ast \tau}, m^{\ast \tau})$ derived to minimize the expected time $\bEp[\tau_{\epsilon}]$ to achieve $\epsilon$-accuracy, as characterized in Proposition~\ref{prop:time-eps}.
	
	\item \textbf{Standard Baseline (\texttt{AsyncSGD})} \cite[Algorithm 2]{koloskova2022sharper}: Represents the conventional approach using full concurrency ($m=n$) and uniform routing ($p^{\text{uni}}$).
	
	\item \textbf{Round-Optimized \texttt{Generalized AsyncSGD}} \cite{leconte2024queueing,alahyane2025optimizing}: Uses the routing vector $p^{\ast K}$ that minimizes round complexity $K_\epsilon$ while maintaining full concurrency ($m=n$). This serves as a representative for methods in the literature that focus solely on optimizing round-based convergence bounds.
	
	\item \textbf{Max-Throughput \texttt{Generalized AsyncSGD}:} A strategy that maximizes the system update frequency ($p^{\ast \lambda}$) with full concurrency ($m=n$).
\end{enumerate}
We include the final baseline specifically to highlight a critical trade-off: simply maximizing the number of updates per second (throughput) can be detrimental to convergence stability and final accuracy.

\subsubsection{Experimental Setup} \label{num:exp_scenario}
We simulate a heterogeneous network of $n = 100$ clients divided into five clusters (Types A--E), spanning high-performance workstations to resource-constrained devices. To model a latency-critical edge \gls{FL} environment, we skew the population toward stragglers; specifically, Type D constitutes the largest cluster (40\%), while high-performance nodes (Type E) make up only 10\%, with the remaining 50\% comprising mid-range devices. The specific service rates for computation ($\mu^{\text{c}}$), uplink ($\mu^{\text{u}}$), and downlink ($\mu^{\text{d}}$) are detailed in Table~\ref{tab:client_profiles}.

\begin{table}[htbp]
	\centering
	\caption{Client clusters and service rates (Rate $\mu$ in tasks/sec).}
	\label{tab:client_profiles}
	\begin{tabular}{llcccc}
		\toprule
		\textbf{Type} & \textbf{Description} & $\mu^{\text{c}}$ & $\mu^{\text{u}}$ & $\mu^{\text{d}}$ & \textbf{Count} \\
		\midrule
		A & Fast compute, slow network & 10.0 & 2.0 & 2.5 & 15 \\
		B & Slow compute, fast network & 0.3 & 9.0 & 10.0 & 15 \\
		C & Balanced & 5.0 & 6.0 & 7.0 & 20 \\
		D & Straggler & 0.15 & 0.1 & 0.12 & 40 \\
		E & Super Client & 12.0 & 10.0 & 11.0 & 10 \\
		\bottomrule
	\end{tabular}
\end{table}

We evaluate performance on the EMNIST~\cite{cohen2017emnist} dataset under two distinct distribution scenarios:
\begin{itemize}
	\item \textbf{Homogeneous (IID):} Data is distributed identically across clients, with each client holding an equal number of samples from every class.
	\item \textbf{Heterogeneous (Non-IID):} We simulate feature and label heterogeneity using a Dirichlet distribution. For each class~$k$, the proportion of samples allocated to client~$j$ is drawn from $q_k \sim \text{Dir}_n(\alpha)$, where $\alpha = 0.2$ is the concentration parameter, following~\cite{yurochkin2019bayesian,li2021federatedlearningnoniiddata}.
\end{itemize}
Additional experiments on CIFAR-100~\cite{Krizhevsky2009LearningML} are provided in Section~\ref{num:time_c100} of the supplementary materials.

\subsubsection{Optimization Strategy and Results} \label{num:opt_results}
We address the minimization of $\bEp[\tau_{\epsilon}]$ via a sequential optimization approach due to the discrete nature of $m$. Iterating from $m=2$, we optimize the routing vector $p$ for each fixed $m$ using gradient descent. The search terminates when the objective function $\bEp[\tau_{\epsilon}]$ stops decreasing, signaling that the optimal pair $(m^{\ast \tau}, p^{\ast \tau})$ has been surpassed.
To accelerate convergence, we use a \textit{warm-start strategy}: the optimization for level $m+1$ is initialized using the optimal vector $p^\ast$ found at level $m$. The optimization of $p$ is performed using the Adam optimizer~\cite{kingma2014adam}, with gradients of $\bEp[\tau_{\epsilon}]$ computed in closed form via Theorem~\ref{theo:little_law} and Proposition~\ref{prop:time-eps} (see \Cref{num:opt_time_strat} of the supplementary materials for further details).

Similarly, we compute the max-throughput vector $p^{\ast \lambda}$ and the round-optimized vector $p^{\ast K}$ via Adam, utilizing the gradient expressions for $\lambda$ and $\bEp[D_i]$ provided in Equations~\eqref{eq:grad_throughput} and~\eqref{eq:gradD}.
The constants $\sigma$, $M$, and $G$, introduced in Section~\ref{data_model}, are estimated empirically from the training data. We set the target gradient norm bound to $\epsilon = 1$.

The optimized routing probabilities and corresponding staleness metrics are detailed in \Cref{tab:opt_results}. We specifically analyze the quantity $\bEp[D_i(p,m)] \, p_i^{-2}$ as a \emph{staleness impact factor}; this term represents each client's contribution to the staleness term in $K_\epsilon$, helping to identify specific clusters that may degrade algorithm stability.

\begin{table}[htbp]
	\centering
	\caption{Comparison of optimized routing probabilities and staleness impact factors across different client clusters (Types A--E).}
	\label{tab:opt_results}
	\begin{tabular}{lccccccc}
		\toprule
		& \multicolumn{3}{c}{
			\begin{tabular}{@{}c@{}}
				\textbf{Routing probabilities} \\
				$p \times 100$
			\end{tabular}
		}
		& \multicolumn{4}{c}{
			\begin{tabular}{@{}c@{}}
				\textbf{Staleness Impact Factor} \\
				$\bEp[D_i(p,m)] \, p_i^{-2} \times 10^{-2}$
			\end{tabular}
		} \\
		\cmidrule(lr){2-4} \cmidrule(lr){5-8}
		\textbf{Type}
		& $p^{\ast \tau}$
		& $p^{\ast \lambda}$
		& $p^{\ast K}$
		& $(p^{\ast \tau},\,m^\ast)$
		& $(p^{\ast \lambda},\,n)$
		& $(p^{\ast K},\,n)$
		& $(p^{\text{uni}},\,n)$ \\ 
		\midrule
		A & 1.307 & 0.845 & 0.526 & 14.2 & 181.7 & 8.6 & 7.4 \\
		B & 0.514 & 0.011 & 0.627 & 182.0 & 49\,543.7 & 28.0 & 33.9 \\
		C & 1.752 & 1.591 & 0.506 & 5.5 & 65.7 & 4.6 & 3.8 \\
		D & 0.34 & 0.005 & 1.691 & 1\,615.7 & 783\,209.8 & 84.6 & 229.6 \\
		E & 2.405 & 5.514 & 0.496 & 2.1 & 12.4 & 2.5 & 2.0 \\
		\bottomrule
	\end{tabular}%
\end{table}

Regarding concurrency, the optimization yields an optimal level of $m^{\ast \tau} = 91$, strictly less than the total client count $n=100$. This result challenges the standard convention in the literature, which typically assumes full concurrency ($m = n$).

The throughput-optimized routing ($p^{\ast \lambda}$) strongly favors fast clients, particularly the \emph{super clients} (Type~E), while aggressively down-weighting stragglers (Type~D). While this strategy maximizes the overall update frequency, it risks introducing significant bias. As evidenced in \Cref{tab:opt_results}, Types~D and B clients exhibit exploded staleness factors (orders of magnitude larger than other types). This indicates that their updates are not only rare but also extremely stale, which significantly hinders convergence.

In contrast, the round-optimized routing ($p^{\ast K}$) prioritizes stragglers to minimize global staleness, resulting in uniformly low staleness factors across all clusters, albeit
at the expense of lower system throughput.

The proposed routing ($p^{\ast \tau}$) strikes a balance: while it assigns higher probabilities to fast clients compared to the uniform baseline, it ensures stragglers retain a non-negligible selection probability. This approach effectively balances update frequency against staleness, limiting the error induced by delayed updates while maintaining higher throughput than~$p^{\ast K}$.

Quantitatively, maximizing network throughput yields $\lambda(p^{\ast \lambda}, n)=152$ updates/time unit, substantially higher than standard \texttt{AsyncSGD} ($\lambda(p^\text{uni}, n)=7.4$). On the other hand, round-optimized routing focuses on staleness reduction, resulting in the lowest throughput ($\lambda(p^{\ast K}, n)=4.5$). Finally, our proposed time-optimized configuration achieves an effective middle ground ($\lambda(p^{\ast \tau}, m^\ast)=18.7$).

These sharp contrasts set the stage for analyzing the speed-accuracy trade-off: does an $8\times$ increase in throughput actually translate to faster convergence? And does a strategy designed solely to reduce error (staleness) perform efficiently in terms of wall-clock time?

\subsubsection{Learning Performance and Trade-off Analysis} \label{num:learning_results}
We adopt the experimental setup described in Section~\ref{num:exp_scenario} to simulate the training process. To assess robustness against initial transients, we initialize the system out of equilibrium: at $t=0$, the $m$ tasks are assigned uniformly at random to the clients' downlink servers, rather than being sampled from the stationary distribution.
Furthermore, to verify that our results are not artifacts of the exponential assumption, we evaluate performance when \textbf{both computation and communication times} follow three distinct distributions:
\begin{enumerate}
	\item[(i)] \textbf{Exponential:} Note that exponential service times are specifically required for the theoretical derivations.
	\item[(ii)] \textbf{Deterministic:} Fixed service times equal to $1/\mu$ (zero variance).
	\item[(iii)] \textbf{Lognormal:} A heavy-tailed distribution with mean $1/\mu$. We set the underlying normal variance to $\sigma_N^2=1$, reflecting the high variability of real-world edge environments. This choice imposes a fixed coefficient of variation across all clients, isolating the impact of service rates.
\end{enumerate}

Models are trained using standard multi-class cross-entropy loss, and performance is reported on an unseen, label-balanced test set. Learning rates are tuned via grid search. Implementation details are provided in Supplementary Section~\ref{exp:details}.

\begin{table}[htbp]
	\centering
	\caption{Percentage time reduction relative to baselines for a target test accuracy of $0.6$ ($0.75$ in parentheses).}
	\label{tab:time_target_acc}
	
	\begin{tabular}{llccc}
		\toprule
		\multicolumn{2}{c}{\textbf{Scenario}} & \multicolumn{3}{c}{\textbf{Time Reduction (\%) vs.}} \\
		\cmidrule(lr){1-2} \cmidrule(lr){3-5}
		\textbf{Dist.} & \textbf{Data} & \textbf{Max-Throughput} & \textbf{Round-Optimized} & \textbf{AsyncSGD} \\
		\midrule
		\multirow{2}{*}{Exp.} 
		& IID     & 67.5 (66.59)  & 62.59 (66.71) & 46.28 (46.88) \\
		& Non-IID & 79.3          & 57.04         & 35.6 (36.56)  \\
		\midrule
		\multirow{2}{*}{LogN.}   
		& IID     & 59.08 (70.46) & 64.35 (66.1)  & 41.77 (46.41) \\
		& Non-IID & 79.17         & 62.44         & 37.0 (42.38)  \\
		\midrule
		\multirow{2}{*}{Det.} 
		& IID     & 52.35 (64.24) & 49.35 (61.38) & 29.84 (38.86) \\
		& Non-IID & N/A           & 58.88         & 31.16 (37.81) \\
		\bottomrule
	\end{tabular}%
\end{table}

\begin{figure*}[t]
	\centering
	\includegraphics[width=\textwidth]{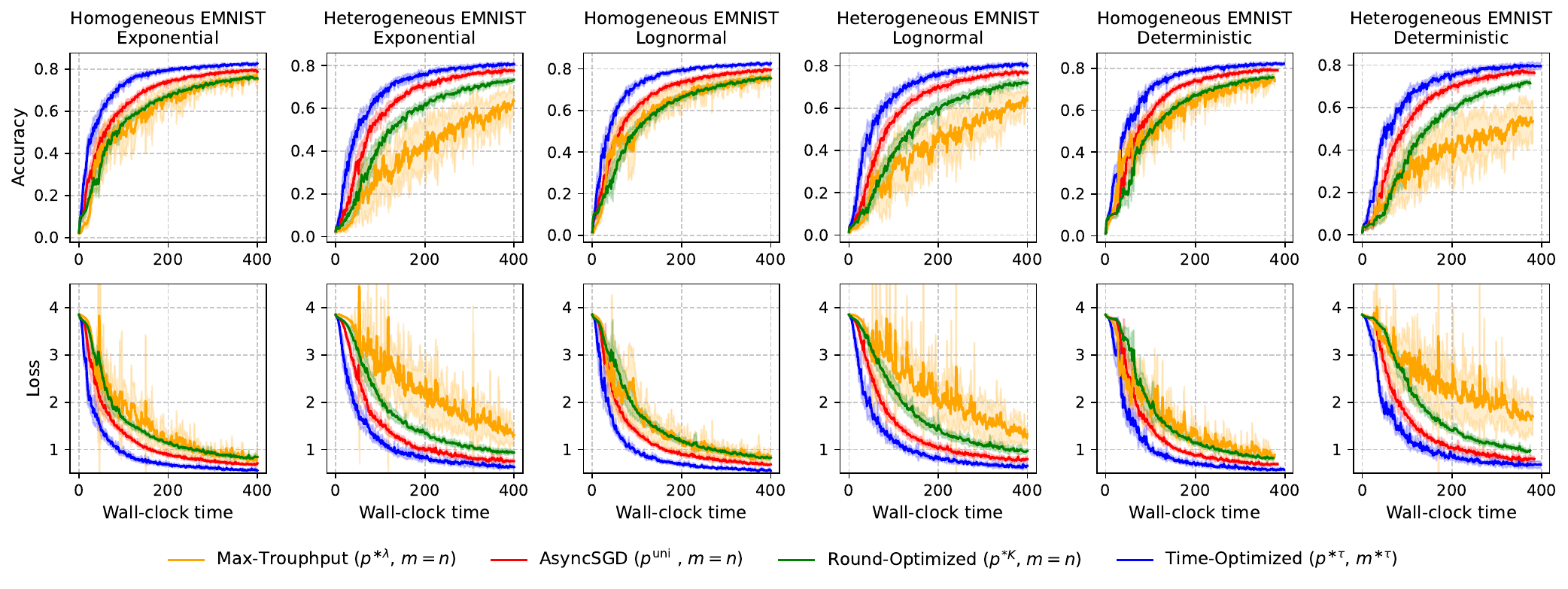}
	\caption{Test set performance at the \gls{CS} for the scenario in \Cref{num:exp_scenario}. The top row displays test accuracy and the bottom row shows loss, both plotted against wall-clock time. Columns correspond to different service time distributions under homogeneous and heterogeneous data settings. Each subplot compares the four strategies. Simulations were repeated 10 times for 400 wall-clock time units. Solid lines indicate means; shaded areas represent standard deviations.}
	\label{fig:time_results}
\end{figure*}

\Cref{fig:time_results} demonstrates that \texttt{Generalized AsyncSGD} equipped with the optimized parameters ($p^{\ast \tau}, m^{\ast \tau}$) consistently outperforms the baseline methods throughout the learning process. This advantage is quantified in Table~\ref{tab:time_target_acc}, which reports the substantial reduction in wall-clock time required to achieve a target accuracy of $0.6$ (values in parentheses denote the reduction for a higher target accuracy of $0.75$, where achievable). Notably, our approach yields significant speedups compared to standard \texttt{AsyncSGD}, as well as the Max-Throughput and Round-Optimized strategies. Furthermore, these gains are consistent across all simulated service time distributions confirming that our method's robustness extends well beyond the theoretical assumption of exponential computation times.

The results in \Cref{fig:time_results} and \Cref{tab:time_target_acc} starkly illustrate the fundamental tension between \textit{update quantity} (frequency) and \textit{update quality} (staleness).
On one extreme, the max-throughput strategy ($p^{\ast \lambda}$) prioritizes quantity. Within the fixed wall-clock window of 400 units, it executes over $60\,000$ parameter updates, more than $10\times$ the volume of our proposed method ($(p^{\ast \tau},m^{\ast \tau})$, $\approx 6\,000$). However, this unbridled speed comes at the cost of quality: in the Non-IID deterministic setting, it yields a final accuracy 60\% lower than the optimized approach. The strategy exhibits high instability (frequent loss spikes) and required a learning rate $20\times$ smaller than other baselines to prevent immediate divergence. This confirms that maximizing update frequency in isolation degrades model quality and squanders computational resources.

On the other extreme, the round-optimized strategy ($p^{\ast K}$) prioritizes quality. By aggressively favoring stragglers to minimize staleness, it ensures high gradient fidelity and stability. However, this focus severely limits the quantity of updates, completing only $1\,800$ in the same time frame. While effective in terms of ``progress per round,'' the excessive duration of each round makes it inefficient in terms of wall-clock time.

The proposed time-optimized strategy ($p^{\ast \tau},m^{\ast \tau}$) effectively bridges this gap. It strikes a critical balance, maintaining sufficient update frequency while bounding staleness enough to ensure stable convergence.
Finally, uniform routing performs acceptably in this specific setup because its probabilities closely align with the time-optimized distribution, though it lacks the targeted acceleration of the proposed approach.

\section{Energy Complexity} \label{sec:energy_complexity}

In many practical scenarios, devices operate under strict energy constraints (e.g., limited battery capacity). Consequently, it is imperative to design learning schemes that minimize energy usage while maintaining convergence guarantees.

\subsection{Energy Model} \label{sec:nrg_model}

To assess the system's energy efficiency, we adopt a state-dependent power consumption model that explicitly captures the distinct hardware characteristics of heterogeneous clients.

Following established models in the literature~\cite{sezgin2025energy,yang2020energy,zhou2022joint,luu2025energy,chu2025resource}, each client $i \in \{1,\ldots,n\}$ is assigned a power profile corresponding to its three active phases:
\begin{enumerate}[(i)]
	\item \textbf{Local Computation ($\Pc_i$):} The power consumed during gradient calculation. Following \gls{DVFS} laws~\cite{mao2017stochastic,luo2020hfel,sezgin2025energy}, power consumption is generally proportional to the cube of the average CPU frequency. In our model, this is expressed as $\Pc_i \propto (\mu_i^{\text{c}})^3$.
	\item \textbf{Uplink Transmission ($\Pu_i$):} The power required to transmit the computed gradients to the \gls{CS} for each assigned task.
	\item \textbf{Downlink Reception ($\Pd_i$):} The power required to download model parameters from the \gls{CS} for each assigned task.
\end{enumerate}
The total energy consumption for a specific task~$k$ at client~$i$, encompassing the full cycle from receiving parameters to sending updates, is given by:
\begin{align}
	E_{i,k} = \Pd_i T_{i,k}^{\text{d}} + \Pc_i T_{i,k}^{\text{c}} + \Pu_i T_{i,k}^{\text{u}},
\end{align}
where $T_{i,k}^{\text{c}}$, $T_{i,k}^{\text{u}}$, and $T_{i,k}^{\text{d}}$ denote the random time durations for computation, uplink, and downlink transmission, respectively, associated with the $k$-th~task received by client~$i$.

We assume the \gls{CS} is connected to the fixed power grid and is not battery-constrained. Therefore, we focus on the energy consumption of the edge devices. However, this model is easily generalizable: if the server's energy is a factor, its transmission power can be integrated into the downlink cost $\Pd_i$.

Accordingly, the instantaneous power cost of the system at time~$t$ can be written as:
\begin{align}
	P(t) = \sum_{i=1}^n \Pc_i \,\mathbf{1}\{\xic_i(t) > 0\}
	+ \Pu_i \,\xiu_i(t) 
	+ \Pd_i \,\xid_i(t).
\end{align}

\subsection[Energy to achieve ε-accuracy]{Energy to Achieve $\epsilon$-Accuracy}

To evaluate the algorithmic energy efficiency, we analyze the expected total energy required to reach $\epsilon$-accuracy, denoted by $\bEp[E_{\epsilon}]$. The following proposition provides a closed-form characterization of this metric.

\begin{proposition} \label{prop:energy-eps}
	Under Assumptions~\textbf{A1}--\textbf{A5}, there exists $\epsilon_0 > 0$ such that for any target accuracy $\epsilon \in (0, \epsilon_0]$ and any learning rate satisfying $\eta \le \eta_{\max}(p,m)$, the expected energy required to reach $\epsilon$-accuracy is given by:
	\begin{align}
		\bEp[E_{\epsilon}] 
		= K_{\epsilon}(p,m)\, \frac{\mathbb{E}[P(0)]}{\lambda(p,m)} \notag
		= K_{\epsilon}(p,m) \sum_{i=1}^n p_i \mathcal{E}_i,
	\end{align}
	where $\mathcal{E}_i \triangleq \frac{\Pc_i}{\muc_i} + \frac{\Pu_i}{\muu_i} + \frac{\Pd_i}{\mud_i}$ is client~$i$'s average energy cost per task,
	$K_{\epsilon}$ is the round complexity defined in \Cref{eq:K_epsilon_bounded} of Theorem~\ref{th:opt_steps}, and $\lambda$ is the system throughput defined in~\eqref{eq:throughput}.
\end{proposition}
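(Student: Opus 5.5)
The plan is to express the total energy as the time integral of the instantaneous power $P(t)$ over the training horizon, and then use Palm calculus (the Palm inversion formula) to convert the stationary mean power into energy per round. Let $\tau_\epsilon = T_{K_\epsilon}$ denote the end of the $K_\epsilon(p,m)$-th round. The total energy is
\[
E_\epsilon = \int_0^{\tau_\epsilon} P(t)\,dt = \sum_{k=0}^{K_\epsilon-1}\int_{T_k}^{T_{k+1}} P(t)\,dt .
\]
Under $\bPp$ the sequence of per-round contributions is stationary, so $\bEp[E_\epsilon] = K_\epsilon\, \bEp\bigl[\int_0^{T_1} P(t)\,dt\bigr]$. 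The Palm inversion formula for the stationary point process $\{T_k\}$, which has intensity $\lambda(p,m)$ by \eqref{eq:throughput}, gives $\bE[P(0)] = \lambda(p,m)\, \bEp\bigl[\int_0^{T_1}P(t)\,dt\bigr]$. Together these yield the first equality $\bEp[E_\epsilon] = K_\epsilon\,\bE[P(0)]/\lambda(p,m)$. The convergence requirement on $K_\epsilon$ and on $\eta$ is inherited directly from Theorem~\ref{th:opt_steps}, exactly as in Proposition~\ref{prop:time-eps}.

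For the second equality, compute $\bE[P(0)] = \sum_i \bigl(\Pc_i\,\bP(\xic_i>0) + \Pu_i\,\bE[\xiu_i] + \Pd_i\,\bE[\xid_i]\bigr)$ under $\pi_{n,m}$, and show that each term equals $\lambda p_i/\mu^s_i$. The cleanest route is flow balance, which is equivalent to Little's law or rate conservation. In the closed Jackson network of Proposition~\ref{prop:jackson}, every task leaving an uplink server is routed to $d_i$ with probability $p_i$. The throughput through each of the nodes $d_i$, $c_i$ and $u_i$ is therefore $\lambda p_i$. Because the single server $c_i$ completes work at rate $\muc_i$ whenever it is busy, $\muc_i\,\bP(\xic_i>0) = \lambda p_i$. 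Because the infinite-server nodes complete tasks at rate $\mu^s_i x^s_i$, we get $\muu_i\,\bE[\xiu_i] = \lambda p_i$, and likewise $\mud_i\,\bE[\xid_i] = \lambda p_i$.

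These identities can also be verified directly from the product form \eqref{eq:pi}. Factoring out one unit from node $c_i$ shows $\bP(\xic_i>0) = (p_i/\muc_i) Z_{n,m-1}/Z_{n,m}$. The shift $x \mapsto x - e$ applied to the weight $a^{x}/x!$ gives $\bE[\xiu_i] = (p_i/\muu_i) Z_{n,m-1}/Z_{n,m}$, and similarly for $\xid_i$. Combining with $\lambda = Z_{n,m-1}/Z_{n,m}$ from \eqref{eq:throughput}, we obtain $\bE[P(0)] = \lambda\sum_i p_i \mathcal{E}_i$, and the factor $\lambda$ cancels.

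The main obstacle is justifying the step $\bEp[E_\epsilon] = K_\epsilon\,\bEp[\int_0^{T_1}P]$ rigorously. This needs care in two respects. First, $\tau_\epsilon$ is a fixed number of Palm-stationary cycles and not a deterministic time, so I will rely on stationarity of the cycle sequence under $\bPp$; equivalently, I will apply Neveu's exchange formula as in the proof of Proposition~\ref{prop:time-eps}. Second, one must check integrability, which is straightforward because $P(t) \le \sum_i (\Pc_i + (\Pu_i+\Pd_i)m)$ is bounded and $\bEp[T_1] = 1/\lambda < \infty$.

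An alternative worth noting is to sum the per-task energies $E_{i,k}$. In each round exactly one task is dispatched, to client $i$ with probability $p_i$, and its expected cost is $\mathcal{E}_i$ because the service durations are independent of routing. This gives the second expression immediately. It does, however, count tasks dispatched rather than power integrated, so the two counts differ by a boundary term involving at most $m$ in-flight tasks. I will present the Palm argument as the main proof and mention this interpretation as a sanity check.
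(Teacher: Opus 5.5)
Your proposal is correct and follows essentially the same route as the paper: split $E_\epsilon$ into per-round integrals and use cycle-stationarity under $\bEp$. The Palm inversion formula with $X_t = P(t)$ then gives $K_\epsilon\,\bE[P(0)]/\lambda(p,m)$, and evaluating $\bP(\xic_i>0)$, $\bE[\xid_i]$ and $\bE[\xiu_i]$ from the product form makes the common factor $Z_{n,m-1}/Z_{n,m}=\lambda(p,m)$ cancel. Your flow-balance reading ($\mu^s_i$ times the busy probability or mean occupancy equals $\lambda p_i$) restates those same identities, and your boundedness remark on $P(t)$ supplies the integrability the paper leaves implicit.
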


\begin{proof}
	See Section~\ref{proof_prop:energy-eps} of the supplementary materials.
\end{proof}

The quantity $\sum_{i=1}^n p_i \mathcal{E}_i$ represents the \emph{average energy consumed per round}. 
Importantly, this quantity depends only on the routing vector $p$ and the clients’ hardware characteristics, and is independent of the concurrency level~$m$.

\subsection{Energy-Latency Trade-off} \label{nrg_latency}

Since the round complexity $K_{\epsilon}$ is non-decreasing with respect to $m$ (see Section~\ref{sec:discussion}) while the energy consumed per round is invariant with respect to $m$, the expected total energy $\bEp[E_{\epsilon}]$ is strictly minimized when $m=1$.
Consequently, an energy-optimal strategy requires setting $m=1$ and selecting the routing vector $p_E^\ast$ that minimizes the following product (rather than the uniform routing optimal for $K_{\epsilon}$ at $m=1$):
\begin{align} \label{opt_E}
	\min_{p \in \mathcal{P}_n} \quad
	\underbrace{\left[ \frac{24L\Delta}{n^2 \epsilon} \left(4 + \frac{B}{\epsilon}\right) \sum_{j=1}^n \frac{1}{p_j} \right]}_{K_{\epsilon}(p,\, m=1)}
	\;\times\;
	\underbrace{\left[ \sum_{i=1}^n p_i \mathcal{E}_i \right]}_{\text{Energy per round}}.
\end{align}
By applying the Cauchy-Schwarz inequality (see Section~\ref{proof_opt_E_routing} of the supplementary materials), the closed-form solution to this minimization problem is given by:
\begin{align} \label{opt_E_routing}
	p^{\ast E}_i \propto \frac{1}{\sqrt{\mathcal{E}_i}}, \quad i \in \{1,\ldots,n\}.
\end{align}
Evaluating the objective in \eqref{opt_E} at $p^{\ast E}$ yields the minimal average energy consumption:
\begin{align}
	E^\ast = \frac{24L\Delta}{n^2 \epsilon} \left(4 + \frac{B}{\epsilon}\right)  \left( \sum_{i=1}^n \sqrt{\mathcal{E}_i} \right)^2.
\end{align}

However, operating with $m=1$ eliminates parallelism and the benefits of asynchrony, leading to prohibitively large wall-clock training times.  
This exposes a fundamental energy-latency trade-off: increasing the concurrency level~$m$ accelerates training in wall-clock time but increases the total energy cost, as higher staleness induces extra rounds to reach the target accuracy.  
Conversely, minimizing energy favors sequential execution and routing toward low-power devices.

This tension is further amplified by hardware heterogeneity: energy-efficient devices are often the slowest (e.g., IoT sensors compared to GPUs), so routing strategies that minimize energy typically incur the largest latency penalties.

\subsection{Joint Optimization Problem}

Following \cite{yang2023asynchronous,chu2025resource,zhou2022joint,luo2021cost,luo2020hfel}, we address the inherent trade-off between wall-clock training time and total energy consumption while ensuring convergence by formulating a joint optimization problem over the routing probabilities~$p$ and the concurrency level~$m$.  
The two objectives (expected time $\bEp[\tau_{\epsilon}]$ and energy $\bEp[E_{\epsilon}]$) have different units and scales, and are therefore combined using a normalized scalarization.

Specifically, each objective is normalized by its theoretical minimum, corresponding to optimizing that metric alone. The resulting problem is
\begin{align} \label{eq:joint_opt}
	\min_{m \in \mathbb{N}_{>0},\, p \in \mathcal{P}_n} \quad 
	\rho \,\frac{\bEp[E_{\epsilon}(p,m)]}{E^\ast}
	\;+\; 
	(1-\rho) \,\frac{\bEp[\tau_{\epsilon}(p,m)]}{\tau^\ast},
\end{align}
where $\tau^\ast=\bEp[\tau_{\epsilon}(p_\tau^\ast,m_\tau^\ast)]$ is the minimum achievable training time (see Section~\ref{num:optimize-time}), $E^\ast=\bEp[E_{\epsilon}(p_E^\ast,1)]$ is the minimum achievable energy consumption, and $\rho\in[0,1]$ is a user-defined trade-off parameter.

The weights $\rho$ and $1-\rho$ represent the relative importance of energy and time, respectively.  
Equivalently, the ratio $\rho/(1-\rho)$ defines a marginal rate of substitution, quantifying how much relative training speed one is willing to sacrifice for a given relative reduction in energy consumption.

\subsection{Numerical Results}

\subsubsection{Experimental Setup}
We address the joint optimization problem~\eqref{eq:joint_opt} by extending the experimental framework of \Cref{num:exp_scenario} with an explicit power-consumption model.  
Client service rates follow \Cref{tab:client_profiles}, while local computation power is modeled using a cubic \gls{DVFS} scaling law,
$P_{\text{comp}} = \kappa (\mu^c)^3$, where $\kappa$ is a hardware-specific energy coefficient.

\begin{table}[htbp]
	\centering
	\caption{Energy coefficients and power profiles (normalized units). Computation power follows $P_{\text{comp}} = \kappa (\mu^c)^3$, where a lower $\kappa$ indicates higher hardware efficiency.}
	\label{tab:power_coefficients}
	\begin{tabular}{ll c c c c}
		\toprule
		& & \textbf{Coeff.} & \multicolumn{3}{c}{\textbf{Relative Power Profile}} \\
		\cmidrule(l){4-6}
		\textbf{Type} & \textbf{Characteristic Profile} & $\boldsymbol{\kappa}$ & $\mathbf{P_{\text{comp}}}$ & $\mathbf{P_{\text{u}}}$ & $\mathbf{P_{\text{d}}}$ \\
		\midrule
		A & Compute-Efficient \& Network-Limited & 0.08 & 80.0 & 5.0 & 3.0 \\
		B & Compute-Constrained \& Comm-Optimized & 200.0 & 5.4 & 15.0 & 10.0 \\
		C & Balanced Speed \& Power & 0.25 & 31.3 & 4.0 & 3.0 \\
		D & Straggler \& Highly Inefficient & 14400.0 & 48.6 & 0.5 & 0.2 \\
		E & High Throughput \& Power-Intensive & 1.50 & 2592.0 & 50.0 & 40.0 \\
		\bottomrule
	\end{tabular}%
\end{table}

As \Cref{tab:power_coefficients} illustrates, the energy coefficient $\kappa$ captures the extreme heterogeneity across the diverse client profiles in our network. While Type~A's efficient architecture (low $\kappa$) delivers high throughput at moderate power, Type~E requires massive power for only marginal speed gains due to architectural and cooling overhead. Most notably, Type~D draws significant baseline computing power despite abysmal compute speeds. Because neither the fastest nor the slowest clients are strictly energy-optimal, navigating this hardware diversity makes our energy-aware optimization highly non-trivial.

\subsubsection{Optimization Strategy and Results}
We solve the joint optimization problem~\eqref{eq:joint_opt} by identifying the optimal pair $(p^{\ast \rho}, m^{\ast \rho})$ for a given value of $\rho$.  
To handle the discrete nature of the concurrency~$m$, we adopt a sequential optimization strategy: starting from $m=1$, we optimize the routing vector $p$ for each fixed $m$ using gradient descent, with a warm-start initialization from the solution obtained at the previous level.

Figure~\ref{fig:pareto_opt} summarizes the joint optimization: the left panel illustrates the Time–Energy Pareto frontier annotated with $\rho$, while the right panel shows how optimal routing probabilities $p^{\ast \rho}$ and concurrency $m^{\ast \rho}$ evolve as the objective shifts from strictly time-centric ($\rho = 0$) to strictly energy-centric ($\rho = 1$).

\begin{figure}[htbp]
	\centering
	\includegraphics[width=\linewidth]{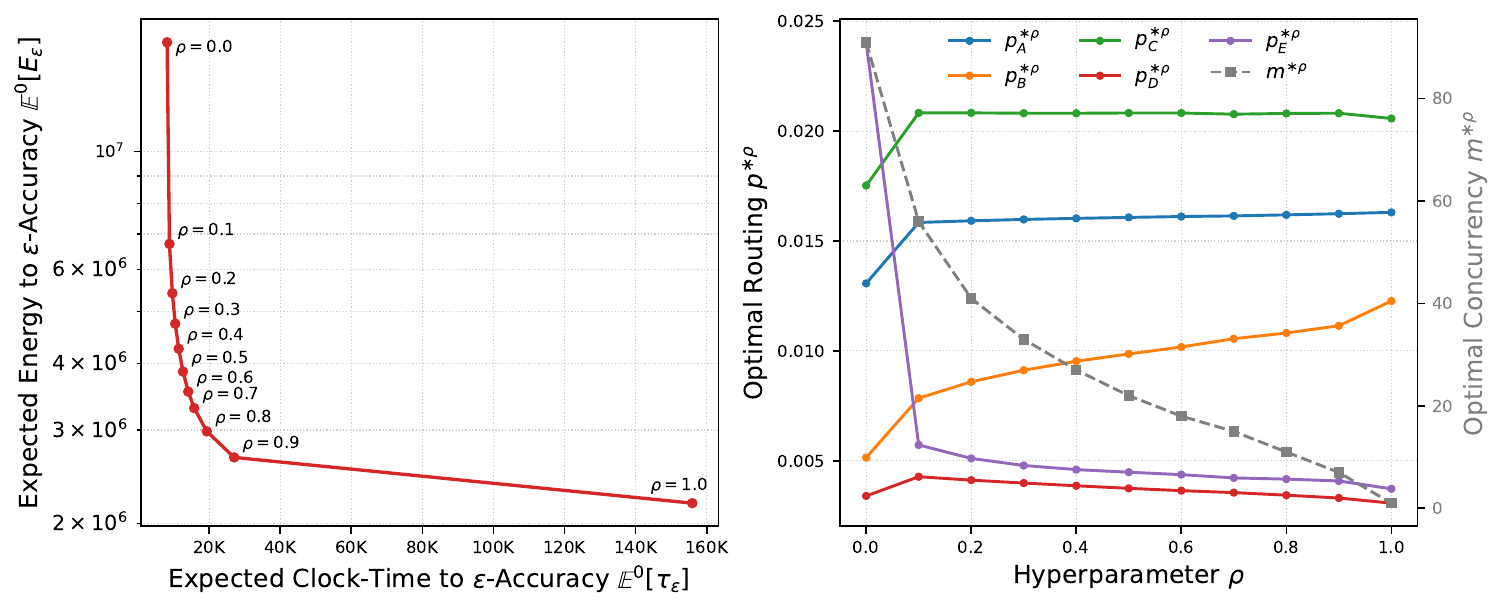}
	\caption{Joint optimization analysis. \textbf{Left:} Time–Energy Pareto frontier annotated with $\rho$. \textbf{Right:} Evolution of optimal routing probabilities $p^{\ast \rho}$ and optimal concurrency $m^{\ast \rho}$ as a function of $\rho$, illustrating the systematic shift away from energy-inefficient Type~E clients.}
	\label{fig:pareto_opt}
\end{figure}

Introducing a slight energy penalty ($\rho = 0.1$) sharply reduces concurrency from $m=91$ to $56$, driving the initial energy drop on the Pareto frontier. As $\rho \to 1$, the system further mitigates power waste by further reducing concurrency, ultimately converging to strictly serial execution ($m=1$).

Simultaneously, the routing dynamics reflect the network's hardware heterogeneity. Time-centric optimization ($\rho = 0$) heavily favors fast Type~E clients. However, increasing $\rho$ rapidly throttles this cluster due to its immense power overhead, shifting weights toward efficient Type~C and Type~A clients. Conversely, the Type~D stragglers are universally penalized across all regimes, lacking both the speed to reduce time and the efficiency to save energy.

\subsubsection{Learning Performance and Trade-off Analysis} \label{num:energy_learning}
To validate these theoretical findings in a realistic learning scenario, we simulate training on KMNIST dataset~\cite{clanuwat2018deep} using identical time and power profiles. For each $\rho$, we apply the corresponding optimal configuration $(p^{\ast \rho}, m^{\ast \rho})$. We consider a heterogeneous data distribution (Dirichlet with $\alpha = 0.2$) and exponential service times. Averaging 10 independent runs to a target test accuracy of 0.75, the empirical results (\Cref{fig:pareto_opt_kmnist}) match our theoretical performance trends. They reveal a non-linear trade-off governed by $\rho$, where a slight relaxation in convergence speed can yield large energy savings.

\begin{figure}[htbp]
	\centering
	\includegraphics[width=0.6\linewidth]{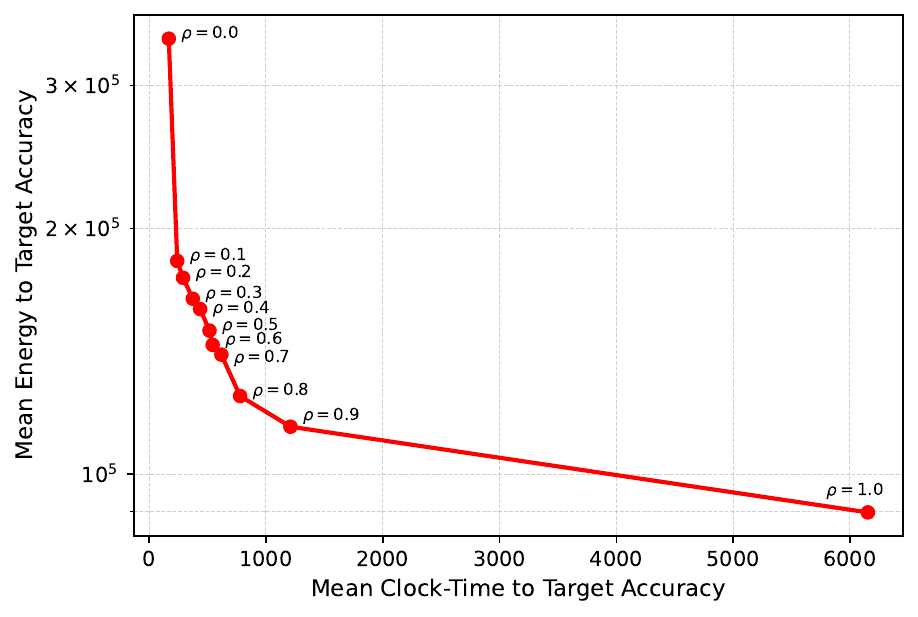}
	\caption{Empirical Time-Energy Pareto frontier for the KMNIST dataset, illustrating the mean clock-time versus mean energy required to reach a target test accuracy of 0.75 across 10 independent runs.}
	\label{fig:pareto_opt_kmnist}
\end{figure}

For further empirical validation, we evaluate our joint optimization strategy $(p^{\ast \rho}, m^{\ast \rho})$ against the \texttt{AsyncSGD} baseline~\cite[Algorithm~2]{koloskova2022sharper} on the KMNIST and EMNIST datasets. As revealed by our prior analysis, we set $\rho=0.1$ to secure substantial energy savings with minimal convergence delay. Simulations follow the setup in Section~\ref{num:learning_results} using power profiles from \Cref{tab:power_coefficients}. Table~\ref{tab:relative_gains} details the relative improvements over the baseline across all evaluated scenarios.
Detailed learning trajectories for the EMNIST dataset, plotted against both clock-time and energy, are provided in Section~\ref{num:optimize-energy} of the supplementary materials.

By penalizing energy-inefficient clients and throttling concurrency to $m=56$, the optimizer consistently cuts total energy consumption by $36\%\text{--}49\%$ across all evaluated network conditions. Crucially, it simultaneously accelerates wall-clock convergence by up to $19\%$ in nearly every scenario. The sole exception is the EMNIST deterministic Non-IID setting, which intelligently trades a marginal $3.15\%$ time increase for a massive $36.19\%$ energy reduction. This confirms that naive \texttt{AsyncSGD} is inherently suboptimal in
both clock-time and energy. Conversely, our joint strategy actively navigates the Pareto frontier to secure large energy savings without sacrificing overall convergence speed.

\begin{table}[htbp]
	\centering
	\caption{Relative time and energy reduction (\%) of $(p^{\ast \rho}, m^{\ast \rho})$ at $\rho=0.1$ compared to \texttt{AsyncSGD} at target accuracies of 0.80 (KMNIST) and 0.70 (EMNIST). Positive values indicate savings; negative denote increases.}
	\label{tab:relative_gains}
	\begin{tabular}{ll cccccc}
		\toprule
		\multirow{2}{*}{\textbf{Dataset}} & \multirow{2}{*}{\textbf{Metric}} & \multicolumn{2}{c}{\textbf{Exponential}} & \multicolumn{2}{c}{\textbf{Lognormal}} & \multicolumn{2}{c}{\textbf{Deterministic}} \\
		\cmidrule(lr){3-4} \cmidrule(lr){5-6} \cmidrule(lr){7-8}
		& & \textbf{IID} & \textbf{Non-IID} & \textbf{IID} & \textbf{Non-IID} & \textbf{IID} & \textbf{Non-IID} \\
		\midrule
		\multirow{2}{*}{\textbf{KMNIST}} 
		& \textbf{Time}   & 6.11  & 10.58 & 18.95 & 8.63  & 15.46 & 0.53 \\
		& \textbf{Energy} & 46.90 & 46.10 & 45.53 & 40.90 & 47.19 & 36.62 \\
		\midrule
		\multirow{2}{*}{\textbf{EMNIST}} 
		& \textbf{Time}   & 15.41 & 3.90  & 6.12  & 6.93  & 7.12  & -3.15 \\
		& \textbf{Energy} & 46.26 & 41.60 & 41.83 & 48.97 & 39.40 & 36.19 \\
		\bottomrule
	\end{tabular}%
\end{table}

\section[Incorporating a CS-Side Buffer]{Incorporating a \gls{CS}-Side Buffer}\label{sec:cs_buffer_model}

While theoretical analyses of asynchronous \gls{FL} often assume instantaneous global updates, practical edge deployments necessitate sequential processing to ensure atomic access to shared parameters. This serialization creates a bottleneck where updates arriving in quick succession must queue at the \gls{CS} before incorporation~\cite{zuo2024asynchronous}. This \gls{CS}-side latency stems several factors:
\begin{itemize}
	\item \textbf{Model Size:} Updating large-scale models (e.g., LLMs) incurs non-negligible memory I/O and vector arithmetic delays, particularly on CPU-based servers \cite{khan2023towards}.
	\item \textbf{Security \& Privacy Overheads:} Robust deployment requires the \gls{CS} to perform computationally intensive tasks on every incoming update (such as decrypting payloads, verifying digital signatures, or screening for poisoning attacks) before the update is safe to apply~\cite{zhao2023privacy,de2025practical,ghinani2025fusefl}.
	\item \textbf{High Concurrency:} In massive IoT networks, the aggregate arrival rate of client updates can temporarily exceed the \gls{CS}'s service capacity, leading to significant congestion and unavoidable queueing delays~\cite{leconte2024queueing,nguyen2022federated}.
\end{itemize}
Consequently, we extend our framework to explicitly model the \gls{CS} as a queueing system where incoming gradients are buffered and processed sequentially.

\subsection{Model Description}\label{sec:model_description2}

We build on the network model introduced in \Cref{sec:model_description} by relaxing the assumption that the \gls{CS} service time is negligible.  
Instead, we model the \gls{CS} as a single-server queue operating under a \gls{FIFO} discipline, where model-update processing times are assumed to be \gls{iid} exponential random variables with rate~$\muCS$.  
Incorporating these dynamics yields the closed queueing network depicted in \Cref{fig:model_cs}.

To derive a closed-form expression for the relative delay, it is necessary to track the number of parameter updates that occur between the time a task is dispatched to a client and the moment its corresponding gradient is applied at the \gls{CS}. This requires identifying the origin of every task waiting in the \gls{CS} queue.
Since a standard queueing representation aggregates all tasks at the \gls{CS} and thus obscures their origin, we reformulate the system as a \textbf{multi-class Jackson network}, illustrated in \Cref{fig:model_cs} using distinct colors.  
Each task belongs to one of $n$ classes, identified by the client to which it is assigned. Specifically, when the \gls{CS} routes a task to client~$i$, it is assigned label~$i$ and retains this class identity throughout the entire cycle until it completes service at the \gls{CS}.

\begin{figure}[htbp]
	\centering
	\includegraphics[width=0.7\textwidth]{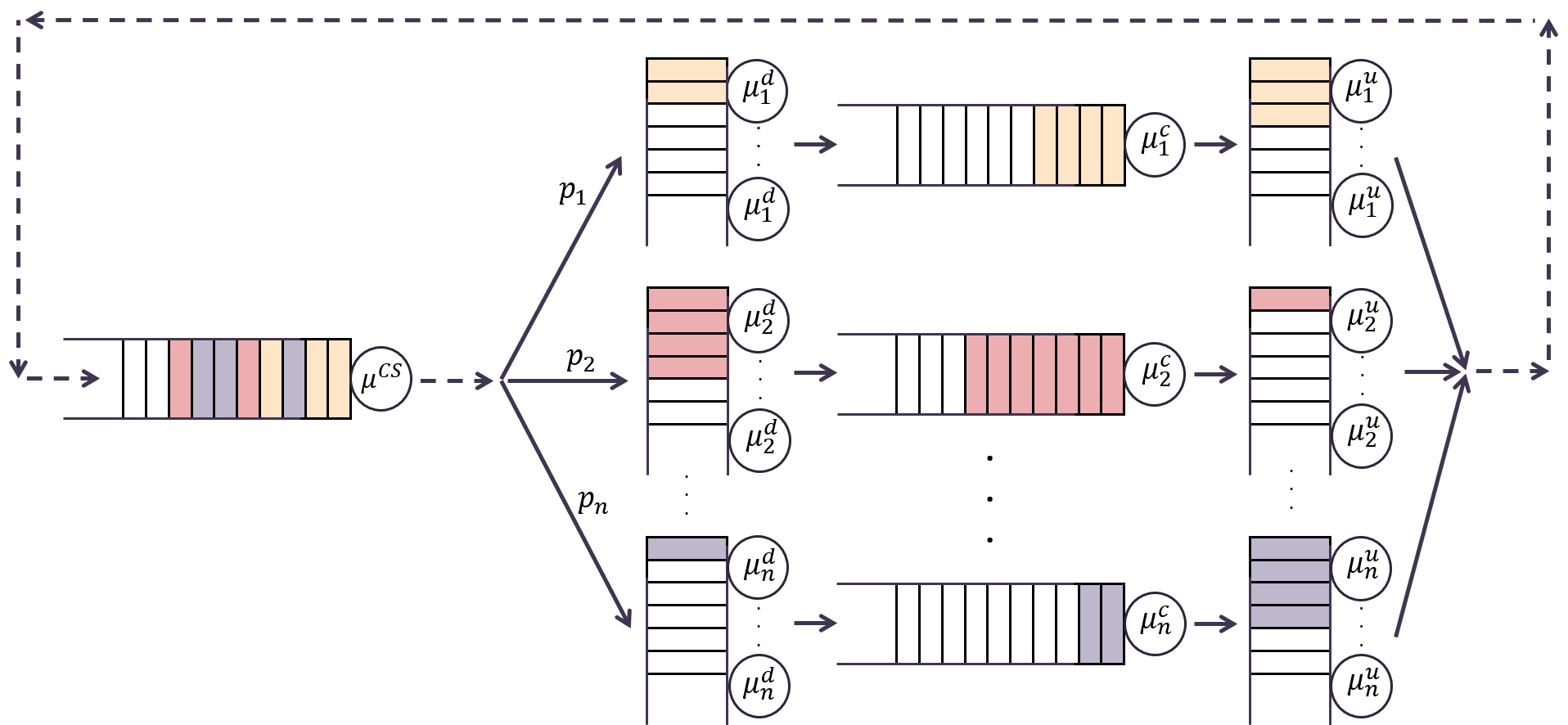}
	\caption{Extended queueing network model incorporating computation, communication, and \gls{CS}-side update latencies. Colors correspond to task classes.}
	\label{fig:model_cs}
\end{figure}

The system dynamics are described by the right-continuous (càdlàg) stochastic process $Y = (Y(t))_{t \ge 0}$ defined on the expanded state space $\mathcal{X}_{4n,m}$.  
At any time $t \ge 0$, the state is given by
\[
Y(t) = \bigl(\YCS_i(t), \,\Yd_i(t), \,\Yc_i(t), \,\Yu_i(t) \; ; \; i \in \{1,\ldots,n\}\bigr),
\]
where $\YCS_i(t)$ denotes the number of class-$i$ tasks currently present at the \gls{CS} (either waiting or in service).  
Similarly, for $s \in \{\mathrm{d},\mathrm{c},\mathrm{u}\}$, $Y^s_i(t)$ denotes the number of tasks at the corresponding server associated with client~$i$.

In this context, a \emph{round} is defined as the time interval between two consecutive service completions at the \gls{CS}, corresponding to successive model updates.  
Let $\{T_k\}_{k \in \mathbb{N}}$ denote the sequence of these completion times, with $T_0 = 0$; the interval $[T_k, T_{k+1})$ thus defines round~$k$.  
As before, $\mathbb{P}$ denotes the stationary probability measure, while $\mathbb{P}^0$ (and $\mathbb{E}^0$) denote the Palm probability (and expectation) associated with $\{T_k\}_{k \in \mathbb{N}}$, conditioning on a service completion at the \gls{CS} at time~$t=0$.

Characterizing the relative delay requires analyzing the system state observed at model-update instants.  
To this end, we define the embedded process
\begin{align*}
	\tilde{Y}_k = \bigl(\tYCS_{i,k}, \,\tYd_{i,k}, \,\tYc_{i,k}, \,\tYu_{i,k} \; ; \; i \in \{1,\ldots,n\}\bigr)
	\in \mathcal{X}_{4n,m-1},
\end{align*}
which represents the network state immediately after a service completion at the \gls{CS}, excluding the task that has just completed service.  
Its components are obtained from the post-jump state $Y(T_k)$ as
\begin{align} \label{def:tilde_Y}
	\begin{cases}
		\tYCS_{i,k} = \YCS_i(T_k), \\[0.5ex]
		\tYd_{i,k} = Y^d_i(T_k) - \mathbf{1}\{A_k = i\}, \\[0.5ex]
		\tYc_{i,k} = Y^c_i(T_k), \\[0.5ex]
		\tYu_{i,k} = Y^u_i(T_k),
	\end{cases}
	\qquad i \in \{1,\ldots,n\},  
\end{align}
where $A_k$ denotes the index of the client selected to receive the task completed at time~$T_k$, with $\mathbb{P}(A_k = i) = p_i$.

\paragraph*{Relative Delay}
In this extended framework, the relative delay $D_{i,k}$ is defined as the number of service completions occurring at the \gls{CS} during the sojourn of a task assigned to client~$i$ in round~$k$.
This sojourn encompasses the entire cycle: dispatch to server~$d_i$, queueing and local processing at client~$i$, transmission via server~$u_i$, and finally, queueing and update application at the \gls{CS}.
If no task is assigned to client~$i$ in round~$k$ ($A_k \neq i$), we set $D_{i,k} = 0$.

\subsection{Stationary Analysis}\label{sec:stationary_analysis}

Although the processes $(Y(t))_{t \ge 0}$ and $(\tilde{Y}_k)_{k \in \mathbb{N}}$ track task histories via class labels, they are not Markovian in the chosen state space, as it records only class counts and ignores the \gls{FIFO} ordering at the \gls{CS}.  
Nevertheless, the system admits a product-form marginal stationary distribution, stated in the following proposition and used in the subsequent delay analysis.

\begin{proposition} \label{prop:jackson2}
	In the setting of \Cref{sec:model_description2}, the processes $(Y(t))_{t \ge 0}$ and $(\tilde{Y}_k)_{k \in \bN}$ admit unique stationary distributions $\phi_{n, m}$ and $\phi_{n, m-1}$, respectively.
	For any $\um \in \bN_{>0}$, the distribution $\phi_{n, \um}$ is given by the product form:
	\begin{align} \label{eq:phi}
		\phi_{n,\um}(x)
		&= \frac{1}{W_{n,\um}}
		\frac{\left(\sum\limits_{j=1}^n \xCS_j\right)!}{\prod\limits_{j=1}^n \xCS_j!}
		\prod_{i=1}^n
		\frac{1}{\xd_i!}
		\frac{1}{\xu_i!}
		\prod_{\substack{s \in \{\mathrm{CS},\\ \mathrm{c},\mathrm{d},\mathrm{u}\}}}
		\left( \frac{p_i}{\mud_i} \right)^{\xd_i},
	\end{align}
	for any state $x \in \mathcal{X}_{4n,\um}$, where $W_{n,\um}$ is the normalizing constant.
\end{proposition}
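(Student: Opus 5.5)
The plan is to obtain the aggregated product form \eqref{eq:phi} by lifting to a finer state that \emph{is} Markovian, proving a product form there, and then summing out the extra information. I read \eqref{eq:phi} as the product, over all $i$ and all stations $s$, of the Jackson factor $(p_i/\mu^s_i)^{x^s_i}$, with $\muCS$ in place of $\mu^s_i$ at the \gls{CS}, times the multinomial coefficient and the $1/x!$ factors of the infinite-server stations.

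\textbf{Refined process.} Let $\hat Y(t)$ record:
\begin{itemize}
\item the counts $(\Yd_i,\Yc_i,\Yu_i)_{i}$, exactly as in \Cref{sec:stochastic_network};
\item the ordered word $(c_1,\ldots,c_\ell)\in\{1,\ldots,n\}^\ell$ of class labels at the \gls{CS} queue, with $c_1$ in service.
\end{itemize}
Client queues need no ordering, since every task at $c_i$ carries label $i$. With exponential times and Bernoulli routing, $\hat Y$ is a continuous-time Markov chain on a finite set. It is irreducible: from any state, serve everything at the \gls{CS} and redistribute via the positive probabilities $p_i$. Hence it has a unique stationary law. The \gls{CS} is a multi-class \gls{FIFO} station whose service rate $\muCS$ does not depend on class, so this is a Kelly/BCMP network.

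\textbf{Product form on the refined space.} I will verify the candidate
\[
\hat\phi(\hat y)\propto \Bigl(\prod_{r=1}^{\ell}\frac{p_{c_r}}{\muCS}\Bigr)\prod_{i=1}^n \Bigl(\frac{p_i}{\muc_i}\Bigr)^{\yc_i}\frac{1}{\yd_i!}\Bigl(\frac{p_i}{\mud_i}\Bigr)^{\yd_i}\frac{1}{\yu_i!}\Bigl(\frac{p_i}{\muu_i}\Bigr)^{\yu_i}.
\]
Here $p_i$ is the visit ratio of class $i$ at each of its stations, since the class-$i$ route $\mathrm{CS}\to d_i\to c_i\to u_i\to\mathrm{CS}$ is entered with probability $p_i$. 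The check is by Kelly's partial balance: for each station and each class, the probability flux out of a state due to a departure of that class from that station must equal the flux into the state due to the corresponding arrival.
\begin{itemize}
\item At $d_i,c_i,u_i$ this is identical to the computation behind \Cref{prop:jackson}, which I take as given.
\item At the \gls{CS}, the departure of $c_1$ is balanced against the appending of a class-$j$ task at the tail coming from $u_j$. The word weight gains exactly the factor $p_j/\muCS$, matching the removal of the factor $p_j/\muu_j$ at $u_j$. A departure with routing to $d_i$ is balanced by the factor $p_i$.
\end{itemize}
This is the step I expect to be the main obstacle: indexing the \gls{FIFO} word correctly (head versus tail) so that global balance genuinely reduces to these station-wise equations. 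If the bookkeeping becomes messy, I will instead cite the BCMP theorem for \gls{FIFO} stations with class-independent exponential service.

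\textbf{Aggregation and the embedded chain.} Summing $\hat\phi$ over all words with fixed class counts $(\yCS_j)_j$ gives the count of such words, $\bigl(\sum_j \yCS_j\bigr)!/\prod_j \yCS_j!$, times $\prod_j (p_j/\muCS)^{\yCS_j}$. This is exactly \eqref{eq:phi}, and uniqueness of $\phi_{n,m}$ follows from uniqueness of $\hat\phi$. For $\tilde Y_k$, I observe $\hat Y$ just after \gls{CS} completions; this embedded chain is an irreducible finite Markov chain. By the arrival theorem for closed product-form networks (Palm calculus under $\mathbb{P}^0$, as in the proof of \Cref{prop:jackson}), the state seen by a departing task with that task removed is distributed as the stationary law of the same network with $m-1$ tasks. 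Removing the freshly dispatched task from $d_{A_k}$, as in \eqref{def:tilde_Y}, compensates for the routing factor. Aggregating as above then yields $\phi_{n,m-1}$.
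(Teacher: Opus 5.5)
Your proposal is correct and follows essentially the same route as the paper. The paper also lifts to the Markov chain $Z$ that records the FIFO word of class labels at the CS, and verifies the same product form there. It does this via Kelly's reversal test, checking that the reversed rates have the same total exit rate; this uses exactly the local ratios of your station-wise partial balance. It then marginalizes with the multinomial count, and obtains $\phi_{n,m-1}$ for $\tilde{Y}$ by conditioning the jump-chain pair distribution on CS completions, which is the departure-instant arrival-theorem computation you invoke.

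One bookkeeping point: balance at $d_i$ is not literally the one from Proposition~\ref{prop:jackson}. Arrivals to $d_i$ now come from the head of the CS word, so you must sum the factor $(p_c/\muCS)\,\muCS = p_c$ over the head class $c$, which gives $1$.
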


\begin{proof}
	See Section~\ref{proof_prop:jackson2} of the supplementary materials.
\end{proof}

Since the performance metrics for this extended model depend on $\phi_{n,\um}$ primarily through the normalization constants~$W_{n,\um}$, their efficient computation is critical to avoid combinatorial bottlenecks. Accordingly, we extend \textit{Buzen}'s recursive algorithm to explicitly incorporate the \gls{CS} service rate, enabling the computation of these constants in $\mathcal{O}(nm^2)$ time and $\mathcal{O}(m)$ memory (see Proposition~\ref{prop:buzen_cs} in the supplementary materials). This efficiency is crucial, as it allows for the exact evaluation of the expected relative delay and its gradient detailed in the next section.

\subsection{Delay and Gradient Computation}\label{sec:delay2}
The following result extends \Cref{theo:little_law} to our new setting with \gls{CS}-side delay.

\begin{theorem} \label{theo:little_law2}
	In the model of \Cref{sec:model_description2}, the following identities hold for each $i,j \in \{1,\ldots,n\}$:
	\begin{align}
		\label{eq:D_cs}
		\bEp[D_i]
		&= \sum_{s \in \{\mathrm{CS}, \mathrm{c}, \mathrm{d}, \mathrm{u}\}} \mathbb{E}\!\left[ \tilde{Y}_i^s \right], 
		\\[1ex]
		\label{eq:gradD_cs}
		\frac{\partial}{\partial p_j} \bEp[D_i]
		&= \frac{1}{p_j} \sum_{s,r \in \{\mathrm{CS}, \mathrm{c}, \mathrm{d}, \mathrm{u}\}} 
		\operatorname{Cov}\!\left( \tilde{Y}_i^s, \tilde{Y}_j^r \right).
	\end{align}
	
	Moreover, the following closed-form expressions hold:
	\begin{align}
		\label{exp:delay_cs}
		&\sum_{s \in \{\mathrm{CS}, \mathrm{c}, \mathrm{d}, \mathrm{u}\}} \mathbb{E}[\tilde{Y}_i^s]
		= \textcolor{red}{p_i \tilde{\beta}_{\mathrm{CS},1}} + \tilde{\beta}_{i,1} + \frac{W_{n,m-2}}{W_{n,m-1}}\,\gamma_i,
		\\[2ex]
		\label{exp:grad_cs}
		&\sum_{s,r \in \{\mathrm{CS}, \mathrm{c}, \mathrm{d}, \mathrm{u}\}} \mathbb{E}[\tilde{Y}_i^s \tilde{Y}_j^r]
		= \tilde{\alpha}_{i,j} + \tilde{\beta}_{i,2}\gamma_j + \tilde{\beta}_{j,2}\gamma_i + \tilde{\psi}_{i,j} \nonumber \\[0.5ex]
		&\quad + \textcolor{red}{\tilde{\alpha}^{\mathrm{CS}}_{i,j} 
			+ \tilde{\beta}_{\mathrm{CS},2} \left(p_i \gamma_j + p_j \gamma_i \right) 
			+ p_i \tilde{\alpha}_{\mathrm{CS},j} 
			+ p_j \tilde{\alpha}_{\mathrm{CS},i}}.
	\end{align}
	Here, for each $i,j \in \{1,\ldots,n\}$ and $\ell \in \{1,2\}$:
	\begin{align*}
		\gamma_i &= p_i \left( \frac{1}{\mud_i} + \frac{1}{\muu_i} \right), \qquad
		\tilde{\beta}_{i,\ell} = \sum_{k=1}^{m-\ell} \left( \frac{p_i}{\muc_i} \right)^k \frac{W_{n,m-\ell-k}}{W_{n,m-1}},
		\\[1ex]
		\tilde{\alpha}_{i,j} &=
		\begin{cases}
			\displaystyle \sum_{k=1}^{m-1} (2k-1) \left(\frac{p_i}{\muc_i}\right)^{k} \frac{W_{n,m-1-k}}{W_{n,m-1}} & \text{if } i=j, \\[2ex]
			\displaystyle \sum_{\substack{k, \ell = 1 \\ k + \ell \le m-1}}^{m-2}
			\left( \frac{p_i}{\muc_i} \right)^{k}
			\left( \frac{p_j}{\muc_j} \right)^{\ell}
			\frac{W_{n, m-1-k-\ell}}{W_{n, m-1}} & \text{if } i\neq j,
		\end{cases}
		\\[1ex]
		\tilde{\psi}_{i,j} &= \frac{\gamma_i}{W_{n,m-1}} \left( \gamma_j W_{n,m-3} + \mathbf{1}_{\{i=j\}} W_{n,m-2} \right),
		\\[1ex]
		\textcolor{red}{\tilde{\alpha}^{\mathrm{CS}}_{i,j}} 
		&= p_i \sum_{k=1}^{m-1} \left(\frac{1}{\muCS}\right)^{k} \frac{W_{n,m-1-k}}{W_{n,m-1}} \left[ 2 p_j (k-1) + \mathbf{1}_{\{i=j\}} \right],
		\\[1ex]
		\textcolor{red}{\tilde{\alpha}_{\mathrm{CS},i}}
		&= \sum_{k=1}^{m-2} \sum_{\ell = 1}^{m-1-k}
		\left(\frac{1}{\muCS}\right)^{k}
		\left(\frac{p_i}{\muc_i}\right)^{\ell}
		\frac{W_{n,m-1-k-\ell}}{W_{n,m-1}},
		\\[1ex]
		\textcolor{red}{\tilde{\beta}_{\mathrm{CS},\ell}}
		&= \sum_{k=1}^{m-\ell}
		\left( \frac{1}{\muCS} \right)^k
		\frac{W_{n,m-\ell-k}}{W_{n,m-1}},
	\end{align*}
	and the constants $W_{n,\um}$ for $\um \in \{0,1,\ldots,m-1\}$ are computed using the recursion of Proposition~\ref{prop:buzen_cs} in \Cref{sec:buzen_cs} of the supplementary material.
\end{theorem}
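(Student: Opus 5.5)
The proof follows the two-part structure of \Cref{theo:little_law}. First we establish the Little's-law identity \eqref{eq:D_cs} and the covariance formula \eqref{eq:gradD_cs} from the product form of Proposition~\ref{prop:jackson2}. Then we evaluate the moments via generating-function and convolution manipulations of the normalizing constants $W_{n,\um}$.

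\textbf{Step 1: the delay identity \eqref{eq:D_cs}.} Consider a tagged task assigned to client~$i$ at a Palm instant. Its relative delay $D_{i,k}$ counts the CS completions during its sojourn $d_i \to c_i \to u_i \to \mathrm{CS}$. We charge this cost as a per-round rate instead: at each subsequent CS completion $T_{k'}$, every class-$i$ task still in circulation (those counted in $\tilde{Y}_{i,k'}^s$, $s\in\{\mathrm{CS},\mathrm{c},\mathrm{d},\mathrm{u}\}$) gains one unit of delay. With $D_{i,k}=0$ when $A_k\neq i$, stationarity of the embedded chain and a Campbell/Little-type exchange of summations give
\[
\bEp[D_i]=\lim_K \frac1K\sum_k D_{i,k}=\lim_K\frac1K\sum_k \sum_s \tilde{Y}^s_{i,k}.
\]
This limit equals $\sum_s \mathbb{E}[\tilde Y^s_i]$ under $\phi_{n,m-1}$. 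That the embedded state has law $\phi_{n,m-1}$ is the arrival-theorem part of Proposition~\ref{prop:jackson2}.

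Some care is needed because the class-count process is not Markov, since the FIFO order at the CS is lost. We would therefore run the argument on the detailed Markov chain that records the ordered class sequence at the CS, and project to counts only at the end. Exchanging limit and expectation is justified because $D_{i,k}$ has geometric tails: the network is finite and every server has positive rate.

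\textbf{Step 2: the gradient formula \eqref{eq:gradD_cs}.} In \eqref{eq:phi}, each class-$i$ count appears multiplied by a factor $p_i$. This holds at the CS as well, since the multinomial coefficient times $\prod_j (p_j/\muCS)^{\xCS_j}$ is exactly what the aggregated FIFO node produces. Hence
\[
\partial_{p_j}\log\phi_{n,m-1}(x)=\frac{1}{p_j}\sum_r x_j^r-\partial_{p_j}\log W_{n,m-1}.
\]
Differentiating $\mathbb{E}[\sum_s\tilde Y_i^s]$ under the finite sum then yields the covariance expression directly.

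\textbf{Step 3: closed forms \eqref{exp:delay_cs}--\eqref{exp:grad_cs}.} Summing over class labels collapses the CS node into a single $\cdot/M/1$ node of weight $1/\muCS$ whose total occupancy is split multinomially with probabilities $p_i$. Conditionally on total CS occupancy $K$, therefore, $\tilde Y^{\mathrm{CS}}_i\sim\mathrm{Bin}(K,p_i)$. This gives $\mathbb{E}[\tilde Y^{\mathrm{CS}}_i]=p_i\mathbb{E}[K]$, and $\mathbb{E}[\tilde Y^{\mathrm{CS}}_i\tilde Y^{\mathrm{CS}}_j]=p_ip_j\mathbb{E}[K(K-1)]+\mathbf{1}_{\{i=j\}}p_i\mathbb{E}[K]$.

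For the single-server nodes we use the standard identity $\mathbb{P}(K\ge k)=(1/\muCS)^kW_{n,m-1-k}/W_{n,m-1}$, and likewise for the client computation queues. Together with $\mathbb{E}[K]=\sum_k\mathbb{P}(K\ge k)$ and $\mathbb{E}[K^2]=\sum_k(2k-1)\mathbb{P}(K\ge k)$, these produce $\tilde\beta_{\mathrm{CS},1}$, $\tilde\alpha^{\mathrm{CS}}_{i,j}$, $\tilde\beta_{i,\ell}$ and $\tilde\alpha_{i,j}$.

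The infinite-server nodes $d_i,u_i$ have Poisson-type weights. A shift argument, removing one customer and rescaling by the node weight, gives $\mathbb{E}[\tilde Y^{\mathrm d}_i+\tilde Y^{\mathrm u}_i]=\gamma_iW_{n,m-2}/W_{n,m-1}$ and the second-moment term $\tilde\psi_{i,j}$.

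Cross terms between distinct nodes are handled the same way, by jointly shifting one or more customers out of two nodes. This gives terms such as $\mathbb{E}[\tilde Y^{\mathrm{CS}}_i\tilde Y^{\mathrm c}_j]=p_i\tilde\alpha_{\mathrm{CS},j}$ and $\mathbb{E}[\tilde Y^{\mathrm{CS}}_i(\tilde Y^{\mathrm d}_j+\tilde Y^{\mathrm u}_j)]=p_i\gamma_j\tilde\beta_{\mathrm{CS},2}$. Collecting all terms reproduces \eqref{exp:grad_cs}, whose non-red part is the computation already carried out for \Cref{theo:little_law}.

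\textbf{Main obstacle.} The hardest point is Step~1's rigorous justification that the embedded state seen just after a CS completion, with the completing task removed, has law $\phi_{n,m-1}$ despite the class-dependent FIFO ordering. We would first try to prove this on the ordered-sequence chain via partial balance of the multiclass Jackson network, and then marginalize. The bookkeeping of the mixed CS/computation cross moments in Step~3 is tedious but routine.
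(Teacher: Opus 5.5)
Your proposal is correct. Steps 2 and 3 match the paper. Step 2 differentiates $\log\phi_{n,m-1}$ with respect to $\log p_j$ as an exponential family. Step 3 collapses the CS classes into one node of weight $1/\muCS$ (via the multinomial theorem and $\sum_i p_i=1$), splits its occupancy multinomially, and then does tail-sum and shift computations on the $W_{n,\cdot}$.

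The genuine difference is in Step 1.

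\textbf{The paper's route.} The paper proves \eqref{eq:D_cs} by uniformization:
\begin{itemize}
\item It subordinates the embedded chain $Z(T_k)$ to a unit-rate Poisson clock, so class-$i$ dispatches form a Poisson process of rate $p_i$ by thinning.
\item It applies Serfozo's continuous-time Little's law to the \emph{post-jump} occupancy. That occupancy still contains the freshly dispatched task, so each sojourn consists of $R_{i,k}+1$ exponential intervals.
\item It maps back with Wald's identity, removes the extra term by subtracting $p_i=\mathbb{E}[\mathbf{1}\{A_k=i\}]$, and conditions on $A_k=i$ through $\bEp[D_i]=p_i\bEp[R_i]$.
\end{itemize}

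\textbf{Your route.} You do the Little's-law bookkeeping directly on the round index. A task dispatched to client $i$ at $T_k$ and applied at $T_{k''}$ is counted in $\tilde Y_{i,k'}$ exactly for $k'\in\{k+1,\dots,k''-1\}$. It is excluded at $k'=k$ as the new task and at $k'=k''$ as the completing task, so it is counted exactly $D_{i,k}$ times. Hence no $+1$ correction, no $p_i$ subtraction and no conditioning on $A_k$ are needed.

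\textbf{What each buys.} The paper's route lets it cite an off-the-shelf theorem with explicit hypotheses, at the cost of a fictitious time scale. Yours is more elementary but requires two checks you only sketch:
\begin{itemize}
\item Birkhoff's theorem must apply to the stationary sequence $(D_{i,k})_k$. Since $D_{i,k}$ depends on the future of the process, work on the path space of the ordered-sequence chain enlarged with the routing marks.
\item The boundary contributions of the at most $m-1$ tasks in flight at round $K$ must be $o(K)$. Your geometric-tail remark suffices here.
\end{itemize}

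The "main obstacle" you flag is not part of this theorem. That the embedded state has law $\phi_{n,m-1}$ is exactly Proposition~\ref{prop:jackson2}, which the paper proves separately, and you may simply cite it.
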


\begin{proof}
	See Section~\ref{proof_theo:little_law2} of the supplementary materials.
\end{proof}

In Equations~\eqref{exp:delay_cs} and~\eqref{exp:grad_cs},
the terms highlighted in \textcolor{red}{red} quantify the additional components introduced specifically by the \gls{CS} queueing process.
Conversely, the remaining terms (in black) correspond exactly to the interactions found in the original model without \gls{CS}-side congestion.
Observe that as $\muCS \to \infty$, we recover the exact closed-form expressions of \Cref{theo:little_law}. This holds because $\lim_{\muCS \to \infty} W_{n,m} = Z_{n,m}$ for all $n,m \in \mathbb{N}_{>0}$, while the \gls{CS}-specific terms vanish (i.e., $\lim_{\muCS \to \infty} \tilde{\alpha}_{\mathrm{CS},i} = \lim_{\muCS \to \infty} \tilde{\alpha}^{\mathrm{CS}}_{i,j} = \lim_{\muCS \to \infty} \tilde{\beta}_{\mathrm{CS},\ell} = 0$).
Finally, the analysis and insights regarding delays presented in Section~\ref{sec:delay} remain applicable to this extended model.

In the remainder of this section, we establish equivalent properties for wall-clock time and energy complexity, explicitly accounting for the \gls{CS} service rate.

\subsection{Clock-Time Complexity}
Theorem~\ref{th:opt_steps} still applies to the model of \Cref{sec:model_description2}. 
Hence, the expression \eqref{eq:K_epsilon_bounded} for $K_{\epsilon}$ remains valid; the only modification concerns the closed-form formula for the expected relative delay~$\bEp[D_i]$, which now is as in Equation~\eqref{exp:delay_cs}.

\begin{proposition}[Time to achieve $\epsilon$-accuracy]
	\label{prop:time-eps2}
	Under Assumptions~\textbf{A1}--\textbf{A5}, there exists $\epsilon_0 > 0$ such that for any target accuracy $\epsilon \in (0, \epsilon_0]$ and any learning rate satisfying $\eta \le \eta_{\max}(p,m)$, the expected wall-clock time required to reach $\epsilon$-accuracy is given by:
	\begin{align} \label{eq:tau2}
		\bEp[\tau_{\epsilon}] = \frac{K_{\epsilon}(p,m)}{\tilde{\lambda}(p,m)}, \quad \text{where:}
	\end{align}
	$K_{\epsilon}(p,m)$ denotes the number of rounds required to achieve $\epsilon$-accuracy, given in \Cref{eq:K_epsilon_bounded} of Theorem~\ref{th:opt_steps}. \\
	$\tilde{\lambda}(p,m)$ is the expected number of rounds completed per unit of wall-clock time, given by
	\begin{align}
		\label{eq:throughput2}
		\tilde{\lambda}(p,m) &= \muCS\, \prb{\sum_{i=1}^n \YCS_i > 0}
		= \frac{W_{n,m-1}}{W_{n,m}}, \\[0.5ex]
		\label{eq:grad_throughput2}
		\frac{\partial}{\partial p_j}\,\tilde{\lambda}(p,m)
		&= \frac{1}{p_j}\,\tilde{\lambda}(p,m)
		\sum_{s \in \{\mathrm{CS},\mathrm{c},\mathrm{d},\mathrm{u}\}} \mathbb{E}[\,\tilde{Y}_j^s - Y_j^s\,],
	\end{align}
	with $W_{n,m}$ and $W_{n,m-1}$ as defined in Proposition~\ref{prop:jackson}, $Y \sim \phi_{n,m}$ and $\tilde{Y} \sim \phi_{n,m-1}$.
\end{proposition}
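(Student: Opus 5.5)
The plan mirrors the proof of Proposition~\ref{prop:time-eps}, with the \gls{CS} queue now playing the role of the uplink servers as the source of the round-defining point process.

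\textbf{Step 1: reduce $\bEp[\tau_\epsilon]$ to the mean round length.} By definition,
\[
\tau_\epsilon=\sum_{k=0}^{K_\epsilon-1}(T_{k+1}-T_k).
\]
Under the Palm measure $\bPp$ the sequence of inter-update times is stationary. Since $K_\epsilon(p,m)$ is deterministic, this gives $\bEp[\tau_\epsilon]=K_\epsilon(p,m)\,\bEp[T_1]$. The inversion formula for stationary point processes then gives $\bEp[T_1]=1/\tilde\lambda$, where $\tilde\lambda$ is the intensity of $\{T_k\}$. Theorem~\ref{th:opt_steps} applies unchanged, because its proof only uses the stationary delays $\bEp[D_i]$; these are now given by Theorem~\ref{theo:little_law2}. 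This yields \eqref{eq:tau2}.

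\textbf{Step 2: the two expressions for the throughput in \eqref{eq:throughput2}.} The rounds are service completions of a single exponential server of rate $\muCS$. Hence their intensity is $\muCS\,\prb{\sum_i \YCS_i>0}$, by the usual rate-conservation / Lévy argument.

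To obtain the ratio of normalizing constants, I use the product form of Proposition~\ref{prop:jackson2}. The factor attached to the \gls{CS} is the multinomial term times $\prod_i(p_i/\muCS)^{\xCS_i}$. Summing over class compositions with total $N$, this aggregates to $(1/\muCS)^N$, because $\sum_i p_i=1$.

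The occupied states are exactly those with $N\ge1$. Removing one task from the \gls{CS} gives a bijection, weight-preserving up to a factor $1/\muCS$, between aggregated states in $\mathcal{X}_{4n,m}$ with $N\ge1$ and aggregated states in $\mathcal{X}_{4n,m-1}$. Therefore
\[
\prb{N>0}=\frac{W_{n,m-1}}{\muCS\,W_{n,m}},
\]
and multiplying by $\muCS$ gives $W_{n,m-1}/W_{n,m}$.

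This step relies on the aggregation identity. The class-labelled process is not Markov, so I justify the marginal product form via Proposition~\ref{prop:jackson2}. Alternatively, I would work with the full ordered \gls{FIFO} state, which is Markov (a multi-class BCMP network), and then marginalize.

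\textbf{Step 3: the gradient \eqref{eq:grad_throughput2}.} Write
\[
\log\tilde\lambda=\log W_{n,m-1}-\log W_{n,m}.
\]
Each state weight is a monomial in $p$. In it, $p_j$ appears with total exponent equal to the number of class-$j$ tasks, namely $\sum_s x_j^s$ over $s\in\{\mathrm{CS},\mathrm{c},\mathrm{d},\mathrm{u}\}$. The \gls{CS} factor carries $p_j^{\xCS_j}$ before aggregation, and this must be kept explicit when differentiating.

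Hence $\partial_{p_j}\log W_{n,\um}=\frac{1}{p_j}\sum_s \mathbb{E}_{\phi_{n,\um}}[x_j^s]$. Subtracting the $\um=m$ term from the $\um=m-1$ term, with $\tilde Y\sim\phi_{n,m-1}$ and $Y\sim\phi_{n,m}$, gives the stated formula. Here $p$ is treated as unconstrained in the derivative, exactly as in \eqref{eq:grad_throughput}.

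\textbf{Main obstacle.} The delicate point is Step~2's identification of the Palm intensity with the \gls{CS}-busy probability in the non-Markov labelled description. I would handle it by working in the Markov \gls{FIFO}-ordered chain, where completions occur at rate $\muCS$ whenever the \gls{CS} is nonempty, and then projecting onto class counts.
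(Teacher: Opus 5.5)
Your proposal is correct and follows the paper's proof essentially step for step. It uses Palm stationarity of the inter-update times plus the inversion formula for \eqref{eq:tau2}. It aggregates the \gls{CS} classes via the multinomial theorem and then shifts $x \mapsto x - \eCS$ for \eqref{eq:throughput2}. It log-differentiates the unaggregated normalizing constants, keeping $p_j^{\xCS_j}$ explicit, for \eqref{eq:grad_throughput2}.

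There are only two cosmetic differences. First, the paper justifies the busy-probability identity through the aggregated process (total \gls{CS} count together with the $\mathrm{d},\mathrm{c},\mathrm{u}$ counts). That process is already Markov, because \gls{CS} departures are routed independently of class, so you do not need the ordered FIFO chain. Second, your aggregated states live in $\mathcal{X}_{3n+1,\cdot}$, not $\mathcal{X}_{4n,\cdot}$.
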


\begin{proof}
	See Section~\ref{proof_prop:time-eps2} of the supplementary materials.
\end{proof}

\subsection{Energy Complexity} \label{sec:nrg_model2}

We extend our framework by relaxing the assumption that the \gls{CS} is grid-powered; instead, we model it as an energy-constrained edge device.  
Let $\PCS$ denote the power consumed by the \gls{CS} during local processing, including decryption and arithmetic updates.
To simplify the energy accounting, we aggregate communication power costs at the link level.  
Specifically, $\Pd_i$ denotes the total power consumed during a downlink transmission to client~$i$, combining the \gls{CS}'s transmission power and the client's reception power, while $\Pu_i$ represents the total power consumed during an uplink transmission, combining client~$i$'s transmission power and the \gls{CS}'s reception power.
Under this model, the instantaneous power consumption of the system $P(t)$ is given by:
\begin{align*}
	P(t) = \PCS \, \mathbf{1}\{ \sum_{i=1}^n \YCS_i(t) > 0\}
	+ \sum_{i=1}^n \left( \Pc_i \, \mathbf{1}\{\Yc_i(t) > 0\}
	+ \Pu_i \, \Yu_i(t)
	+ \Pd_i \, \Yd_i(t) \right).
\end{align*}
The average energy complexity of the algorithm is then derived as follows:
\begin{proposition}[Energy to Achieve $\epsilon$-Accuracy] \label{prop:energy-eps2}
	Under Assumptions~\textbf{A1}--\textbf{A5}, there exists $\epsilon_0 > 0$ such that for any target accuracy $\epsilon \in (0, \epsilon_0]$ and any learning rate satisfying $\eta \le \eta_{\max}(p,m)$, the expected energy required to reach $\epsilon$-accuracy is given by:
	\begin{align}
		\bEp[E_{\epsilon}] 
		= K_{\epsilon}(p,m)\, \frac{\mathbb{E}[P(0)]}{\tilde{\lambda}(p,m)} \notag
		= K_{\epsilon}(p,m) \left( \frac{\PCS}{\muCS} + \sum_{i=1}^n p_i \mathcal{E}_i \right),
	\end{align}
	where $\mathcal{E}_i \triangleq \frac{\Pc_i}{\muc_i} + \frac{\Pu_i}{\muu_i} + \frac{\Pd_i}{\mud_i}$, $K_{\epsilon}$ is the round complexity defined in \Cref{eq:K_epsilon_bounded} of Theorem~\ref{th:opt_steps}, and $\tilde{\lambda}$ is the system throughput defined in \eqref{eq:throughput2}.
\end{proposition}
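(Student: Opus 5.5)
The plan has two parts. First we show that the expected energy over the $K_\epsilon$ rounds equals $K_\epsilon$ times the Palm-expected energy consumed per round. Then we evaluate that per-round energy in closed form using rate-conservation arguments in the stationary multi-class network of \Cref{sec:model_description2}.

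\textbf{Step 1: reduction to energy per round.} Write the total energy as $E_\epsilon=\int_0^{T_{K_\epsilon}} P(t)\,dt=\sum_{k=0}^{K_\epsilon-1}\int_{T_k}^{T_{k+1}}P(t)\,dt$. Under $\mathbb{P}^0$ the sequence of round increments is stationary, as in the proof of \Cref{prop:time-eps2}. Hence $\bEp[E_\epsilon]=K_\epsilon(p,m)\,\bEp\bigl[\int_0^{T_1}P(t)\,dt\bigr]$. We then apply the Palm inversion formula (Ryll-Nardzewski/Slivnyak) to the point process $\{T_k\}$ of \gls{CS} service completions, whose intensity is $\tilde\lambda(p,m)$ by \Cref{prop:time-eps2}. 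This gives $\bEp\bigl[\int_0^{T_1}P(t)\,dt\bigr]=\mathbb{E}[P(0)]/\tilde\lambda(p,m)$, which is the first equality. The argument mirrors the proof of \Cref{prop:energy-eps}, with $\lambda$ replaced by $\tilde\lambda$.

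\textbf{Step 2: evaluating $\mathbb{E}[P(0)]$.} By linearity,
\[
\mathbb{E}[P(0)]=\PCS\,\mathbb{P}\Bigl(\sum_i \YCS_i>0\Bigr)+\sum_i\Bigl(\Pc_i\,\mathbb{P}(\Yc_i>0)+\Pu_i\,\mathbb{E}[\Yu_i]+\Pd_i\,\mathbb{E}[\Yd_i]\Bigr).
\]
Each term is tied to $\tilde\lambda$ by flow balance in the closed network. The visit ratio of every task is $1$ to the \gls{CS} and $p_i$ to each of $d_i,c_i,u_i$.
\begin{itemize}
\item The \gls{CS} term follows directly from \eqref{eq:throughput2}: $\mathbb{P}\bigl(\sum_i\YCS_i>0\bigr)=\tilde\lambda/\muCS$.
\item For the single-server node $c_i$, the departure rate $\muc_i\,\mathbb{P}(\Yc_i>0)$ must equal the arrival rate $p_i\tilde\lambda$, since each \gls{CS} completion routes one task to client~$i$ with probability $p_i$.
\item For the infinite-server nodes $u_i$ and $d_i$, the departure rates are $\muu_i\,\mathbb{E}[\Yu_i]$ and $\mud_i\,\mathbb{E}[\Yd_i]$, and each equals $p_i\tilde\lambda$.
\end{itemize}
Substituting these identities gives $\mathbb{E}[P(0)]=\tilde\lambda\bigl(\PCS/\muCS+\sum_i p_i\mathcal{E}_i\bigr)$. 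Dividing by $\tilde\lambda$ yields the second equality.

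\textbf{Expected obstacle.} The main delicate point is justifying the flow-balance identities rigorously. The class-labelled process $Y$ is not Markov (\Cref{sec:stationary_analysis}), so we work with the Markov process that includes the \gls{FIFO} order at the \gls{CS}; $Y$ is a function of it. We then apply the rate-conservation law or Little's law to that stationary Markov process.

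Stationarity of the per-round increments under $\mathbb{P}^0$ also needs care, because $K_\epsilon$ is deterministic and the system is assumed to start in steady state. Alternatively, the identities can be checked directly from the product form \eqref{eq:phi}. Removing one task from node $c_i$ gives $\mathbb{P}(\Yc_i>0)=(p_i/\muc_i)\,W_{n,m-1}/W_{n,m}$, and similarly $\mathbb{E}[\Yu_i]=(p_i/\muu_i)\,W_{n,m-1}/W_{n,m}$. I would use this computation as a cross-check.
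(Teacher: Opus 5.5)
Your proof is correct. Step~1 coincides with the paper's argument: point-stationarity of the round increments under $\mathbb{P}^0$ plus the inversion formula gives $\bEp[E_\epsilon]=K_\epsilon(p,m)\,\mathbb{E}[P(0)]/\tilde{\lambda}(p,m)$.

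The two routes differ in Step~2. The paper evaluates each term of $\mathbb{E}[P(0)]$ directly from the product form, through the class-aggregated measure $\nu_{n,m}$. Removing one task from the relevant node gives $\mathbb{P}(\Yc_i>0)=\frac{p_i}{\muc_i}\frac{W_{n,m-1}}{W_{n,m}}$, $\mathbb{E}[\Yu_i]=\frac{p_i}{\muu_i}\frac{W_{n,m-1}}{W_{n,m}}$, and so on. The common factor $W_{n,m-1}/W_{n,m}$ is then identified with $\tilde{\lambda}$ and cancels. This is exactly the computation you keep only as a cross-check.

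Your main route obtains the same identities from flow conservation, with visit ratio $p_i$ at $d_i,c_i,u_i$ (i.e., Little's law at each node). It therefore uses only stationarity and the routing structure, not the explicit form of $\phi_{n,m}$. It also makes transparent why the energy per round, $\PCS/\muCS+\sum_i p_i\mathcal{E}_i$, does not depend on $m$ or on queueing delays. The paper's computation is more mechanical, but it reuses the normalizing-constant machinery already in place.

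On your stated obstacle: you do not need the FIFO-ordered process. $P(t)$ depends only on the class-aggregated state, which is itself Markov (\Cref{lem:aggregated_dist2}), so the rate-conservation arguments can be applied to that chain directly.
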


\begin{proof}
	See Section~\ref{proof_prop:energy-eps2} of the supplementary materials.
\end{proof}

Proposition~\ref{prop:energy-eps2} exhibits the same energy-latency trade-off discussed in Section~\ref{nrg_latency}; consequently, the conclusions derived therein remain applicable. To minimize energy consumption strictly, without regard for training latency, one would set the concurrency level to $m=1$ and assign routing probabilities according to:
\begin{align}
	p^{\ast E}_i \propto \left( \sqrt{\frac{\PCS}{\muCS} + \mathcal{E}_i} \right)^{-1}, \quad i \in \{1,\ldots,n\}.
\end{align}
This configuration yields the global minimum average energy consumption, denoted by $E^\ast$:
\begin{align}
	E^\ast = \frac{24L\Delta}{n^2 \epsilon} \left(4 + \frac{B}{\epsilon}\right)  \left( \sum_{i=1}^n \sqrt{\frac{\PCS}{\muCS} + \mathcal{E}_i} \right)^2.
\end{align}
However, such a strategy results in prohibitive training duration. Therefore, a practical balance must be struck by solving the joint optimization problem defined in \eqref{eq:joint_opt}.

\section{Conclusion}

Client-side buffering has emerged as a key mechanism for mitigating data heterogeneity in asynchronous \gls{FL}, as illustrated by algorithms such as \texttt{AsyncSGD} and \texttt{Generalized AsyncSGD}.  
However, existing analyses largely neglect
the underlying queueing dynamics that govern their practical behavior.  
We close this gap by deriving closed-form optimizable upper-bounds for key performance metrics, relying on a Jackson network model that bridges stochastic-network theory and asynchronous \gls{FL} theory.

Unlike prior work that focuses on round-based convergence, assumes deterministic or bounded processing-time models, or neglects communication delays, we develop a wall-clock convergence analysis that captures realistic edge-system heterogeneity.  
Our model integrates stochastic computation, communication, and \gls{CS}-side processing times, as well as energy constraints, under heterogeneous data distributions.

Within this framework, we characterize the trade-off between model-parameter staleness and update frequency, and we derive closed-form expressions for key performance metrics, showing that optimizing a single objective in isolation degrades overall performance.

By combining gradient-based optimization with an extension of Buzen’s recursive algorithm, we enable efficient computation of optimal routing and concurrency parameters, making exact performance optimization tractable at scale.

We further extend the analysis to energy consumption, revealing a complementary trade-off between training latency and energy efficiency.  
Our results demonstrate that common strategies (e.g., uniform routing) can operate far from optimality.  
To address this, we propose a joint optimization strategy that navigates the Pareto frontier between wall-clock time and energy consumption via a tunable parameter~$\rho$, achieving energy savings of up to $36\%\text{--}49\%$ while maintaining practical training speeds.

Overall, our findings indicate that sustainable and efficient edge intelligence requires moving beyond iteration counts or raw speed metrics toward holistic, system-aware performance objectives.  
Future work may for instance extend this framework to unreliable environments with dynamic client participation.

\bibliography{paper}

\newpage

\appendix

\begin{center}
	\textbf{\\[-0.4cm] {\Large Optimization Trade-offs in Asynchronous Federated Learning: \\[0.2cm] A~Stochastic Networks Approach} \\[0.3cm] {\large Supplementary Materials}}
	\par\noindent\rule{\textwidth}{0.4pt}
\end{center}

\startcontents[appendix]
% Arguments: {name}{prefix}{depth}{code}
\printcontents[appendix]{}{1}{\setcounter{tocdepth}{1}}
\newpage

\section{General Lemmas}

We begin by stating some results for stationary Markov chains that will be instrumental in the proofs of the main results.

\subsection{Discrete-time Markov chain}
\begin{lemma} \label{lem:2-pi-disc}
	Consider an irreducible positive-recurrent
	discrete-time Markov chain $(X_t, t \in \bN)$
	with discrete state space $\cX$,
	transition probabilities $p(x, y)$, $x, y \in \cX$,
	and invariant distribution $\pi(x)$, $x \in \cX$.
	The process $((X_t, X_{t+1}), t \in \bN)$
	is also an irreducible positive-recurrent Markov chain,
	with state space
	$\cX_2 = \{(x, y) \in \cX \times \cX: p(x, y) > 0\}$
	and invariant distribution
	$\pi_2(x, y) = \pi(x) p(x, y)$, $(x, y) \in \cX_2$.
\end{lemma}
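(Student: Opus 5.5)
We verify the three claims in turn: that the pair process is a Markov chain, that it is irreducible on $\cX_2$, and that $\pi_2$ is invariant. Positive recurrence then follows, since an irreducible chain with an invariant probability distribution is positive recurrent.

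\textbf{Markov property and transitions.} We check directly that, for $(x,y),(y',z)\in\cX_2$,
\[
\bP\bigl((X_{t+1},X_{t+2})=(y',z)\mid (X_0,X_1),\ldots,(X_t,X_{t+1})=(x,y)\bigr)=\one{y'=y}\,p(y,z).
\]
This follows from the Markov property of $(X_t)$, because the conditioning event coincides with the event fixing $X_0,\ldots,X_{t+1}$. Hence the pair process is a time-homogeneous Markov chain with kernel $q((x,y),(y',z))=\one{y'=y}p(y,z)$. This kernel keeps the chain inside $\cX_2$ almost surely, since transitions of positive probability land in pairs $(y,z)$ with $p(y,z)>0$.

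\textbf{Invariance.} For $(y,z)\in\cX_2$, we compute
\[
\sum_{(x,y')\in\cX_2}\pi_2(x,y')\,q((x,y'),(y,z))=\sum_{x}\pi(x)p(x,y)p(y,z)=\pi(y)p(y,z)=\pi_2(y,z),
\]
using the invariance of $\pi$. Pairs with $p(x,y)=0$ contribute nothing, so summing over $\cX_2$ is the same as summing over all $x$. Moreover, $\sum_{\cX_2}\pi_2=\sum_x\pi(x)\sum_y p(x,y)=1$, so $\pi_2$ is a probability distribution.

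\textbf{Irreducibility.} This is the only step that needs care. Given $(x,y),(x',y')\in\cX_2$, irreducibility of $(X_t)$ gives a path $y=z_0,z_1,\ldots,z_r=x'$ with $p(z_{l},z_{l+1})>0$. Extending this path by the step $x'\to y'$, which has positive probability by assumption, the pair chain moves along
\[
(x,y)\to(z_0,z_1)\to\cdots\to(z_{r-1},x')\to(x',y'),
\]
each transition having positive probability. If $y=x'$, we take $r=0$ and go directly $(x,y)\to(y,y')$.

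The only subtlety is restricting the state space to $\cX_2$ so that every state is reachable. This is exactly why $\cX_2$ is defined via $p(x,y)>0$.
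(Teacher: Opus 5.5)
Your proof is correct and follows the paper's route: the same pair kernel $p(y,z)$ from $(x,y)$ to $(y,z)$, the same balance computation that collapses $\sum_x \pi(x)p(x,y)p(y,z)$ to $\pi(y)p(y,z)$ via invariance of $\pi$, and the same normalization check. You also supply the explicit path argument for irreducibility on $\cX_2$, and you conclude positive recurrence from irreducibility plus an invariant probability distribution; the paper leaves both of these steps to the reader.
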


\begin{proof}
	We leave it to the reader to verify that $((X_t, X_{t+1}), t \in \bN)$
	is indeed an irreducible Markov chain
	with state space
	$\cX_2 = \{(x, y) \in \cX \times \cX: p(x, y) > 0\}$.
	The transition probabilities of this Markov chain are given by
	\begin{align*}
		\prb{(X_{t+1}, X_{t+2}) = (y, z) ~| ~ (X_t, X_{t+1}) = (x, y)}
		&= p(y, z),
		\quad t \in \bN,
	\end{align*}
	for each $x, y, z \in \cX$ such that
	$(x, y) \in \cX_2$ and $(y, z) \in \cX_2$.
	Therefore, the balance equations
	of this Markov chain are given by
	\begin{align} \label{eq:2-balance}
		\sum_{x \in \cX: \, p(x, y) > 0} \pi_2(x, y) p(y, z)
		&= \pi_2(y, z),
		\quad (y, z) \in \cX_2.
	\end{align}
	Injecting the definition of $\pi_2$
	given in the text of the lemma into
	the right-hand side of this balance equation yields
	\begin{align*}
		\sum_{x \in \cX: \, p(x, y) > 0} \pi_2(x, y) p(y, z)
		&= \sum_{x \in \cX} \pi(x) p(x, y) p(y, z)
		= \pi(y) p(y, z),
	\end{align*}
	where the second equality follows by applying
	the balance equations of the Markov chain~$(X_t, t \in \bN)$.
	To conclude, it suffices to observe that
	$\pi(y) p(y, z) = \pi_2(y, z)$.
	That $\pi_2$ is indeed a probability distribution follows by writing
	\begin{align*}
		\sum_{(x, y) \in \cX_2} \pi_2(x, y)
		&= \sum_{x \in \cX} \pi(x) \sum_{y \in \cX: p(x, y) > 0} p(x, y)
		= \sum_{x \in \cX} \pi(x) = 1.
	\end{align*}
\end{proof}

\begin{corollary} \label{coro:2-pi-disc}
	Consider an irreducible positive-recurrent
	discrete-time Markov chain $(X_t, t \in \bN)$
	with state space $\cX$,
	transition probabilities $p(x, y)$, $x, y \in \cX$,
	and invariant distribution $\pi(x)$, $x \in \cX$.
	Also let $(X, Y)$ denote a random couple
	distributed according to the distribution~$\pi_2$
	defined in \Cref{lem:2-pi-disc}.
	Given $H \subseteq \cX \times \cX$
	and $g: H \to \bR^+$,
	we have
	\begin{align*}
		\esp{g(X, Y) | (X, Y) \in H}
		&= \frac
		{\sum_{(x, y) \in H} g(x, y) \pi(x) p(x, y)}
		{\sum_{(x, y) \in H} \pi(x) p(x, y)}.
	\end{align*}
\end{corollary}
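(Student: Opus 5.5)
The plan is to prove Corollary~\ref{coro:2-pi-disc} directly from Lemma~\ref{lem:2-pi-disc}, by writing the conditional expectation as a ratio of an expectation and a probability. The lemma says that the pair $(X, Y)$ has law $\pi_2(x, y) = \pi(x) p(x, y)$, supported on $\cX_2$. By the elementary definition of conditional expectation given an event of positive probability,
\begin{align*}
	\esp{g(X, Y) \mid (X, Y) \in H}
	= \frac{\esp{g(X, Y)\, \one{(X, Y) \in H}}}{\prb{(X, Y) \in H}}.
\end{align*}
I will assume implicitly that $\prb{(X, Y) \in H} > 0$, since otherwise the left-hand side is undefined.

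Next I will expand numerator and denominator as sums over the discrete support. This gives $\sum_{(x, y) \in H \cap \cX_2} g(x, y)\, \pi_2(x, y)$ and $\sum_{(x, y) \in H \cap \cX_2} \pi_2(x, y)$. Substituting $\pi_2(x, y) = \pi(x) p(x, y)$ from Lemma~\ref{lem:2-pi-disc} yields the claimed formula, once the restriction to $\cX_2$ is removed.

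The only subtlety, and it is a mild one, is that $g$ is defined on all of $H$ while the sums are over $H \cap \cX_2$. This is harmless: pairs in $H \setminus \cX_2$ satisfy $p(x, y) = 0$, so they contribute zero to both sums, and the sums may be taken over all of $H$ as stated.

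Nonnegativity of $g$ ensures that the numerator series is well defined in $[0, \infty]$ by Tonelli, with no integrability assumption needed. Hence the identity holds as an equality in $[0, \infty]$.
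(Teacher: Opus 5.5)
Your proof is correct and matches the paper's reasoning: the paper states this corollary without a separate proof, as an immediate consequence of Lemma~\ref{lem:2-pi-disc}, which is exactly your argument of writing the conditional expectation as $\esp{g(X,Y)\one{(X,Y)\in H}}/\prb{(X,Y)\in H}$ and substituting $\pi_2(x,y)=\pi(x)p(x,y)$. Your side remarks are the right minor details to check: the requirement $\prb{(X,Y)\in H}>0$, and the fact that pairs in $H\setminus\cX_2$ contribute zero to both sums.
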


\subsection{Continuous-time Markov chain}

We adapt the statements of the results
to account for the fact that the initial Markov chain
is defined using transition rates
instead of transition probabilities.

\begin{lemma} \label{lem:2-pi-cont}
	Consider an irreducible positive-recurrent
	continuous-time Markov chain $(X_\tau, \tau \in [0, +\infty))$
	with discrete state space $\cX$,
	transition rates $q(x, y)$, $x, y \in \cX$,
	and invariant distribution $\pi(x)$, $x \in \cX$.
	Let $(\hat{X}_t, t \in \bN)$
	denote its embedded discrete-time Markov chain.
	The process $((\hat{X}_t, \hat{X}_{t+1}), t \in \bN)$
	is an irreducible positive-recurrent
	discrete-time Markov chain,
	with state space
	$\cX_2 = \{(x, y) \in \cX \times \cX: q(x, y) > 0\}$
	and invariant measure
	$\pi_2(x, y) = \pi(x) q(x, y)$, $(x, y) \in \cX_2$.
\end{lemma}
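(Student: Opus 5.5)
The plan is to reduce the statement to \Cref{lem:2-pi-disc}, applied to the jump chain of the continuous-time process. Let $q(x) = \sum_{y \neq x} q(x,y)$ be the total outflow rate from state $x$. The embedded chain $(\hat{X}_t, t \in \bN)$ has transition probabilities $\hat{p}(x,y) = q(x,y)/q(x)$ for $y \neq x$. It is irreducible because the continuous-time chain is.

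The first step is to identify an invariant measure of $\hat{X}$. Define $\hat{\pi}(x) = \pi(x) q(x)$. The global balance equations of the continuous-time chain read $\pi(y) q(y) = \sum_{x \neq y} \pi(x) q(x,y)$. They can be rewritten as
\[
\hat{\pi}(y) = \sum_{x} \hat{\pi}(x)\, \hat{p}(x,y),
\]
so $\hat{\pi}$ is an invariant measure for the embedded chain.

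The second step applies the computation in the proof of \Cref{lem:2-pi-disc} to the pair chain $((\hat{X}_t, \hat{X}_{t+1}), t \in \bN)$. That computation only uses the balance equations, not normalization. It shows that $\hat{\pi}(x)\hat{p}(x,y) = \pi(x) q(x,y)$ is invariant on $\cX_2 = \{(x,y) : q(x,y) > 0\}$. Irreducibility of the pair chain on $\cX_2$ follows exactly as in the discrete lemma.

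The main obstacle is positive recurrence, which requires $\sum_x \pi(x) q(x) < \infty$. In our application the state spaces $\cX_{3n,m}$ and $\cX_{4n,m}$ are finite, so this sum is finite and everything follows. In general this condition is needed, because $\hat{X}$ may be null recurrent even when $X$ is positive recurrent. This is presumably why the statement says ``invariant measure'' rather than ``distribution''. I would therefore prove the claim as stated with \emph{invariant measure}, and add the remark that whenever $\sum_x \pi(x)q(x) < \infty$ (in particular for finite $\cX$) the measure normalizes to the stationary distribution $\pi(x)q(x,y)/\sum_{x'} \pi(x') q(x')$, which gives positive recurrence.

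Finally, I would note the analogue of \Cref{coro:2-pi-disc}: conditional expectations over transitions in a set $H$ are given by $\sum_H g\,\pi q \,/\, \sum_H \pi q$. This is the form used later to obtain Palm expectations at uplink (or \gls{CS}) completion epochs.
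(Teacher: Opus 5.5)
Your proposal is correct and follows the paper's route. The paper's one-line proof likewise reduces to Lemma~\ref{lem:2-pi-disc} by observing that the jump chain has transition probabilities $q(x,y)/\sum_{z} q(x,z)$ and invariant measure $\pi(x)\sum_{y} q(x,y)$, whose product is $\pi(x)q(x,y)$ on $\cX_2$. Your caveat is also right, and the paper glosses over it, since Lemma~\ref{lem:2-pi-disc} is stated for a positive-recurrent chain with an invariant \emph{distribution}: the pair chain is positive recurrent exactly when $\sum_x \pi(x)q(x)<\infty$, which is automatic for the finite state spaces $\cX_{3n,m}$ and $\mathcal{S}_m$ used later, but can fail in general even when $X$ itself is positive recurrent.
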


\begin{proof}
	This result follows from \Cref{lem:2-pi-disc},
	upon observing that the Markov chain
	$(\hat{X}_t, t \in \bN)$ has
	transition probabilities
	$q(x, y) / \sum_{z \in \cX} q(x, z)$, $x, y \in \cX$,
	and invariance measure
	$\pi(x) \sum_{y \in \cX} q(x, y)$, $x \in \cX$.
\end{proof}

\begin{remark}
	Contrary to the definition of embedded Markov chains
	in \cite[Section 13.3.2]{Bremaud_mc_2020},
	here both the continuous-time Markov chain
	$(X_\tau, \tau \in [0, +\infty))$
	and its embedded Markov chain
	$(\hat{X}_t, t \in \bN)$
	can have jumps from a state to itself.
\end{remark}

\begin{corollary} \label{coro:2-pi-cont}
	Consider an irreducible positive-recurrent
	continuous-time Markov chain $(X_t, t \in \bN)$
	with state space $\cX$,
	transition rates $q(x, y)$, $x, y \in \cX$,
	and invariant distribution $\pi(x)$, $x \in \cX$.
	Also let $(X, Y)$ denote a random couple
	distributed according to the distribution~$\pi_2$
	defined in \Cref{lem:2-pi-disc}.
	Given $H \subseteq \cX \times \cX$
	and $g: H \to \bR^+$,
	we have
	\begin{align*}
		\esp{g(X, Y) | (X, Y) \in H}
		&= \frac
		{\sum_{(x, y) \in H} g(x, y) \pi(x) q(x, y)}
		{\sum_{(x, y) \in H} \pi(x) q(x, y)}.
	\end{align*}
\end{corollary}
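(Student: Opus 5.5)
The plan is to reduce the statement to an elementary computation with the invariant measure supplied by \Cref{lem:2-pi-cont}, in the same way as \Cref{coro:2-pi-disc} follows from \Cref{lem:2-pi-disc}. Here the pair $(X, Y)$ should be read as distributed according to the \emph{normalized} version of $\pi_2(x, y) = \pi(x) q(x, y)$ from \Cref{lem:2-pi-cont}. This is the stationary law of two consecutive states $(\hat{X}_t, \hat{X}_{t+1})$ of the embedded chain. The reference to \Cref{lem:2-pi-disc} in the statement should be understood in this sense, since in continuous time the pair measure is expressed with rates.

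\textbf{Step 1 (normalization).} Set $Z_2 = \sum_{(x,y) \in \cX_2} \pi(x) q(x,y) = \sum_{x \in \cX} \pi(x) q(x)$, where $q(x) = \sum_{y} q(x,y)$. I need $Z_2 < \infty$ for $\pi_2 / Z_2$ to be a probability distribution. In all applications of the paper the state spaces are $\cX_{3n,m}$ or $\cX_{4n,m}$, which are finite, so this holds trivially. More generally it holds whenever $\sup_x q(x) < \infty$, and I would add this as a standing assumption if needed. Then $\bP((X,Y) = (x,y)) = \pi(x) q(x,y) / Z_2$ for $(x,y) \in \cX_2$, and this probability is $0$ for pairs outside $\cX_2$.

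\textbf{Step 2 (conditioning).} Since $q(x,y) = 0$ off $\cX_2$, sums over $H$ and over $H \cap \cX_2$ coincide. I would then write the definition of conditional expectation given an event:
\[
\esp{g(X,Y) \mid (X,Y) \in H} = \frac{\esp{g(X,Y)\,\mathds{1}[(X,Y) \in H]}}{\bP((X,Y) \in H)} .
\]
Substituting the law from Step 1, the numerator is $\sum_{(x,y) \in H} g(x,y) \pi(x) q(x,y) / Z_2$, and the denominator is $\sum_{(x,y) \in H} \pi(x) q(x,y) / Z_2$. The factor $Z_2$ cancels, which gives the claimed formula.

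\textbf{Remaining checks.} Because $g \ge 0$, the numerator series is well defined in $[0, +\infty]$ (Tonelli), so no integrability assumption is needed beyond $\bP((X,Y) \in H) > 0$. This positivity is implicit for the conditional expectation to make sense. The only genuine obstacle is the normalization issue in Step 1, namely the finiteness of $Z_2$, which finiteness of the state space resolves in our setting.
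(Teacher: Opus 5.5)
Your proof is correct and is essentially the argument the paper leaves implicit: the corollary is stated without proof as a direct consequence of \Cref{lem:2-pi-cont}. That argument normalizes $\pi(x)q(x,y)$ and writes the conditional expectation as a ratio, so the normalizing constant cancels. Your reading of the reference to \Cref{lem:2-pi-disc} as the rate-based pair measure is appropriate (applied to the embedded chain, both readings agree up to normalization), as is your caveat that finiteness of the state spaces guarantees $\sum_{x,y} \pi(x) q(x,y) < \infty$.
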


\section{Experimental Details} \label{exp:details}

\subsection{Datasets and Neural Network Architectures}
We evaluate our methods on three standard balanced datasets:
\begin{itemize}
	\item \textbf{KMNIST:} Comprises 70,000 grayscale images ($28 \times 28$ pixels) evenly distributed across 10 classes, partitioned into 60,000 training and 10,000 testing samples.
	\item \textbf{EMNIST:} Contains 131,600 grayscale images ($28 \times 28$ pixels) evenly distributed across 47 classes, partitioned into 112,800 training and 18,800 testing samples.
	\item \textbf{CIFAR-100:} Consists of 60,000 RGB images ($32 \times 32$ pixels) spanning 100 classes, partitioned into 50,000 training and 10,000 testing samples.
\end{itemize}

\noindent \textbf{KMNIST and EMNIST Architecture:} For these datasets, we employ a Convolutional Neural Network (CNN) structured as follows:
\begin{itemize}
	\item Two convolutional layers with $7 \times 7$ filters and ReLU activation. The first layer has 20 channels, and the second has 40 channels.
	\item A $2 \times 2$ max pooling layer.
	\item A final fully connected layer with 10 neurons and a softmax activation function.
\end{itemize}

\noindent \textbf{CIFAR-100 Architecture:} For CIFAR-100, we utilize a deeper CNN architecture comprising:
\begin{itemize}
	\item \textbf{Three Sequential Convolutional Blocks:} Each block consists of two $3 \times 3$ convolutional layers, followed by ReLU activation and Group Normalization. The blocks have channel depths of 32, 64, and 128, respectively. Each block concludes with a $2 \times 2$ max pooling layer and a dropout layer ($p=0.25$).
	
	\item \textbf{Classification Head:} This block includes a flattening layer, a fully connected layer with 128 neurons, a dropout layer ($p=0.25$), and a final output layer corresponding to the 100 classes with softmax activation.
\end{itemize}

All experiments are implemented in PyTorch and executed on an NVIDIA Tesla P100 GPU. Unless otherwise stated, the stochastic gradient for each task is computed using a batch size of $128$.

\subsection{Softmax Reparameterization for Constrained Optimization}

To optimize a generic differentiable routing objective $h(p)$ via gradient descent while satisfying the simplex constraints on $p$ (i.e., $p_i > 0$ and $\sum p_i = 1$), we utilize a softmax reparameterization.
Instead of optimizing $p$ directly, we introduce unconstrained auxiliary parameters $\Theta = (\theta_1, \dots, \theta_n) \in \mathbb{R}^n$ and define the routing probabilities as:
\begin{align} \label{eq:softmax}
	p_j(\Theta) = \frac{e^{\theta_j}}{\sum_{i=1}^n e^{\theta_i}}, \quad j \in {1, \ldots, n}.
\end{align}
This transformation inherently guarantees that the resulting vector $p$ remains a valid probability distribution throughout the optimization process.

Consequently, the gradients with respect to the auxiliary parameters $\Theta$ are computed using the chain rule:
\begin{align}
	\frac{\partial h}{\partial \theta_j}= \left\langle \nabla_p h , \frac{\partial p}{\partial \theta_j} \right\rangle.
\end{align}
Here, the Jacobian term is given by $\frac{\partial p}{\partial \theta_j} = p_j (e_j - p)$, where $e_j$ denotes the standard basis vector in $\mathbb{R}^n$ (with 1 at index $j$ and 0 elsewhere), and $\langle \cdot , \cdot \rangle$ represents the Euclidean dot product.

\section{Proof of Proposition~\ref{prop:jackson}} \label{proof_prop:jackson}
We first establish that the continuous-time process $\xi = (\xi(t))_{t \ge 0}$ is an ergodic Markov chain with stationary distribution $\pi_{n,m}$. Subsequently, we prove that the discret process $(X_k)_{k \in \mathbb{N}}$ is also an ergodic Markov chain, but with stationary distribution $\pi_{n,m-1}$.

\subsection{Continuous-Time Process $\texorpdfstring{\xi}{xi}$}
Since the routing probabilities satisfy $p_i > 0$ for all $i \in \{1,\ldots,n\}$, the underlying routing graph is strongly connected, implying that the sequence $\xi(t)$ is irreducible~\cite[Proposition 1.10]{serfozo1999}. Furthermore, since the state space $\mathcal{X}_{n,m}$ is finite, the process is positive recurrent.

Based on the model assumptions, the non-zero entries of the infinitesimal generator $Q = (q(x,y))_{x,y \in \mathcal{X}_{n,m}}$ are given by:
\begin{align} \label{eq:generator}
	q(x,y) =
	\begin{cases}
		\mud_i\,\xd_i, & \text{if } y = x - \ed_i + \ec_i, \quad i \in \{1,\dots,n\}, \\[1ex]
		\muc_i\,\mathbf{1}(\xc_i \ge 1), & \text{if } y = x - \ec_i + \eu_i, \quad i \in \{1,\dots,n\}, \\[1ex]
		\muu_i\,\xu_i\,p_j, & \text{if } y = x - \eu_i + \ed_j, \quad i,j \in \{1,\dots,n\}, \\[1ex]
		-\sum_{z \ne x} q(x,z), & \text{if } y = x, \\[1ex]
		0, & \text{otherwise}.
	\end{cases}
\end{align}

Here, $\ed_i, \ec_i, \eu_i \in \{0,1\}^{3n}$ denote the canonical unit vectors associated with the downlink, computation, and uplink queues of client $i$, respectively. Specifically, if the state is indexed as $x = (\xd_1, \xc_1, \xu_1, \dots, \xd_n, \xc_n, \xu_n)$, then $\ed_i$ has a 1 at index $3(i-1)+1$ and 0 elsewhere (and similarly for $\ec_i, \eu_i$).

It is straightforward to verify that the distribution $\pi_{n,m}$ satisfies the global balance equations:
\begin{align*}
	\pi_{n,m}(x) \sum_{y \ne x} q(x,y) = \sum_{y \ne x} \pi_{n,m}(y) q(y,x), \quad x \in \mathcal{X}_{n,m}.
\end{align*}
proving it is the unique stationary distribution.

\subsection[Discrete-Time Process X]{Discrete-Time Process $X$}

Recall that $(T_k)_{k \in \mathbb{N}}$ denotes the sequence of time instants at which service completions occur at servers $(u_i)_{i \in \{1,\ldots,n\}}$. Clearly, the event $\{T_k \le t\}$ is measurable with respect to the history $\{\xi(s) : 0 \leq s \leq t\}$; hence, for all $k \in \mathbb{N}$, $T_k$ is a stopping time.

By the strong Markov property, the sequence $(\xi(T_k))_{k \in \mathbb{N}}$ forms a discrete-time homogeneous Markov chain, and so does $(X_k)_{k \in \mathbb{N}}$. Since $\xi$ is irreducible and the state space is finite, $(X_k)_{k \in \mathbb{N}}$ is therefore irreducible and positive recurrent.  
We now proceed to derive its stationary distribution.

Let $(\hat{\xi}_k)_{k \in \mathbb{N}_{\ge 0}}$ denote the jump chain of the continuous-time Markov chain $(\xi(t))_{t \in \mathbb{R}_{\ge 0}}$. By Lemma~\ref{lem:2-pi-cont}, the sequence of pairs
\[
\bigl\{ (\hat{\xi}_k, \hat{\xi}_{k+1}) : k \in \mathbb{N}_{\ge 0} \bigr\}
\]
forms an irreducible discrete-time homogeneous Markov chain with invariant measure
\[
\hat\pi(x,y) = \pi_{n,m}(x)\,q(x,y), \quad x,y \in \mathcal{X}_{3n,m},
\]
where $q$ denotes the infinitesimal generator of $(\xi(t))_{t \in \mathbb{R}_{\ge 0}}$ defined in \eqref{eq:generator}. The stationary distribution of $(X_k)_{k \in \mathbb{N}_{\ge 0}}$ satisfies, for all $x \in \mathcal{X}_{3n,m-1}$,
\begin{align} \label{stat_def_X}
	\mathbb{P}(X = x) 
	= \mathbb{E}\!\left[\sum_{i=1}^n \sum_{j=1}^n \mathbf{1}\{\hat{\xi}_k = x + \eu_j, \,\hat{\xi}_{k+1} = x + \ed_i\} 
	\;\middle|\; (\hat{\xi}_k, \hat{\xi}_{k+1}) \in H\right],
\end{align}
where the set of valid transitions $H$ is defined as
\[
H = \bigl\{ (y,z) \in \mathcal{X}_{3n,m} \times \mathcal{X}_{3n,m} : z = y - \eu_i + \ed_j \text{ for some } i,j \in \{1,\ldots,n\} \bigr\}.
\]
Applying Corollary~\ref{coro:2-pi-cont} to Equation~\eqref{stat_def_X} with
\[
g(y,z) = \sum_{i=1}^n \sum_{j=1}^n \mathbf{1}\{y = x + \eu_j, \, z = x + \ed_i\},
\]
the stationary distribution can be expressed, for each $x \in \mathcal{X}_{3n,m-1}$, as
\begin{align}
	\mathbb{P}(X = x) 
	&= \frac{\sum_{i=1}^n \sum_{j=1}^n \pi_{n,m}(x + \eu_j) \,\muu_j \,(\xu_j+1)\,p_i}
	{\sum_{y \in \mathcal{X}_{n,m}} \sum_{i=1}^n \sum_{j=1}^n \pi_{n,m}(y)\,\muu_j\,\yu_j\,p_i} \\
	\label{stat_X}
	&= \frac{\sum_{j=1}^n \pi_{n,m}(x + \eu_j) \,\muu_j \,(\xu_j+1)}
	{\sum_{j=1}^n \muu_j \,\mathbb{E}[\xiu_j]}.
\end{align}
We simplify the numerator and denominator separately. For the denominator, observe that for each $j \in \{1,\ldots,n\}$,
\begin{align*}
	\mathbb{E}[\xiu_j] 
	&= \sum_{x \in \mathcal{X}_{n,m}} \xu_j \,\pi_{n,m}(x) \\[0.5ex]
	&= \frac{1}{Z_{n,m}} \sum_{x \in \mathcal{X}_{n,m}} \xu_j 
	\prod_{i=1}^n \left(\frac{p_i}{\muc_i}\right)^{\xc_i} \frac{1}{\xd_i!}\left(\frac{p_i}{\mud_i}\right)^{\xd_i} \frac{1}{\xu_i!}\left(\frac{p_i}{\muu_i}\right)^{\xu_i} \\[0.5ex]
	&= \frac{1}{Z_{n,m}} \frac{p_j}{\muu_j}
	\underbrace{\sum_{\substack{x \in \mathcal{X}_{n,m} \\ \xu_j \ge 1}} 
		\prod_{i=1}^n \left(\frac{p_i}{\muc_i}\right)^{\xc_i} \frac{1}{\xd_i!}\left(\frac{p_i}{\mud_i}\right)^{\xd_i} \frac{1}{(\xu_i-\mathbf{1}\{i=j\})!}\left(\frac{p_i}{\muu_i}\right)^{\xu_i - \mathbf{1}\{i=j\}}}_{= Z_{n,m-1}} \\[0.5ex]
	&= \frac{p_j}{\muu_j}\,\frac{Z_{n,m-1}}{Z_{n,m}}.
\end{align*}
The second equality follows from the definition of $\pi_{n,m}$ in \Cref{eq:pi}, the third by factoring out $\tfrac{p_j}{\muu_j}$, and the last by performing the change of variables $y = x - \eu_j$ alongside the definition of the normalizing constant $Z_{n,m-1}$,
observing that as $x$ ranges over $\{x \in \mathcal{X}_{n,m} : \xu_j \ge 1\}$, the variable $y = x - \eu_j$ ranges over the entire state space $\mathcal{X}_{n,m-1}$.
Therefore, summing over all $j$, the denominator becomes:
\begin{align} \label{nume}
	\sum_{j=1}^n \muu_j \,\mathbb{E}[\xiu_j]
	= \sum_{j=1}^n \muu_j \,\frac{Z_{n,m-1}}{Z_{n,m}} \frac{p_j}{\muu_j}
	= \frac{Z_{n,m-1}}{Z_{n,m}}.
\end{align}

For the numerator, we compute, for each $j \in \{1, \ldots, n\}$:
\begin{align*}
	&\pi_{n,m}(x + \eu_j) \,\muu_j\,(\xu_j+1) \\
	&= \frac{\muu_j (\xu_j+1)}{Z_{n,m}}
	\prod_{i=1}^n \left(\frac{p_i}{\muc_i}\right)^{\xc_i} 
	\frac{1}{\xd_i!}\left(\frac{p_i}{\mud_i}\right)^{\xd_i}
	\frac{1}{(\xu_i + \mathbf{1}\{i=j\})!}\left(\frac{p_i}{\muu_i}\right)^{\xu_i + \mathbf{1}\{i=j\}} \\[0.5ex]
	&= \frac{\muu_j}{Z_{n,m}} \cdot \frac{p_j}{\muu_j}
	\underbrace{\prod_{i=1}^n \left(\frac{p_i}{\muc_i}\right)^{\xc_i} 
		\frac{1}{\xd_i!}\left(\frac{p_i}{\mud_i}\right)^{\xd_i}
		\frac{1}{\xu_i!}\left(\frac{p_i}{\muu_i}\right)^{\xu_i}}_{= Z_{n,m-1}\,\pi_{n,m-1}(x)} \\[0.5ex]
	&= \frac{Z_{n,m-1}}{Z_{n,m}}\,p_j\,\pi_{n,m-1}(x).
\end{align*}
Thus, summing over $j$ yields:
\begin{align} \label{deno}
	\sum_{j=1}^n \pi_{n,m}(x + \eu_j) \,\muu_j\,(\xu_j+1)
	= \frac{Z_{n,m-1}}{Z_{n,m}}\,\pi_{n,m-1}(x).
\end{align}

Finally, substituting \eqref{nume} and \eqref{deno} back into \eqref{stat_X}, we conclude:
\[
\mathbb{P}(X = x) = \pi_{n,m-1}(x), 
\quad \text{for each } x \in \mathcal{X}_{3n,m-1}.\qedhere
\]

\section{Buzen's Recursive Algorithm} \label{sec:buzen}

All performance metrics derived for the model in \Cref{sec:model_description} depend on the stationary distribution $\phi_{n,\um}$ primarily through its normalization constants $Z_{n,\um}$.  
A direct computation of these constants would require enumerating all admissible states, $\binom{n+m-1}{m-1}$ in total, which grows combinatorially with $n$ and $m$ and quickly becomes intractable.  

To overcome this limitation, we adapt Buzen’s recursive algorithm~\cite{buzen1973computational}, enabling the computation of the normalization constants in $\mathcal{O}(nm^2)$ time and $\mathcal{O}(m)$ memory.  
Moreover, the recursive structure allows all constants $Z_{n,\um}$, for $\um \in \{0,1,\ldots,m\}$, to be computed simultaneously in a single pass.  
The resulting algorithm is formalized in the following proposition.

\begin{proposition} \label{prop:buzen}
	The normalization constants $Z_{n,\um}$, for all $\um \in \{0,1,\ldots,m\}$, can be computed in $\mathcal{O}(nm^2)$ time and $\mathcal{O}(m)$ memory using Buzen’s recursive algorithm. In particular, $Z_{n,\um} = U_{3n,\um}$ for all $\um \in \{0,1,\ldots,m\}$, where the quantities $U_{\un,\um}$ are defined recursively by
	\begin{alignat*}{2}
		\bullet \quad & U_{\un,0} = 1,
		&\quad& \text{for } \un \in \{1,\ldots,3n\} \text{.} \\
		\bullet \quad & U_{1,\um} = \left(\frac{p_1}{\muc_1}\right)^{\um},
		&& \text{for } \um \in \{0,\ldots,m\} \text{.} \\
		\bullet \quad & U_{\un,\um} = U_{\un-1,\um} + \frac{p_\un}{\muc_\un}\,U_{\un,\um-1},
		&& \text{for } \un \in \{2,\ldots,n\} \text{ and } \um \in \{1,\ldots,m\} \text{.} \\
		\bullet \quad & U_{\un,\um} = \sum_{k=0}^{\um} \frac{1}{k!} \left(\frac{p_{\un-n}}{\mud_{\un-n}}\right)^{k} U_{\un-1,\um-k},
		&& \text{for } \un \in \{n+1,\ldots,2n\} \text{ and } \um \in \{1,\ldots,m\} \text{.} \\
		\bullet \quad & U_{\un,\um} = \sum_{k=0}^{\um} \frac{1}{k!} \left(\frac{p_{\un-2n}}{\muu_{\un-2n}}\right)^{k} U_{\un-1,\um-k},
		&& \text{for } \un \in \{2n+1,\ldots,3n\} \text{ and } \um \in \{1,\ldots,m\} \text{.}
	\end{alignat*}
\end{proposition}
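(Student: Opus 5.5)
The plan is to read $Z_{n,\um}$ as a convolution over the $3n$ stations and check that the recursion is exactly the standard Buzen convolution, applied station by station. By \Cref{prop:jackson},
\[
Z_{n,\um} = \sum_{x \in \cX_{3n,\um}} \prod_{k=1}^{3n} f_k(x_k),
\]
where the stations are ordered as computation $1,\ldots,n$, then downlink $1,\ldots,n$, then uplink $1,\ldots,n$. The per-station weights are $f_k(a) = (p_k/\muc_k)^a$ for a computation station, and $f_k(a) = \frac{1}{a!}(p_i/\mud_i)^a$ or $\frac{1}{a!}(p_i/\muu_i)^a$ for the downlink or uplink station of client $i$. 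For $\un \le 3n$, define $U_{\un,\um}$ as the same sum restricted to the first $\un$ stations, with total population $\um$. Then $Z_{n,\um}=U_{3n,\um}$ holds by definition, and what remains is to verify the recursion.

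The recursion follows by conditioning on the occupancy $k$ of station $\un$, which gives $U_{\un,\um}=\sum_{k=0}^{\um} f_\un(k)\,U_{\un-1,\um-k}$.
\begin{itemize}
\item The base cases are immediate: $U_{\un,0}=1$, since only the empty configuration contributes, and $U_{1,\um}=f_1(\um)=(p_1/\muc_1)^\um$.
\item For the infinite-server stations, $\un\in\{n+1,\ldots,3n\}$, this convolution is exactly the stated formula.
\item For the single-server stations, $\un\in\{2,\ldots,n\}$, the weight is geometric, $f_\un(k)=\rho^k$ with $\rho=p_\un/\muc_\un$. Splitting off the $k=0$ term and factoring one $\rho$ out of the rest gives
\[
U_{\un,\um}=U_{\un-1,\um}+\rho\sum_{k'=0}^{\um-1}\rho^{k'}U_{\un-1,\um-1-k'}=U_{\un-1,\um}+\rho\, U_{\un,\um-1}.
\]
\end{itemize}

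For the complexity bound, the single-server updates cost $O(1)$ per entry and the infinite-server convolutions cost $O(m)$ per entry. Over $3n$ stations and $m+1$ population levels this totals $O(nm^2)$ time. For memory, the recursion only ever reads column $\un-1$ (or column $\un$ at a lower population level), so it suffices to keep a single vector of length $m+1$ and overwrite it in place. For the convolution steps, iterate $\um$ from $m$ down to $1$, so that each $U_{\un-1,\um-k}$ is still available when it is read. This gives $O(m)$ memory and produces every $Z_{n,\um}$, $\um\le m$, in one pass.

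The step that needs the most care is indexing. The recursion as stated is indexed by $p_\un$ for the first $n$ stations and by $\un-n$ and $\un-2n$ for the next two blocks; I must check that this matches the product form \eqref{eq:pi}. I also need to confirm that the in-place update order is correct in both cases: ascending $\um$ for the geometric recursion, since it reads the already-updated $U_{\un,\um-1}$, and descending $\um$ for the convolutions.
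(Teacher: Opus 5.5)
Your proposal is correct and is the standard Buzen convolution argument that the paper relies on. The paper states \Cref{prop:buzen} without a separate proof and simply cites Buzen's algorithm. Your check of the two points you flagged also holds: ordering the stations as computation, then downlink, then uplink matches the indices $p_\un$, $p_{\un-n}$, $p_{\un-2n}$ in \eqref{eq:pi}, and updating in place with ascending $\um$ for the geometric steps and descending $\um$ for the convolution steps gives the claimed $\mathcal{O}(m)$ memory.
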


\section{Proof of Theorem~\ref{theo:little_law}} \label{proof_theo:little_law}

We proceed to establish each equation of the theorem in turn.

\subsection{Proof of Equation~\eqref{eq:D}}
Fix $i \in \{1,\ldots,n\}$ and $k \in \mathbb{N}$. In the stationary regime, we have
\[
\bEp[D_i] = \bEp[D_{i,k}]
= \sum_{j=1}^n \bEp[D_{i,k} \mid A_k=j] \,\bPp(A_k=j)
= p_i \,\bEp[D_{i,k} \mid A_k=i]
= p_i \,\bEp[R_i],
\]
where $R_{i,l}$ denotes the number of model-parameter updates occurring between the instant the \gls{CS} sends the $l$-th task to client~$i$ and the instant the resulting gradient is applied. This corresponds to the sojourn time (measured in "number of tasks") of the task in the subsystem composed of servers $\mathrm{d}_i$, $\mathrm{c}_i$, and $\mathrm{u}_i$. We write $\bEp[R_i]$ for the expectation of $R_{i,l}$ under stationarity.

Thus, proving \Cref{eq:D} is equivalent to showing that, for each $i \in \{1,\ldots,n\}$,
\begin{align} \label{eq:discret_little}
	\bEp[R_i] = \frac{\mathbb{E}[\Xd_i + \Xc_i + \Xu_i]}{p_i},
\end{align}
which can be interpreted as Little’s law applied at the discrete instants of service completion at the uplink queues $\mathrm{u}_{i=1}^n$.

\textbf{Step 1: Uniformization Construction.}

For technical convenience, we introduce an auxiliary continuous-time system, referred to as the \emph{uniform (continuous-time) system}. In this system, the time between any two consecutive model parameter updates (rounds) is exponentially distributed with mean~1. While this assumption implies a fictitious time scale, it allows us to apply the classical continuous-time Little’s law. We will subsequently map the results back to the original discrete-time process~$(X_k)_{k \in \mathbb{N}}$.

Recall that $(T_k)_{k \in \mathbb{N}}$ denotes the sequence of time instants at which service completions occur at servers $(u_i)_{i \in \{1,\ldots,n\}}$.
As shown in \Cref{proof_prop:jackson}, $(\xi(T_k))_{k \in \mathbb{N}}$ forms an ergodic discrete-time homogeneous Markov chain. We construct the \textit{uniform continuous-time Markov chain} $(\Bar{\xi}(t))_{t \in \mathbb{R}_{\ge 0}}$ by subordinating $(\xi(T_k))_{k \in \mathbb{N}}$ to a Poisson process $(N(t))_{t \ge 0}$ with rate~1:
\[
\Bar{\xi}(t) = \xi(T_{N(t)}), \quad t \ge 0.
\]

\textbf{Step 2: Arrivals and Sojourns.}

For each $k \in \mathbb{N}$, let $\bar{T}_{i, k}$ denote the arrival time of the $k$-th task assigned to client~$i$ in the \textit{uniform system}, and let $N_i(t)$ be the corresponding counting process.

In this simplified model, $(\bar{T}_{i, k})_{k \in \mathbb{N}}$ forms a homogeneous Poisson point process with rate $p_i$. Indeed, since round durations are \gls{iid} exponential with mean~1, the global task stream from the \gls{CS} forms a Poisson process of rate~1. By the property of Poisson thinning, since each task is routed to client~$i$ independently with probability~$p_i$, the resulting process is Poisson with rate $p_i$.

The sojourn time of the $k$-th task assigned to client~$i$ (traversing servers $\mathrm{d}_i, \mathrm{c}_i, \mathrm{u}_i$) is given~by
\[
\bar{R}_{i,k} = \sum_{l=1}^{R_{i,k}+1} E_l,
\]
where $(E_l)_{l \in \mathbb{N}}$ are \gls{iid} exponential random variables with mean~1.
Given these sequences, the total number of tasks within the subsystem of client~$i$ (comprising servers $\mathrm{d}_i, \mathrm{c}_i$ and $\mathrm{u}_i$) at time $t$ is:
\begin{align} \label{eq:client_load}
	\Bar{\xi}^{\mathrm{d}}_i(t) + \Bar{\xi}^{\mathrm{c}}_i(t) + \Bar{\xi}^{\mathrm{u}}_i(t) = \sum_{k \in \mathbb{N}} \mathbf{1}\{\bar{T}_{i, k} \le t < \bar{T}_{i, k} + \bar{R}_{i,k}\}.   
\end{align}

\textbf{Step 3: Applying Little’s Law.}

The constructed system satisfies the standard conditions for Little's Law~\cite[Theorem~5.2]{serfozo1999}:
(i) $(\Bar{\xi}(t))_{t \ge 0}$ is an ergodic Markov chain;
(ii) The arrival process $(\bar{T}_{i, k})_{k \in \mathbb{N}}$ is a Poisson process (and thus stationary and simple);
(iii) The sojourn time $\bar{R}_{i,k}$ depends only on the process $(\Bar{\xi}(t), t \ge \bar{T}_{i, k})$,  
and (iv) $\mathbb{E}[\Bar{\xi}^{\mathrm{d}}_i + \Bar{\xi}^{\mathrm{c}}_i + \Bar{\xi}^{\mathrm{u}}_i] < \infty$ and $\mathbb{E}[N_i(1)] = p_i$.

Therefore, \Cref{eq:client_load} yields: 
\begin{align} \label{little_uniform}
	\mathbb{E}[\Bar{\xi}^{\mathrm{d}}_i + \Bar{\xi}^{\mathrm{c}}_i + \Bar{\xi}^{\mathrm{u}}_i] = p_i\,\bEp[\bar{R}_i]. 
\end{align}

\textbf{Step 4: Mapping Back to the Discrete Model.}

By the properties of the \textit{uniformization}, the stationary distribution of $\Bar{\xi}$ is identical to that of the subordinated chain $(\xi(T_k))_{k \in \mathbb{N}}$.
Thus, $\mathbb{E}[\Bar{\xi}^s_i] = \mathbb{E}[\xi^s_i(T_k)]$ for each component $s \in \{\mathrm{d},\mathrm{c},\mathrm{u}\}$ and client $i$. Furthermore, since the variables $(E_l)_{l \in \mathbb{N}}$ are \gls{iid} with mean~1 and independent of $R_i$, Wald's identity yields $\mathbb{E}[\bar{R}_i] = \mathbb{E}[R_i + 1]$.
Substituting these into Equation~\eqref{little_uniform}:
\[
\mathbb{E}[\xid_i(T_k) + \xic_i(T_k) + \xiu_i(T_k)] = p_i\,\bEp[R_i+1] = p_i\,\bEp[R_i] + p_i.
\]
Using the fact that $\mathbb{E}[\mathbf{1}\{A_k=i\}] = p_i$, we rearrange this as:
\[
\mathbb{E}[\xid_i(T_k) - \mathbf{1}\{A_k=i\} + \xic_i(T_k) + \xiu_i(T_k)] = p_i\,\bEp[R_i].
\]
Finally, recalling the definition of the discrete state $X$ (the state at parameter update instants immediately \textit{before} the dispatch of the new task~\eqref{def:X}), we have the relation: $X_{i,k}^d + X_{i,k}^c + X_{i,k}^u = \xid_i(T_k) - \mathbf{1}\{A_k=i\} + \xic_i(T_k) + \xiu_i(T_k)$. Taking expectations yields:
\[
\mathbb{E}[\Xd_i + \Xc_i + \Xu_i] = p_i\,\bEp[R_i],
\]
which is exactly \eqref{eq:discret_little}. This completes the proof of \Cref{eq:D}.

\subsection{Proof of Equation~\eqref{eq:gradD}} \label{proof_eq:gradD}

Let $i, j \in \{1, 2, \ldots, n\}$.
Our goal is to prove~\eqref{eq:gradD}. By~\eqref{eq:D} and the \textbf{bilinearity} of the covariance, this is equivalent to showing:
\begin{align*}
	\frac{\partial \bE[\Xd_i + \Xc_i + \Xu_i]}{\partial (\log p_j)}
	&= \cov[\Xd_i + \Xc_i + \Xu_i,\;\Xd_j + \Xc_j + \Xu_j].
\end{align*}
Recall that the random vector $X$ follows the stationary distribution~ $\pi_{n, m-1}$ (see Equation~\eqref{eq:pi}),
which we can rewrite as
\begin{multline} \label{eq:pi-log}
	\log \pi_{n, m-1}(x)
	= - \log Z_{n, m-1} + \sum_{i = 1}^n \left( \xd_i+\xc_i+\xu_i \right) \log p_i \\
	-  \log \left( (\muc_i)^{\xc_i} (\mud_i)^{\xd_i} \xd_i! (\muu_i)^{\xu_i} \xu_i! \right),
	\quad x \in \cX_{3n, m-1},
\end{multline}
where $Z_{n, m-1}$ follows by normalization:
\begin{align} \label{eq:Z-log}
	Z_{n, m-1}
	= \sum_{x \in \cX_{n, m-1}} \exp \left( \sum_{i = 1}^n \left( \xd_i+\xc_i+\xu_i \right) \log p_i 
	-  \log \left( (\muc_i)^{\xc_i} (\mud_i)^{\xd_i} \xd_i! (\muu_i)^{\xu_i} \xu_i! \right) \right).
\end{align}

Let us first prove the following intermediary result:
\begin{align} \label{eq:grad-log}
	\begin{aligned}
		\frac{\partial \log Z_{n, m-1}}{\partial (\log p_j)}
		&= \mathbb{E}[\Xd_j + \Xc_j + \Xu_j],
		\\
		\frac{\partial \log \pi_{n, m-1}(x)}{\partial (\log p_j)}
		&= \left( \xd_j+\xc_j+\xu_j \right) - \mathbb{E}[\Xd_j + \Xc_j + \Xu_j],
		\quad x \in \cX_{3n, m-1}.
	\end{aligned}
\end{align}

The first part of~\eqref{eq:grad-log} follows by
taking the partial derivative of~\eqref{eq:Z-log}
and rearranging the terms to retrieve the definition of~$\pi_{n, m-1}$:
\begin{align*}
	&\frac{\partial \log(Z_{n, m-1})}{\partial (\log p_j)} \\
	&= \frac1{Z_{n, m-1}} \frac{\partial Z_{n, m-1}}{\partial (\log p_j)}, \\
	&= \frac1{Z_{n, m-1}}
	\sum_{x \in \cX_{n, m-1}} \left( \xd_j+\xc_j+\xu_j \right) \exp \left( \sum_{i = 1}^n \left( \xd_j+\xc_j+\xu_j \right) \log p_i 
	-  \log \left( (\muc_i)^{\xc_i} (\mud_i)^{\xd_i} \xd_i! (\muu_i)^{\xu_i} \xu_i! \right) \right), \\
	&= \sum_{x \in \cX_{n, m-1}} \left( \xd_j+\xc_j+\xu_j \right) \exp \left( \sum_{i = 1}^n \left( \xd_j+\xc_j+\xu_j \right) \log p_i 
	-  \log \left( (\muc_i)^{\xc_i} (\mud_i)^{\xd_i} \xd_i! (\muu_i)^{\xu_i} \xu_i! \right) - \log Z_{n, m-1} \right), \\
	&= \sum_{x \in \cX_{n, m-1}} \left( \xd_j+\xc_j+\xu_j \right) \pi_{n, m-1}(x), \\
	&= \mathbb{E}[\Xd_j + \Xc_j + \Xu_j].
\end{align*}

Now, the second part of~\eqref{eq:grad-log} follows
by taking the partial derivative of~\eqref{eq:pi-log}
and injecting the previous result:
\begin{align*}
	\frac{\partial \log \pi_{n, m-1}(x)}{\partial (\log p_j)}
	&= \xd_j+\xc_j+\xu_j - \frac{\partial \log Z_{n, m-1}}{\partial (\log p_j)}
	= \xd_j+\xc_j+\xu_j - \mathbb{E}[\Xd_j + \Xc_j + \Xu_j],
	\quad x \in \cX_{3n, m-1}.
\end{align*}

To conclude, it suffices to inject the second part of~\eqref{eq:grad-log}
into the definition of expectation:
\begin{align*}
	&\frac{\partial \mathbb{E}[\Xd_i + \Xc_i + \Xu_i]}{\partial (\log p_j)} \\
	&= \sum_{x \in \cX_{3n, m-1}}
	\left( \xc_i+\xd_i+\xu_i \right) \frac{\partial \pi_{n, m-1}(x)}{\partial (\log p_j)}, \\
	&= \sum_{x \in \cX_{3n, m-1}}
	\pi_{n, m-1}(x) \left( \xc_i+\xd_i+\xu_i \right) \frac{\partial \log \pi_{n, m-1}(x)}{\partial (\log p_j)}, \\
	&= \sum_{x \in \cX_{3n, m-1}}
	\pi_{n, m-1}(x) \left( \xc_i+\xd_i+\xu_i \right) \left( \xd_j+\xc_j+\xu_j - \mathbb{E}[\Xd_j + \Xc_j + \Xu_j]\right), \\
	&= \cov[\Xd_i + \Xc_i + \Xu_i, \Xd_j + \Xc_j + \Xu_j]. \qedhere
\end{align*}

\subsection{Proof of Equation~\eqref{eq:EX}} \label{proof_eq:EX}

To derive \Cref{eq:EX}, we explicitly compute the expectations $\mathbb{E}[\Xc_i]$, $\mathbb{E}[\Xd_i]$, and $\mathbb{E}[\Xu_i]$ for an arbitrary client $i \in \{1,\ldots,n\}$ and then sum them to obtain the final result.

We begin by computing the expected number of tasks in the computing queue, $\mathbb{E}[\Xc_i]$. Utilizing the tail sum formula for the expectation of a non-negative integer-valued random variable, we have $\mathbb{E}[\Xc_i] = \sum_{k=1}^{m-1} \mathbb{P}(\Xc_i \geq k)$.First, for any $k \in \{0,\ldots,m-1\}$, we derive the expression for the probability $\mathbb{P}(\Xc_i \geq k)$ by summing the stationary distribution $\pi_{n,m-1}$ over all states where client $i$ has at least $k$ computing tasks:
\begin{align}
	\mathbb{P}(\Xc_i \geq k) 
	&= \sum_{\substack{x \in \mathcal{X}_{n,m-1} \\ \xc_i \ge k}} \pi_{n,m-1}(x) \nonumber \\[0.5ex]
	&= \frac{1}{Z_{n,m-1}} \sum_{\substack{x \in \mathcal{X}_{n,m-1} \\ \xc_i \ge k}} 
	\left[ \left(\frac{p_i}{\muc_i}\right)^{\xc_i} \prod_{j \neq i} \left(\frac{p_j}{\muc_j}\right)^{\xc_j} \right]
	\prod_{l=1}^n \frac{1}{\xd_l!}\left(\frac{p_l}{\mud_l}\right)^{\xd_l} \frac{1}{\xu_l!}\left(\frac{p_l}{\muu_l}\right)^{\xu_l} \nonumber \\[0.5ex]
	&= \frac{1}{Z_{n,m-1}} \left(\frac{p_i}{\muc_i}\right)^k
	\underbrace{
		\sum_{\substack{x \in \mathcal{X}_{n,m-1} \\ \xc_i \ge k}} 
		\left(\frac{p_i}{\muc_i}\right)^{\xc_i - k} 
		\left[ \prod_{j \neq i} \left(\frac{p_j}{\muc_j}\right)^{\xc_j} \right] 
		\prod_{l=1}^n \frac{1}{\xd_l!}\left(\frac{p_l}{\mud_l}\right)^{\xd_l} \frac{1}{\xu_l!}\left(\frac{p_l}{\muu_l}\right)^{\xu_l} 
	}_{= Z_{n,m-1-k}} \nonumber \\[0.5ex]
	&= \left(\frac{p_i}{\muc_i}\right)^k \frac{Z_{n,m-1-k}}{Z_{n,m-1}}. \label{eq:prob_Xc}
\end{align}
Here, the second equality follows from the definition of $\pi_{n,m-1}$ in \Cref{eq:pi}, while the third is obtained by factoring out $(p_i/\muc_i)^k$. The remaining sum is identified as the normalizing constant $Z_{n,m-1-k}$ by performing the change of variables $y = x - k\ec_i$.
Recall that $\ed_i, \ec_i, \eu_i$ denote the canonical unit vectors corresponding to the components $\xd_i, \xc_i, \xu_i$ within the state vector $x$. We observe that as $x$ ranges over $\{x \in \mathcal{X}_{n,m-1} : \xc_i \ge k\}$, the shifted variable $y$ maps bijectively to the entire state space $\mathcal{X}_{n,m-1-k}$. Summing over $k$, we recover the definition of $\beta_{i,1}$:
\begin{align} \label{eq:EXc}
	\mathbb{E}[\Xc_i] = \sum_{k=1}^{m-1} \left(\frac{p_i}{\muc_i}\right)^k \frac{Z_{n,m-1-k}}{Z_{n,m-1}} = \beta_{i,1}.
\end{align}

Next, we proceed to compute the expected downlink delay $\mathbb{E}[\Xd_i]$ using the definition of expectation directly:
\begin{align}
	\mathbb{E}[\Xd_i] 
	&= \sum_{x \in \mathcal{X}_{n,m-1}} \xd_i \,\pi_{n,m-1}(x) \nonumber \\[0.5ex]
	&= \frac{1}{Z_{n,m-1}} \sum_{x \in \mathcal{X}_{n,m-1}} \xd_i 
	\prod_{j=1}^n \left(\frac{p_j}{\muc_j}\right)^{\xc_j} \frac{1}{\xd_j!}\left(\frac{p_j}{\mud_j}\right)^{\xd_j} \frac{1}{\xd_j!}\left(\frac{p_j}{\muu_j}\right)^{\xd_j} \nonumber \\[0.5ex]
	&= \frac{1}{Z_{n,m-1}} \frac{p_i}{\mud_i}
	\underbrace{\sum_{\substack{x \in \mathcal{X}_{n,m-1} \\ \xd_i \ge 1}} 
		\prod_{j=1}^n \left(\frac{p_j}{\muc_j}\right)^{\xc_j} \frac{1}{(\xd_j-\mathbf{1}\{j=i\})!}\left(\frac{p_j}{\muu_j}\right)^{\xd_j - \mathbf{1}\{j=i\}} \frac{1}{\xu_j!}\left(\frac{p_j}{\muu_j}\right)^{\xu_j}}_{= Z_{n,m-2}}  \nonumber \\[0.5ex]
	&= \frac{p_i}{\mud_i}\,\frac{Z_{n,m-2}}{Z_{n,m-1}}. \label{eq:EXd}
\end{align}

The simplification in the third step uses the identity $\xd_i \cdot \frac{1}{\xd_i!} = \frac{1}{(\xd_i-1)!}$ and factors out $p_i/\mud_i$. The remaining sum corresponds to the normalizing constant of a system with population $m-1-1 = m-2$, via the change of variables $y = x - \mathbf{e}^{\mathrm{d}}_i$.

Due to the symmetry between the uplink and downlink processes in the product-form solution, the derivation for the uplink delay $\mathbb{E}[\Xu_i]$ is identical to that of $\Xd_i$. Thus, we directly obtain:
\begin{align} \label{eq:EXu}
	\mathbb{E}[\Xu_i] = \frac{p_i}{\muu_i} \frac{Z_{n,m-2}}{Z_{n,m-1}}.
\end{align}

Finally, summing the components derived in \eqref{eq:EXc}, \eqref{eq:EXd}, and \eqref{eq:EXu} yields:
\begin{align*}
	\sum_{s \in \{\mathrm{c},\mathrm{d},\mathrm{u}\}} \mathbb{E}[X_i^s] 
	= \beta_{i,1} + \left( \frac{p_i}{\mud_i} + \frac{p_i}{\muu_i} \right) \frac{Z_{n,m-2}}{Z_{n,m-1}}
	= \beta_{i,1} + \frac{Z_{n,m-2}}{Z_{n,m-1}}\gamma_i,
\end{align*}
where we have substituted $\gamma_i = p_i (1/\mud_i + 1/\muu_i)$. This completes the proof of \Cref{eq:EX}.

\subsection{Proof of Equation~\eqref{eq:EXX}} \label{proof_eq:EXX}

To prove \Cref{eq:EXX}, we decompose the sum on the left-hand side into three distinct categories of interaction: queue-queue, queue-delay, and delay-delay correlations. We derive the closed-form expression for each category separately before combining them.

\subsubsection{Queue-Queue Correlations ($\mathbb{E}[\Xc_i \Xc_j]$)}

We begin by computing the joint expectation of the queue lengths $\mathbb{E}[\Xc_i \Xc_j]$ for any pair of clients $i,j$. Consider first the case where the clients are distinct ($i \neq j$). Using the identity $\xc_i \xc_j = \sum_{k=1}^{\xc_i} \sum_{\ell=1}^{\xc_j} 1$, we can rewrite the expectation as a sum of tail probabilities:
\begin{align*}
	\mathbb{E}[\Xc_i \Xc_j]
	&= \sum_{x \in \mathcal{X}_{n, m-1}} \xc_i \xc_j \pi_{n, m-1}(x) \\
	&= \sum_{x \in \mathcal{X}_{n, m-1}} \sum_{k = 1}^{\xc_i} \sum_{\ell = 1}^{\xc_j} \pi_{n, m-1}(x) 
	= \sum_{\substack{k, \ell = 1 \\ k + \ell \le m-1}}^{m-1} \sum_{\substack{x \in \mathcal{X}_{n, m-1} \\ \xc_i \ge k, \xc_j \ge \ell}} \pi_{n, m-1}(x).
\end{align*}
Substituting the product-form solution $\pi_{n, m-1}(x)$ from \Cref{eq:pi} into this summation yields:
\begin{align}
	\mathbb{E}[\Xc_i \Xc_j]
	&= \frac{1}{Z_{n, m-1}}
	\sum_{\substack{k, \ell = 1 \\ k + \ell \le m-1}}^{m-2}
	\sum_{\substack{x \in \mathcal{X}_{n, m-1} \\ \xc_i \ge k, \xc_j \ge \ell}}
	\prod_{r=1}^n \left( \frac{p_r}{\muc_r} \right)^{\xc_r} \frac{1}{\xd_r!}\left(\frac{p_r}{\mud_r}\right)^{\xd_r} \frac{1}{\xu_r!}\left(\frac{p_r}{\muu_r}\right)^{\xu_r} \nonumber \\
	&\overset{(\text{a})}= \frac{1}{Z_{n, m-1}}
	\sum_{\substack{k, \ell = 1 \\ k + \ell \le m-1}}^{m-2}
	\left( \frac{p_i}{\muc_i} \right)^{k}
	\left( \frac{p_j}{\muc_j} \right)^{\ell}
	\underbrace{\sum_{y \in \mathcal{X}_{n, m - 1 - k - \ell}}
		\prod_{r = 1}^n \left( \frac{p_r}{\muc_r} \right)^{\yc_r} \frac{1}{\yd_r!}\left(\frac{p_r}{\mud_r}\right)^{\yd_r} \frac{1}{\yu_r!}\left(\frac{p_r}{\muu_r}\right)^{\yu_r}}_{= Z_{n, m-1-k-\ell}} \nonumber \\
	&= \sum_{k=1}^{m-2} \sum_{\ell = 1}^{m-1-k} \left(\frac{p_i}{\muc_i}\right)^{k} \left(\frac{p_j}{\muc_j}\right)^{\ell} \frac{Z_{n,m-1-k-\ell}}{Z_{n,m-1}}
	= \alpha_{i,j}. \label{eq:EXcc}
\end{align}
Step (a) follows by performing the change of variables $y = x - k \ec_i - \ell \ec_j$, where $\ec_i$ denotes the $3n$-dimensional canonical unit vector corresponding to the component $\xc_i$ within the state vector $x$. The inner sum
resolves to the normalizing constant of a system with $m-1-k-\ell$ tasks.

Conversely, for the second moment of a single queue ($i=j$), we utilize the identity $(\xc_i)^2 = \sum_{k=1}^{\xc_i} (2k-1)$. This allows us to rewrite the expectation as a weighted sum of tail probabilities:
\begin{align*}
	\mathbb{E}[(\Xc_i)^2]
	&= \sum_{x \in \mathcal{X}_{n, m-1}} (\xc_i)^2 \pi_{n, m-1}(x) \\
	&= \sum_{x \in \mathcal{X}_{n, m-1}} \sum_{k = 1}^{\xc_i} (2k-1) \pi_{n, m-1}(x) 
	= \sum_{k=1}^{m-1} (2k-1) \sum_{\substack{x \in \mathcal{X}_{n, m-1} \\ \xc_i \ge k}} \pi_{n, m-1}(x).
\end{align*}
Applying the product-form solution from \Cref{eq:pi} once more, we obtain:
\begin{align}
	\mathbb{E}[(\Xc_i)^2]
	&= \frac{1}{Z_{n, m-1}}
	\sum_{k=1}^{m-1} (2k-1)
	\sum_{\substack{x \in \mathcal{X}_{n, m-1} \\ \xc_i \ge k}}
	\prod_{r=1}^n \left( \frac{p_r}{\muc_r} \right)^{\xc_r} \frac{1}{\xd_r!}\left(\frac{p_r}{\mud_r}\right)^{\xd_r} \frac{1}{\xu_r!}\left(\frac{p_r}{\muu_r}\right)^{\xu_r} \nonumber \\
	&\overset{(\text{b})}= \frac{1}{Z_{n, m-1}}
	\sum_{k=1}^{m-1} (2k-1)
	\left( \frac{p_i}{\muc_i} \right)^{k}
	\underbrace{\sum_{y \in \mathcal{X}_{n, m - 1 - k}}
		\prod_{r = 1}^n \left( \frac{p_r}{\muc_r} \right)^{\yc_r} \frac{1}{\yd_r!}\left(\frac{p_r}{\mud_r}\right)^{\yd_r} \frac{1}{\yu_r!}\left(\frac{p_r}{\muu_r}\right)^{\yu_r}}_{= Z_{n, m-1-k}} \nonumber \\
	&= \sum_{k=1}^{m-1} (2k-1)
	\left( \frac{p_i}{\muc_i} \right)^{k}
	\frac{Z_{n, m-1-k}}{Z_{n, m-1}} = \alpha_{i,i}. \label{eq:EXc2}
\end{align}
Step (b) follows from the change of variables $y = x - k \ec_i$.

\subsubsection{Queue-Delay Correlations ($\mathbb{E}[\Xc_i \Xd_j]$ and $\mathbb{E}[\Xc_i \Xu_j]$)}

Next, we evaluate the interaction between the computation queue of client $i$ and the communication delays of client $j$. We focus on the term $\mathbb{E}[\Xc_i \Xd_j]$:
\begin{align*}
	\mathbb{E}[\Xc_i \Xd_j]
	&= \sum_{x \in \mathcal{X}_{n, m-1}} \sum_{k = 1}^{\xc_i} \xd_j \pi_{n, m-1}(x) 
	= \sum_{k = 1}^{m-2} \sum_{\substack{x \in \mathcal{X}_{n, m-1} \\ \xc_i \ge k}} \xd_j \pi_{n, m-1}(x) \\
	&= \frac1{Z_{n, m-1}}
	\sum_{k = 1}^{m-2}
	\sum_{\substack{x \in \cX_{n, m-1} \\ \xc_i \ge k, \xd_j \ge 1}}
	\prod_{r=1}^n \left( \frac{p_r}{\muc_r} \right)^{\xc_r} \frac{1}{(\xd_r-\mathbf{1}\{r=j\})!}\left(\frac{p_r}{\mud_r}\right)^{\xd_r} \frac{1}{\xu_r!}\left(\frac{p_r}{\muu_r}\right)^{\xu_r} \\
	&\overset{(\text{c})}= \frac{1}{Z_{n, m-1}} \sum_{k = 1}^{m-2} \left( \frac{p_i}{\muc_i} \right)^{k} \frac{p_j}{\mud_j} \underbrace{\sum_{y \in \mathcal{X}_{n, m - 2 - k}} \prod_{r=1}^n \left( \frac{p_r}{\muc_r} \right)^{\yc_r} \frac{1}{\yd_r!}\left(\frac{p_r}{\mud_r}\right)^{\yd_r} \frac{1}{\yu_r!}\left(\frac{p_r}{\muu_r}\right)^{\yu_r}}_{= Z_{n, m-2-k}} \\
	&= \frac{p_j}{\mud_j} \sum_{k = 1}^{m-2} \left( \frac{p_i}{\muc_i} \right)^{k} \frac{Z_{n, m-2-k}}{Z_{n, m-1}} = \beta_{i,2} \frac{p_j}{\mud_j}.
\end{align*}
Here, step (c) uses the change of variables $y = x - k \ec_i - \ed_j$ along with the fact that $\xd_j \ge 1$. By symmetry, the uplink correlation is $\mathbb{E}[\Xc_i \Xu_j] = \beta_{i,2} \frac{p_j}{\muu_j}$. Summing these components gives the total interaction between client $i$'s computation and client $j$'s communication:
\begin{align} \label{eq:EXcd}
	\mathbb{E}[\Xc_i (\Xd_j + \Xu_j)] = \beta_{i,2} p_j \left(\frac{1}{\mud_j}+\frac{1}{\muu_j}\right) = \beta_{i,2} \gamma_j, \quad i,j \in \{1,\ldots,n\}.
\end{align}
Similarly, swapping the roles of $i$ and $j$ gives the symmetric interaction:
\begin{align} \label{eq:EXcd_2}
	\mathbb{E}[(\Xd_i + \Xu_i) \Xc_j] = \beta_{j,2} \gamma_i, \quad i,j \in \{1,\ldots,n\}.
\end{align}

\subsubsection{Delay-Delay Correlations} \label{sec:delay-delay}

Finally, we turn to the correlations between delay nodes. We first calculate the expectation for pairs of distinct queues. For the cross-interaction $\mathbb{E}[\Xd_i \Xu_j]$, the downlink and uplink queues are distinct entities for all $i,j \in \{1,\ldots,n\}$. The calculation proceeds as follows:
\begin{align*}
	\mathbb{E}[\Xd_i \Xu_j]
	&= \frac{1}{Z_{n, m-1}} \sum_{\substack{x \in \mathcal{X}_{n, m-1} \\ \xd_i \ge 1, \xu_j \ge 1}} \xd_i \xu_j \pi_{n, m-1}(x) \\
	&\overset{(\text{d})}= \frac{1}{Z_{n, m-1}} \frac{p_i}{\mud_i} \frac{p_j}{\muu_j} \underbrace{\sum_{y \in \mathcal{X}_{n, m - 3}} \prod_{r=1}^n \left( \frac{p_r}{\muc_r} \right)^{\yc_r} \frac{1}{\yd_r!}\left(\frac{p_r}{\mud_r}\right)^{\yd_r} \frac{1}{\yu_r!}\left(\frac{p_r}{\muu_r}\right)^{\yu_r}}_{= Z_{n, m-3}}  \\
	&= \frac{Z_{n, m-3}}{Z_{n, m-1}} \frac{p_i p_j}{\mud_i \muu_j},
\end{align*}
where (d) follows from the change of variable $y = x - \ed_i - \eu_j$. Similarly, for distinct clients ($i \neq j$), the correlations between delay nodes of the same type follow the same pattern:
\begin{align*}
	\mathbb{E}[\Xd_i \Xd_j] = \frac{Z_{n, m-3}}{Z_{n, m-1}} \frac{p_i p_j}{\mud_i \mud_j}, \quad \text{ and }
	\mathbb{E}[\Xu_i \Xu_j] = \frac{Z_{n, m-3}}{Z_{n, m-1}} \frac{p_i p_j}{\muu_i \muu_j}.
\end{align*}
However, when computing the second moment of a single queue (e.g., $\Xd_i$), the factorial moment expansion applies:
\begin{align*}
	\mathbb{E}[(\Xd_i)^2] &= \frac{1}{Z_{n, m-1}} \sum_{\substack{x \in \mathcal{X}_{n, m-1} \xd_i \ge 1}} (\xd_i)^2 \pi_{n, m-1}(x) \\
	&= \frac{1}{Z_{n, m-1}} \sum_{\substack{x \in \mathcal{X}_{n, m-1} \\ \xd_i \ge 2}} \xd_i (\xd_i-1) \pi_{n, m-1}(x) + \frac{1}{Z_{n, m-1}} \sum_{\substack{x \in \mathcal{X}_{n, m-1} \\ \xd_i \ge 1}} \xd_i \pi_{n, m-1}(x) \\
	&\overset{(\text{e})}= \frac{1}{Z_{n, m-1}} (\frac{p_i}{\mud_i})^2 \underbrace{\sum_{y \in \mathcal{X}_{n, m - 3}} \prod_{r=1}^n \left( \frac{p_r}{\muc_r} \right)^{\yc_r} \frac{1}{\yd_r!}\left(\frac{p_r}{\mud_r}\right)^{\yd_r} \frac{1}{\yu_r!}\left(\frac{p_r}{\muu_r}\right)^{\yu_r}}_{= Z_{n, m-3}}  +\, \mathbb{E}[\Xd_i] \\
	&\overset{(\text{f})}= \frac{Z_{n, m-3}}{Z_{n, m-1}} (\frac{p_i}{\mud_i})^2 + \frac{Z_{n, m-2}}{Z_{n, m-1}} \frac{p_i}{\mud_i} \\
	&= \frac{1}{Z_{n, m-1}} \frac{p_i}{\mud_i} ( Z_{n, m-3} \frac{p_i}{\mud_i} + Z_{n, m-2}),
\end{align*}
where (e) follows from the change of variable $y = x - 2 \ed_i$ and (f) from the expression for $\mathbb{E}[\Xd_i]$ in \eqref{eq:EXd}.

Generalizing this to all delay pairs, the total delay-delay contribution is:
\begin{align*}
	\sum_{s \in \{\mathrm{d,u}\}} \sum_{r \in \{\mathrm{d,u}\}} \mathbb{E}[X_i^s X_j^r] 
	= \frac{Z_{n, m-3}}{Z_{n, m-1}} p_i p_j \left( \frac{1}{\mud_i} + \frac{1}{\muu_i} \right) \left( \frac{1}{\mud_j} + \frac{1}{\muu_j} \right)
	= \frac{Z_{n, m-3}}{Z_{n, m-1}} \gamma_i \gamma_j,
\end{align*}
if $i \neq j$. Otherwise, if $i=j$, we have:
\begin{align*}
	\sum_{s \in \{\mathrm{d,u}\}} \sum_{r \in \{\mathrm{d,u}\}} \mathbb{E}[X_i^s X_i^r] 
	&= \frac{Z_{n, m-3}}{Z_{n, m-1}} (p_i)^2 \left( \frac{1}{\mud_i} + \frac{1}{\muu_i} \right)^2 +  \frac{Z_{n, m-2}}{Z_{n, m-1}} \left( \frac{1}{\mud_j} + \frac{1}{\muu_i} \right) \\
	&= \frac{\gamma_i}{Z_{n, m-1}} (\gamma_i Z_{n, m-3} + Z_{n, m-2}).
\end{align*}
In both scenarios, this simplifies to the unified term:
\begin{align} \label{eq:EXdd}
	\sum_{s \in \{\mathrm{d,u}\}} \sum_{r \in \{\mathrm{d,u}\}} \mathbb{E}[X_i^s X_j^r] 
	= \frac{Z_{n, m-3}}{Z_{n, m-1}} \gamma_i \gamma_j + \mathbf{1}_{\{i=j\}} \frac{Z_{n, m-2}}{Z_{n, m-1}} \gamma_i
	= \psi_{i,j}.
\end{align}

Summing the contributions from the three components: the queue-queue term $\alpha_{i,j}$ from \eqref{eq:EXcc} and \eqref{eq:EXc2}, the queue-delay interactions $\beta_{i,2}\gamma_j + \beta_{j,2}\gamma_i$ from \eqref{eq:EXcd} and \eqref{eq:EXcd_2}, and the delay-delay term $\psi_{i,j}$ from \eqref{eq:EXdd}, yields the final result in \Cref{eq:EXX}.

\section{Proof of Theorem~\ref{th:opt_steps}} \label{proof_th:opt_steps}

To prove Theorem~\ref{th:opt_steps}, we adopt the virtual iterates framework introduced in~\cite{koloskova2022sharper} and further developed in~\cite{leconte2024queueing}. While we follow the general reasoning of \cite[Theorem 1]{leconte2024queueing} to establish the expression for round complexity $K_\epsilon$, we introduce a critical modification. Specifically, we address a technical flaw in the original proof, where the authors implicitly assume that the relative delay $D_{i,k}$ is independent of $\nabla f(w_k)$ for all $k$. This assumption does not necessarily hold under the closed Jackson network dynamics described in \Cref{sec:model_description} (and \Cref{sec:model_description2}). Our proof rigorously accounts for this dependency.

Let $S_0$ be the multiset of initially selected clients, where each client appears as many times as the number of tasks it gets initially assigned. By construction, the cardinality of $S_0$ is $m$.
To simplify the notation, we denote the stochastic gradient evaluated on a local data sample $\zeta_i$ as $g_i(w)$ rather than $g_i(w, \zeta_i)$.
For $k \in \mathbb{N}_{>0}$, the virtual iterates $v_k$ are defined recursively as:
\begin{align*}
	\begin{cases}
		v_0 = \param_0, \\
		v_1 = v_0 - \eta \sum_{i \in S_0} \frac{1}{n p_i} g_i(\param_0), \\
		v_{k+1} = v_k - \frac{\eta}{n p_{A_{k}}} g_{A_{k}}(\param_{k}), & k \geq 1.
	\end{cases}
\end{align*}

The difference between $v_k$ and $\param_k$ captures the in-flight computation tasks dispatched by the \gls{CS}. These are tasks for which the resulting gradients have not yet been received at the end of step $k-1$ (i.e., immediately after receiving the gradient from client $C_{k-1}$ and before dispatching a new task to client $A_k$).

We begin by recalling \cite[Lemma 4]{leconte2024queueing}, which holds under our exact problem setting:
\begin{lemma} \label{lem:bound1}
	In the framework of \Cref{sec:model_description}, for any learning rate satisfying $\eta \leq \frac{n^2}{8L \sum_{i=1}^n p_i^{-1}}$, the following inequality holds:
	\begin{align*}
		\frac{1}{K} \sum_{k=0}^{K-1} \bEp\left[\|\nabla f(\param_k)\|^2 \right] \leq
		\frac{4 \Delta}{\eta K} +
		\frac{4 \eta L (2M^2 + \sigma^2)}{n^2} \sum_{i=1}^n \frac{1}{p_i}
		+ \frac{2 L^2}{K} \sum_{k=0}^{K-1} \bEp\left[\|v_k - \param_k\|^2 \right].
	\end{align*}
\end{lemma}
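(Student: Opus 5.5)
The plan is the standard descent argument for perturbed SGD around the virtual iterates $v_k$ of \cite{koloskova2022sharper}. It rests on the fact that the virtual sequence receives exactly one unbiased scaled gradient per round, evaluated at the real iterate $\param_k$.

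\textbf{Step 1: one-step descent on $v_k$.} By $L$-smoothness (\textbf{A2}) of $f$,
\[
f(v_{k+1}) \le f(v_k) - \frac{\eta}{n p_{A_k}}\langle \nabla f(v_k), g_{A_k}(\param_k)\rangle + \frac{L\eta^2}{2n^2p_{A_k}^2}\|g_{A_k}(\param_k)\|^2 .
\]
We take expectations conditionally on the history up to the dispatch in round $k$. The key point is that $A_k$ is sampled independently with law $p$, and the fresh sample $\zeta_{A_k}$ is independent as well. Hence
\[
\bE\Bigl[\tfrac{1}{np_{A_k}} g_{A_k}(\param_k)\Bigr] = \tfrac1n\sum_i \nabla f_i(\param_k) = \nabla f(\param_k).
\]
This unbiasedness uses only the routing randomness, not any independence between delays and gradients. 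That is precisely why this lemma is safe, unlike the later staleness step.

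For the second moment, we decompose $g_i = (g_i - \nabla f_i) + (\nabla f_i - \nabla f) + \nabla f$ and use \textbf{A3}–\textbf{A4}. This gives
\[
\bE\Bigl[\tfrac{1}{n^2p_{A_k}^2}\|g_{A_k}(\param_k)\|^2\Bigr] \le \tfrac{1}{n^2}\sum_i p_i^{-1}\bigl(3\sigma^2 + 3M^2 + 3\|\nabla f(\param_k)\|^2\bigr),
\]
or a sharper split with constants $2M^2+\sigma^2$ and a gradient term.

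\textbf{Step 2: converting back to $\param_k$.} Next we write
\[
-\langle \nabla f(v_k),\nabla f(\param_k)\rangle \le -\tfrac12\|\nabla f(\param_k)\|^2 + \tfrac12\|\nabla f(v_k)-\nabla f(\param_k)\|^2 \le -\tfrac12\|\nabla f(\param_k)\|^2 + \tfrac{L^2}{2}\|v_k-\param_k\|^2 .
\]
The step-size condition $\eta \le n^2/(8L\sum_i p_i^{-1})$ makes the term $\frac{L\eta^2}{2n^2}\sum_i p_i^{-1}\cdot(\text{const})\|\nabla f(\param_k)\|^2$ at most a quarter of $\eta\|\nabla f(\param_k)\|^2$. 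We then sum over $k=0,\dots,K-1$ and telescope $f(v_0)-f(v_K) \le f(w_0)-f^* = \Delta$ using \textbf{A1}. Dividing by $\eta K/4$ gives the three terms: $4\Delta/(\eta K)$, the variance/heterogeneity term $\frac{4\eta L(2M^2+\sigma^2)}{n^2}\sum_i p_i^{-1}$, and $\frac{2L^2}{K}\sum_k \bE\|v_k-\param_k\|^2$.

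\textbf{Main obstacle.} The delicate part is the initialization and the indexing. The first virtual step injects all $m$ initial tasks in $S_0$ at once, and the per-round step uses $A_k$ and $\param_k$. So we must check that $v_k-\param_k$ equals exactly the sum of scaled gradients of in-flight tasks, which needs careful alignment with the definition of $X_k$. We must also verify that the conditional unbiasedness argument in Step 1 holds under the Palm measure $\bEp$ in stationarity, i.e.\ that $A_k$ is independent of the past state and iterates. I would handle the $v_1$ term separately: either absorb its extra contribution into the $\Delta$ term via smoothness, or note that it is a finite additive term. Since the lemma is stated as recalled from \cite[Lemma 4]{leconte2024queueing}, the proof would then cite that result after confirming that these two points carry over to the communication-augmented network of \Cref{sec:model_description}.
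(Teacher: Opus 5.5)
Your descent argument on the virtual iterates is the right one, but it does not cover the $k=0$ summand, so it does not establish the lemma as stated. The paper itself gives no proof here: it invokes \cite[Lemma~4]{leconte2024queueing} and asserts that the lemma holds verbatim in this network. You end at the same citation, so the routes coincide; your sketch simply makes more of the argument explicit.

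\textbf{Steps $k\ge 1$.} These are correct. $A_k$ and the mini-batch are drawn afresh at the update instant, so conditional unbiasedness holds under $\bEp$ whatever the initial law. No delay--gradient independence is involved, so your second ``obstacle'' is not one. Nor do you need to identify $v_k-\param_k$ with the in-flight sum; that is only used later, in the proof of Theorem~\ref{th:opt_steps}.

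\textbf{Constants.} Your three-term split gives a coefficient larger than $4\eta L(2M^2+\sigma^2)$ when $\sigma^2$ dominates $M^2$. The stated constants come out exactly from $\mathbb{E}\|g_i(w)\|^2\le 2\sigma^2+4M^2+4\|\nabla f(w)\|^2$. With this bound, $\eta\le n^2/(8L\sum_i p_i^{-1})$ leaves a net $-\tfrac{\eta}{4}\|\nabla f(\param_k)\|^2$ per step. The exact split $\mathbb{E}\|g_i(w)\|^2\le \sigma^2+2M^2+2\|\nabla f(w)\|^2$ does better: it leaves $-\tfrac{3\eta}{8}$ and half the noise.

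\textbf{The gap at $k=0$.} You flag this step but leave it open. Telescoping from $f(v_0)=f(\param_0)$ requires the step $v_0\to v_1$ to satisfy the same one-step inequality. However, $v_1-v_0=-\eta\sum_{i\in S_0}\frac{1}{np_i}g_i(\param_0)$ bundles all $m$ initial gradients.
\begin{itemize}
\item With the paper's initialization ($S_0$ drawn uniformly, not from $p$), the mean of this step is $-\eta\frac{m}{n}\sum_i \frac{1}{np_i}\nabla f_i(\param_0)$. For non-uniform $p$ and heterogeneous $f_i$ this is not aligned with $-\nabla f(\param_0)$, and it can even point uphill.
\item Even with $S_0$ drawn from $p$, the gain $\eta m\|\nabla f(\param_0)\|^2$ competes with a quadratic term $\tfrac{L\eta^2 m(m-1)}{2}\|\nabla f(\param_0)\|^2$ plus $m$ copies of the one-step noise.
\item The hypothesis only guarantees $L\eta\le 1/8$, since $\sum_i p_i^{-1}\ge n^2$, which does not control that quadratic term once $m$ is moderately large.
\end{itemize}
Both of your fallbacks prove a different inequality: one with a modified $\Delta$, the other with an extra $O(1/K)$ term.

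\textbf{How the step can be closed.} Suppose $S_0$ is drawn from $p$, $L\eta(m-1)\le 1$, and $K\ge m-1$. With the exact split, the initial batch gives
\[
\bEp[f(v_1)]\le f(\param_0)-\tfrac{3\eta m}{8}\|\nabla f(\param_0)\|^2+\tfrac{mL\eta^2}{2n^2}\textstyle\sum_i p_i^{-1}(\sigma^2+2M^2).
\]
The halved per-step noise for $k\ge 1$ then leaves enough room for the $m-1$ extra noise terms. Without such extra conditions, the statement as written rests entirely on the citation.
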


With this lemma established, we proceed to derive the precise bound for the staleness error $\|v_k - \param_k\|^2$ to uncover the expression for $K_\epsilon$.

Let $\mathcal{Q}_k$ denote the multiset of tuples $(i, j)$, where $i$ is the index of a client whose gradient computation task has not yet been applied by \gls{CS} at the beginning of round~$k$ (i.e., it is currently computing, queued, or in transit), and $j$ is the round in which this task was initially dispatched.

The multiset $\mathcal{Q}_k$ can be defined recursively as follows:
\begin{align*}
	\mathcal{Q}_1 &= \{(i, 0) \mid i \in S_0, i \neq C_0\}, \\
	\mathcal{Q}_{k+1} &= \left( \mathcal{Q}_k \setminus \{(C_k, I_k)\} \right) \cup \{(A_{k}, k)\} \quad \text{for all } k \ge 1.
\end{align*}
Recall that $I_k$ denotes the round during which the gradient received at the end of round~$k$ was initially dispatched to client~$C_k$.

Using \cite[Lemma 9]{leconte2024queueing} to express the difference $v_k - \param_k$, and subsequently applying the Cauchy-Schwarz inequality, we have for each $k \in \{1, \ldots, K\}$:
\begin{align*}
	\|v_k - \param_k\|^2 = \left\| - \eta \sum_{(i,j) \in \mathcal{Q}_k} \frac{1}{n p_i} g_i(\param_j) \right\|^2 
	\leq \eta^2 |\mathcal{Q}_k| \sum_{(i,j) \in \mathcal{Q}_k} \frac{1}{n^2 p_i^2} \|g_i(\param_j)\|^2,
\end{align*}
where $|\mathcal{Q}_k|$ denotes the cardinality of the multiset $\mathcal{Q}_k$. Notably, for each $k \in \mathbb{N}_{>0}$, the number of in-flight tasks immediately after the $k$-th model-parameter update is strictly $|\mathcal{Q}_k| = m-1$.

Averaging this error over all $K$ rounds and rearranging the summation indices, we obtain:
\begin{align*}
	\frac{1}{K} \sum_{k=0}^{K-1} \|v_k - \param_k\|^2
	&\leq \frac{\eta^2 (m-1)}{K} \sum_{k=1}^{K-1} \sum_{(i,j) \in \mathcal{Q}_k} \frac{1}{n^2 p_i^2} \|g_i(\param_j)\|^2, \\
	&= \frac{\eta^2 (m-1)}{K} \sum_{k=1}^{K-1} \sum_{i=1}^{n} \frac{1}{n^2 p_i^2} \sum_{j=0}^{k-1} \|g_i(\param_j)\|^2 \one{(i,j) \in \mathcal{Q}_k}, \\
	&= \frac{\eta^2 (m-1)}{K} \sum_{i=1}^{n} \frac{1}{n^2 p_i^2} \sum_{j=0}^{K-2} \|g_i(\param_j)\|^2 \sum_{k=j+1}^{K-1} \one{(i,j) \in \mathcal{Q}_k},
\end{align*}
where the first equality follows by rearranging the sum symbols,
and the second equality by exchanging the sums over~$k$ and~$j$.

We observe that for all $j \in \{1,2,\ldots,K-1\}$ and $i \in \{1,2,\ldots,n\}$, the inner summation over the indicator function is bounded by the relative delay:
\begin{align*}
	\sum_{k=j+1}^{K-1} \one{(i,j) \in \mathcal{Q}_k} \leq \sum_{k=j+1}^{\infty} \one{(i,j) \in \mathcal{Q}_k} = D_{i,j},
\end{align*}
where $D_{i,j}$ is defined as the number of model updates performed by the \gls{CS} between (i) the time a task is assigned to client~$i$ at the beginning of round~$j$, and (ii) the time the resulting gradient is applied by the \gls{CS} (provided that $A_j=i$; otherwise, $D_{i,j} = 0$).

Similarly, for the initial tasks at $j=0$, we have:
\begin{align*}
	\sum_{k=1}^{K-1} \one{(i,0) \in \mathcal{Q}_k} \leq \sum_{k=1}^{\infty} \one{(i,0) \in \mathcal{Q}_k} = D_{i,0}.
\end{align*}
With a slight abuse of notation, $D_{i,0}$ captures the number of updates performed by the \gls{CS} between time~$0$ and the time the resulting gradients of all tasks initially assigned to client~$i \in S_0$ are applied (otherwise, $D_{i,0} = 0$).

Taking the expectation with respect to the Palm measure, we arrive at:
\begin{align} \label{ineq:giSij}
	\bEp \left[ \frac{1}{K} \sum_{k=0}^{K-1}\|v_k - \param_k\|^2 \right]
	\leq \frac{\eta^2 (m-1)}{K} \sum_{i=1}^{n} \frac{1}{n^2 p_i^2} \sum_{j=0}^{K-1} \bEp \left[ \|g_i(\param_j)\|^2 D_{i,j} \right].
\end{align}

We now examine the term $\bEp [ \|g_i(\param_j)\|^2 D_{i,j} ]$. Crucially, given $\param_j$, the stochastic gradient $g_i(\param_j)$ depends solely on client~$i$'s local data, while the relative delay $D_{i,j}$ depends purely on network dynamics, which operate independently of the client's local data sampling. Therefore, $D_{i,j}$ and $g_i(\param_j)$ are conditionally independent given $\param_j$.
By applying the law of total expectation, we can safely split the terms\footnote{While \cite{leconte2024queueing} implicitly assumes that $w_j$ and $D_{i, j}$ are independent, this dependency generally holds; therefore, we adopt a different approach to rigorously bound the expectation $\bEp \left[ \|g_i(w_j)\|^2 D_{i,j} \right]$.}:
\begin{align}
	\nonumber
	\bEp \left[ \|g_i(\param_j)\|^2 D_{i,j} \right] 
	&= \bEp \left( \bEp \left[ \|g_i(\param_j)\|^2 D_{i,j} \mid \param_j \right] \right) \\
	\label{eq:giSij}
	&= \bEp \left( \bEp \left[ \|g_i(\param_j)\|^2 \mid \param_j \right] \bEp \left[ D_{i,j} \mid \param_j \right] \right).
\end{align}

To bound the gradient norm, we utilize the inequality $\|a+b\|^2 \leq 2\|a\|^2 + 2\|b\|^2$ alongside the model assumptions:
\begin{align*}
	\bEp \left[ \|g_i(\param_j)\|^2 \mid \param_j \right] &= \bEp \left[ \|g_i(\param_j) - \nabla f_i(\param_j) + \nabla f_i(\param_j)\|^2 \mid \param_j \right] \\
	&\leq 2 \bEp \left[ \|g_i(\param_j) - \nabla f_i(\param_j)\|^2 \mid \param_j \right] + 2 \bEp \left[ \|\nabla f_i(\param_j)\|^2 \mid \param_j \right] \\
	&\leq 2 \sigma^2 + 2 \bEp \left[ \|\nabla f_i(\param_j)\|^2 \mid \param_j \right] \quad \text{(using Assumption~\textbf{A3})} \\
	&\leq 2 \sigma^2 + 2 G^2 \quad \text{(using Assumption~\textbf{A5})}.
\end{align*}

Substituting this bound back into \eqref{eq:giSij}, we find:
\begin{align*}
	\bEp \left[ \|g_i(\param_j)\|^2 D_{i,j} \right] \leq 2 (\sigma^2 + G^2) \bEp \left[ D_{i,j} \right].
\end{align*}

Plugging this result into inequality~\eqref{ineq:giSij} yields the explicit bound for the in-flight staleness error:
\begin{align} \label{ineq:inFlight}
	\bEp \left[ \frac{1}{K} \sum_{k=0}^{K-1}\|v_k - \param_k\|^2 \right]
	\leq \frac{2 \eta^2 (m-1)}{K} \sum_{i=1}^{n} \frac{\sigma^2 + G^2}{n^2 p_i^2} \sum_{j=0}^{K-1} \bEp \left[ D_{i,j} \right].
\end{align}

Incorporating inequality~\eqref{ineq:inFlight} into the initial bound established in Lemma~\ref{lem:bound1}, we obtain:
\begin{multline*}
	\frac{1}{K}\sum_{k=0}^{K-1} \bEp \left[ \|\nabla f(\param_k)\|^2 \right]
	\leq \frac{4 \Delta}{\eta K}
	+ \frac{4 \eta L (2M^2 + \sigma^2)}{n^2} \sum_{i=1}^n \frac{1}{p_i} \\
	+ \frac{4 \eta^2 L^2 (m-1)(\sigma^2 + G^2)}{n^2} \sum_{i=1}^{n} \frac{1}{p_i^2} \left( \frac{1}{K}\sum_{k=0}^{K-1} \bEp \left[ D_{i,k} \right] \right).
\end{multline*}

Due to the stationarity of the sequence $D_{i,k}$, we have $\bEp \left[ D_{i,k} \right] = \bEp \left[ D_i \right]$ for all $k \in \{1,2,\ldots,K-1\}$.  and the inequality above simplifies to:
\begin{multline*}
	\frac{1}{K}\sum_{k=0}^{K-1} \bEp \left[ \|\nabla f(\param_k)\|^2 \right]
	\leq \frac{4 \Delta}{\eta K}
	+ \frac{4 \eta L (2M^2 + \sigma^2)}{n^2} \sum_{i=1}^n \frac{1}{p_i} \\
	+ \frac{4 \eta^2 L^2 (m-1)(\sigma^2 + G^2)}{n^2} \sum_{i=1}^{n} \frac{1}{p_i^2} \left( \frac{\bEp[D_{i,0}]}{K} + \bEp[D_i]\right).
\end{multline*}
In the remainder, we assume that the learning rate~$\eta$ is such that
\begin{align} \label{eta_bound1}
	\eta \le \frac{n^2}{8L \sum_{i=1}^n p_i^{-1}},
\end{align}
so that the assumption of \Cref{lem:bound1} is satisfied,
and the last inequality is also true.

To guarantee $\epsilon$-accuracy, we bound each of the three terms on the right-hand side by $\epsilon/3$. If we provisionally assume that the total number of rounds $K$ is sufficiently large to ensure the initial delay is dominated by the steady-state delay (i.e., $\frac{\bEp[D_{i,0}]}{K} \le \bEp[D_i]$), we can upper bound the expected delay term $\left(\frac{\bEp[D_{i,0}]}{K} + \bEp[D_i]\right)$ by $2\bEp[D_i]$. Applying this simplification and bounding the second and third terms by $\epsilon/3$ yields the following constraints on the learning rate $\eta$:
\begin{align*}
	\eta \le \frac{n^2 \epsilon}{12 L (2M^2 + \sigma^2) \sum_{i=1}^{n}\frac{1}{p_i}},
	\quad \text{and} \quad
	\eta \le \frac{n}{2 L}
	\sqrt{\frac{\epsilon}{6 (m-1)(\sigma^2 + G^2)
			\sum_{i=1}^{n} \frac{\bEp[D_i]}{p_i^2}}}.
\end{align*}

Returning to the first term, the condition $\frac{4 \Delta}{\eta K} \le \frac{\epsilon}{3}$ dictates that $K \ge \frac{12 \Delta}{\eta \epsilon}$. Because the established bounds require $\eta$ to shrink at least as $\mathcal{O}(\sqrt{\epsilon})$, the required number of rounds $K$ grows unboundedly as $\mathcal{O}(1/\epsilon^{3/2})$ when $\epsilon \to 0$. This confirms our provisional assumption: there rigorously exists a threshold accuracy $\epsilon_0 > 0$ such that for all $0 < \epsilon \le \epsilon_0$, $K$ is large enough to strictly satisfy $K \ge \max_{i} \frac{\bEp[D_{i,0}]}{\bEp[D_i]}$.

Now, by substituting the derived upper bounds on $\eta$, alongside the initial learning rate condition in \eqref{eta_bound1}, into the requirement $K \ge \frac{12 \Delta}{\eta \epsilon}$, we establish the explicit lower bound for the total number of rounds:
\begin{multline*}
	K \ge \frac{12 \Delta}{\epsilon}\,\frac{1}{\eta}
	\ge \frac{12 \Delta}{\epsilon}
	\max \Biggl\{
	\frac{8L}{n^2}\sum_{i=1}^n \frac{1}{p_i},
	\frac{12 L (2M^2 + \sigma^2)}{n^2 \epsilon}\sum_{i=1}^n \frac{1}{p_i},
	\\
	\frac{2L\sqrt{6 (m-1)(\sigma^2 + G^2)}}{n\sqrt{\epsilon}}
	\sqrt{\sum_{i=1}^n \frac{\bEp[D_i]}{p_i^2}}
	\Biggr\}.
\end{multline*}
Hence, the convergence criterion $\frac{1}{K}\sum_{t=0}^{K-1} \mathbb{E}[\|\nabla f(w_t)\|^2] \le \epsilon$ is strictly satisfied for any $0 < \epsilon \le \epsilon_0$ by requiring $K \ge K_\epsilon(p,m)$, where:
\begin{equation*}
	K_{\epsilon}(p,m) = \frac{24 L\Delta}{n\epsilon}
	\Biggl[
	\left(4 + \frac{B}{\epsilon}\right) \sum_{i=1}^n \frac{1}{n p_i}
	+ \left( \frac{C (m-1)}{\epsilon} \sum_{i=1}^n \frac{\bEp[D_i]}{p_i^2} \right)^{1/2}
	\Biggr].
\end{equation*}
The expression for $K_\epsilon(p,m)$ is obtained by upper-bounding the maximum in the preceding inequality with the sum of its terms, simplifying, and defining the constants $B = 6(\sigma^2 + 2M^2)$ and $C = 6(\sigma^2 + G^2)$. This concludes the proof.

\section{Relaxing the Bounded Gradient Assumption (A5) for Theorem~\ref{th:opt_steps}} \label{sec:K_epsilon_unbounded}

Even when the bounded gradient assumption (\textbf{A5}) is removed, the queueing network model detailed in \Cref{sec:model_description} still provides rigorous convergence guarantees for round complexity. We formalize this in the following theorem.

\begin{theorem} \label{th:K_epsilon_unbounded}
	Define the system-wide staleness factor $S_{\mathrm{sys}}$ as:
	\begin{equation}
		S_{\mathrm{sys}} = (m-1)|\muu|\, \sum_{i=1}^n \left( \frac{1}{\mud_i} + \frac{1}{\muu_i} + \frac{m}{\muc_i} \right) p_i^{-2}, \quad \text{where} \quad |\muu| = \sum_{i=1}^n \muu_i.
	\end{equation}
	Then, under Assumptions~\textbf{A1}--\textbf{A4} and the model presented in \Cref{sec:model_description}, there exists $\epsilon_0 > 0$ such that for any target accuracy $\epsilon \in (0, \epsilon_0]$ and any learning rate satisfying
	\[
	\eta \le
	\min \Biggl\{ \frac{n^2}{8L \sum_{i=1}^n p_i^{-1}},\,
	\frac{n^2 \epsilon}{4 L B \sum_{i=1}^n p_i^{-1}},\,
	\frac{n \sqrt{\epsilon}}{2 L} \left(2 B (m-1) \sum_{i=1}^n \frac{\bEp[D_i]}{p_i^2}\right)^{-1/2} ,\,
	\frac{n}{ 4L \sqrt{(m-1)S_{\mathrm{sys}}}}
	\Biggr\},
	\]
	the expected gradient norm satisfies
	\[
	\frac{1}{K} \sum_{t=0}^{K-1} \bEp\!\left[\|\nabla f(w_t)\|^2\right] \le \epsilon
	\]
	whenever $K \ge K_\epsilon(p,m)$, where the required round complexity $K_{\epsilon}(p,m)$ is given by:
	\begin{equation*}
		K_{\epsilon}(p,m) = \frac{96 L\Delta}{n\epsilon}
		\Biggl[
		\left(2 + \frac{B}{\epsilon}\right) \sum_{i=1}^n \frac{1}{n p_i}
		+ \sqrt{(m-1) S_{\mathrm{sys}}}
		+ \left( \frac{B (m-1)}{2 \epsilon} \sum_{i=1}^n \frac{\bEp[D_i]}{p_i^2} \right)^{1/2}
		\Biggr].
	\end{equation*}
	Here, $B = 6(\sigma^2+2M^2)$.
\end{theorem}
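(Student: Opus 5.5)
The plan is to rerun the proof of Theorem~\ref{th:opt_steps} in \Cref{proof_th:opt_steps} unchanged up to \eqref{ineq:giSij} and \eqref{eq:giSij}, and change only the step that used \textbf{A5}. Without a uniform bound on $\|\nabla f_i\|$, we instead use \textbf{A3}--\textbf{A4} and $\|a+b\|^2\le 2\|a\|^2+2\|b\|^2$ twice:
\[
\bEp\!\left[\|g_i(\param_j)\|^2 \mid \param_j\right] \le 2\sigma^2 + 4M^2 + 4\|\nabla f(\param_j)\|^2 .
\]
With \eqref{eq:giSij}, this splits $\bEp[\|g_i(\param_j)\|^2 D_{i,j}]$ into two terms. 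The first is $\tfrac{B}{3}\bEp[D_{i,j}]$. It produces the staleness term $\bigl(\tfrac{B(m-1)}{2\epsilon}\sum_i \bEp[D_i]/p_i^2\bigr)^{1/2}$ exactly as before, with $B$ in place of $C$. The second is $4\,\bEp\bigl[\|\nabla f(\param_j)\|^2\,\bEp[D_{i,j}\mid \param_j]\bigr]$, which is new and must be absorbed into the left-hand side.

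The key step, and the one I expect to be the main obstacle, is a bound on $\bEp[D_{i,j}\mid \param_j]$ that is \emph{uniform} in $\param_j$. Since $\param_j$ depends on the past network history, I would condition on the full history $\mathcal F_{T_j}$ up to the dispatch instant $T_j$, which determines $\param_j$, and use the strong Markov property of $\xi$ at $T_j$. Given $\mathcal F_{T_j}$, the future of the network depends only on $\xi(T_j)$ and not on data samples. On $\{A_j=i\}$, the tagged task's sojourn consists of an exponential downlink time of mean $1/\mud_i$ and an exponential uplink time of mean $1/\muu_i$. In between, it waits at the \gls{FIFO} server $c_i$ behind at most $m-1$ tasks, so its expected sojourn there is at most $m/\muc_i$. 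During the sojourn, uplink completions occur at instantaneous rate $\sum_l \muu_l \xiu_l(t)\le (m-1)|\muu|$, counting only the other $m-1$ tasks, which are the ones whose completions count toward $D$. By Wald's identity (or a compensator argument for the counting process of uplink completions), for every initial state
\[
\bEp[D_{i,j}\mid \mathcal F_{T_j}] \le p_i\,(m-1)|\muu|\Bigl(\tfrac{1}{\mud_i}+\tfrac{1}{\muu_i}+\tfrac{m}{\muc_i}\Bigr) .
\]
Using $p_i\le 1$, and tower-conditioning down to $\param_j$, this gives $\sum_i p_i^{-2}\,\bEp[D_{i,j}\mid\param_j]\le S_{\mathrm{sys}}$ pointwise.

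Plugging this in, the extra contribution to the bound of \Cref{lem:bound1} is at most
\[
\frac{16\eta^2L^2(m-1)S_{\mathrm{sys}}}{n^2}\cdot\frac1K\sum_{j}\bEp\|\nabla f(\param_j)\|^2 .
\]
Under $\eta\le n/(4L\sqrt{(m-1)S_{\mathrm{sys}}})$, the coefficient is at most $1$. Tightening the constant by a factor of $2$ (this is the origin of the $96$ rather than $24$ and of the $\sqrt{(m-1)S_{\mathrm{sys}}}$ term in $K_\epsilon$) lets us move this term to the left-hand side at the cost of a factor of $2$ on the right.

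The rest repeats the $\epsilon/3$ splitting argument of \Cref{proof_th:opt_steps}, with the doubled constants. We again handle the initial-delay term $\bEp[D_{i,0}]/K$ by taking $\epsilon_0$ small enough that $K\gtrsim\epsilon^{-3/2}$ dominates it. We then invert $K\ge 12\Delta/(\eta\epsilon)\cdot 2$ over the four learning-rate constraints and bound the resulting maximum by the sum, which yields the stated $K_\epsilon(p,m)$.
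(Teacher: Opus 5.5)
Your proposal is correct and follows the paper's proof step for step:
\begin{itemize}
\item the same bound $\bEp[\|g_i(\param_j)\|^2\mid\param_j]\le 2\sigma^2+4M^2+4\|\nabla f(\param_j)\|^2$;
\item the same worst-case sojourn (one downlink phase, $m$ computation phases, one uplink phase), with other uplink completions at rate at most $(m-1)|\muu|$;
\item the same absorption of the $\|\nabla f\|^2$ term into the left-hand side under the fourth learning-rate constraint;
\item the same $\epsilon/3$ bookkeeping.
\end{itemize}

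The one real difference is how you decouple $D_{i,j}$ from $\|\nabla f(\param_j)\|^2$. The paper dominates $D_{i,j}$ by $N(S_i)$, where $N$ is an independent Poisson process of rate $(m-1)|\muu|$. It then factors $\bEp[N(S_i)\|\nabla f(\param_j)\|^2]$ ``by independence''. That inequality tacitly needs the domination to hold conditionally on $\param_j$, or via a pathwise coupling, and neither is constructed. Your uniform bound on $\bEp[D_{i,j}\mid\mathcal F_{T_j}]$ supplies exactly this conditional statement. It rests on the strong Markov property at $T_j$ and the intensity bound on the compensator of the uplink-completion counting process. So your version of this step is the more rigorous one. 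It also keeps the factor $p_i=\bP(A_j=i)$, which the paper's bound implicitly replaces by $1$; both routes end at $S_{\mathrm{sys}}$ after using $p_i\le 1$.

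One arithmetic correction. The absorbed term has coefficient $2L^2\cdot\frac{4\eta^2(m-1)}{n^2}S_{\mathrm{sys}}=\frac{8\eta^2L^2(m-1)S_{\mathrm{sys}}}{n^2}$, not $16\eta^2L^2(m-1)S_{\mathrm{sys}}/n^2$. Under the stated constraint $\eta\le n/(4L\sqrt{(m-1)S_{\mathrm{sys}}})$ it is therefore at most $1/2$ directly. No further ``tightening'' is needed; with your constant $16$ the coefficient could reach $1$, and the absorption would give nothing. The factor $96$ then comes from the doubled requirement $K\ge 24\Delta/(\eta\epsilon)$ after absorption, with a common factor $4L/n$ pulled out of the four terms of the maximum.
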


This theorem's significance is threefold. First, it establishes rigorous convergence guarantees without the restrictive bounded gradient assumption. Second, it introduces $S_{\mathrm{sys}}$ to physically capture the aggregate staleness penalty across the network. Finally, it formalizes hardware-algorithm co-design by bounding the maximum learning rate $\eta$ inversely to $S_{\mathrm{sys}}$, proving that severe network congestion mathematically necessitates a smaller learning rate for optimization stability.

\begin{proof}
	The proof of \Cref{th:K_epsilon_unbounded} follows the same technique as the proof of \Cref{th:opt_steps}. The derivations remain valid until Equation~\eqref{eq:giSij}. We must examine the expectation term $\bEp \left[ \|g_i(\param_j)\|^2 D_{i,j} \right]$ differently. Specifically, we do not rely on the bounded gradient assumption (\textbf{A5}), and we must explicitly account for the dependence between the relative delay and the model parameters. 
	
	We utilize the inequality $\|a+b\|^2 \leq 2\|a\|^2 + 2\|b\|^2$ alongside the model assumptions:
	\begin{align*}
		\bEp \left[ \|g_i(\param_j)\|^2 \mid \param_j \right] 
		&= \bEp \left[ \|g_i(\param_j) - \nabla f_i(\param_j) + \nabla f_i(\param_j)\|^2 \mid \param_j \right] \\
		&\leq 2 \bEp \left[ \|g_i(\param_j) - \nabla f_i(\param_j)\|^2 \mid \param_j \right] + 2 \bEp \left[ \|\nabla f_i(\param_j)\|^2 \mid \param_j \right] \\
		&\leq 2 \sigma^2 + 2 \bEp \left[ \|\nabla f_i(\param_j)\|^2 \mid \param_j \right] \\
		&\leq 2 \sigma^2 + 4 \bEp \left[ \|\nabla f_i(\param_j) - \nabla f(\param_j)\|^2 \mid \param_j \right] + 4 \bEp \left[ \|\nabla f(\param_j)\|^2 \mid \param_j \right]  \\
		&\leq 2 \sigma^2 + 4 M^2 +  4 \bEp \left[ \|\nabla f(\param_j)\|^2 \mid \param_j \right].
	\end{align*}
	where the second inequality follows from \textbf{A3},
	the third from the classical inequality
	$\|a+b\|^2 \leq 2\|a\|^2 + 2\|b\|^2$,
	and the fourth inequality from \textbf{A4}.
	
	Plugging this back into Equation~\eqref{eq:giSij}, we obtain:
	\begin{equation} \label{eq:g_ijD_ij-unbounded}
		\bEp \left[ \|g_i(\param_j)\|^2 D_{i,j} \right] \leq 2 (\sigma^2 + 2 M^2) \bEp\left[D_{i,j} \right] +  4 \bEp\left[ \|\nabla f(\param_j)\|^2  D_{i,j} \right].
	\end{equation}
	
	Note that the relative delay $D_{i,j}$ is not independent of $\nabla f(\param_j)$. Consequently, we stochastically upper-bound the relative delay $D_{i,j}$ using a new random variable that is independent of $\nabla f(\param_j)$. 
	Specifically, we consider the worst-case sojourn time scenario for the task sent to client~$i$ at round~$j$, which occurs when this task finds $m-1$ tasks already queued at the computation queue of client~$i$ upon being completely downloaded. Due to the memoryless property of exponential service times, the sojourn time $S_i$ of this task at client~$i$ (including servers $\mathrm{d}_i$, $\mathrm{c}_i$, and $\mathrm{u}_i$) is distributed as a sum of independent exponential random variables:
	$$
	S_i = E^{\mathrm{d}}_i + \sum_{k=1}^m E^{\mathrm{c}}_{i,k} + E^{\mathrm{u}}_i,
	$$
	where $E^{\mathrm{d}}_i$ is an exponential random variable with parameter $\mud_i$, $(E^{\mathrm{c}}_{i,k})_{k \geq 1}$ are i.i.d. exponential random variables with parameter $\muc_i$, and $E^{\mathrm{u}}_i$ is an exponential random variable with parameter $\muu_i$. The quantity $D_{i,j}$, representing the number of parameter updates during the sojourn of this task, can be stochastically bounded by the number of events generated by an independent Poisson process~$N$ with intensity~$(m-1) |\muu| = (m-1) \sum_{k=1}^n \muu_k$ over a time interval distributed as $S_i$.
	
	Therefore, for all $j \in \{1,\ldots,K\}$:
	\begin{align*}
		\bEp\left[D_{i,j} \|\nabla f(\param_j)\|^2\right]
		&\leq \bEp\left[N(S_i) \|\nabla f(\param_j)\|^2\right] \\
		&= \bEp\left[N(S_i)\right] \cdot \bEp\left[\|\nabla f(\param_j)\|^2\right] \quad \text{(by independence)} \\
		&= \bEp\left[\|\nabla f(\param_j)\|^2\right] \cdot \left(\frac{1}{\mud_i} + \frac{m}{\muc_i} + \frac{1}{\muu_i}\right) (m-1) \sum_{k=1}^n \muu_k.
	\end{align*}
	
	Substituting this bound into \eqref{eq:g_ijD_ij-unbounded}, we find:
	$$
	\bEp \left[ \|g_i(\param_j)\|^2 D_{i,j} \right] \leq 2 (\sigma^2 + 2 M^2) \bEp\left[D_{i,j} \right] +  4 (m-1) |\muu| \left(\frac{1}{\mud_i} + \frac{m}{\muc_i} + \frac{1}{\muu_i}\right) \bEp\left[\|\nabla f(\param_j)\|^2\right].
	$$
	
	Plugging this result into inequality~\eqref{ineq:giSij} yields the explicit bound for the in-flight staleness error:
	\begin{multline*}
		\bEp \left[ \frac{1}{K} \sum_{k=0}^{K-1}\|v_k - \param_k\|^2 \right]
		\leq \frac{2 \eta^2 (m-1)}{K} \sum_{i=1}^{n} \frac{1}{n^2 p_i^2} \sum_{j=0}^{K-1} \Biggl( (\sigma^2 + 2 M^2) \bEp\left[D_{i,j} \right] \\
		+ 2 (m-1) |\muu| \left(\frac{1}{\mud_i} + \frac{m}{\muc_i} + \frac{1}{\muu_i}\right) \bEp\left[\|\nabla f(\param_j)\|^2\right] \Biggr).
	\end{multline*}
	
	Incorporating the latter inequality into the initial bound established in Lemma~\ref{lem:bound1}, and defining the system-wide staleness factor as $S_{\mathrm{sys}} = (m-1) |\muu|\, \sum_{i=1}^n \left( \frac{1}{\mud_i} + \frac{1}{\muu_i} + \frac{m}{\muc_i} \right) p_i^{-2}$, we obtain:
	\begin{align*}
		&\frac{1}{K}\sum_{k=0}^{K-1} \left( 1 - \frac{8 \eta^2 L^2 (m-1)}{n^2} S_{\mathrm{sys}}  \right) \bEp \left[ \|\nabla f(\param_k)\|^2 \right] \\
		&\leq \frac{4 \Delta}{\eta K} + \frac{4 \eta L (2M^2 + \sigma^2)}{n^2} \sum_{i=1}^n \frac{1}{p_i}
		+ \frac{4 \eta^2 L^2 (m-1)(\sigma^2 + 2 M^2)}{n^2} \sum_{i=1}^{n} \frac{1}{p_i^2} \left( \frac{1}{K}\sum_{k=0}^{K-1} \bEp \left[ D_{i,k} \right] \right).
	\end{align*}
	
	By enforcing the following constraint on the learning rate:
	\begin{align} \label{lr:bound_staleness}
		\eta \le \frac{n}{4L} \frac{1}{\sqrt{(m-1) S_{\mathrm{sys}}}},
	\end{align}
	the factor multiplying the expected gradient norm on the left-hand side is lower-bounded by $1/2$. Multiplying the entire inequality by $2$ yields: 
	\begin{multline*}
		\frac{1}{K}\sum_{k=0}^{K-1} \bEp \left[ \|\nabla f(\param_k)\|^2 \right] 
		\leq \frac{8 \Delta}{\eta K} + \frac{8 \eta L (2M^2 + \sigma^2)}{n^2} \sum_{i=1}^n \frac{1}{p_i} \\
		+ \frac{8 \eta^2 L^2 (m-1)(\sigma^2 + 2 M^2)}{n^2} \sum_{i=1}^{n} \frac{1}{p_i^2} \left( \frac{1}{K}\sum_{k=0}^{K-1} \bEp \left[ D_{i,k} \right] \right).
	\end{multline*}
	
	Due to the stationarity of the sequence $D_{i,k}$, we have $\bEp \left[ D_{i,k} \right] = \bEp \left[ D_i \right]$ for all $k \ge 1$. Thus, the inequality simplifies to:
	\begin{multline*}
		\frac{1}{K}\sum_{k=0}^{K-1} \bEp \left[ \|\nabla f(\param_k)\|^2 \right]
		\leq \frac{8 \Delta}{\eta K} + \frac{8 \eta L (2M^2 + \sigma^2)}{n^2} \sum_{i=1}^n \frac{1}{p_i} \\
		+ \frac{8 \eta^2 L^2 (m-1)(\sigma^2 + 2 M^2)}{n^2} \sum_{i=1}^{n} \frac{1}{p_i^2} \left( \frac{\bEp[D_{i,0}]}{K} + \bEp[D_i]\right).
	\end{multline*}
	
	To guarantee $\epsilon$-accuracy, we bound each of the three terms on the right-hand side by $\epsilon/3$. If we provisionally assume that the total number of rounds $K$ is sufficiently large to ensure the initial delay is dominated by the steady-state delay (i.e., $\frac{\bEp[D_{i,0}]}{K} \le \bEp[D_i]$), we can upper-bound the expected delay term $\left(\frac{\bEp[D_{i,0}]}{K} + \bEp[D_i]\right)$ by $2\bEp[D_i]$. Applying this simplification and bounding the second and third terms by $\epsilon/3$ yields the following constraints on the learning rate $\eta$:
	\begin{align*}
		\eta \le \frac{n^2 \epsilon}{24 L (2M^2 + \sigma^2) \sum_{i=1}^{n}\frac{1}{p_i}},
		\quad \text{and} \quad
		\eta \le \frac{n}{4 L}
		\sqrt{\frac{\epsilon}{3 (m-1)(\sigma^2 + 2M^2) \sum_{i=1}^{n} \frac{\bEp[D_i]}{p_i^2}}}.
	\end{align*}
	
	Returning to the first term, the condition $\frac{8 \Delta}{\eta K} \le \frac{\epsilon}{3}$ dictates that $K \ge \frac{24 \Delta}{\eta \epsilon}$. Because the established bounds require $\eta$ to shrink at least as $\mathcal{O}(\sqrt{\epsilon})$, the required number of rounds $K$ grows unboundedly as $\mathcal{O}(1/\epsilon^{3/2})$ when $\epsilon \to 0$. This confirms our provisional assumption: there rigorously exists a threshold accuracy $\epsilon_0 > 0$ such that for all $0 < \epsilon \le \epsilon_0$, $K$ is large enough to strictly satisfy $K \ge \max_{i} \frac{\bEp[D_{i,0}]}{\bEp[D_i]}$.
	
	Now, by substituting the derived upper bounds on $\eta$, alongside the initial constraints $\eta \le \frac{n^2}{8L \sum p_i^{-1}}$ and \eqref{lr:bound_staleness}, into the requirement $K \ge \frac{24 \Delta}{\eta \epsilon}$, we establish the explicit lower bound for the total number of rounds:
	\begin{multline*}
		K \ge \frac{24 \Delta}{\epsilon\eta}
		\ge \frac{24 \Delta}{\epsilon}
		\max \Biggl\{
		\frac{8L}{n^2}\sum_{i=1}^n \frac{1}{p_i}, \quad
		\frac{4L}{n} \sqrt{(m-1) S_{\mathrm{sys}}}, \quad
		\frac{24 L (2M^2 + \sigma^2)}{n^2 \epsilon}\sum_{i=1}^n \frac{1}{p_i}, \\
		\frac{4L\sqrt{3 (m-1)(\sigma^2 + 2M^2)}}{n\sqrt{\epsilon}}
		\sqrt{\sum_{i=1}^n \frac{\bEp[D_i]}{p_i^2}}
		\Biggr\}.
	\end{multline*}
	
	Hence, the convergence criterion $\frac{1}{K}\sum_{t=0}^{K-1} \mathbb{E}[\|\nabla f(\param_t)\|^2] \le \epsilon$ is strictly satisfied for any $0 < \epsilon \le \epsilon_0$ by requiring $K \ge K_\epsilon(p,m)$, where:
	\begin{equation*}
		K_{\epsilon}(p,m) = \frac{96 L\Delta}{n\epsilon}
		\Biggl[
		\left(2 + \frac{B}{\epsilon}\right) \sum_{i=1}^n \frac{1}{n p_i}
		+ \sqrt{(m-1) S_{\mathrm{sys}}}
		+ \left( \frac{B (m-1)}{2 \epsilon} \sum_{i=1}^n \frac{\bEp[D_i]}{p_i^2} \right)^{1/2}
		\Biggr].
	\end{equation*}
	The expression for $K_\epsilon(p,m)$ is obtained by upper-bounding the maximum in the preceding inequality with the sum of its terms, simplifying, and defining the constant $B = 6(\sigma^2 + 2M^2)$. This concludes the proof.
\end{proof}

\section{Optimizing Round Complexity: Experimental Results} \label{num:optimize-round}

In this section, we empirically validate our theoretical findings by demonstrating that optimizing the routing probability vector $p$ yields significant improvements in round complexity. Adhering to the fully concurrent setting ($m=n$) established in \cite{koloskova2022sharper}, we benchmark our proposed strategy against two standard baselines:

\begin{enumerate}
	\item \textbf{Round-Optimized \texttt{Generalized AsyncSGD}:} Our proposed method, which utilizes the optimal routing vector $p^{\ast K}$ designed to minimize the theoretical round complexity bound $K_\epsilon$ under full concurrency ($m=n$).
	
	\item \textbf{Standard Baseline (\texttt{AsyncSGD})} \cite[Algorithm 2]{koloskova2022sharper}: The conventional asynchronous approach employing uniform routing probabilities ($p^{\text{uni}}$) across all clients and full concurrency ($m=n$).
	
	\item \textbf{Max-Throughput \texttt{Generalized AsyncSGD}:} A heuristic strategy that prioritizes system speed by maximizing the global update frequency ($p^{\ast \lambda}$) under full concurrency ($m=n$), disregarding the impact of data heterogeneity on convergence.
\end{enumerate}

\subsection{Experimental Setup} \label{num:exp_scenario_round}
To evaluate the relative efficiency of our proposed strategy against the baselines, we simulate a throughput-diverse regime comprising $n = 100$ clients, partitioned into five distinct clusters (Types A--E). This configuration captures the diverse hardware and network conditions typical of realistic heterogeneous edge environments, ranging from high-performance super-clients to resource-constrained stragglers. The specific service rates for computation ($\mu^{\text{c}}$), uplink ($\mu^{\text{u}}$), and downlink ($\mu^{\text{d}}$) are detailed in Table~\ref{tab:client_profiles_2}.

\begin{table}[htbp]
	\centering
	\caption{Client clusters, representative hardware profiles, and service rates (Rate $\mu$ in tasks/sec).}
	\label{tab:client_profiles_2}
	\setlength{\tabcolsep}{3pt}
	\begin{tabular}{cl ccc c}
		\toprule
		\textbf{Type} & \textbf{Profile} & $\mu^{\text{c}}$ & $\mu^{\text{u}}$ & $\mu^{\text{d}}$ & \textbf{Count} \\
		\midrule
		A &  Fast compute, slow network & 10.0 & 2.0 & 2.5 & 15 \\
		B &  slow compute, Fast network  & 2.5 & 8.0 & 9.0 & 35 \\
		C & Balanced  & 5.0 & 5.0 & 6.0 & 30 \\
		D & Straggler & 0.5 & 0.8 & 1.1 & 15 \\
		E & High-Performance  & 15.0 & 10.0 & 11.0 & 5 \\
		\bottomrule
	\end{tabular}
\end{table}

We evaluate performance on the EMNIST~\cite{cohen2017emnist} dataset under three distinct data distribution scenarios:
\begin{itemize}
	\item \textbf{Homogeneous (IID):} The training data is uniformly shuffled and partitioned among clients, ensuring that each client possesses an identical class distribution and an equal number of samples.
	
	\item \textbf{Heterogeneous (Non-IID):} We simulate realistic feature and label heterogeneity using a Dirichlet distribution. For each class~$k$, the proportion of samples allocated to client~$j$ is drawn from a vector $q_k \sim \text{Dir}_n(\alpha)$. We set the concentration parameter to $\alpha = 0.2$, following~\cite{yurochkin2019bayesian,li2021federatedlearningnoniiddata}.
	
	\item \textbf{Highly Heterogeneous (Pathological Non-IID):} We enforce extreme label skew where each client holds data from only 3 distinct classes, selected uniformly at random from the total label space. The total number of samples is balanced across all clients.
	
\end{itemize}

\subsection{Optimization: Strategy and Results}
We determine the optimal routing vector $p^{\ast K}$ by minimizing the round complexity $K_{\epsilon}$ via gradient descent. The optimization is performed using the Adam optimizer~\cite{kingma2014adam}, where the gradients of $K_{\epsilon}$ are computed exactly using its closed-form expression \eqref{eq:K_epsilon_bounded} combined with the delay gradient \eqref{eq:gradD} derived in Theorem~\ref{theo:little_law}.
We note that the objective function $K_\epsilon(p,m)$ is non-convex with respect to $p$; hence, to mitigate the risk of converging to suboptimal local minima, we employ multiple random initializations. If one is interested only in finding a solution that outperforms uniform routing, the uniform distribution can serve as an effective starting point for the optimization.

Similarly, for the baseline comparison, we compute the max-throughput routing vector $p^{\ast \lambda}$ by maximizing the system update frequency $\lambda$. This is achieved by performing gradient ascent with Adam, utilizing the closed-form gradient of the update frequency provided in Equation~\eqref{eq:grad_throughput}.
The data-dependent constants $\sigma$, $M$, and $G$ (introduced in Section~\ref{data_model}) are estimated empirically from the training dataset, and we set the target gradient norm bound to $\epsilon = 1$.

The optimized routing probabilities and corresponding performance metrics are reported in \Cref{tab:opt_results_round}. To interpret the optimization behavior, we explicitly analyze the term $\bEp[D_i(p,m=n)] \, p_i^{-2}$, which we define as the \emph{staleness impact factor}. This metric quantifies each client's individual contribution to the staleness component of $K_\epsilon$ (see Equation~\eqref{eq:K_epsilon_bounded}), allowing us to identify specific clusters that disproportionately compromise algorithmic stability.

\begin{table}[htbp]
	\centering
	\caption{Comparison of optimized routing probabilities and staleness impact factors across different client clusters (Types A--E).}
	\label{tab:opt_results_round}
	\setlength{\tabcolsep}{10pt}
	\begin{tabular}{l c c c c c}
		\toprule
		& \multicolumn{2}{c}{\textbf{Routing Probabilities}} 
		& \multicolumn{3}{c}{\textbf{Staleness Impact Factor}} \\
		& \multicolumn{2}{c}{$p \times 100$} 
		& \multicolumn{3}{c}{$\bEp[D_i] \, p_i^{-2} \times 10^{-2} $} \\
		\cmidrule(lr){2-3} \cmidrule(lr){4-6}
		\textbf{Type} 
		& $p^{\ast \lambda}$ 
		& $p^{\ast K}$ 
		& $p^{\ast \lambda}$ 
		& $p^{\text{uni}}$ 
		& $p^{\ast K}$ \\ 
		\midrule
		A & 0.405 & 0.788 & 373.9 & 40.9 & 3.0 \\
		B & 0.422 & 0.754 & 274.8 & 29.1 & 2.0 \\
		C & 1.421 & 0.748 & 75.5 & 23.8 & 1.8 \\
		D & 0.045 & 1.033 & 14,869.5 & 500.2 & 9.7 \\
		E & 7.173 & 0.718 & 8.7 & 10.6 & 0.8 \\
		\bottomrule
	\end{tabular}
\end{table}

The throughput-optimized routing ($p^{\ast \lambda}$) exhibits extreme bias toward fast clients, heavily favoring the super-clients (Type~E) while aggressively down-weighting stragglers (Type~D). Although this strategy successfully maximizes the system update frequency ($\lambda = 151$ updates/sec), it yields an extreme staleness impact for Type~D clients ($\bEp[D_i] \, p_i^{-2} \approx 1.5 \times 10^{6}$). Consequently, when a straggler finally contributes an update, its gradient is excessively stale. This severity degrades convergence and completely offsets the benefits gained from the frequent updates of fast clients.

Standard uniform routing ($p^{\text{uni}}$) fails to account for this system heterogeneity. By treating Type~D stragglers and Type~E super-clients identically, it yields a staleness factor for the stragglers (approximately $5 \times 10^{4}$) that is moderate compared to the extremes of $p^{\ast \lambda}$, yet still highly suboptimal. Our proposed method, $p^{\ast K}$, improves upon this baseline by a factor of $50\times$ (reducing the staleness impact to $\approx 10^{3}$), demonstrating that the equal treatment of highly heterogeneous clients is severely inefficient.

In contrast, the round-optimized routing ($p^{\ast K}$) adopts a counter-intuitive but highly effective strategy: it prioritizes the slowest clients. It assigns Type~D stragglers the highest routing probability (nearly 1.4$\times$ that of Type~E), thereby sacrificing raw update frequency ($\lambda = 2.4$ vs. 41 for $p^{\text{uni}}$) to homogenize the staleness impact across the network. By forcing slow clients to update more frequently, the optimizer ensures that the staleness term associated with stragglers does not dominate the convergence bound.

\subsection{Learning Performance}
To validate the insights obtained from the routing-optimization analysis and corroborate the theoretical results, we simulate the training process on image classification tasks.  
We adopt the experimental setup described in Section~\ref{num:exp_scenario_round} to model the learning dynamics.

To assess robustness to initial transients, the system is initialized out of equilibrium: at $t=0$, the $m=n$ tasks are assigned uniformly at random to the clients’ downlink servers, rather than being drawn from the stationary distribution.  
Moreover, to verify that our conclusions are not artifacts of the exponential service-time assumption, we evaluate performance under three distinct distributions for both computation and communication times:
\begin{enumerate}
	\item[(i)] \textbf{Exponential:} Service times are exponentially distributed, as required by the theoretical analysis.
	\item[(ii)] \textbf{Deterministic:} Service times are fixed and equal to $1/\mu$, corresponding to zero variance.
	\item[(iii)] \textbf{Lognormal:} Service times follow a heavy-tailed distribution with mean $1/\mu$.  
	We set the variance of the underlying normal distribution to $\sigma_N^2 = 1$, reflecting the high variability observed in real-world edge systems while enforcing a fixed coefficient of variation across clients.
\end{enumerate}

Models are trained using a standard multi-class cross-entropy loss, and performance is evaluated on an unseen, label-balanced test set.  
Learning rates are selected via grid search, and additional implementation details are provided in Section~\ref{exp:details}.

The experimental results presented in \Cref{fig:emnist_round} show that \texttt{Generalized AsyncSGD} equipped with the optimized routing vector $p^{\ast K}$ consistently outperforms all baseline methods across the considered scenarios.  
Throughout training, $p^{\ast K}$ achieves higher accuracy and greater stability, even under highly heterogeneous data distributions, leading to a substantial reduction in the number of communication rounds required for convergence.

Notably, although assigning higher routing probabilities to slower clients might intuitively appear to bias the model toward their local data, our simulations indicate the opposite effect.  
The optimized routing effectively mitigates gradient staleness, resulting in smooth and stable convergence.  
In contrast, both the uniform routing strategy and, in particular, the max-throughput strategy exhibit significantly higher loss variance, confirming that blindly maximizing update frequency can induce severe learning instabilities.  
These observations validate that our framework accurately captures and counteracts the adverse effects of staleness arising from heterogeneity in both data and service times.

Moreover, the performance gains achieved by $p^{\ast K}$ are consistent across all evaluated service-time distributions and are especially pronounced in the deterministic case.  
This demonstrates that the robustness of the proposed approach extends well beyond the exponential service-time assumptions underlying the theoretical analysis.

Finally, it is important to contextualize these results with respect to the chosen performance metric.  
Looking ahead to \Cref{sec:discussion}, the advantages of $p^{\ast K}$ are specific to the round-complexity metric $K_\epsilon$.  
By design, minimizing $K_\epsilon$ reduces staleness by prioritizing slower clients, which inherently decreases system throughput.  
Consequently, when performance is evaluated in terms of wall-clock time rather than communication rounds, the relative ranking of the strategies can differ.

\begin{figure}[htbp]
	\centering
	\includegraphics[width=0.49\textwidth]{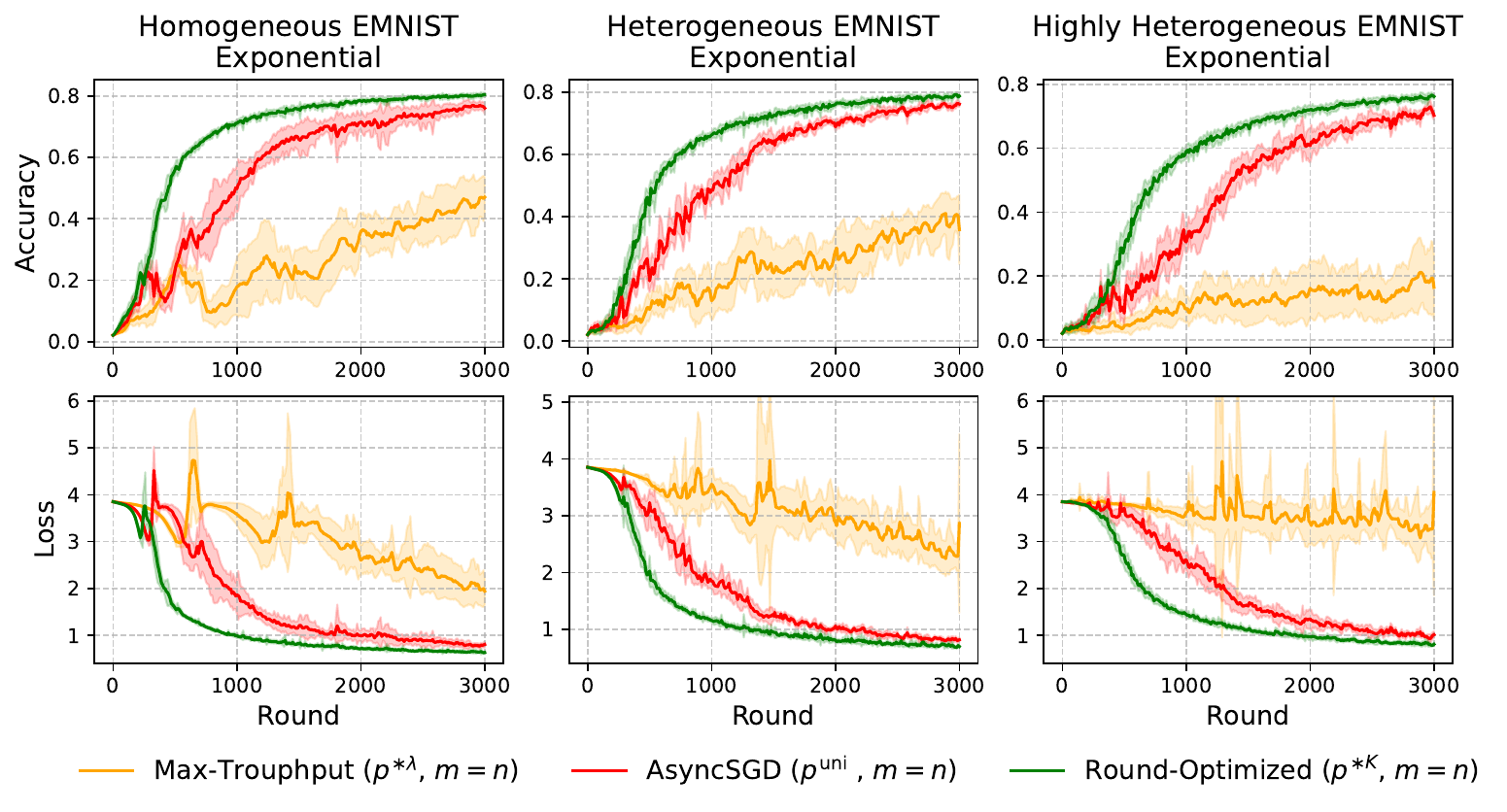}
	\hfill
	\includegraphics[width=0.49\textwidth]{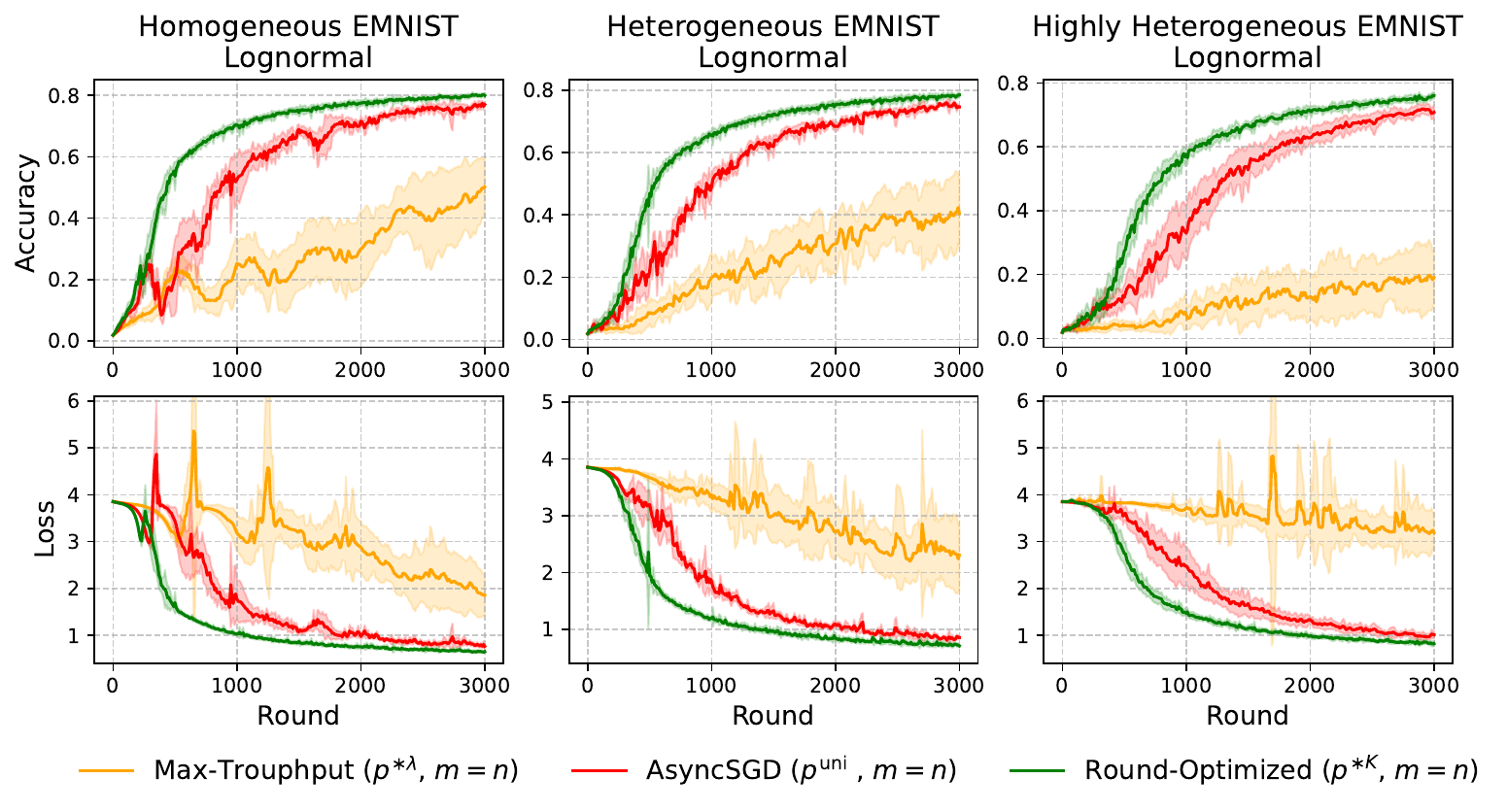}
	\\
	\includegraphics[width=0.49\textwidth]{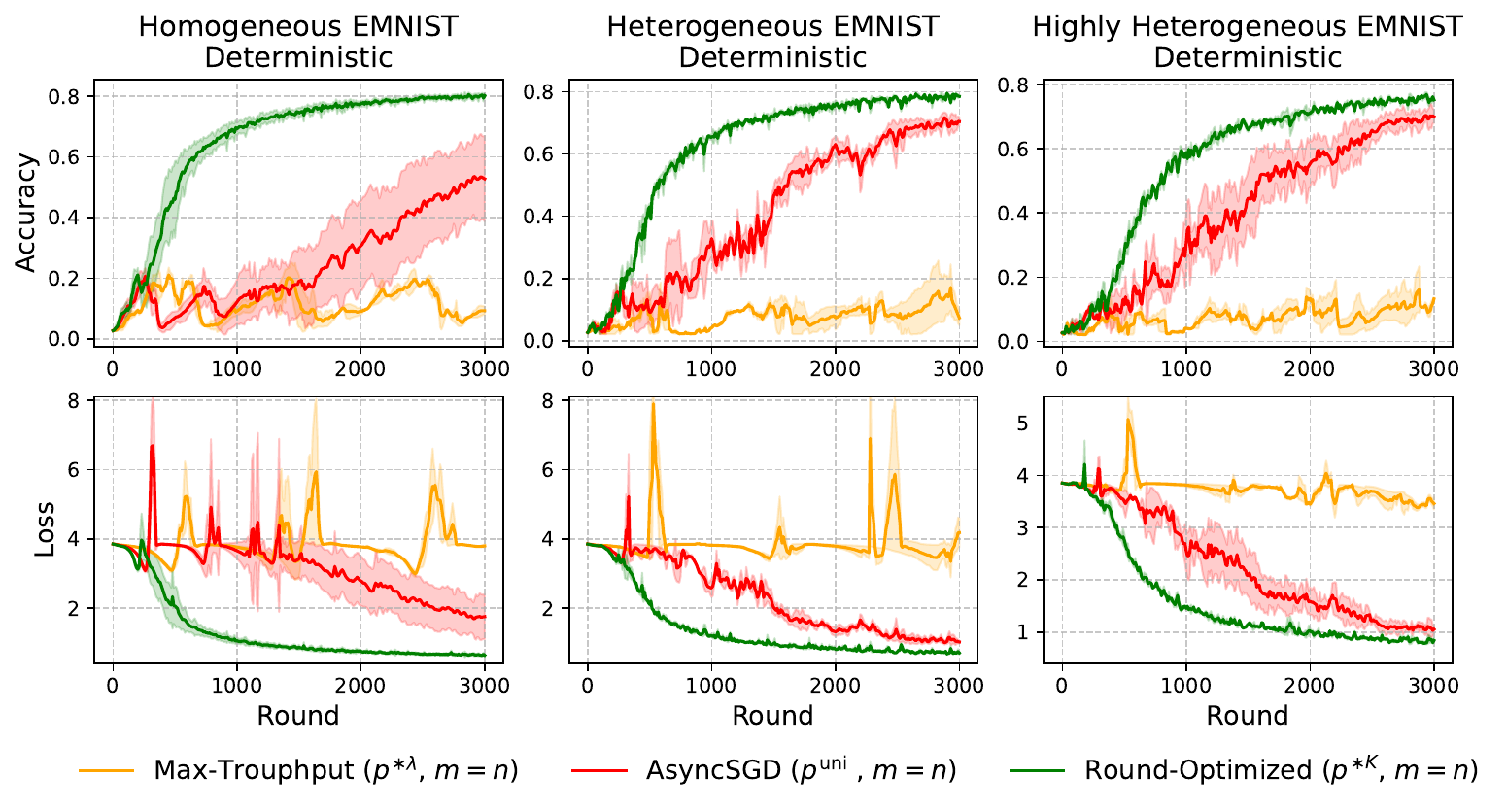}
	\caption{Test set accuracy and loss vs. rounds for the EMNIST dataset ($n=100$, $m=n$) using the client profiles defined in Table~\ref{tab:client_profiles_2}. The subplots evaluate performance under three service time distributions: \textbf{(Top Left)} Exponential, \textbf{(Top Right)} Lognormal, and \textbf{(Bottom)} Deterministic. Each plot compares convergence across three data settings: Homogeneous (IID), Heterogeneous (Dirichlet $\alpha=0.2$), and Highly Heterogeneous (3 classes/client).}
	\label{fig:emnist_round}
\end{figure}

\section{Proof of Proposition~\ref{prop:time-eps}} \label{proof_prop:time-eps}

The proof proceeds in three distinct stages. First, we derive the expected elapsed time for $K_{\epsilon}$ global rounds \eqref{eq:tau}. Next, we establish the explicit formulation for the system throughput \eqref{eq:throughput}. Finally, we compute the exact analytical gradient of this throughput \eqref{eq:grad_throughput}.

\subsection{Proof of Equation~\eqref{eq:tau}}
Recall that $(T_k)_{k \in \mathbb{N}}$ denotes the sequence of service completion times at servers $({\mathrm{u}}_i)_{i=1}^n$, with $T_0 = 0$. In this context, $T_k$ denotes the beginning of the $k$-th round, which consequently has a duration of $T_{k+1} - T_k$.
By definition, the total time for $K_{\epsilon}$ rounds is:
\begin{align} \label{tau_def}
	\tau_{\epsilon} = \sum_{k=1}^{K_{\epsilon}(p,m)} (T_k - T_{k-1}).
\end{align}

Let $N = (N(t))_{t \ge 0}$ be the counting process associated with the point process $(T_k)_{k \in \mathbb{N}}$. This process tracks the total number of service completions and can be expressed as the sum of positive decrements of the uplink queue sizes:
\[
N(t) = \sum_{0 < s \le t} \sum_{i=1}^n \left( \xiu_i(s^-) - \xiu_i(s) \right)^+.
\]
Since $\xi$ is stationary, $N$ is a stationary point process. Consequently, under the Palm probability measure $\bPp$, the inter-event times are identically distributed. Thus, $\bEp[T_k - T_{k-1}] = \bEp[T_1]$ for all $k \ge 1$.

Applying the inversion formula for stationary point processes (see \cite[Corollary~6.16]{serfozo1999}, taking $X_t = 1$), we have:
\[
\lambda \, \bEp[T_1] = 1,
\]
where $\lambda$ denotes the mean intensity of $(N(t))_{t \ge 0}$ under $\bP$.
Therefore, taking the expectation under $\bPp$ in Equation~\eqref{tau_def} yields:
\[
\bEp[\tau_{\epsilon}] = \sum_{k=1}^{K_{\epsilon}(p,m)} \bEp[T_k - T_{k-1}]
= \frac{K_{\epsilon}(p,m)}{\lambda(p,m)}.
\]
This proves Equation~\eqref{eq:tau}. 

\subsection{Proof of Equation~\eqref{eq:throughput}}
In the network model of \Cref{sec:model_description}, the intensity $\lambda(p,m)$ is defined as the expected sum of the instantaneous service completion rates of all uplink servers $({\mathrm{u}}_i)_{i=1}^n$. Based on the generator given in Equation~\eqref{eq:generator}, the instantaneous rate for server $u_i$ is given by $\muu_i \xiu_i$. Summing over all $i$ and taking the expectation under the stationary measure $\bP$ yields:
\[
\lambda(p,m) = \sum_{i=1}^n \muu_i\, \bE[\xiu_i] 
= \frac{Z_{n,m-1}}{Z_{n,m}}.
\]
The second equality is established in Section~\ref{proof_prop:jackson} and follows directly from Equation~\eqref{nume}. This proves Equation~\eqref{eq:throughput}.

\subsection{Proof of Equation~\eqref{eq:grad_throughput}}
Using the first part of Equation~\eqref{eq:grad-log}, we have:
\[
\frac{\partial Z_{n, m-1}}{\partial p_j}
= \frac{Z_{n, m-1}}{p_j} \mathbb{E}[\Xd_j + \Xc_j + \Xu_j].
\]
Following the same logic as in Section~\ref{proof_eq:gradD} (specifically applying the derivation for the stationary distribution $\pi_{n,m}$), we analogously obtain:
\[
\frac{\partial Z_{n, m}}{\partial p_j}
= \frac{Z_{n, m}}{p_j} \mathbb{E}[\xid_j + \xic_j + \xiu_j].
\]
Differentiating Equation~\eqref{eq:throughput} for the throughput $\lambda(p,m) = Z_{n,m-1}/Z_{n,m}$ yields:
\begin{align*}
	\frac{\partial}{\partial p_j} \lambda(p,m)
	&= \frac{\partial}{\partial p_j} \left( \frac{Z_{n, m-1}}{Z_{n, m}} \right) \\
	&= \frac{Z_{n, m-1}}{Z_{n, m}} \left( \frac{1}{Z_{n, m-1}}\frac{\partial Z_{n, m-1}}{\partial p_j} - \frac{1}{Z_{n, m}}\frac{\partial Z_{n, m}}{\partial p_j} \right) \\
	&= \frac{\lambda(p,m)}{p_j} \left( \mathbb{E}[\Xd_j + \Xc_j + \Xu_j] - \mathbb{E}[\xid_j + \xic_j + \xiu_j] \right),
\end{align*}
which completes the proof.

\section{Optimizing Time Complexity: Optimization Results} \label{num:opt_time_strat}

As established in Section~\ref{num:opt_results}, relying on Theorem~\ref{theo:little_law} and Proposition~\ref{prop:time-eps}, we derive a closed-form expression for the expected time to $\epsilon$-accuracy, $\bEp[\tau_{\epsilon}]$, as a function of the routing vector $p$ and the concurrency level $m$. Furthermore, we can compute its exact gradient with respect to the routing probabilities. To find the optimal parameter pair $(p^{\ast \tau}, m^{\ast \tau})$, we minimize $\bEp[\tau_{\epsilon}]$ using a sequential optimization approach to account for the discrete nature of $m$.

Starting from $m=2$, we determine the optimal routing vector $p^\ast$ for each fixed $m$ using the Adam optimizer \cite{kingma2014adam}. To accelerate convergence, we employ a \textit{warm-start strategy}: the optimization for level $m+1$ is initialized using the optimal vector $p^\ast$ found at level $m$. The sequential search terminates when the objective function $\bEp[\tau_{\epsilon}]$ ceases to decrease, indicating that the optimal concurrency level $m^{\ast \tau}$ has been identified.

\begin{figure*}[htbp]
	\centering
	\includegraphics[width=0.6\textwidth]{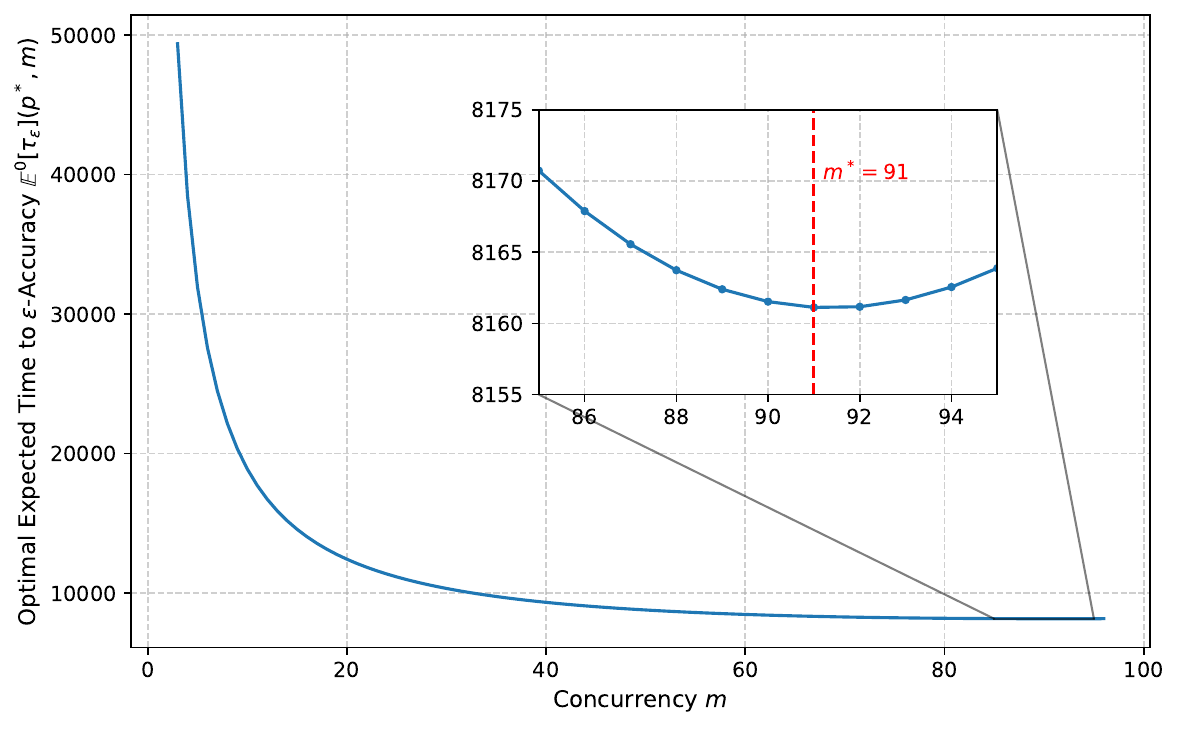}
	\caption{Optimized expected time to $\epsilon$-accuracy, $\bEp[\tau_{\epsilon}](p^\ast,m)$, as a function of the concurrency level $m$. The curve illustrates the fundamental trade-off between parallel speedup and gradient staleness, with the inset highlighting the minimum achieved at the optimal concurrency $m^{\ast} = 91$.}
	\label{fig:tau_star_m}
\end{figure*}

\Cref{fig:tau_star_m} illustrates the evolution of the optimized expected time, $\bEp[\tau_{\epsilon}](p^\ast,m)$, as a function of $m$ during the sequential optimization process. The macro-level curve exhibits a sharp initial decline, demonstrating that introducing moderate concurrency drastically reduces the time required to reach an $\epsilon$-stationary point due to the benefits of parallel processing. However, as $m$ continues to increase, the system experiences diminishing marginal returns and the curve appears to flatten. The zoomed-in inset reveals the precise behavior in this asymptotic region: the expected time reaches a strict absolute minimum at $m^{\ast \tau} = 91$. Beyond this optimal concurrency level, the curve begins to increase, confirming that the computational benefits of further parallelism are finally outweighed by staleness effects. The search halts exactly at this minimum, yielding the optimal configuration $(m^{\ast \tau}, p^{\ast \tau})$ for maximum system efficiency.

\section{Optimizing Time Complexity: Additional Experiments} \label{num:time_c100}

To further validate the insights derived in Section~\ref{num:optimize-time}, we extend our experimental evaluation to the more challenging CIFAR-100 dataset.  
We retain the experimental setup described in Section~\ref{num:exp_scenario}, using the same network dynamics and client profiles from \Cref{tab:client_profiles}.  
As in Section~\ref{num:optimize-time}, we consider both IID and non-IID data distributions (modeled using a Dirichlet distribution with $\alpha=0.2$), and evaluate performance under three service-time distributions: exponential, log-normal, and deterministic.

We compare three strategies: (i) the \textbf{time-optimized} configuration $(p^{\ast \tau},\,m^{\ast \tau})$, (ii) the \textbf{round-optimized} configuration $(p^{\ast K},\,m=n)$, and (iii) the baseline \texttt{AsyncSGD} with uniform routing $(p^{\text{uni}},\,m=n)$, consistent with the setup in \Cref{tab:opt_results}.

\Cref{fig:time_results_c100} shows the evolution of test loss and accuracy as a function of wall-clock time for all configurations.  
In addition, \Cref{tab:time_target_acc_c100} reports the percentage reduction in wall-clock time required to reach a target accuracy of $0.5$ relative to the baselines.

Overall, the results closely mirror those observed in Section~\ref{num:optimize-time}.  
The time-optimized strategy consistently achieves the best wall-clock performance, confirming its robustness and effectiveness when minimizing training time is the primary objective.

\begin{figure*}[htbp]
	\centering
	\includegraphics[width=\textwidth]{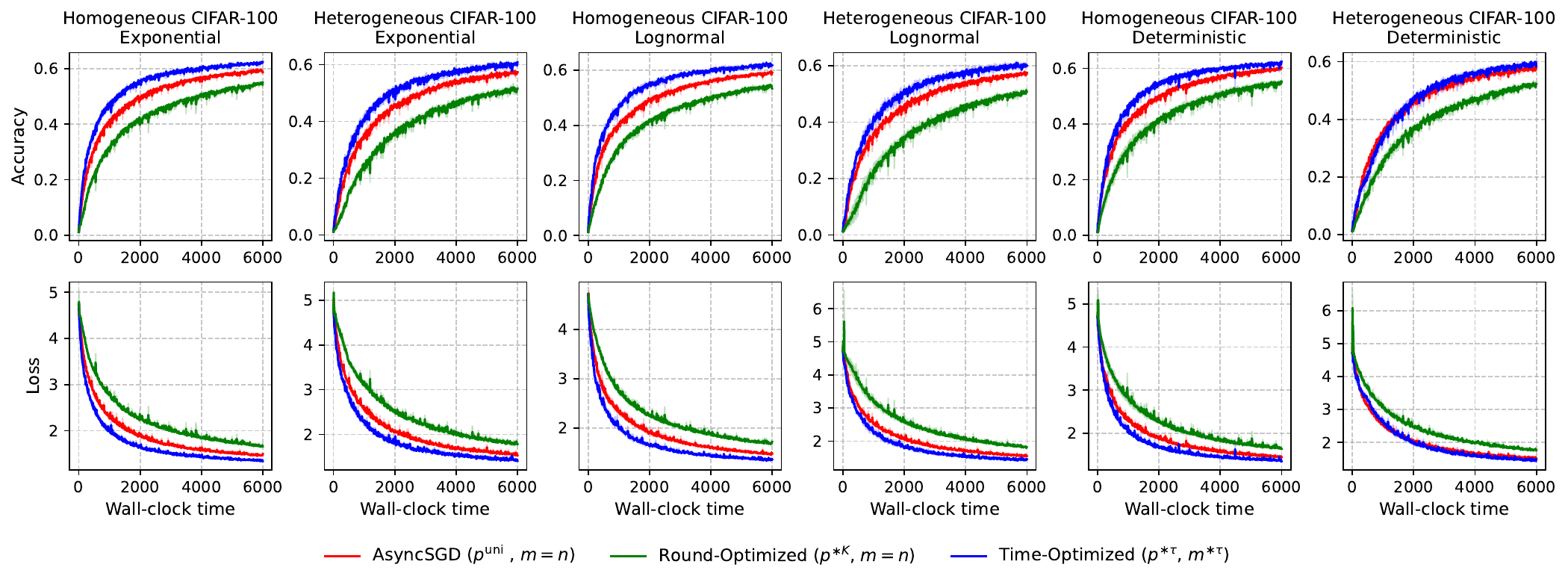}
	\caption{Test set performance for the CIFAR-100 scenario described in Section~\ref{num:exp_scenario}. The top row displays test accuracy and the bottom row shows loss, both plotted against wall-clock time. Columns correspond to different service time distributions under homogeneous (IID) and heterogeneous (Non-IID) data settings. Each subplot compares the three strategies. Simulations were repeated 3 times for 6000 wall-clock time units. Solid lines indicate means; shaded areas represent standard deviations.}
	\label{fig:time_results_c100}
\end{figure*}

\begin{table}[htbp]
	\centering
	\caption{Percentage time reduction of the Time-optimized strategy relative to baselines for a target test accuracy of $0.5$ on CIFAR-100.}
	\label{tab:time_target_acc_c100}
	\begin{tabular}{llcc}
		\toprule
		\multicolumn{2}{c}{\textbf{Scenario}} & \multicolumn{2}{c}{\textbf{Time Reduction (\%) vs.}} \\
		\cmidrule(lr){1-2} \cmidrule(lr){3-4}
		\textbf{Dist.} & \textbf{Data}  & \textbf{Round-Optimized} & \textbf{AsyncSGD} \\
		\midrule
		\multirow{2}{*}{Exp.} 
		& IID      & 67.95 & 40.04 \\
		& Non-IID  & 60.52 & 31.26 \\
		\midrule
		\multirow{2}{*}{LogN.}   
		& IID      & 67.31 & 40.01 \\
		& Non-IID  & 65.46 & 28.86 \\
		\midrule
		\multirow{2}{*}{Det.} 
		& IID      & 66.23 & 38.05 \\
		& Non-IID  & 51.79 & 8.83 \\
		\bottomrule
	\end{tabular}
\end{table}

\section{Proof of Proposition~\ref{prop:energy-eps}} \label{proof_prop:energy-eps}

Recall that $P(t)$ denotes the instantaneous power consumption of the system at wall-clock time~$t$.  
Consistent with the energy model introduced in \Cref{sec:nrg_model}, the instantaneous power is defined as follows:
\begin{align} \label{eq:inst_power}
	P(t)
	= \sum_{i=1}^n \left(
	\Pc_i \,\mathbf{1}\{\xic_i(t) > 0\}
	+ \Pu_i \,\xiu_i(t)
	+ \Pd_i \,\xid_i(t)
	\right).
\end{align}

Let $\tau_{\epsilon}$ and $K_{\epsilon}$ denote, respectively, the wall-clock time and the number of global rounds required to reach $\epsilon$-accuracy.  
Note that $K_{\epsilon}$ is a deterministic integer determined by the routing vector~$p$, the concurrency level~$m$, and the service rates~$\mu$, whereas $\tau_{\epsilon}$ is a random variable.  
The total energy consumption up to $\epsilon$-accuracy is given by
\[
E_{\epsilon}
= \int_0^{\tau_{\epsilon}} P(t)\,dt
= \sum_{k=1}^{K_{\epsilon}(p,m)} \int_{T_{k-1}}^{T_k} P(t)\,dt,
\]
where $(T_k)_{k \in \mathbb{N}}$ denotes the sequence of model-parameter update times (i.e., service completion times at the uplink servers $(\mathrm{u}_i)_{i=1}^n$), with $T_0=0$.

Since the system state process $(\xi(t))_{t \ge 0}$ is stationary and ergodic, the sequence of energy increments over the renewal intervals $[T_{k-1}, T_k)$ forms a stationary sequence.  
Consequently, under the Palm probability measure associated with $\{T_k\}$, the expected energy consumed per round is constant:
\[
\mathbb{E}^0\!\left[\int_{T_{k-1}}^{T_k} P(t)\,dt\right]
= \mathbb{E}^0\!\left[\int_0^{T_1} P(t)\,dt\right].
\]

Applying the inversion formula for stationary point processes (see~\cite[Corollary~6.16]{serfozo1999}, taking $X_t=P(t)$) yields
\[
\mathbb{E}^0\!\left[\int_0^{T_1} P(t)\,dt\right]
= \frac{\mathbb{E}[P(0)]}{\lambda(p,m)},
\]
where $\lambda$ denotes the system throughput (update frequency) as in \Cref{eq:throughput}.  
Substituting this expression into the summation for $E_{\epsilon}$ gives
\begin{align} \label{eq:energy_exp}
	\mathbb{E}^0[E_{\epsilon}]
	= \sum_{k=1}^{K_{\epsilon}(p,m)} \frac{\mathbb{E}[P(0)]}{\lambda}
	= \frac{K_{\epsilon}(p,m)}{\lambda(p,m)}\,\mathbb{E}[P(0)].
\end{align}

It remains to compute the steady-state expected power $\mathbb{E}[P(0)]$.  
Taking expectations in~\eqref{eq:inst_power} with respect to the stationary distribution $\pi_{n,m}$ and using linearity yields
\[
\mathbb{E}[P(0)]
= \sum_{i=1}^n \left(
\Pc_i \,\mathbb{P}(\xic_i > 0)
+ \Pd_i \,\mathbb{E}[\xid_i]
+ \Pu_i \,\mathbb{E}[\xiu_i]
\right).
\]

We now evaluate these terms using results established earlier.  
For the computation queues, applying \eqref{eq:prob_Xc} with $k=1$ to a system of population~$m$ (recall that \eqref{eq:prob_Xc} holds for $X\sim\pi_{n,m-1}$, whereas here $\xi\sim\pi_{n,m}$) gives
\[
\mathbb{P}(\xic_i > 0)
= \frac{Z_{n,m-1}}{Z_{n,m}}\,\frac{p_i}{\muc_i}.
\]
Similarly, using the stationary expressions derived in~\eqref{eq:EXd} and its uplink counterpart yields
\[
\mathbb{E}[\xid_i]
= \frac{Z_{n,m-1}}{Z_{n,m}}\,\frac{p_i}{\mud_i},
\qquad
\mathbb{E}[\xiu_i]
= \frac{Z_{n,m-1}}{Z_{n,m}}\,\frac{p_i}{\muu_i}.
\]

Substituting these expressions into $\mathbb{E}[P(0)]$ and recalling from~\eqref{eq:throughput} that $\lambda(p,m) = \frac{Z_{n,m-1}}{Z_{n,m}}$, the normalization constants cancel with the factor $1/\lambda(p,m)$ in~\eqref{eq:energy_exp}.  
We thus obtain
\[
\mathbb{E}^0[E_{\epsilon}]
= K_{\epsilon}(p,m)
\sum_{i=1}^n p_i
\left(
\frac{\Pc_i}{\muc_i}
+ \frac{\Pd_i}{\mud_i}
+ \frac{\Pu_i}{\muu_i}
\right),
\]
which completes the proof.

\section{Proof of Equation~\ref{opt_E_routing}} \label{proof_opt_E_routing}

We seek the optimal probability distribution $p^{\ast E}$ that minimizes the objective function:
\begin{equation*}
	\left( \sum_{j=1}^n \frac{1}{p_j} \right) \left( \sum_{i=1}^n p_i \mathcal{E}_i \right).
\end{equation*}

Recall the Cauchy-Schwarz inequality, which states that for any real sequences $(a_j)$ and $(b_j)$:
\begin{equation*}
	\left( \sum_{j=1}^n a_j^2 \right) \left( \sum_{j=1}^n b_j^2 \right) \ge \left( \sum_{j=1}^n a_j b_j \right)^2.
\end{equation*}
By defining the sequences $a_j = \frac{1}{\sqrt{p_j}}$ and $b_j = \sqrt{p_j \mathcal{E}_j}$, we can lower-bound the objective function as follows:
\begin{equation*}
	\left( \sum_{j=1}^n \frac{1}{p_j} \right) \left( \sum_{j=1}^n p_j \mathcal{E}_j \right) \ge \left( \sum_{j=1}^n \frac{1}{\sqrt{p_j}} \sqrt{p_j \mathcal{E}_j} \right)^2 = \left( \sum_{j=1}^n \sqrt{\mathcal{E}_j} \right)^2.
\end{equation*}
Thus, the objective function is lower-bounded by the constant $\left( \sum_{j=1}^n \sqrt{\mathcal{E}_j} \right)^2$.

The minimum is achieved when the Cauchy-Schwarz equality holds, which occurs if and only if the sequences $(a_j)$ and $(b_j)$ are linearly proportional ($a_j \propto b_j$). Substituting our definitions, we require:
\begin{equation*}
	\frac{1}{\sqrt{p_j}} \propto \sqrt{p_j \mathcal{E}_j}.
\end{equation*}
Isolating $p_j$ yields the optimal routing probabilities up to a proportionality constant:
\begin{equation*}
	p^{\ast E}_i \propto \frac{1}{\sqrt{\mathcal{E}_i}}.
\end{equation*}
Since $p^{\ast E}$ must be a valid probability distribution ($\sum_{i=1}^n p^{\ast E}_i = 1$), we normalize to obtain the exact closed-form solution:
\begin{equation*}
	p^{\ast E}_i = \frac{\mathcal{E}_i^{-1/2}}{\sum_{j=1}^n \mathcal{E}_j^{-1/2}}, \quad i \in \{1,\ldots,n\},
\end{equation*}
which concludes the proof.

\section{Optimizing Energy Complexity: Additional Experiments} \label{num:optimize-energy}

Building upon the results presented in Section~\ref{num:energy_learning}, we further illustrate the time-energy trade-off in \texttt{Generalized AsyncSGD} by providing detailed learning trajectories for the EMNIST dataset. We evaluate both homogeneous and heterogeneous (Dirichlet with $\alpha=0.2$) data partitions across varying service time distributions (exponential, lognormal, and deterministic). Client speeds and power profiles are configured according to \Cref{tab:client_profiles} and \Cref{tab:power_coefficients}, respectively. Retaining the overarching experimental setup from Section~\ref{num:optimize-time}, we evaluate the convergence dynamics across the following five distinct strategies:
\begin{itemize}
	\item \textbf{Max-Throughput:} Maximizes the system update frequency ($p^{\ast \lambda}$) under full concurrency ($m=n$), without accounting for the impact of gradient staleness on convergence quality.
	
	\item \textbf{Time-Optimized:} Combines gradient staleness with update frequency using the optimal parameters $(p^{\ast \tau}, m^{\ast \tau})$. As characterized in Proposition~\ref{prop:time-eps}, this strategy strictly minimizes the expected wall-clock time $\bEp[\tau_{\epsilon}]$ required to achieve $\epsilon$-accuracy, regardless of the resulting energy consumption.
	
	\item \textbf{Energy-Optimized:} Employs the optimal parameters $(p^{\ast E}, m=1)$ to strictly minimize the expected energy $\bEp[E_{\epsilon}]$ required to achieve $\epsilon$-accuracy (Proposition~\ref{prop:energy-eps}), with no consideration for convergence speed.
	
	\item \textbf{Time-Energy Co-Optimized:} Utilizes the optimal configuration $(p^{\ast \rho}, m^{\ast \rho})$ to minimize the joint objective defined in \eqref{eq:joint_opt}. We set $\rho=0.1$, which, as demonstrated by the Pareto frontier in \Cref{fig:pareto_opt}, strategically balances the trade-off by securing substantial energy savings with minimal degradation to convergence speed.
	
	\item \textbf{Standard Baseline (\texttt{AsyncSGD})~\cite[Algorithm 2]{koloskova2022sharper}:} Represents the conventional approach, employing full concurrency ($m=n$) and uniform routing probabilities ($p^{\text{uni}}$).
\end{itemize}

\Cref{fig:time_nrg_results} illustrates the evolution of test accuracy with respect to both cumulative energy consumption (top row) and wall-clock time (bottom row). We analyze the convergence dynamics of each routing strategy, highlighting how their distinct mathematical objectives govern the fundamental trade-off between learning speed and energy efficiency.

While the \textbf{Max-Throughput} strategy achieves the highest theoretical update frequency, it performs poorly across both metrics. By disproportionately favoring the fastest, most energy-intensive client (Type~E), it incurs massive energy costs. Furthermore, ignoring gradient staleness severely hinders convergence. This is particularly detrimental in heterogeneous (Non-IID) settings, where delayed updates from slower clients induce severe client drift, pushing the optimization in conflicting directions. Consequently, Max-Throughput is highly unstable, exhibiting severe performance spikes and high variance (orange regions in \Cref{fig:time_nrg_results}).

The \textbf{Time-Optimized} strategy mitigates this instability by jointly optimizing update frequency and gradient staleness. As shown in the bottom row of \Cref{fig:time_nrg_results}, it consistently achieves the fastest convergence across all evaluated scenarios. However, because its objective function is entirely agnostic to power consumption, it incurs a substantial energy penalty, trailing the energy-aware strategies significantly.

Conversely, the \textbf{Energy-Optimized} strategy strictly minimizes total energy. It vastly outperforms all others on the energy scale (top row), achieving high accuracy with minimal power. However, to avoid the energy overhead of concurrent communication, it enforces strict sequential processing ($m=1$). This imposes a debilitating temporal penalty, rendering its wall-clock convergence prohibitively slow (appearing nearly flat in the bottom row).

The standard \textbf{\texttt{AsyncSGD}} baseline represents the conventional, unoptimized approach. Relying on uniform routing probabilities and fixed full concurrency without adaptive tuning, it yields mediocre performance in both domains. It is consistently slower than the Time-Optimized strategy while simultaneously consuming significantly more energy than the energy-aware approaches.

Finally, the \textbf{Time-Energy Co-Optimized} strategy effectively bridges these extremes. By navigating the Pareto frontier ($\rho=0.1$), it secures a highly efficient operating point: its energy efficiency is second only to the strictly Energy-Optimized approach, and its temporal convergence is second only to the strictly Time-Optimized approach. Crucially, this joint strategy strictly dominates the \texttt{AsyncSGD} baseline, achieving higher accuracy faster \textit{and} with significantly less energy across all data and service time distributions.

\begin{figure*}[htbp] 
	\centering
	\includegraphics[width=\textwidth]{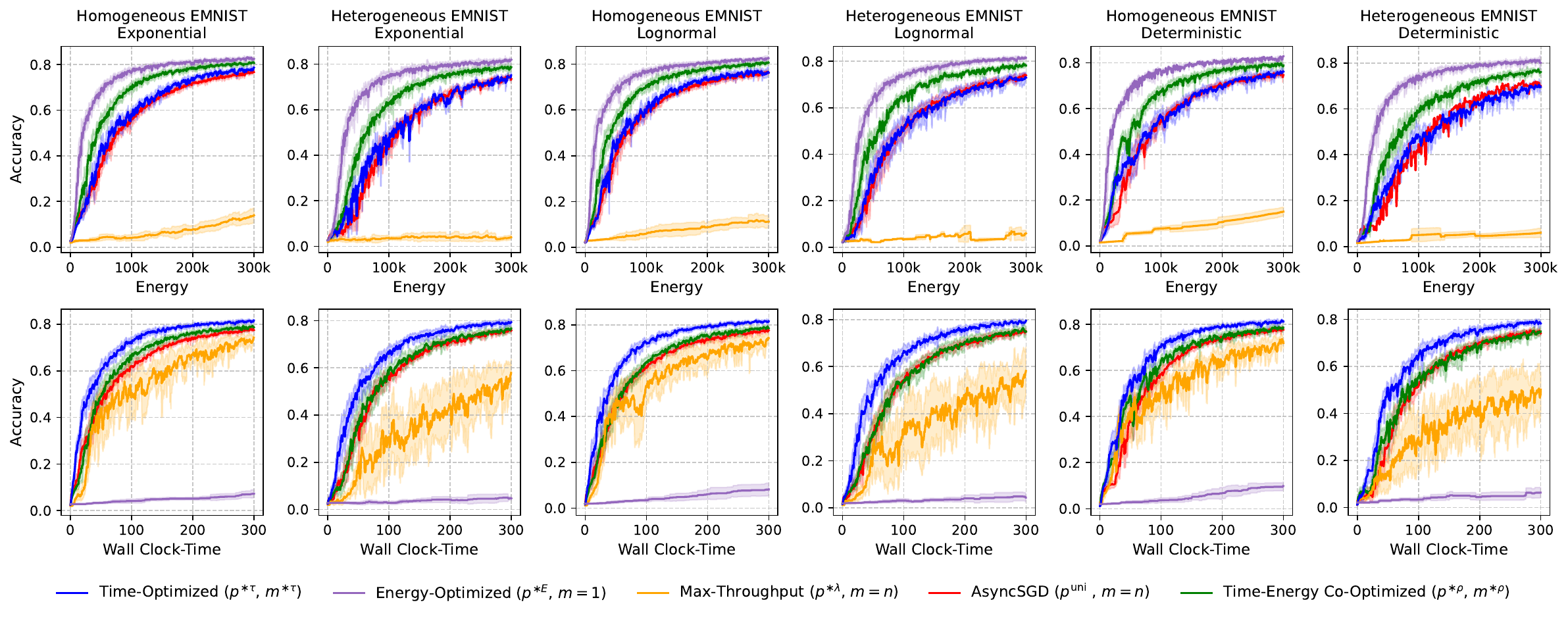}
	\caption{Learning trajectories on the EMNIST dataset under the experimental scenario detailed in Section~\ref{num:exp_scenario} and the power profile from Table~\ref{tab:power_coefficients}. The top and bottom rows display test accuracy as a function of total energy consumption and wall-clock time, respectively. Columns represent different service time distributions across homogeneous (IID) and heterogeneous (Non-IID) data partitions. Each panel compares the convergence dynamics of the five routing strategies. Results are averaged over 10 independent runs spanning 300,000 energy units and 300 time units, with solid lines indicating the mean and shaded regions representing the standard deviation.}
	\label{fig:time_nrg_results}
\end{figure*}

\section{Proof of Proposition~\ref{prop:jackson2}} \label{proof_prop:jackson2}

Although $(Y(t))_{t \ge 0}$ and $(\tilde{Y}_k)_{k \in \bN}$ track task history via class labels, they do not constitute strict Markov chains. By recording only the aggregate number of tasks per class, these state vectors discard the specific arrival order required to characterize the \gls{FIFO} dynamics at the \gls{CS}.

To establish the stationary distributions and facilitate the application of Little’s Law, we must first extend the state description to recover the Markov property.
This proof proceeds in three steps. First, we introduce a detailed Markovian state space that explicitly incorporates the order of tasks in the \gls{CS} queue and derive its stationary distribution. Second, we marginalize this detailed distribution to prove that the continuous-time process $(Y(t))_{t \ge 0}$ follows $\phi_{n,m}$. Finally, we demonstrate that the discrete-time embedded chain $(\tilde{Y}_k)_{k \in \bN}$ admits $\phi_{n,m-1}$ as its stationary distribution.

\subsection{Markovian Description} \label{sec:markovian_description}

To recover a fully Markovian description of the system dynamics, it is necessary to explicitly track the ordering of tasks at the \gls{CS}.  
Indeed, aggregated state representations discard the \gls{FIFO} order at the \gls{CS}, which is essential for correctly characterizing future transitions.  
We therefore introduce an augmented state process that records both task counts and ordering information.
Specifically, we define the right-continuous (càdlàg) stochastic process
\[
Z(t) = \bigl(\Zcs(t), \,\Zd_1(t),\, \Zc_1(t),\,\Zu_1(t), \ldots, \Zd_n(t), \, \Zc_n(t), \,\Zu_n(t)\bigr).
\]
For each $t \ge 0$ and $i \in \{1,\ldots,n\}$, the components $\Zd_i(t)$, $\Zc_i(t)$, and $\Zu_i(t)$ denote the number of tasks at servers $\mathrm{d}_i$, $\mathrm{c}_i$, and $\mathrm{u}_i$, respectively.
The component $\Zcs(t)$ records the ordered sequence of task classes in the \gls{CS} queue, maintained under a \gls{FIFO} discipline. Formally,
\[
\Zcs(t) = \bigl(\Zcs_1(t), \Zcs_2(t), \ldots, \Zcs_{\ell(\Zcs(t))}(t)\bigr),
\]
where $\Zcs_j(t) \in \{1,\ldots,n\}$ denotes the class of the task at position~$j$ (with $j=1$ corresponding to the task in service), and $\ell(\Zcs(t))$ is the total number of tasks currently at the \gls{CS} (i.e., the length of the sequence).

The process $Z(t)$ evolves on a state space that enforces conservation of the total number of tasks.  
Specifically, it takes values in
\begin{multline} \label{Space_Z}
	\mathcal{S}_m = \Bigg\{
	z = \bigl(\zCS, \,\zd_1,\, \zc_1,\,\zu_1, \ldots, \zd_n, \, \zc_n, \,\zu_n\bigr) \in \mathcal{Z}_m \times \mathcal{X}_{3n,\leq m}
	\;:\; \\[-.15cm]
	\ell(\zCS) + \sum_{i=1}^n (\zd_i + \zc_i + \zu_i) = m
	\Bigg\}.    
\end{multline}
Here, $\mathcal{Z}_m$ denotes the set of finite class sequences of length at most~$m$,
\[
\mathcal{Z}_m = \{ (z_1,\ldots,z_\ell) : 0 \leq \ell \leq m, \, z_j \in \{1,\ldots,n\} \text{ for all } j\},
\]
and $\mathcal{X}_{3n,\leq m}$ denotes the set of $3n$-dimensional vectors with non-negative integer entries summing to at most $m$:
\[
\mathcal{X}_{3n,\leq m} = \{x \in \mathbb{N}^{3n} : |x| \leq m\}.
\]

With this augmented state space, the system admits a Markovian description with tractable stationary behavior, as formalized in the following lemma.

\begin{lemma} \label{lemma:class_seq}
	In the model of \Cref{sec:model_description2}, the process $(Z(t))_{t \ge 0}$ is an irreducible, positive recurrent continuous-time Markov chain. Its stationary distribution is given by
	\begin{align} \label{eq:zeta}
		\zeta_{n,m}(z)
		&= \frac{1}{W_{n,m}}
		\prod_{i=1}^n
		\left(\frac{p_i}{\mu^{\mathrm{CS}}}\right)^{|\zCS|_i}
		\left(\frac{p_i}{\muc_i}\right)^{\zc_i}
		\frac{1}{\zd_i!}\left(\frac{p_i}{\mud_i}\right)^{\zd_i}
		\frac{1}{\zu_i!}\left(\frac{p_i}{\muu_i}\right)^{\zu_i},
	\end{align}
	for all $z \in \mathcal{S}_m$, where $W_{n,m}$ is a normalizing constant.
	Here, $|\zCS|_i$ denotes the count of class-$i$ tasks within the sequence $\zCS$, defined as
	$|\zCS|_i = \sum_{j=1}^{\ell(\zCS)} \mathbf{1}\{\zCS_j = i\}$.
\end{lemma}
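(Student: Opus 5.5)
The plan is to follow the same route as the proof of \Cref{prop:jackson}: first establish irreducibility and positive recurrence of $(Z(t))_{t\ge0}$ on the finite set $\mathcal{S}_m$, then verify that $\zeta_{n,m}$ solves the balance equations. Here the most convenient tool is \emph{partial} balance, in the spirit of Kelly's multi-class (order-independent) networks, rather than brute-force global balance.

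For irreducibility, I will show that from any $z\in\mathcal{S}_m$ one can reach the state with all $m$ tasks in, say, $\mathrm{d}_1$, and conversely. Every transition has positive rate, and all $p_i>0$, so a single task can be routed from any node to any other. The \gls{CS} order is a sequence; since service is FIFO, I reach any target sequence by emptying the \gls{CS} and refilling it in the desired order, feeding tasks through the uplinks one at a time. Finiteness of $\mathcal{S}_m$ then gives positive recurrence and uniqueness of the stationary distribution.

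Next I will write the generator of $Z$. The downlink, computation and uplink moves are as in~\eqref{eq:generator}, except that completion at $\mathrm{u}_i$ now appends class $i$ to the end of $\zCS$ with rate $\muu_i\zu_i$. The new transition is \gls{CS} service: with rate $\muCS$ the head $\zCS_1=i$ is removed and a task is sent to $\mathrm{d}_j$ with probability $p_j$. The key point is that, despite the class labels, the routing is still client-independent: the class only records where the task came from, and the \gls{CS} output goes to $\mathrm{d}_j$ with probability $p_j$. Because of this I will verify \emph{station-wise partial balance}: for each state $z$ and each node, the probability flux out of $z$ due to departures from that node equals the flux into $z$ due to arrivals at that node.

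\begin{itemize}
\item At $\mathrm{d}_i$, $\mathrm{c}_i$ and $\mathrm{u}_i$ the checks are the standard Jackson ones. The only new feature is at $\mathrm{d}_i$: inflow to $\mathrm{d}_i$ in state $z$ comes from states $z'$ with one extra class-$k$ task at the head of the \gls{CS}, for every $k$. Summing over $k$ gives
\[
\sum_k \zeta(z')\,\muCS p_i=\zeta(z)\,\frac{1}{\muCS}\Bigl(\sum_k p_k\Bigr)\muCS p_i\cdot\frac{\zd_i!}{(\zd_i-1)!}\frac{\mud_i}{p_i},
\]
which matches the outflow $\mud_i\zd_i\zeta(z)$ once I use $\sum_k p_k=1$.
\item At the \gls{CS}, departures in state $z$ have rate $\muCS$ if $\ell(\zCS)\ge1$. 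Arrivals come only from $z'$ with the last element of $\zCS$ removed, say of class $i$, and one extra task at $\mathrm{u}_i$. The flux $\zeta(z')\muu_i(\zu_i+1)$ equals $\zeta(z)\muCS$ by the ratio $(p_i/\muu_i)/(p_i/\muCS)$.
\end{itemize}

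The main obstacle is exactly this \gls{CS} bookkeeping: departures remove from the head while arrivals append to the tail. I therefore need to pair the outflow-by-\gls{CS} of $z$ with the inflow-to-\gls{CS} of $z$, which is the \emph{tail}-removal predecessor, and not with the head. I also have to make sure the sum over the class of the departing head task is handled at the $\mathrm{d}$ nodes, as in the first item above, rather than at the \gls{CS}. Because the weight $\prod_i (p_i/\muCS)^{|\zCS|_i}$ depends only on class counts, these pairings close. Summing the station-wise identities over all nodes yields global balance, so $\zeta_{n,m}$ is the stationary distribution. The empty-\gls{CS} case, and the states with $\zd_i=0$, need a brief separate remark: there both sides vanish.
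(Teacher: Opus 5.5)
Your proposal is correct and is essentially the paper's proof. The paper applies the reversal test (Br\'emaud, Thm.~13.4.13): it sets $\tilde r(z,y)=\zeta_{n,m}(y)\,r(y,z)/\zeta_{n,m}(z)$ and checks $\sum_y \tilde r(z,y)=r(z)$, which is just global balance; its four reversed-rate computations are exactly your station-wise flux identities. For instance, the reversed CS-departure rates $p_i\,\mud_j\zd_j$ summed over the head class $i$ give your $\mathrm{d}_j$ check, and the reversed CS-arrival rate $\muCS\,\mathbf{1}\{\zCS_{\ell(\zCS)}=i\}$ is your tail-removal pairing, so grouping by station only makes the partial-balance structure explicit while the algebra is the same.
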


\begin{proof}
	We first establish irreducibility and positive recurrence.  
	Since all routing probabilities $p_i$ are strictly positive and the network topology is strongly connected, any state in $\mathcal{S}_m$ can be reached from any other state.  
	Because $\mathcal{S}_m$ is finite, the chain is irreducible and positive recurrent.
	
	Let $r$ denote the transition rates of $(Z(t))_{t \ge 0}$.
	For any state $z$ and class index $i \in \{1,\ldots,n\}$, let $\ell = \ell(\zCS)$ denote the current number of tasks in the \gls{CS} queue. We define the following state transformations:
	\begin{itemize}
		\item $z^{+i}$: append a task of class $i$ to the end of the sequence $\zCS$ (i.e., new sequence $(\zCS_1, \dots, \zCS_\ell, i)$). Valid if $\ell < m$.
		\item $z_{+i}$: prepend a task of class $i$ to the beginning of $\zCS$ (i.e., new sequence $(i, \zCS_1, \dots, \zCS_\ell)$). Valid if $\ell < m$.
		\item $z^{-i}$: remove the last task from $\zCS$. Valid if $\ell > 0$ and $\zCS_\ell = i$.
		\item $z_{-i}$: remove the first task from $\zCS$. Valid if $\ell > 0$ and $\zCS_1 = i$.
	\end{itemize}
	
	The non-zero transition rates of $(Z(t))_{t \ge 0}$ are given by:
	\begin{align} \label{rates_z}
		\begin{cases}
			r(z,\,z - \ed_i + \ec_i) = \mud_i \, \zd_i, \\[0.5ex]
			r(z,\,z - \ec_i + \eu_i) = \muc_i \,\mathbf{1}\{\zc_i > 0\}, \\[0.5ex]
			r(z,\,z^{+i} - \eu_i) = \muu_i \, \zu_i, \\[0.5ex]
			r(z,\,z_{-i} + \ed_j) = \mu^{\mathrm{CS}} \,\mathbf{1}\{\zCS_1 = i\} \, p_j,
		\end{cases}
		\qquad i,j \in \{1,\ldots,n\}.
	\end{align}
	The total exit rate from state $z$ is:
	\begin{align*}
		r(z) &= \sum_{y \in \mathcal{S}_m } r(z,y)
		= \sum_{i=1}^n \mud_i \zd_i + \muc_i \mathbf{1}\{\zc_i > 0\} + \muu_i \zu_i + \muCS \mathbf{1}\{\zCS_1 = i\} \underbrace{\sum_{j=1}^n p_j}_{=1} \\
		&= \mu^{\mathrm{CS}} \mathbf{1}\{\ell(\zCS) \neq 0\} + \sum_{i=1}^n \left( \mud_i \zd_i + \muc_i \mathbf{1}\{\zc_i > 0\} + \muu_i \zu_i \right).
	\end{align*}
	
	To prove \Cref{lemma:class_seq}, we invoke the \emph{Reversal Test}~\cite[Theorem~13.4.13]{Bremaud_mc_2020}. We construct the reversed transition rates $\tilde{r}$ with respect to the proposed measure $\zeta_{n,m}$ and verify that the total rate equation $r(z) = \sum_y \tilde{r}(z,y)$ holds. If satisfied, $\zeta_{n,m}$ is the invariant measure.
	
	We derive the reversed rates $\tilde{r}(z,y) = \frac{\zeta_{n,m}(y)}{\zeta_{n,m}(z)} r(y,z)$ for each transition type:
	
	1. \textbf{Computation $\to$ Uplink:}
	\begin{align*}
		\tilde{r}(z,\,z - \eu_i + \ec_i)
		&= \frac{\zeta_{n,m}(z - \eu_i + \ec_i)}{\zeta_{n,m}(z)} \, r(z - \eu_i + \ec_i,\, z) \\
		&= \frac{p_i/\muc_i}{(1/\zu_i)(p_i/\muu_i)} \, \muc_i \mathbf{1}\{\zu_i \ge 1\} \\
		&= \muu_i \zu_i.
	\end{align*}
	
	2. \textbf{Downlink $\to$ Computation:}
	\begin{align*}
		\tilde{r}(z,\,z - \ec_i + \ed_i)
		&= \frac{\zeta_{n,m}(z - \ec_i + \ed_i)}{\zeta_{n,m}(z)} \, r(z - \ec_i + \ed_i,\, z) \\
		&= \frac{(1/(\zd_i+1))(p_i/\mud_i)}{p_i/\muc_i} \, \mud_i (\zd_i+1) \mathbf{1}\{\zc_i \ge 1\} \\
		&= \muc_i \mathbf{1}\{\zc_i > 0\}.
	\end{align*}
	
	3. \textbf{Arrival to CS:}
	\begin{align*}
		\tilde{r}(z,\,z^{-i} + \eu_i)
		&= \frac{\zeta_{n,m}(z^{-i} + \eu_i)}{\zeta_{n,m}(z)} \, r(z^{-i} + \eu_i,\, z) \\
		&= \frac{(1/(\zu_i+1))(p_i/\muu_i)}{(p_i/\mu^{\mathrm{CS}}} \, \muu_i (\zu_i+1) \mathbf{1}\{\zCS_{|\zCS|} = i\} \\
		&= \mu^{\mathrm{CS}} \mathbf{1}\{\zCS_{\ell(\zCS)} = i\}.
	\end{align*}
	
	4. \textbf{Departure from CS:}
	\begin{align*}
		\tilde{r}(z,\,z_{+i} - \ed_j)
		&= \frac{\zeta_{n,m}(z_{+i} - \ed_j)}{\zeta_{n,m}(z)} \, r(z_{+i} - \ed_j,\, z) \\
		&= \frac{(p_i/\mu^{\mathrm{CS}})}{(1/\zd_j)(p_j/\mud_j)} \, \mu^{\mathrm{CS}} \, p_j \,\mathbf{1}\{\zd_j \ge 1\} \\
		&= p_i \, \mud_j \zd_j.
	\end{align*}
	
	Next, summing over all possible transitions yields the total reversed rate:
	\begin{align*}
		\tilde{r}(z) &= \sum_{y \in \mathcal{S}_m } \tilde{r}(z,y)
		= \sum_{i=1}^n \Big(\muc_i \,\mathbf{1}\{\zc_i > 0\} + \muu_i \zu_i + \muCS \,\mathbf{1}\{\zCS_{\ell(\zCS)} = i\} + p_i \sum_{j=1}^n \mud_j \zd_j\Big) \\[0.5ex]
		&= \muCS \, \sum_{i=1}^n \mathbf{1}\{\zCS_{\ell(\zCS)} = i\} +\sum_{i=1}^n \Big(\muc_i \,\mathbf{1}\{\zc_i > 0\} + \muu_i \zu_i + \mud_i \zd_i \sum_{j=1}^n p_j\Big) \\[0.5ex]
		&= \muCS \,\mathbf{1}\{\ell(\zCS) \neq 0\} \;+\; \sum_{i=1}^n \mud_i \zd_i + \muc_i \,\mathbf{1}\{\zc_i > 0\} + \muu_i \zu_i = r(z).
	\end{align*}
	
	Since $r(z) = \tilde{r}(z)$ for all $z \in \mathcal{S}_m$, the reversal test theorem \cite[Theorem~13.4.13]{Bremaud_mc_2020} implies that $\zeta_{n,m}$ is indeed the stationary distribution of the Markov chain $(Z(t))_{t \ge 0}$.
	
\end{proof}

\subsection[Stationary Distribution of the Continuous-Time Process Y]{Stationary Distribution of the Continuous-Time Process $(Y(t))_{t \ge 0}$}

Using the stationary distribution of the detailed Markov process $(Z(t))_{t \ge 0}$ established in Lemma~\ref{lemma:class_seq}, we derive the stationary distribution of the aggregated process $(Y(t))_{t \ge 0}$ by marginalization.

For any admissible state
\[
x = \bigl(\xCS_j, \,\xd_j, \,\xc_j, \,\xu_j \; ; \; j \in \{1,\ldots,n\}\bigr) \in \mathcal{X}_{4n,m},
\]
the stationary probability $\mathbb{P}(Y=x)$ is obtained by summing $\zeta_{n,m}(z)$ over all detailed states $z \in \mathcal{S}_m$ that project onto $x$.  
Equivalently, this corresponds to summing over all class sequences $\zCS$ containing exactly $\xCS_i$ tasks of class~$i$ for each $i \in \{1,\ldots,n\}$. Thus:

\begin{align*}
	\bP(Y=x)
	&= \sum_{\substack{\zCS \in \mathcal{Z}_m \\ |\zCS|_i = \xCS_i,\, \forall i}}
	\zeta_{n,m}\bigl(\zCS, \,\xd_j, \,\xc_j, \,\xu_j \; ; \; j \in \{1,\ldots,n\}\bigr) \\[1ex]
	&= \frac{1}{W_{n,m}}
	\left[ \prod_{j=1}^n
	\left(\frac{p_j}{\mu^{\mathrm{CS}}}\right)^{\xCS_j}
	\left(\frac{p_j}{\muc_j}\right)^{\xc_j}
	\frac{1}{\xd_j!}\left(\frac{p_j}{\mud_j}\right)^{\xd_j}
	\frac{1}{\xu_j!}\left(\frac{p_j}{\muu_j}\right)^{\xu_j} \right]
	\underbrace{\sum_{\substack{\zCS \in \mathcal{Z}_m \\ |\zCS|_i = \xCS_i,\, \forall i}} 1}_{\text{number of compatible sequences}} \\[1ex]
	&= \frac{1}{W_{n,m}}
	\left[ \prod_{j=1}^n
	\left(\frac{p_j}{\mu^{\mathrm{CS}}}\right)^{\xCS_j}
	\left(\frac{p_j}{\muc_j}\right)^{\xc_j}
	\frac{1}{\xd_j!}\left(\frac{p_j}{\mud_j}\right)^{\xd_j}
	\frac{1}{\xu_j!}\left(\frac{p_j}{\muu_j}\right)^{\xu_j} \right]
	\frac{\bigl(\sum_{i=1}^n \xCS_i\bigr)!}{\prod_{i=1}^n \xCS_i!} \\[1ex]
	&= \phi_{n,m}(x).
\end{align*}

The second equality follows from the definition of $\zeta_{n,m}$ in \eqref{eq:zeta}, observing that the probability mass depends on the sequence $\zCS$ only through the counts of each class, not their specific order. Consequently, the term in brackets factors out of the summation.
The third equality evaluates the underbraced sum, which counts the number of distinct sequences $\zCS$ of length $\sum_i \xCS_i$ containing exactly $\xCS_i$ elements of each class $i$. This count is given precisely by the multinomial coefficient
\[
\binom{\sum_{i=1}^n \xCS_i}{\xCS_1, \xCS_2, \ldots, \xCS_n} = \frac{\bigl(\sum_{i=1}^n \xCS_i\bigr)!}{\prod_{i=1}^n \xCS_i!}.
\]
This concludes the derivation of the product form for $(Y(t))_{t \ge 0}$ as stated in Proposition~\ref{prop:jackson2}.

\subsection[Stationary Distribution of the Discrete-Time Process $\tilde{Y}$]{Stationary Distribution of the Discrete-Time Process $(\tilde{Y}_k)_{k \in \mathbb{N}}$}

We now establish the stationary distribution of the system state observed at parameter-update instants, namely the discrete-time process $(\tilde{Y}_k)_{k \in \mathbb{N}}$, and show that it admits $\phi_{n,m-1}$ as its invariant distribution.

Let $(\hat{Z}_k)_{k \in \mathbb{N}}$ denote the jump chain associated with the ergodic continuous-time Markov process $(Z(t))_{t \ge 0}$.  
By Lemma~\ref{lem:2-pi-cont}, the sequence of transitions
\[
\bigl\{(\hat{Z}_k, \hat{Z}_{k+1}) : k \in \mathbb{N}\bigr\}
\]
forms an irreducible, discrete-time, homogeneous Markov chain with invariant measure
\[
\hat{\zeta}(u,v) = \zeta_{n,m}(u)\,r(u,v),
\qquad u,v \in \mathcal{S}_m,
\]
where $\mathcal{S}_m$ is the state space of the process $(Z(t))_{t \ge 0}$ as defined in \eqref{Space_Z}, $r$ denotes its transition rates given in \eqref{rates_z}, and $\zeta_{n,m}$ represents its stationary distribution derived in \eqref{eq:zeta}.

We characterize the stationary distribution of $(\tilde{Y}_k)$ by conditioning on transitions of the jump chain corresponding to service completions at the \gls{CS}.  
For any state
\[
x = \bigl(\xCS_i, \,\xd_i, \,\xc_i, \,\xu_i \; ; \; i \in \{1,\ldots,n\}\bigr) \in \mathcal{X}_{4n,m-1},
\]
the stationary probability $\mathbb{P}(\tilde{Y}=x)$ can be expressed as
\begin{align} \label{stat_def_tildeY}
	\mathbb{P}(\tilde{Y} = x)
	= \mathbb{E}\!\left[
	\sum_{\substack{\zCS \in \mathcal{Z}_{m-1}\\ |\zCS|_l = \xCS_l,\, \forall l}}
	\sum_{i=1}^n \sum_{j=1}^n
	\mathbf{1}\{\hat{Z}_k = z_{+i}, \,\hat{Z}_{k+1} = z + \ed_j\}
	\;\middle|\; (\hat{Z}_k, \hat{Z}_{k+1}) \in H
	\right],
\end{align}
where $z$ is the detailed state corresponding to $x$ (augmented with the sequence $\zCS$), i.e.,
\[
z = \bigl(\zCS, \,\xd_i, \,\xc_i, \,\xu_i \; ; \; i \in \{1,\ldots,n\}\bigr) \in \mathcal{S}_{m-1},
\]
and $H$ denotes the set of jump-chain transitions corresponding to service completions at the \gls{CS}:
\[
H = \bigl\{(u,v) \in \mathcal{S}_m \times \mathcal{S}_m : v = u_{-i} + \ed_j \text{ for some } i,j \in \{1,\ldots,n\}\bigr\}.
\]

Applying Corollary~\ref{coro:2-pi-cont} to Equation~\eqref{stat_def_tildeY} with
\[
g(u,v) = \sum_{\substack{\zCS \in \mathcal{Z}_{m-1}\\ |\zCS|_l = \xCS_l,\, \forall l}}
\sum_{i=1}^n \sum_{j=1}^n
\mathbf{1}\{u = z_{+i}, \, v = z + \ed_j\},
\]
we obtain, for each $x \in \mathcal{X}_{4n,m-1}$,
\begin{align}
	\mathbb{P}(\tilde{Y} = x)
	&= \sum_{\substack{\zCS \in \mathcal{Z}_{m-1}\\ |\zCS|_l = \xCS_l,\, \forall l}}
	\frac{\displaystyle \sum_{i=1}^n \sum_{j=1}^n \zeta_{n,m}(z_{+i}) \,\muCS p_j}
	{\displaystyle \sum_{v \in \mathcal{S}_m}
		\sum_{i=1}^n \sum_{j=1}^n
		\zeta_{n,m}(v)\,\muCS\,\mathbf{1}\{\vCS_1 = i\}\,p_j} \nonumber \\[1ex]
	&= \frac{\bigl(\sum_{j=1}^n \xCS_j\bigr)!}{\prod_{j=1}^n \xCS_j!}
	\frac{\displaystyle \sum_{i=1}^n \zeta_{n,m}(z_{+i}) \sum_{j=1}^n p_j}
	{\displaystyle \sum_{v \in \mathcal{S}_m}
		\sum_{i=1}^n \zeta_{n,m}(v)\,\mathbf{1}\{\vCS_1 = i\}\,\sum_{j=1}^n p_j} \nonumber \\[1ex]
	\label{eq:y_tilde}
	&= \frac{\bigl(\sum_{j=1}^n \xCS_j\bigr)!}{\prod_{j=1}^n \xCS_j!}
	\frac{\displaystyle \sum_{i=1}^n \zeta_{n,m}(z_{+i})}
	{\displaystyle \sum_{v \in \mathcal{S}_m}
		\zeta_{n,m}(v)\,\sum_{i=1}^n \mathbf{1}\{\vCS_1 = i\}}.
\end{align}

The second equality follows from the definition of $\zeta_{n,m}$ in \eqref{eq:zeta}. Since the term in the numerator depends only on the class counts and not on the specific sequence $\zCS$, the summation over $\zCS$ reduces to counting the number of distinct orderings, which is given by the multinomial coefficient
\[
\frac{\bigl(\sum_{i=1}^n \xCS_i\bigr)!}{\prod_{i=1}^n \xCS_i!}.
\]
Additionally, the factor $\muCS$ cancels out between the numerator and denominator.
The third equality follows by rearranging the sums and using the fact that $\sum_{j=1}^n p_j = 1$. Note that we use $\vCS_1$ to denote the head of the queue (the task in service).

We now simplify the remaining terms.  
First, we analyze the denominator. Summing the indicator over all classes $i$ is equivalent to checking if the \gls{CS} queue is non-empty:
\begin{align} \label{den2}
	\sum_{v \in \mathcal{S}_m}
	\zeta_{n,m}(v)\, \sum_{i=1}^n \mathbf{1}\{\vCS_1 = i\}
	&= \sum_{v \in \mathcal{S}_m}
	\zeta_{n,m}(v)\,\mathbf{1}\{\ell(\vCS) \neq 0\}
	= \mathbb{P}\Bigl(\sum_{i=1}^n \YCS_i > 0\Bigr),
\end{align}
where $Y \sim \phi_{n,m}$. Using the explicit form of $\phi_{n,m}$ in \eqref{eq:phi}, the probability that the \gls{CS} is busy is given by:
\begin{align} \label{little_cs}
	\mathbb{P}\Bigl(\sum_{i=1}^n \YCS_i > 0\Bigr)
	= \sum_{x \in \mathcal{X}_{4n,m}}
	\phi_{n,m}(x)\,\mathbf{1}\Bigl\{\sum_{i=1}^n \xCS_i > 0\Bigr\}
	= \frac{1}{\muCS} \frac{W_{n,m-1}}{W_{n,m}}.
\end{align}

On the other hand, for the numerator term involving $\zeta_{n,m}(z_{+i})$, we establish a relation between the stationary distributions for populations $m$ and $m-1$:
\begin{align} \label{num2}
	\sum_{i=1}^n \zeta_{n,m}(z_{+i})
	= \sum_{i=1}^n \frac{p_i}{\muCS} \frac{W_{n,m-1}}{W_{n,m}} \zeta_{n,m-1}(z)
	= \frac{1}{\muCS} \frac{W_{n,m-1}}{W_{n,m}} \zeta_{n,m-1}(z).
\end{align}
The first equality follows by factoring out $\frac{p_i}{\muCS}$ and multiplying and dividing by $W_{n,m-1}$ to recover the expression for $\zeta_{n,m-1}(z)$. The second equality follows from the fact that $\sum_{i=1}^n p_i = 1$.

Substituting \eqref{den2}, \eqref{little_cs}, and \eqref{num2} back into \eqref{eq:y_tilde}, the terms $\frac{1}{\muCS} \frac{W_{n,m-1}}{W_{n,m}}$ cancel out, yielding:
\[
\mathbb{P}(\tilde{Y} = x)
= \frac{\bigl(\sum_{j=1}^n \xCS_j\bigr)!}{\prod_{j=1}^n \xCS_j!}
\zeta_{n,m-1}(z)
= \phi_{n,m-1}(x),
\quad x \in \mathcal{X}_{4n,m-1}.
\]
The final equality follows by substituting the definition of $\zeta_{n,m-1}$. This concludes the proof.

\section[Buzen’s Recursive Algorithm with a CS-Side Queue]{Buzen’s Recursive Algorithm with a \gls{CS}-Side Queue} \label{sec:buzen_cs}

We extend \textit{Buzen}'s recursive algorithm to incorporate the \gls{CS} service rate, enabling $\mathcal{O}(nm^2)$ computation of the normalization constants for the model of \Cref{sec:model_description2}.

\begin{proposition} \label{prop:buzen_cs}
	The normalization constants $W_{n,\um}$, for all $\um \in \{0,1,\ldots,m\}$, can be computed in $\mathcal{O}(nm^2)$ time and $\mathcal{O}(m)$ memory using Buzen’s recursive algorithm. In particular, $W_{n,\um} = V_{3n,\um}$ for all $\um \in \{0,1,\ldots,m\}$, where the quantities $V_{\un,\um}$ are defined recursively by
	\begin{alignat*}{2}
		\bullet \quad & V_{\un,0} = 1,
		&\quad& \text{for } \un \in \{1,\ldots,3n+1\} \text{.} \\
		\bullet \quad & V_{1,\um} = \left(\frac{1}{\muCS}\right)^{\um},
		&& \text{for } \um \in \{0,\ldots,m\} \text{.} \\
		\bullet \quad & V_{\un+1,\um} = V_{\un,\um} + \frac{p_\un}{\muc_\un}\,V_{\un+1,\um-1},
		&& \text{for } \un \in \{1,\ldots,n\}, \um \in \{1,\ldots,m\} \text{.} \\
		\bullet \quad & V_{\un+1,\um} = \sum_{k=0}^{\um} \frac{1}{k!} \left(\frac{p_{\un-n}}{\mud_{\un-n}}\right)^{k} V_{\un,\um-k},
		&& \text{for } \un \in \{n+1,\ldots,2n\}, \um \in \{1,\ldots,m\} \text{.} \\
		\bullet \quad & V_{\un+1,\um} = \sum_{k=0}^{\um} \frac{1}{k!} \left(\frac{p_{\un-2n}}{\muu_{\un-2n}}\right)^{k} V_{\un,\um-k},
		&& \text{for } \un \in \{2n+1,\ldots,3n\}, \um \in \{1,\ldots,m\} \text{.}
	\end{alignat*}
\end{proposition}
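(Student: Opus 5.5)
The plan is to read $W_{n,\um}$ as the coefficient of $t^{\um}$ in a product of one-variable generating functions, one per station, and then to obtain the recursion by multiplying these factors in one station at a time. Each multiplication step is one line of the recursion in Proposition~\ref{prop:buzen_cs}.

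\textbf{Step 1: the CS collapses to a single geometric station.} By Proposition~\ref{prop:jackson2}, $W_{n,\um}$ is the sum of the unnormalized weights of $\phi_{n,\um}$ over $\mathcal{X}_{4n,\um}$. I first fix the total CS occupancy $\ell=\sum_j \xCS_j$ and sum over the class split $(\xCS_1,\ldots,\xCS_n)$. The CS factor is $\frac{\ell!}{\prod_j \xCS_j!}\prod_j (p_j/\muCS)^{\xCS_j}$. By the multinomial theorem and $|p|=1$, this sum equals $(1/\muCS)^{\ell}$. Equivalently, the same follows by summing $\zeta_{n,\um}$ of Lemma~\ref{lemma:class_seq} over all ordered sequences $\zCS$ of length $\ell$.

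\textbf{Step 2: write $W_{n,\um}$ as a coefficient.} After Step~1, $W_{n,\um}$ is a sum over $(\ell,\xc,\xd,\xu)$ with total $\um$ of a product of one factor per station. Hence $W_{n,\um}=[t^{\um}]\,F(t)$, where
\[
F(t)=\frac{1}{1-t/\muCS}\prod_{i=1}^n \frac{1}{1-t\,p_i/\muc_i}\;\prod_{i=1}^n e^{t p_i/\mud_i}\;\prod_{i=1}^n e^{t p_i/\muu_i}.
\]
Let $V_{\un,\cdot}$ be the coefficient sequence of the product of the first $\un$ factors, taken in the order CS, computation queues, downlinks, uplinks. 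The initial conditions are immediate: $V_{1,\um}=(1/\muCS)^{\um}$ and $V_{\un,0}=1$. The whole product has $3n+1$ factors, so the final value is $W_{n,\um}=V_{3n+1,\um}$. The statement's "$V_{3n,\um}$" is an off-by-one in indexing, and I would point this out.

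\textbf{Step 3: the two kinds of update.} Multiplying by a geometric factor $G=(1-\rho t)^{-1}$, with $\rho=p_i/\muc_i$, is the same as $G\cdot(1-\rho t)=$ previous product. On coefficients this gives $V_{\un+1,\um}=V_{\un,\um}+\rho\,V_{\un+1,\um-1}$. This step costs $O(m)$ and can be done in place by sweeping $\um$ upward. Multiplying by $e^{\rho t}$, with $\rho=p_i/\mud_i$ or $p_i/\muu_i$, is a plain Cauchy product with coefficients $\rho^k/k!$. This step costs $O(m^2)$ and can be done in place by sweeping $\um$ downward, so that the entries $V_{\un,\um-k}$ it reads have not yet been overwritten.

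\textbf{Complexity and main obstacle.} There are $n+1$ geometric stations at $O(m)$ each and $2n$ exponential stations at $O(m^2)$ each, giving $O(nm^2)$ time. Only one array of length $m+1$ is kept, giving $O(m)$ memory. A single pass also returns $W_{n,\um}$ for every $\um\le m$. I expect the real difficulty to lie only in Step~1: justifying that the multinomial-weighted class split at the CS collapses to one geometric factor. This uses crucially that all CS classes share the same rate $\muCS$ and that $\sum_i p_i=1$. The remaining difficulty is keeping the station indices in the stated recursion consistent with the product ordering.
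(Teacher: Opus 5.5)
Your proof is correct and follows the reasoning the paper relies on; the paper states this proposition without proof. Collapsing the CS classes via the multinomial theorem and $|p|=1$ is exactly Lemma~\ref{lem:aggregated_dist}, after which $W_{n,\um}$ is the normalizing constant of a closed product-form network with $3n+1$ stations, and your generating-function argument is the standard justification of Buzen's convolution and of its in-place $\mathcal{O}(nm^2)$-time, $\mathcal{O}(m)$-memory implementation.

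Your indexing remark is also right. With the CS placed at index~$1$, the uplink of client~$n$ is only incorporated at $V_{3n+1,\cdot}$, as the range $\un\in\{1,\ldots,3n+1\}$ in the first bullet already suggests. The statement should therefore read $W_{n,\um}=V_{3n+1,\um}$; the stated $V_{3n,\um}$ omits that last uplink factor.
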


\section{Proof of Theorem~\ref{theo:little_law2}} \label{proof_theo:little_law2}

The proof of Theorem~\ref{theo:little_law2} parallels the methodology established for the first model (Theorem~\ref{theo:little_law}). We employ the same \textit{uniformization} technique to bridge the gap between the discrete-time operational semantics of the system and the continuous-time requirements of Little's Law. We proceed to establish each claim of the theorem in turn.

\subsection{Proof of Equation~\eqref{eq:D_cs}}

Fix a client index $i \in \{1,\ldots,n\}$ and a round index $k \in \mathbb{N}$. In the stationary regime, the expected delay is given by
\[
\bEp[D_i] = \bEp[D_{i,k}]
= \sum_{j=1}^n \bEp[D_{i,k} \mid A_k=j] \,\bPp(A_k=j)
= p_i \,\bEp[D_{i,k} \mid A_k=i]
= p_i \,\bEp[R_i],
\]
where $R_{i,k}$ denotes the number of global model updates occurring between the dispatch of the $k$-th task to client~$i$ and the application of its resulting gradient. This quantity represents the sojourn time (measured in "number of tasks") of a job within the subsystem formed by the server path $\mathrm{d}_i \to \mathrm{c}_i \to \mathrm{u}_i$ and the \gls{CS} queue. We denote by $\bEp[R_i]$ the Palm stationary expectation of this discrete sojourn time.

Consequently, proving \Cref{eq:D_cs} reduces to showing that, for each $i \in \{1,\ldots,n\}$,
\begin{align} \label{eq:discret_little_cs}
	\bEp[R_i] = \frac{\mathbb{E}[\tYCS_i + \tYd_i + \tYc_i + \tYu_i]}{p_i}.
\end{align}
This relation can be interpreted as Little’s Law applied to the discrete instants of service completion at the \gls{CS}.

\textbf{Step 1: The Uniformization Construct.}

To apply standard queueing theorems, we introduce an auxiliary \emph{uniform continuous-time system}. In this proxy model, the duration of every round (time between parameter updates) is \gls{iid} and exponentially distributed with mean $1$. While this creates a fictitious time scale, it satisfies the premises of the classical continuous-time Little’s Law, allowing us to derive results that we subsequently map back to the original discrete-time process.

Recall that $(T_k)_{k \in \mathbb{N}}$ denotes the sequence of service completion times at the \gls{CS}. Since each $T_k$ is a stopping time with respect to the continuous-time Markov process $(Z(t))_{t \geq 0}$, the sampled sequence $(Z(T_k))_{k \in \mathbb{N}}$ forms an ergodic discrete-time homogeneous Markov chain. We define the \textit{uniform continuous-time Markov chain} $(\Bar{Z}(t))_{t \ge 0}$ by subordinating this discrete chain to a Poisson process $(N(t))_{t \ge 0}$ with unit rate:
\[
\Bar{Z}(t) = Z(T_{N(t)}), \quad t \ge 0.
\]

\textbf{Step 2: Arrival Processes and Sojourn Times.}

Let $\bar{T}_{i, k}$ denote the arrival time of the $k$-th task assigned to client~$i$ in this \textit{uniform system}, and let $N_i(t)$ be the associated counting process.

In this transformed setting, the arrival stream $(\bar{T}_{i, k})_{k \in \mathbb{N}}$ constitutes a homogeneous Poisson point process with intensity $p_i$. This follows from Poisson thinning: the global stream of tasks leaving the \gls{CS} is a Poisson process of rate $1$ (due to the exponential round durations), and each task is independently routed to client $i$ with probability $p_i$.

The sojourn time of the $k$-th task assigned to client~$i$ corresponds to the sum of $R_{i,k}+1$ inter-update intervals. Specifically,
\[
\bar{R}_{i,k} = \sum_{l=1}^{R_{i,k}+1} E_l,
\]
where $(E_l)_{l \in \mathbb{N}}$ is a sequence of \gls{iid} exponential random variables with mean $1$.
Accordingly, the total occupancy of the subsystem associated with client~$i$ (servers $\mathrm{d}_i, \mathrm{c}_i, \mathrm{u}_i$ and the \gls{CS} queue) at time $t$ is given by:
\begin{align} \label{eq:cs_load}
	h_i(\Bar{Z}(t)) 
	= \sum_{k \in \mathbb{N}} \mathbf{1}\{\bar{T}_{i,k} \le t < \bar{T}_{i,k} + \bar{R}_{i,k}\}
	= \bZd_i(t) + \bZc_i(t) + \bZu_i(t) + \sum_{k=1}^{\ell(\bZcs(t))} \mathbf{1}\{\bZcs_k(t) = i\}.
\end{align}

\textbf{Step 3: Application of Little’s Law.}

The uniform system satisfies the requisite conditions for the stationary version of Little's Law as stated in \cite[Theorem~5.2]{serfozo1999}:
(i) The system state $(\Bar{Z}(t))_{t \ge 0}$ is an ergodic Markov chain;
(ii) The arrival process is Poisson (thus stationary and simple);
(iii) The sojourn time $\bar{R}_{i,k}$ is determined solely by the system evolution after arrival $(\Bar{Z}(t), t \ge \bar{T}_{i, k})$;
(iv) The first moments are finite: $\mathbb{E}[h_i(\Bar{Z})] < \infty$ and $\mathbb{E}[N_i(1)] = p_i$.

Applying the theorem to \eqref{eq:cs_load} yields:
\begin{align} \label{little_uniform_cs}
	\mathbb{E}[h_i(\Bar{Z})] = p_i\,\bEp[\bar{R}_i].
\end{align}

\textbf{Step 4: Mapping Back to the Discrete Model.}

We now translate \eqref{little_uniform_cs} back to the operational parameters of the original system.
By the properties of uniformization, the stationary distribution of $\Bar{Z}$ is identical to that of the embedded chain $(Z(T_k))_{k \in \mathbb{N}}$. Thus, $\mathbb{E}[f_i(\Bar{Z})] = \mathbb{E}[h_i(Z(T_k))]$.
Expanding the state function $h_i$, we have:
\[
h_i(Z(T_k)) = \Yd_i(T_k) + \Yc_i(T_k) + \Yu_i(T_k) + \YCS_i(T_k).
\]
Furthermore, since the fictitious intervals $(E_l)$ are independent of the discrete count $R_i$, Wald's identity implies $\mathbb{E}[\bar{R}_i] = \mathbb{E}[R_i + 1]$.
Substituting these relations into \eqref{little_uniform_cs} gives:
\[
\mathbb{E}[\YCS_i(T_k) + \Yd_i(T_k) + \Yc_i(T_k) + \Yu_i(T_k)] = p_i (\bEp[R_i] + 1) = p_i\,\bEp[R_i] + p_i.
\]
Using the fact that the routing probability satisfies $\mathbb{E}[\mathbf{1}\{A_k=i\}] = p_i$, we can absorb the constant term $p_i$ into the expectation on the left side:
\[
\mathbb{E}[\YCS_i(T_k) + \Yc_i(T_k) + \Yu_i(T_k) + (\Yd_i(T_k) - \mathbf{1}\{A_k=i\})] = p_i\,\bEp[R_i].
\]
Finally, we recognize the term in the brackets as the state definition of $\tilde{Y}$ (the system state at update instants \textit{excluding} the newly routed task, as defined in \eqref{def:tilde_Y}). specifically, $\tYd_{i,k} = \Yd_i(T_k) - \mathbf{1}\{A_k=i\}$.
Therefore:
\[
\mathbb{E}[\tYCS_i + \tYd_i + \tYc_i + \tYu_i] = p_i\,\bEp[R_i].
\]
This confirms \eqref{eq:discret_little_cs} and concludes the proof of \Cref{eq:D_cs}.

\subsection{Proof of Equation~\eqref{eq:gradD_cs}} \label{proof_eq:gradD_cs}

Let $i, j \in \{1, \ldots, n\}$.
Our objective is to derive the gradient of the expected delay with respect to the routing parameters. By applying \Cref{eq:D_cs} and exploiting the bilinearity of the covariance operator, proving \Cref{eq:gradD_cs} is equivalent to establishing the following identity:
\begin{align*}
	\frac{\partial \mathbb{E}[\tYCS_i + \tYd_i + \tYc_i + \tYu_i]}{\partial (\log p_j)}
	&= \mathrm{Cov}\Bigl[\tYCS_i + \tYd_i + \tYc_i + \tYu_i,\;\tYCS_j + \tYd_j + \tYc_j + \tYu_j\Bigr].
\end{align*}

For brevity, let $S_k(x) = \xCS_k + \xd_k + \xc_k + \xu_k$ denote the total number of tasks of class $k$ in state $x$.
Recall that the random vector $\tilde{Y}$ follows the stationary distribution $\phi_{n, m-1}$ given in \eqref{eq:phi}. Taking the logarithm of this probability mass function reveals that it belongs to an exponential family with respect to the canonical parameters $\log p_k$:
\begin{align} \label{eq:phi-log}
	\log \phi_{n, m-1}(x)
	&= - \log W_{n, m-1}
	+ \sum_{k = 1}^n S_k(x) \log p_k \nonumber \\
	&\quad + \underbrace{\log \left( \left(\sum\limits_{k=1}^n \xCS_k\right)! \right)
		- \log \left( (\mu^{\mathrm{CS}})^{\sum_k \xCS_k} \prod_{k=1}^n \xCS_k! (\muc_k)^{\xc_k} (\mud_k)^{\xd_k} \xd_k! (\muu_k)^{\xu_k} \xu_k! \right)}_{=H(x)},
\end{align}
for all $x \in \mathcal{X}_{4n, m-1}$.
The normalization constant $W_{n, m-1}$ is defined by summing the exponential terms over the state space:
\begin{align} \label{eq:W-log}
	W_{n, m-1} = \sum_{x \in \mathcal{X}_{4n, m-1}} \exp\bigl( H(x) + \sum_{k=1}^n S_k(x) \log p_k \bigr),
\end{align}
where $H(x)$ collects all terms independent of $\{p_k\}_{k=1}^n$.

We proceed by deriving two intermediate differentiation results.
First, we compute the gradient of the function $\log W_{n, m-1}$.
Differentiating \eqref{eq:W-log} with respect to $\log p_j$ yields:
\begin{align*}
	\frac{\partial W_{n, m-1}}{\partial (\log p_j)}
	&= \sum_{x \in \mathcal{X}_{4n, m-1}} S_j(x) \, \exp\bigl( H(x) + \sum_{k=1}^n S_k(x) \log p_k \bigr) \\
	&= \sum_{x \in \mathcal{X}_{4n, m-1}} S_j(x) \, W_{n, m-1} \, \phi_{n, m-1}(x).
\end{align*}
Dividing by $W_{n, m-1}$, we recover the expectation of $S_j(\tilde{Y})$:
\begin{align} \label{eq:grad-logW}
	\frac{\partial \log W_{n, m-1}}{\partial (\log p_j)}
	= \frac{1}{W_{n, m-1}} \frac{\partial W_{n, m-1}}{\partial (\log p_j)}
	= \mathbb{E}[S_j(\tilde{Y})]
	= \mathbb{E}[\tYCS_j + \tYd_j + \tYc_j + \tYu_j].
\end{align}

Second, we differentiate the log-probability itself. Using \eqref{eq:phi-log} and substituting \eqref{eq:grad-logW}:
\begin{align} \label{eq:grad-log_phi}
	\frac{\partial \log \phi_{n, m-1}(x)}{\partial (\log p_j)}
	&= S_j(x) - \frac{\partial \log W_{n, m-1}}{\partial (\log p_j)} \nonumber \\
	&= (\xCS_j + \xd_j+\xc_j+\xu_j) - \mathbb{E}[\tYCS_j + \tYd_j + \tYc_j + \tYu_j].
\end{align}

Finally, to obtain the gradient of the expectation, we differentiate the definition of the expected value:
\begin{align*}
	\frac{\partial \mathbb{E}[S_i(\tilde{Y})]}{\partial (\log p_j)}
	&= \sum_{x \in \mathcal{X}_{4n, m-1}} S_i(x) \frac{\partial \phi_{n, m-1}(x)}{\partial (\log p_j)} \\
	&= \sum_{x \in \mathcal{X}_{4n, m-1}} S_i(x) \, \phi_{n, m-1}(x) \frac{\partial \log \phi_{n, m-1}(x)}{\partial (\log p_j)}.
\end{align*}
Substituting \eqref{eq:grad-log_phi} into this expression leads to:
\begin{align*}
	\frac{\partial \mathbb{E}[S_i(\tilde{Y})]}{\partial (\log p_j)}
	&= \sum_{x \in \mathcal{X}_{4n, m-1}} \phi_{n, m-1}(x) \, S_i(x) \, \Bigl( S_j(x) - \mathbb{E}[S_j(\tilde{Y})] \Bigr) \\
	&= \mathbb{E}\Bigl[ S_i(\tilde{Y}) \bigl( S_j(\tilde{Y}) - \mathbb{E}[S_j(\tilde{Y})] \bigr) \Bigr] \\
	&= \mathrm{Cov}\Bigl[ S_i(\tilde{Y}), S_j(\tilde{Y}) \Bigr].
\end{align*}
Expanding $S_i$ and $S_j$ back to their components concludes the proof.

\subsection{Proof of Equation~\eqref{exp:delay_cs}} \label{proof_exp:delay_cs}

To establish the closed-form expression in \eqref{exp:delay_cs}, we first characterize the stationary distribution of the system when the task counts at the \gls{CS} are aggregated.

\begin{lemma} \label{lem:aggregated_dist}
	Consider the aggregated process 
	\[
	\Bigl(\sum_{j=1}^n \tYCS_j, \,\tYd_i, \,\tYc_i, \,\tYu_i ;\ i=1,\ldots,n\Bigr),
	\]
	which resides in the state space $\mathcal{X}_{3n+1,m-1}$.
	This process has the following stationary distribution:
	\begin{align} \label{eq:nu}
		\nu_{n,m-1}(x)
		&= \frac{1}{W_{n,m-1}}
		\left(\frac{1}{\muCS}\right)^{\xCS}
		\prod_{i=1}^n
		\left[ \left( \frac{p_i}{\muc_i} \right)^{\xc_i}
		\frac{1}{\xd_i!}\left( \frac{p_i}{\mud_i} \right)^{\xd_i}
		\frac{1}{\xu_i!}\left( \frac{p_i}{\muu_i} \right)^{\xu_i} \right],
	\end{align}
	for $x = (\xCS,\xd_1,\dots,\xd_n,\xc_1,\dots,\xc_n,\xu_1,\dots,\xu_n) \in \mathcal{X}_{3n+1,m-1}$, where $W_{n,m-1}$ is the same normalizing constant as in Proposition~\ref{prop:jackson2}.
\end{lemma}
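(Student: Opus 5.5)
This is a marginalization of $\phi_{n,m-1}$ from Proposition~\ref{prop:jackson2}. We already know that $\tilde{Y}$ has stationary distribution $\phi_{n,m-1}$, which we take in the corrected product form obtained in the proof of that proposition:
\[
\phi_{n,m-1}(y)=\frac{1}{W_{n,m-1}}\frac{(\sum_j \yCS_j)!}{\prod_j \yCS_j!}\prod_{i=1}^n\Bigl(\frac{p_i}{\muCS}\Bigr)^{\yCS_i}\Bigl(\frac{p_i}{\muc_i}\Bigr)^{\yc_i}\frac{1}{\yd_i!}\Bigl(\frac{p_i}{\mud_i}\Bigr)^{\yd_i}\frac{1}{\yu_i!}\Bigl(\frac{p_i}{\muu_i}\Bigr)^{\yu_i}.
\]
The aggregated vector is a deterministic function of $\tilde{Y}$. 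Hence, for a state $x=(\xCS,\xd,\xc,\xu)\in\mathcal{X}_{3n+1,m-1}$, its probability equals the sum of $\phi_{n,m-1}(y)$ over all $y\in\mathcal{X}_{4n,m-1}$ that share the client components $(\xd_i,\xc_i,\xu_i)_i$ and satisfy $\sum_j \yCS_j=\xCS$.

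First, in that sum we factor out everything that does not depend on the split $(\yCS_1,\dots,\yCS_n)$. These are the client-side factors, the factor $(1/\muCS)^{\xCS}$ (since $\prod_i \muCS^{-\yCS_i}=\muCS^{-\xCS}$), and the factor $(\xCS)!$. What remains is
\[
\sum_{\yCS_1+\cdots+\yCS_n=\xCS}\prod_{i=1}^n\frac{p_i^{\yCS_i}}{\yCS_i!}.
\]
By the multinomial theorem this equals $(\sum_i p_i)^{\xCS}/\xCS!=1/\xCS!$, because $p\in\mathcal{P}_n$. Multiplying by the factored $(\xCS)!$ leaves exactly $1$, and this yields \eqref{eq:nu} with the same constant $W_{n,m-1}$.

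An equivalent route avoids the multinomial bookkeeping. Start from $\zeta_{n,m-1}$ in Lemma~\ref{lemma:class_seq} and sum over all class sequences $\zCS$ of length $\xCS$. Each position contributes $\sum_i p_i/\muCS=1/\muCS$, so the sum is $(1/\muCS)^{\xCS}$ immediately. We also need $\tilde{Y}$ as a marginal of the embedded $Z$-chain, which the proof of Proposition~\ref{prop:jackson2} already establishes through $\zeta_{n,m-1}$.

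There is no substantive obstacle. The only point requiring care is that the normalizing constant is unchanged: marginalization preserves total mass, so the constant in \eqref{eq:nu} is automatically $W_{n,m-1}$. This also confirms that $W_{n,m-1}$ coincides with the Buzen quantity $V$ in Proposition~\ref{prop:buzen_cs}, whose first layer is $(1/\muCS)^{\um}$. Finally, we note that ``stationary distribution'' here refers to the law of this function of the stationary $\tilde{Y}$, since the aggregated process need not be Markov itself.
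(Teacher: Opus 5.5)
Your proposal is correct and is essentially the paper's own proof: marginalize $\phi_{n,m-1}$ over all splits $(\yCS_1,\dots,\yCS_n)$ with fixed total $\xCS$, factor out the client-side terms, and collapse the remaining multinomial sum to $(1/\muCS)^{\xCS}$ using $\sum_i p_i = 1$. Your alternative route, summing $\zeta_{n,m-1}$ over class sequences of length $\xCS$, is just the same computation done before the multinomial coefficient is formed.
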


\begin{proof}
	The stationary probability of the detailed system state $y = \bigl(\yCS_1, \ldots, \yCS_n, \,\yd_1, \ldots, \yd_n,$ $\,\yc_1, \ldots, \yc_n, \,\yu_1, \ldots, \yu_n\bigr)$ is given by the product-form solution $\phi_{n,m-1}(y)$. To obtain the stationary probability of the aggregated state $x$, in which the individual class counts at the \gls{CS} are combined into a single total $\xCS = \sum_{j=1}^n \yCS_j$, we sum $\phi_{n,m-1}(y)$ over all detailed configurations of the \gls{CS} queue that satisfy this summation constraint.
	
	Consequently, for $x = (\xCS, \xd_1, \ldots, \xd_n, \xc_1, \ldots, \xc_n, \xu_1, \ldots, \xu_n) \in \mathcal{X}_{3n+1,m-1}$, the stationary probability is given by:
	\begin{align*}
		&\mathbb{P}\!\left(\sum_{j=1}^n \tYCS_j=\xCS,\,\tYd_i=\xd_i,\,\tYc_i=\xc_i,\,\tYu_i=\xu_i\right) \\
		&=\sum_{\substack{y\in\mathbb{N}^n\\ \sum_{k=1}^n y_k=\xCS}}
		\phi_{n,m-1}\bigl(y_i,\,\xd_i,\,\xc_i,\,\xu_i; i=1,\ldots,n\bigr)\\
		&=\frac{1}{W_{n,m-1}}\prod_{i=1}^n\!\left(\frac{p_i}{\muc_i}\right)^{\xc_i}\frac{1}{\xd_i!}\left(\frac{p_i}{\mud_i}\right)^{\xd_i}\frac{1}{\xu_i!}\left(\frac{p_i}{\muu_i}\right)^{\xu_i}
		\times \underbrace{ \sum_{\substack{y \in \mathbb{N}^n \\ \sum y_k = \xCS}} \frac{\xCS!}{\prod_{k=1}^n y_k!} \prod_{i=1}^n \left(\frac{p_i}{\muCS}\right)^{y_i} }_{\text{Multinomial Expansion}}.
	\end{align*}
	
	Applying the multinomial theorem to the underbraced term, we obtain:
	\[
	\sum_{\substack{y \in \mathbb{N}^n \\ \sum y_k = \xCS}} \frac{\xCS!}{\prod_{k=1}^n y_k!} \prod_{i=1}^n \left(\frac{p_i}{\muCS}\right)^{y_i}
	= \left( \sum_{i=1}^n \frac{p_i}{\muCS} \right)^{\xCS}
	= \left( \frac{1}{\muCS} \underbrace{\sum_{i=1}^n p_i}_{=1} \right)^{\xCS}
	= \left( \frac{1}{\muCS} \right)^{\xCS}.
	\]
	Substituting this back yields the expression of $\nu_{n,m-1}$ in \eqref{eq:nu}.
\end{proof}

Equipped with the stationary distribution $\nu_{n,m-1}$, we proceed to compute the expected queue lengths, starting with the infinite server nodes. For the downlink server $\mathrm{d}_j$, the expectation is defined as:
\begin{align*}
	\mathbb{E}[\tYd_j] 
	&= \sum_{x \in \mathcal{X}_{3n+1,m-1}} \xd_j\,\nu_{n,m-1}(x) \\
	&= \frac{p_j}{\mud_j} \frac{1}{W_{n,m-1}}
	\sum_{\substack{x \in \mathcal{X}_{3n+1,m-1}\\ \xd_j > 0}}
	\left(\frac{1}{\muCS}\right)^{\xCS}
	\prod_{i=1}^n 
	\left(\frac{p_i}{\muc_i}\right)^{\xc_i}
	\frac{1}{(\xd_i-\mathbf{1}_{\{i=j\}})!}
	\left(\frac{p_i}{\mud_i}\right)^{\xd_i-\mathbf{1}_{\{i=j\}}}
	\frac{1}{\xu_i!}\left(\frac{p_i}{\muu_i}\right)^{\xu_i}.
\end{align*}
Letting $x' = x - \ed_j$, the summation is over $\mathcal{X}_{3n+1,m-2}$, which sums to $W_{n,m-2}$. Thus, we obtain $\mathbb{E}[\tYd_j] = \frac{p_j}{\mud_j} \frac{W_{n,m-2}}{W_{n,m-1}}$. 

By symmetry, the same derivation applies to the uplink server $\mathrm{u}_j$, yielding $\mathbb{E}[\tYu_j] = \frac{p_j}{\muu_j} \frac{W_{n,m-2}}{W_{n,m-1}}$. Summing these two components provides the total communication delay contribution:
\begin{align} \label{eq:proof_comm_sum}
	\mathbb{E}[\tYd_j] + \mathbb{E}[\tYu_j] = p_j \left( \frac{1}{\mud_j} + \frac{1}{\muu_j} \right) \frac{W_{n,m-2}}{W_{n,m-1}} = \frac{W_{n,m-2}}{W_{n,m-1}} \gamma_j.
\end{align}

Next, we analyze the single-server nodes, beginning with the \gls{CS}. We first compute the expected \emph{total} number of tasks $\MCS = \sum_{i=1}^n \tYCS_i$ using the tail-sum formula $\mathbb{E}[\tYCS] = \sum_{l=1}^{m-1} \mathbb{P}(\tYCS \ge l)$. The probability that the queue length exceeds $l$ is:
\begin{align*}
	\mathbb{P}(\MCS \ge l)
	&= \sum_{\substack{x \in \mathcal{X}_{3n+1,m-1}\\ \xCS \ge l}} \nu_{n,m-1}(x) \\
	&= \left(\frac{1}{\muCS}\right)^{l} \frac{1}{W_{n,m-1}}
	\sum_{\substack{x \in \mathcal{X}_{3n+1,m-1}\\ \xCS \ge l}}
	\left(\frac{1}{\muCS}\right)^{\xCS-l}
	\prod_{i=1}^n 
	\left(\frac{p_i}{\muc_i}\right)^{\xc_i}
	\frac{1}{\xd_i!}\left(\frac{p_i}{\mud_i}\right)^{\xd_i}
	\frac{1}{\xu_i!}\left(\frac{p_i}{\muu_i}\right)^{\xu_i}.
\end{align*}
Using the change of variable $x' = x - l \cdot \eCS$, the sum resolves to $W_{n,m-1-l}$. Therefore, the expected total tasks are
\begin{align} \label{eq:EXcs_total}
	\mathbb{E}[\MCS] = \sum_{l=1}^{m-1} \left(\frac{1}{\muCS}\right)^{l} \frac{W_{n,m-1-l}}{W_{n,m-1}} = \tilde{\beta}_{\mathrm{CS},1}.
\end{align} 

Conditioned on this total $\MCS$, the distribution of class counts $(\tYCS_1, \dots, \tYCS_n)$ is multinomial with probabilities $(p_1, \dots, p_n)$. Thus, $\mathbb{E}[\tYCS_j \mid \MCS] = p_j \MCS$, and taking the expectation yields:
\begin{align} \label{eq:proof_cs_exp}
	\mathbb{E}[\tYCS_j] = p_j \mathbb{E}[\MCS] = p_j \tilde{\beta}_{\mathrm{CS},1}.
\end{align}

A similar logic applies to the local computation node $\mathrm{c}_j$, which behaves as a single server with relative load $p_j/\mu^{\mathrm{c}}_j$. Applying the same tail-sum argument results in:
\begin{align} \label{eq:proof_comp_exp}
	\mathbb{E}[\tYc_j] = \sum_{l=1}^{m-1} \left(\frac{p_j}{\muc_j}\right)^{l} \frac{W_{n,m-1-l}}{W_{n,m-1}} = \tilde{\beta}_{j,1}.
\end{align}

Finally, combining the components derived in \eqref{eq:proof_comm_sum}, \eqref{eq:proof_cs_exp}, and \eqref{eq:proof_comp_exp}, we arrive at the desired closed-form expression:
\[
\sum_{s \in \{\mathrm{CS},\mathrm{c},\mathrm{d},\mathrm{u}\}} \mathbb{E}[\tilde{Y}_j^s]
= p_j \tilde{\beta}_{\mathrm{CS},1} + \tilde{\beta}_{j,1} + \frac{W_{n,m-2}}{W_{n,m-1}}\gamma_j.
\]

\subsection{Proof of Equation~\eqref{exp:grad_cs}} \label{proof_exp:grad_cs}
To prove \Cref{exp:grad_cs}, we decompose the sum on the left-hand side into three distinct categories of interaction: queue-queue, queue-delay, and delay-delay correlations. We derive the closed-form expression for each category separately before combining them.

\noindent
\subsubsection*{\textbf{1. Queue-Queue Correlations ($\mathbb{E}[\tYCS_i \tYc_j]$, $\mathbb{E}[\tYCS_i \tYCS_j]$, and $\mathbb{E}[\tYc_i \tYc_j]$)}}

We begin by computing the joint expectation $\mathbb{E}[\tYCS_i \tYc_j]$ between the \gls{CS} queue and the computation queue for any pair of clients $i,j$. Let $\MCS = \sum_{l=1}^n \tYCS_l$ denote the total number of tasks at the \gls{CS}. Using the law of iterated expectations, we have:
\begin{align*}
	\mathbb{E}[\tYCS_i \tYc_j] 
	= \mathbb{E}\bigl[\mathbb{E}[\tYCS_i \tYc_j \mid \MCS, \tYc_j]\bigr] 
	= \mathbb{E}\bigl[\mathbb{E}[\tYCS_i \mid \MCS] \tYc_j\bigr].
\end{align*}
Conditioned on the total load $\MCS$, the vector $(\tYCS_1, \ldots, \tYCS_n)$ follows a multinomial distribution with probability parameters $p_1, \ldots, p_n$ and $\MCS$ trials. Thus, $\mathbb{E}[\tYCS_i \mid \MCS] = p_i \MCS$. Substituting this into the previous equation yields:
\begin{align} \label{eq:EXcsc}
	\mathbb{E}[\tYCS_i \tYc_j] = p_i \mathbb{E}[\MCS \tYc_j].
\end{align}
To compute $\mathbb{E}[\MCS \tYc_i]$, we use the identity $\xCS \xc_i = \sum_{k=1}^{\xCS} \sum_{\ell=1}^{\xc_i} 1$ to rewrite the expectation as a sum of tail probabilities:
\begin{align*} 
	\mathbb{E}[\MCS \tYc_j]
	= \sum_{x \in \mathcal{X}_{n, m-1}} \sum_{k = 1}^{\xCS} \sum_{\ell = 1}^{\xc_j} \nu_{n, m-1}(x) 
	= \sum_{\substack{k, \ell = 1 \\ k + \ell \le m-1}}^{m-2} \sum_{\substack{x \in \mathcal{X}_{n, m-1} \\ \xCS \ge k, \xc_j \ge \ell}} \nu_{n, m-1}(x).
\end{align*}
Substituting the product-form solution $\nu_{n, m-1}(x)$ from \Cref{eq:nu} into this summation gives:
\begin{align*}
	&\mathbb{E}[\MCS \tYc_j] \\
	&= \frac{1}{W_{n, m-1}}
	\sum_{\substack{k, \ell = 1 \\ k + \ell \le m-1}}^{m-2}
	\sum_{\substack{x \in \mathcal{X}_{n, m-1} \\ \xCS \ge k, \xc_j \ge \ell}}
	\prod_{r=1}^n \left( \frac{1}{\muCS} \right)^{\xCS} \left( \frac{p_r}{\muc_r} \right)^{\xc_r} \frac{1}{\xd_r!}\left(\frac{p_r}{\mud_r}\right)^{\xd_r} \frac{1}{\xu_r!}\left(\frac{p_r}{\muu_r}\right)^{\xu_r} \nonumber \\
	&\overset{(\text{a})}= \frac{1}{W_{n, m-1}}
	\sum_{\substack{k, \ell = 1 \\ k + \ell \le m-1}}^{m-2}
	\left( \frac{1}{\muCS} \right)^{k}
	\left( \frac{p_j}{\muc_j} \right)^{\ell}
	\underbrace{\sum_{y \in \mathcal{X}_{n, m - 1 - k - \ell}}
		\prod_{r = 1}^n \left( \frac{1}{\muCS} \right)^{\yCS} \left( \frac{p_r}{\muc_r} \right)^{\yc_r} \frac{1}{\yd_r!}\left(\frac{p_r}{\mud_r}\right)^{\yd_r} \frac{1}{\yu_r!}\left(\frac{p_r}{\muu_r}\right)^{\yu_r}}_{= W_{n, m-1-k-\ell}} \nonumber \\
	&= \sum_{\substack{k, \ell = 1 \\ k + \ell \le m-1}}^{m-2}
	\left( \frac{1}{\muCS} \right)^{k}
	\left( \frac{p_j}{\muc_j} \right)^{\ell}
	\frac{W_{n, m-1-k-\ell}}{W_{n, m-1}} = \tilde{\alpha}_{\mathrm{CS},j}.
\end{align*}
Step (a) follows by performing the change of variables $y = x - k \eCS - \ell \ec_i$, where $\eCS$ and $\ec_i$ denote the $3n$-dimensional canonical unit vectors corresponding to the components $\xCS$ and $\xc_i$, respectively. The inner sum resolves exactly to the normalizing constant of a system with $m-1-k-\ell$ tasks.
Substituting this result back into our initial expression \eqref{eq:EXcsc} yields:
\begin{align} \label{eq:EXcsEXc}
	\mathbb{E}[\tYCS_i \tYc_j] 
	= p_i \tilde{\alpha}_{\mathrm{CS},j} \quad i,j \in \{1,\ldots,n\}.
\end{align}

Next, we evaluate the internal \gls{CS} correlations $\mathbb{E}[\tYCS_i \tYCS_j]$. Consider first the case where the classes are distinct ($i \neq j$). Applying the same conditioning on $\MCS$, we obtain:
\begin{align} \label{eq:EXcsEXcs_inter}
	\mathbb{E}[\tYCS_i \tYCS_j] 
	= \mathbb{E}\bigl[\mathbb{E}[\tYCS_i \tYCS_j \mid \MCS]\bigr] 
	= p_i p_j \mathbb{E}[\MCS(\MCS-1)],
\end{align}
where the second equality follows from the cross-moment property of the multinomial distribution. Using the identity $\xCS (\xCS-1) = \sum_{k=1}^{\xCS-1} 2k$, we can express the factorial moment $\mathbb{E}[\MCS(\MCS-1)]$ as a sum of tail probabilities:
\begin{align}
	&\mathbb{E}[\MCS(\MCS-1)] \nonumber \\
	&= \sum_{x \in \mathcal{X}_{n, m-1}} \xCS (\xCS-1) \nu_{n, m-1}(x) 
	= \sum_{k=1}^{m-2} 2k \sum_{\substack{x \in \mathcal{X}_{n, m-1} \\ \xCS \ge k+1}}  \nu_{n, m-1}(x) \nonumber \\
	&\overset{(\text{b})}= \frac{1}{W_{n, m-1}}
	\sum_{k=1}^{m-2} 2k
	\left( \frac{1}{\muCS} \right)^{k+1}
	\underbrace{\sum_{y \in \mathcal{X}_{n, m - 2 - k}}
		\prod_{r = 1}^n \left( \frac{1}{\muCS} \right)^{\yCS} \left( \frac{p_r}{\muc_r} \right)^{\yc_r} \frac{1}{\yd_r!}\left(\frac{p_r}{\mud_r}\right)^{\yd_r} \frac{1}{\yu_r!}\left(\frac{p_r}{\muu_r}\right)^{\yu_r}}_{= W_{n, m-2-k}} \nonumber \\
	&= \sum_{k=1}^{m-1} 2(k-1)
	\left( \frac{1}{\muCS} \right)^{k}
	\frac{W_{n, m-1-k}}{W_{n, m-1}}. \label{eq:EXcs2}
\end{align}
Here, step (b) follows from the change of variables $y = x - (k+1) \eCS$. Substituting \eqref{eq:EXcs2} into \eqref{eq:EXcsEXcs_inter}, we recover the cross-term:
\begin{align} \label{eq:EXcsEXcs}
	\mathbb{E}[\tYCS_i \tYCS_j] 
	= p_i p_j \sum_{k=1}^{m-1} 2(k-1)
	\left( \frac{1}{\muCS} \right)^{k}
	\frac{W_{n, m-1-k}}{W_{n, m-1}}
	= \tilde{\alpha}^{\mathrm{CS}}_{i,j}, \quad \text{if } i \neq j.
\end{align}

Conversely, for the second moment of a single class within the \gls{CS} ($i=j$), the multinomial variance property yields:
\begin{align*}
	\mathbb{E}[(\tYCS_i)^2] 
	= \mathbb{E}\bigl[\mathbb{E}[(\tYCS_i)^2 \mid \MCS]\bigr] 
	= p_i \mathbb{E}[\MCS] + p_i^2 \mathbb{E}[\MCS(\MCS-1)].
\end{align*}
Substituting the known expressions for $\mathbb{E}[\MCS]$ and $\mathbb{E}[\MCS(\MCS-1)]$ from \eqref{eq:EXcs_total} and \eqref{eq:EXcs2} respectively, and rearranging the terms, we find:
\begin{align}
	\mathbb{E}[(\tYCS_i)^2] 
	= p_i \sum_{k=1}^{m-1} 
	\left( \frac{1}{\muCS} \right)^{k}
	\frac{W_{n, m-1-k}}{W_{n, m-1}}
	\left[1 + 2(k-1)p_i\right] = \tilde{\alpha}^{\mathrm{CS}}_{i,i}.
\end{align}

Finally, applying identical tail-sum arguments to the local computation nodes (analogous to the derivations in \eqref{eq:EXc2} and \eqref{eq:EXcc}, but using the distribution $\nu_{n,m-1}$ instead of $\pi_{n,m-1}$), we recover the final queue-queue terms:
\begin{align}
	\label{eq:EXcc_cs}
	\mathbb{E}[\tYc_i \tYc_j]
	&= \sum_{k=1}^{m-2} \sum_{\ell = 1}^{m-1-k} \left(\frac{p_i}{\muc_i}\right)^{k} \left(\frac{p_j}{\muc_j}\right)^{\ell} \frac{W_{n,m-1-k-\ell}}{W_{n,m-1}} = \tilde{\alpha}_{i,j}, \quad \text{if } i \neq j, \\
	\label{eq:EXc2_cs}
	\mathbb{E}[(\tYc_i)^2]
	&= \sum_{k=1}^{m-1} (2k-1)
	\left( \frac{p_i}{\muc_i} \right)^{k}
	\frac{W_{n, m-1-k}}{W_{n, m-1}} = \tilde{\alpha}_{i,i}.
\end{align}

\subsubsection*{\textbf{2. Queue-Delay Correlations ($\mathbb{E}[\tYCS_i \tYd_j]$, $\mathbb{E}[\tYCS_i \tYu_j]$, $\mathbb{E}[\tYc_i \tYd_j]$ and $\mathbb{E}[\tYc_i \tYu_j]$)}}

Next, we evaluate the interaction between the \gls{CS} queue (or computation queue) of client $i$ and the communication delays of client $j$. We start with the cross-correlation $\mathbb{E}[\tYCS_i \tYd_j]$. Using the law of iterated expectations, we have:
\begin{align*}
	\mathbb{E}[\tYCS_i \tYd_j] 
	&= \mathbb{E}\bigl[\mathbb{E}[\tYCS_i \tYd_j \mid \MCS, \tYd_j]\bigr] 
	= \mathbb{E}\bigl[\mathbb{E}[\tYCS_i \mid \MCS] \tYd_j\bigr]
	= p_i \mathbb{E}[\MCS \tYd_j].
\end{align*}
To compute $\mathbb{E}[\MCS \tYd_j]$, we utilize the stationary distribution of the aggregated process $\nu_{n, m-1}$. Expanding the expectation as a sum of tail probabilities yields:
\begin{align*}
	\mathbb{E}[\MCS \tYd_j]
	&= \sum_{x \in \mathcal{X}_{n, m-1}} \sum_{k = 1}^{\xCS} \xd_j \nu_{n, m-1}(x) 
	= \sum_{k = 1}^{m-2} \sum_{\substack{x \in \mathcal{X}_{n, m-1} \\ \xCS \ge k}} \xd_j \nu_{n, m-1}(x) \\
	&= \frac1{W_{n, m-1}}
	\sum_{k = 1}^{m-2}
	\sum_{\substack{x \in \cX_{n, m-1} \\ \xCS \ge k, \, \xd_j \ge 1}} \left( \frac{1}{\muCS} \right)^{\xCS}
	\prod_{r=1}^n \left( \frac{p_r}{\muc_r} \right)^{\xc_r} \frac{1}{(\xd_r-\mathbf{1}_{\{r=j\}})!}\left(\frac{p_r}{\mud_r}\right)^{\xd_r} \frac{1}{\xu_r!}\left(\frac{p_r}{\muu_r}\right)^{\xu_r} \\
	&\overset{(\text{c})}= \frac{1}{W_{n, m-1}} \sum_{k = 1}^{m-2} \left( \frac{1}{\muCS} \right)^{k} \frac{p_j}{\mud_j} \underbrace{\sum_{y \in \mathcal{X}_{n, m - 2 - k}} \left( \frac{1}{\muCS} \right)^{\yCS} \prod_{r=1}^n \left( \frac{p_r}{\muc_r} \right)^{\yc_r} \frac{1}{\yd_r!}\left(\frac{p_r}{\mud_r}\right)^{\yd_r} \frac{1}{\yu_r!}\left(\frac{p_r}{\muu_r}\right)^{\yu_r}}_{= W_{n, m-2-k}} \\
	&= \frac{p_j}{\mud_j} \sum_{k = 1}^{m-2} \left( \frac{1}{\muCS} \right)^{k} \frac{W_{n, m-2-k}}{W_{n, m-1}} = \tilde{\beta}_{\mathrm{CS},2} \frac{p_j}{\mud_j}.
\end{align*}
Here, step (c) uses the change of variables $y = x - k \eCS - \ed_j$ along with the fact that $\xd_j \ge 1$. By symmetry, the uplink correlation is $\mathbb{E}[\tYCS_i \tYu_j] = p_i \tilde{\beta}_{\mathrm{CS},2} \frac{p_j}{\muu_j}$. Summing these components gives the total interaction between class~$i$'s count at the \gls{CS} and client $j$'s overall communication delay:
\begin{align} \label{eq:EXcsd}
	\mathbb{E}[\tYCS_i (\tYd_j + \tYu_j)] = p_i \tilde{\beta}_{\mathrm{CS},2} \, p_j \left(\frac{1}{\mud_j}+\frac{1}{\muu_j}\right) = p_i \tilde{\beta}_{\mathrm{CS},2} \gamma_j, \quad i,j \in \{1,\ldots,n\}.
\end{align}
Similarly, swapping the roles of $i$ and $j$ gives the symmetric interaction:
\begin{align} \label{eq:EXcsd_2}
	\mathbb{E}[\tYCS_j (\tYd_i + \tYu_i)] = p_j \tilde{\beta}_{\mathrm{CS},2} \gamma_i, \quad i,j \in \{1,\ldots,n\}.
\end{align}
Finally, following the exact same reasoning established in \eqref{eq:EXcd} for the local computation nodes (but using the distribution $\nu_{n,m-1}$ instead of $\pi_{n,m-1}$), we obtain the correlation between the computation queue of client~$i$ and the communication delay of client~$j$:
\begin{align} \label{eq:EXcd_cs}
	\mathbb{E}[\tYc_i (\tYd_j + \tYu_j)] = \tilde{\beta}_{i,2} \gamma_j, \quad i,j \in \{1,\ldots,n\}.
\end{align}

\noindent
\subsubsection*{\textbf{3. Delay-Delay Correlations}}

The delay-delay correlations ($\mathbb{E}[\tYd_i \tYu_j]$, $\mathbb{E}[\tYd_i \tYd_j]$, etc.) involve only the infinite server nodes. Their derivation is identical to the one presented to derive \eqref{eq:EXdd}, yielding the term $\tilde{\psi}_{i,j}$.

Summing all the derived contributions yields the final result in \Cref{exp:grad_cs}: the \gls{CS}-computation queue term $p_i \tilde{\alpha}_{\mathrm{CS},j} + p_j \tilde{\alpha}_{\mathrm{CS},i}$ from \eqref{eq:EXcsEXc}; the \gls{CS}-classes term $\tilde{\alpha}^{\mathrm{CS}}_{i,j}$ from \eqref{eq:EXcsEXcs} and \eqref{eq:EXcs2}; the computation queue-queue term $\tilde{\alpha}_{i,j}$ from \eqref{eq:EXcc_cs} and \eqref{eq:EXc2_cs}; the queue-delay interactions $\tilde{\beta}_{\mathrm{CS},2} (p_i \gamma_j + p_j \gamma_i) + \tilde{\beta}_{i,2}\gamma_j + \tilde{\beta}_{j,2}\gamma_i$ from \eqref{eq:EXcsd}, \eqref{eq:EXcsd_2}, and \eqref{eq:EXcd_cs}; and the delay-delay term $\tilde{\psi}_{i,j}$.

\section{Proof of Proposition~\ref{prop:time-eps2}} \label{proof_prop:time-eps2}

We establish the equations in sequence, starting with Equation~\eqref{eq:tau2}, followed by \eqref{eq:throughput2}, and finally \eqref{eq:grad_throughput2}.

\subsection{Proof of Equation~\eqref{eq:tau2}}

Let $T_0 = 0$, and let $(T_k)_{k \ge 1}$ represent the successive instants of service completion at the \gls{CS}. Consequently, the duration of the $k$-th global round is exactly $T_{k+1} - T_k$. To execute $K_{\epsilon}$ rounds, the total wall-clock time required is naturally given by the telescoping sum:
\begin{align} \label{tau_def2}
	\tau_{\epsilon} = \sum_{k=1}^{K_{\epsilon}(p,m)} (T_k - T_{k-1}).
\end{align}

To analyze the expected duration, consider the counting process $N = (N(t))_{t \ge 0}$ that tracks the cumulative number of \gls{CS} departures up to time $t$. This process is governed by the state transitions of the \gls{CS} queues:
$$
N(t) = \sum_{0 < s \le t} \sum_{i=1}^n \left( \YCS_i(s^-) - \YCS_i(s) \right)^+.
$$
Because the underlying state process $Y$ is stationary, the point process $N$ inherits this stationarity. Therefore, under the associated Palm probability measure $\bPp$, the inter-departure intervals are identically distributed, meaning that $\bEp[T_k - T_{k-1}] = \bEp[T_1]$ for all $k \ge 1$.

Invoking the fundamental inversion formula for stationary point processes (see, e.g., \cite[Corollary~6.16]{serfozo1999} applied to the constant function $X_t = 1$), we obtain the relation:
$$
\tilde{\lambda}(p,m) \, \bEp[T_1] = 1,
$$
where $\tilde{\lambda}$ denotes the steady-state departure intensity of $(N(t))_{t \ge 0}$ under the standard probability measure $\bP$. Substituting this relationship back into the expected value of \eqref{tau_def2} directly yields:
$$
\bEp[\tau_{\epsilon}] = \sum_{k=1}^{K_{\epsilon}(p,m)} \bEp[T_k - T_{k-1}] = \frac{K_{\epsilon}(p,m)}{\tilde{\lambda}(p,m)},
$$
which establishes \Cref{eq:tau2}. 

\subsection{Proof of Equation~\eqref{eq:throughput2}}

To rigorously derive the expression for $\tilde{\lambda}(p,m)$ in \eqref{eq:throughput2}, it is convenient to analyze the system's behavior by aggregating the tasks at the \gls{CS} across all classes.

\begin{lemma} \label{lem:aggregated_dist2}
	Define the continuous-time aggregated state process as
	$$
	\xi(t) = \Bigl(\sum_{j=1}^n \YCS_j(t), \,\Yd_i(t), \,\Yc_i(t), \,\Yu_i(t) ;\ i=1,\ldots,n\Bigr),
	$$
	which evolves over the state space $\mathcal{X}_{3n+1,m}$. This process forms an ergodic, homogeneous Markov chain. Its unique invariant probability measure is given by:
	\begin{align} \label{eq:nu_full}
		\nu_{n,m}(x)
		&= \frac{1}{W_{n,m}}
		\left(\frac{1}{\muCS}\right)^{\xCS}
		\prod_{i=1}^n
		\left[ \left( \frac{p_i}{\muc_i} \right)^{\xc_i}
		\frac{1}{\xd_i!}\left( \frac{p_i}{\mud_i} \right)^{\xd_i}
		\frac{1}{\xu_i!}\left( \frac{p_i}{\muu_i} \right)^{\xu_i} \right],
	\end{align}
	for any state $x = (\xCS,\xd_1,\dots,\xd_n,\xc_1,\dots,\xc_n,\xu_1,\dots,\xu_n) \in \mathcal{X}_{3n+1,m}$, where $W_{n,m}$ is the identical normalizing constant defined in Proposition~\ref{prop:jackson2}.
\end{lemma}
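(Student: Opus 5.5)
The plan is to show that the aggregated process is itself a Markov chain, and then to identify its invariant law. This would be done in two ways: by marginalizing $\phi_{n,m}$ from Proposition~\ref{prop:jackson2} with the multinomial theorem, exactly as in Lemma~\ref{lem:aggregated_dist}, and by checking directly that the aggregated chain is a closed Jackson network whose product form is \eqref{eq:nu_full}.

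First, I would argue the Markov property. Start from the detailed chain $(Z(t))_{t\ge0}$ of Lemma~\ref{lemma:class_seq} and note that every transition rate in \eqref{rates_z} depends on the sequence $\zCS$ only through whether it is empty. A CS departure occurs at rate $\muCS\,\mathbf{1}\{\ell(\zCS)\neq0\}$, with the new task sent to $\mathrm{d}_j$ with probability $p_j$ regardless of the class of the departing task. Uplink completions append to the queue, which increments $\xCS$ by one. Hence the map $z\mapsto(\ell(\zCS),\zd_i,\zc_i,\zu_i)$ satisfies Dynkin's lumpability criterion: for every target aggregated state, the total rate out of $z$ into that class of states depends on $z$ only through its image. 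So $\xi(t)$ is a homogeneous CTMC on $\mathcal{X}_{3n+1,m}$ with the following rates:
\begin{itemize}
\item $\mud_i\xd_i$ for $\mathrm{d}_i\to\mathrm{c}_i$;
\item $\muc_i\mathbf{1}\{\xc_i>0\}$ for $\mathrm{c}_i\to\mathrm{u}_i$;
\item $\muu_i\xu_i$ for $\mathrm{u}_i\to\mathrm{CS}$;
\item $\muCS\mathbf{1}\{\xCS>0\}p_j$ for $\mathrm{CS}\to\mathrm{d}_j$.
\end{itemize}
Irreducibility follows from $p_i>0$ and strong connectivity, as in Proposition~\ref{prop:jackson}. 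Since the state space is finite, the chain is positive recurrent, and therefore ergodic with a unique invariant law.

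Second, I would identify that law. Summing $\phi_{n,m}(y)$ over the class splits $(y_1,\dots,y_n)$ with $\sum_k y_k=\xCS$ produces the factor $\sum \frac{\xCS!}{\prod y_k!}\prod(p_k/\muCS)^{y_k}=(1/\muCS)^{\xCS}$ by the multinomial theorem and $\sum_i p_i=1$. This gives \eqref{eq:nu_full} with the same constant $W_{n,m}$, by the computation already done in Lemma~\ref{lem:aggregated_dist} with $m$ in place of $m-1$. Because the image of a stationary lumpable chain under the lumping map is stationary, this marginal is invariant for $\xi$, and uniqueness then identifies it.

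As an independent sanity check, I would also verify global balance directly. Here $\xi$ is a closed Jackson network with visit ratios equal to $1$ at the CS and $p_i$ at each of $\mathrm{d}_i,\mathrm{c}_i,\mathrm{u}_i$. Its single-server nodes are the CS (rate $\muCS$) and $\mathrm{c}_i$, and its infinite-server nodes are $\mathrm{d}_i,\mathrm{u}_i$, which is precisely the product form \eqref{eq:nu_full}. The step needing the most care is the lumpability argument. It is not immediate because the FIFO order does determine which class departs next, so one must observe that the aggregated dynamics are insensitive to that class. Since the routing after a CS departure is class-independent and only the total count $\xCS$ is recorded, the argument goes through.
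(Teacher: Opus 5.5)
Your proposal is correct, and its core (summing $\phi_{n,m}$ over the class splits at the CS and collapsing the multinomial sum to $(1/\muCS)^{\xCS}$) is exactly the paper's one-line proof, which reuses the computation of Lemma~\ref{lem:aggregated_dist} with $m$ in place of $m-1$. Your lumpability argument from $(Z(t))_{t\ge0}$ adds something the paper omits: the paper asserts without justification that the aggregated process is a Markov chain, even though it stresses that $Y$ itself is not. Your Dynkin-criterion check is what actually supports that claim and, with irreducibility, the uniqueness of the invariant law.
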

\begin{proof}
	By applying the same marginalization argument used in Lemma~\ref{lem:aggregated_dist}, we can directly collapse the detailed stationary distribution $\phi_{n,m}$ of the full process $Y$ to obtain $\nu_{n,m}$.
\end{proof}

In our proposed architecture (\Cref{sec:model_description2}), the \gls{CS} serves tasks at a rate $\muCS$ strictly when it is not idle. Thus, the expected instantaneous throughput is $\tilde{\lambda}(p,m) = \muCS \bP \bigl( \sum_{i=1}^n \YCS_i > 0 \bigr)$. Expanding this expectation over the aggregated invariant measure $\nu_{n,m}$ gives:
\begin{align} \label{tlambda_proof}
	\tilde{\lambda}(p,m) &=  \muCS\, \sum_{\substack{x \in \mathcal{X}_{3n+1,m} \\ \xCS \ge 1}} \nu_{n,m}(x) \nonumber \\[0.5ex]
	&= \frac{\muCS}{W_{n,m}} \sum_{\substack{x \in \mathcal{X}_{3n+1,m} \\ \xCS \ge 1}} \left(\frac{1}{\muCS}\right)^{\xCS}
	\prod_{l=1}^n \left(\frac{p_l}{\muc_l}\right)^{\xc_l} \frac{1}{\xd_l!}\left(\frac{p_l}{\mud_l}\right)^{\xd_l} \frac{1}{\xu_l!}\left(\frac{p_l}{\muu_l}\right)^{\xu_l} \nonumber \\[0.5ex]
	&= \frac{1}{W_{n,m}} 
	\underbrace{
		\sum_{\substack{x \in \mathcal{X}_{3n+1,m} \\ \xCS \ge 1}} \left(\frac{1}{\muCS}\right)^{\xCS-1}
		\prod_{l=1}^n \left(\frac{p_l}{\muc_l}\right)^{\xc_l} \frac{1}{\xd_l!}\left(\frac{p_l}{\mud_l}\right)^{\xd_l} \frac{1}{\xu_l!}\left(\frac{p_l}{\muu_l}\right)^{\xu_l}}_{= W_{n,m-1}} \nonumber \\[0.5ex]
	&= \frac{W_{n,m-1}}{W_{n,m}}. 
\end{align}
The step mapping the summation to $W_{n,m-1}$ is achieved by factoring out $1/\muCS$ and performing the geometric state shift $y = x - \eCS$ (where $\eCS$ denotes the canonical unit vector corresponding to the \gls{CS} component). This concludes the proof of \Cref{eq:throughput2}.

\subsection{Proof of Equation~\eqref{eq:grad_throughput2}}

We now compute the sensitivity of the throughput with respect to the routing decisions. From our earlier derivation leading to \Cref{eq:grad-logW}, the partial derivative of the normalization constant $W_{n,m-1}$ is given by:
$$
\frac{\partial W_{n, m-1}}{\partial p_j}
= \frac{W_{n, m-1}}{p_j} \mathbb{E}[\tYCS_j + \tYd_j + \tYc_j + \tYu_j].
$$
Similarly, applying identical probabilistic logic to the full $m$-task system yields:
$$
\frac{\partial W_{n, m}}{\partial p_j}
= \frac{W_{n, m}}{p_j} \mathbb{E}[\YCS_j + \Yd_j + \Yc_j + \Yu_j].
$$
To find the gradient of the throughput $\tilde{\lambda}(p,m) = W_{n,m-1}/W_{n,m}$, we simply apply the quotient rule and substitute the above derivatives:
\begin{align*}
	\frac{\partial}{\partial p_j} \tilde{\lambda}(p,m)
	&= \frac{1}{W_{n, m}^2} \left( W_{n,m}\frac{\partial W_{n, m-1}}{\partial p_j} - W_{n,m-1}\frac{\partial W_{n, m}}{\partial p_j} \right) \\
	&= \frac{W_{n, m-1}}{W_{n, m}} \left( \frac{1}{W_{n, m-1}}\frac{\partial W_{n, m-1}}{\partial p_j} - \frac{1}{W_{n, m}}\frac{\partial W_{n, m}}{\partial p_j} \right) \\
	&= \frac{\tilde{\lambda}(p,m)}{p_j} \left( \mathbb{E}[\tYCS_j + \tYd_j + \tYc_j + \tYu_j] - \mathbb{E}[\YCS_j + \Yd_j + \Yc_j + \Yu_j] \right).
\end{align*}
This provides the exact gradient and concludes the proof.

\section{Proof of Proposition~\ref{prop:energy-eps2}} \label{proof_prop:energy-eps2}

We begin by formalizing the instantaneous power draw of the network at any wall-clock time $t$, denoted by $P(t)$. Consistent with the physical energy model established in \Cref{sec:nrg_model2}, the total power is the sum of the active states across all network components:
\begin{align} \label{eq:inst_power2}
	P(t)
	=  \PCS \,\mathbf{1}\Bigl\{ \sum_{i=1}^n \YCS_i(t) > 0 \Bigr\}
	+ \sum_{i=1}^n \Bigl(
	\Pc_i \,\mathbf{1}\{\Yc_i(t) > 0\}
	+ \Pu_i \,\Yu_i(t)
	+ \Pd_i \,\Yd_i(t)
	\Bigr).
\end{align}

Recall that $K_{\epsilon}$ represents the deterministic number of global rounds required to achieve $\epsilon$-accuracy (which depends entirely on the routing vector $p$, the concurrency level $m$, and the hardware service rates), and $\tau_{\epsilon}$ denotes the corresponding random wall-clock time. The cumulative energy consumed up to this accuracy threshold is the integral of the power over time:
$$
E_{\epsilon}
= \int_0^{\tau_{\epsilon}} P(t)\,dt
= \sum_{k=1}^{K_{\epsilon}(p,m)} \int_{T_{k-1}}^{T_k} P(t)\,dt,
$$
where $T_0=0$ and $(T_k)_{k \ge 1}$ is the sequence of update completion times at the \gls{CS}.

Because the aggregated continuous-time state process $(\xi(t))_{t \ge 0}$ is strictly stationary and ergodic (as proved in Lemma~\ref{lem:aggregated_dist}), the sequence of energy increments across the renewal intervals $[T_{k-1}, T_k)$ is also stationary. Therefore, under the Palm probability measure $\bPp$ associated with the completion instants $\{T_k\}$, the expected energy expended during any single round is invariant:
$$
\mathbb{E}^0\!\left[\int_{T_{k-1}}^{T_k} P(t)\,dt\right]
= \mathbb{E}^0\!\left[\int_0^{T_1} P(t)\,dt\right].
$$

By invoking the fundamental inversion formula for stationary point processes (see \cite[Corollary~6.16]{serfozo1999}, evaluated for the function $X_t=P(t)$), we map this Palm expectation back to the standard time-average expectation:
$$
\mathbb{E}^0\!\left[\int_0^{T_1} P(t)\,dt\right]
= \frac{\mathbb{E}[P(0)]}{\tilde{\lambda}(p,m)},
$$
where $\tilde{\lambda}$ is the steady-state system throughput derived in \Cref{eq:throughput2}. Substituting this relationship into the cumulative energy summation yields:
\begin{align} \label{eq:energy_exp2}
	\mathbb{E}^0[E_{\epsilon}]
	= \sum_{k=1}^{K_{\epsilon}(p,m)} \frac{\mathbb{E}[P(0)]}{\tilde{\lambda}(p,m)}
	= \frac{K_{\epsilon}(p,m)}{\tilde{\lambda}(p,m)}\,\mathbb{E}[P(0)].
\end{align}

The next step is to compute the steady-state expected power $\mathbb{E}[P(0)]$. Taking the expectation of \eqref{eq:inst_power2} with respect to the invariant measure $\nu_{n,m}$ (from Equation~\eqref{eq:nu_full}), and exploiting the linearity of expectation, we decompose the total power as:
$$
\mathbb{E}[P(0)]
= \PCS \,\mathbb{P}\Bigl(\sum_{i=1}^n \YCS_i > 0\Bigr)
+ \sum_{i=1}^n \Bigl(
\Pc_i \,\mathbb{P}(\Yc_i > 0)
+ \Pd_i \,\mathbb{E}[\Yd_i]
+ \Pu_i \,\mathbb{E}[\Yu_i]
\Bigr).
$$

We evaluate each term by leveraging our previously established queueing metrics. For the \gls{CS} active probability, we use the identity from Equation~\eqref{tlambda_proof}:
$$
\mathbb{P}\Bigl(\sum_{i=1}^n \YCS_i > 0\Bigr)
= \frac{W_{n,m-1}}{W_{n,m}}\,\frac{1}{\muCS}.
$$
Applying identical marginalization logic to the local computation nodes yields:
$$
\mathbb{P}(\Yc_i > 0)
= \frac{W_{n,m-1}}{W_{n,m}}\,\frac{p_i}{\muc_i}.
$$
Similarly, for the infinite-server communication phases (downlink and uplink), we have:
$$
\mathbb{E}[\Yd_i]
= \frac{W_{n,m-1}}{W_{n,m}}\,\frac{p_i}{\mud_i},
\qquad
\mathbb{E}[\Yu_i]
= \frac{W_{n,m-1}}{W_{n,m}}\,\frac{p_i}{\muu_i}.
$$

Finally, substituting these steady-state terms back into $\mathbb{E}[P(0)]$, we observe that every term shares the common factor $\frac{W_{n,m-1}}{W_{n,m}}$, which is precisely the throughput $\tilde{\lambda}(p,m)$. When injected into \eqref{eq:energy_exp2}, this factor cancels the $1/\tilde{\lambda}(p,m)$ multiplier. This cancellation yields the final closed-form expression:
$$
\mathbb{E}^0[E_{\epsilon}]
= K_{\epsilon}(p,m) \left(\frac{\PCS}{\muCS} +
\sum_{i=1}^n p_i
\left( 
\frac{\Pc_i}{\muc_i}
+ \frac{\Pd_i}{\mud_i}
+ \frac{\Pu_i}{\muu_i}
\right)\right).
$$
This demonstrates that the expected energy per round depends strictly on the service capacities and routing decisions, entirely independent of the queueing delays, concluding the proof.

\end{document}